\pgfplotsset{compat=1.18}
\newcommand{\mbf}[1]{\mathbf{#1}}
\newcommand{\msf}[1]{\mathsf{#1}}
\newcommand{\bmsf}[1]{\boldsymbol{\mathsf{#1}}}
\newcommand{\mbs}[1]{\boldsymbol{#1}}
\newcommand{\ba}{\mbf{a}}
\newcommand{\bA}{\mbf{A}}
\newcommand{\bB}{\mbf{B}}
\newcommand{\bC}{\mbf{C}}
\newcommand{\bD}{\mbf{D}}
\newcommand{\be}{\mbf{e}}
\newcommand{\bG}{\mbf{G}}
\newcommand{\bI}{\mbf{I}}
\newcommand{\bK}{\mbf{K}}
\newcommand{\bM}{\mbf{M}}
\newcommand{\bN}{\mbf{N}}
\newcommand{\bu}{\mbf{u}}
\newcommand{\bU}{\mbf{U}}
\newcommand{\bv}{\mbf{v}}
\newcommand{\bV}{\mbf{V}}
\newcommand{\bW}{\mbf{W}}
\newcommand{\bx}{\mbf{x}}
\newcommand{\bX}{\mbf{X}}
\newcommand{\by}{\mbf{y}}
\newcommand{\bY}{\mbf{Y}}
\newcommand{\bZ}{\mbf{Z}}
\newcommand{\bsG}{\mbs{G}}
\newcommand{\bsM}{\mbs{M}}
\newcommand{\bsU}{\mbs{U}}
\newcommand{\msfc}{\msf{c}}
\newcommand{\msfd}{\msf{d}}
\newcommand{\msfg}{\msf{g}}
\newcommand{\msfG}{\msf{G}}
\newcommand{\msfs}{\msf{s}}
\newcommand{\msfv}{\msf{v}}
\newcommand{\msfV}{\msf{V}}
\newcommand{\bmsfA}{\bmsf{A}}
\newcommand{\bmsfX}{\bmsf{X}}
\newcommand{\bmsfY}{\bmsf{Y}}
\newcommand{\bmsfZ}{\bmsf{Z}}
\newcommand{\olbY}{\overline{\bY}}
\newcommand{\bmu}{\boldsymbol{\mu}}
\newcommand{\bsigma}{\boldsymbol{\sigma}}
\newcommand{\cA}{\mathcal{A}}
\newcommand{\cB}{\mathcal{B}}
\newcommand{\cD}{\mathcal{D}}
\newcommand{\cE}{\mathcal{E}}
\newcommand{\cF}{\mathcal{F}}
\newcommand{\cK}{\mathcal{K}}
\newcommand{\cL}{\mathcal{L}}
\newcommand{\cM}{\mathcal{M}}
\newcommand{\cN}{\mathcal{N}}
\newcommand{\cO}{\mathcal{O}}
\newcommand{\cP}{\mathcal{P}}
\newcommand{\cS}{\mathcal{S}}
\newcommand{\cU}{\mathcal{U}}
\newcommand{\cV}{\mathcal{V}}
\newcommand{\cW}{\mathcal{W}}
\newcommand{\cX}{\mathcal{X}}
\newcommand{\bCh}{\widehat{\mathbf{C}}}
\newcommand{\bCt}{\widetilde{\mathbf{C}}}
\newcommand{\bmsfXh}{\widehat{\bmsfX}}
\newcommand{\R}{\,\mathbb{R}}
\newcommand{\Q}{\,\mathbb{Q}}
\newcommand{\N}{\,\mathbb{N}}
\newcommand{\Z}{\,\mathbb{Z}}
\newcommand{\bbS}{\,\mathbb{S}}
\newcommand{\E}{\,\mathbb{E}}
\newcommand{\Pp}{\,\mathbb{P}}
\newcommand{\Dkl}{D_{\mathrm{KL}}}
\newcommand{\cov}{\mathop{\mathrm{Cov}}}
\newcommand{\var}{\mathop{\mathrm{Var}}}
\def\independenT#1#2{\mathrel{\rlap{$#1#2$}\mkern2mu{#1#2}}}
\newcommand{\subG}{\mathsf{subG}}
\newcommand{\op}{\msf{op}}
\newcommand{\Fr}{\msf{F}}
\newcommand{\vect}{\mathrm{vec}}
\newcommand{\rank}{\mathrm{rank}}
\newcommand{\ind}[1]{\mathbbm{1}_{\left\{#1\right\}}}
\newcommand{\Tr}[1]{\mathsf{Tr}\left\{#1\right\}}
\renewcommand{\det}[1]{\mathsf{det}\left\{#1\right\}}
\newcommand{\iid}{i.\@i.\@d.\ }
\newcommand{\bzero}{\boldsymbol{0}}
\newcommand\indep{\protect\mathpalette{\protect\independenT}{\perp}}
\newcommand{\Eb}{\EuScript{E}}
\newcommand{\Ebt}{\widetilde{\Eb}}
\newcommand{\DCME}{\msf{DCME}}
\newcommand{\DCCME}{\msf{DCCME}}
\newcommand{\cross}{\msf{cross}}
\newcommand{\msfcc}{\msf{cc}}
\newcommand{\msfsc}{\msf{sc}}
\newcommand{\dist}{\msf{dist}}
\newcommand{\ME}{\EuScript{M}_{\dist}}
\newcommand{\MEop}{\EuScript{M}_{\op}}
\newcommand{\MEF}{\EuScript{M}_{\Fr}}
\newcommand{\MEopc}{\EuScript{M}_{\op}^{(\cross)}}
\newcommand{\MEFc}{\EuScript{M}_{\Fr}^{(\cross)}}
\newcommand{\alphaopcc}{\alpha^{(\msfcc)}_{\op}}
\newcommand{\alphaopsc}{\alpha^{(\msfsc)}_{\op}}
\newcommand{\alphaFcc}{\alpha^{(\msfcc)}_{\Fr}}
\newcommand{\alphaFsccross}{\alpha^{(\msfsc,\cross)}_{\Fr}}
\newcommand{\alphaFsc}{\alpha^{(\msfsc)}_{\Fr}}
\newcommand{\Frob}{Frobenius~}
\newcommand{\Vol}{\msf{Vol}}
\newcommand{\mkv}{-\!\!\!\!\minuso\!\!\!\!-}
\theoremstyle{plain}
\newtheorem{thm}{Theorem}[section]
\newtheorem{prop}[thm]{Proposition}
\newtheorem{lem}[thm]{Lemma}
\newtheorem{cor}[thm]{Corollary}
\theoremstyle{definition}
\newtheorem{dfn}[thm]{Definition}
\theoremstyle{remark}
\newtheorem{remark}[thm]{Remark}
\DeclareFontFamily{U}{matha}{\hyphenchar\font45}
\DeclareFontShape{U}{matha}{m}{n}{
      <5> <6> <7> <8> <9> <10> gen * matha
      <10.95> matha10 <12> <14.4> <17.28> <20.74> <24.88> matha12
      }{}
\DeclareSymbolFont{matha}{U}{matha}{m}{n}
\DeclareFontFamily{U}{mathx}{\hyphenchar\font45}
\DeclareFontShape{U}{mathx}{m}{n}{
      <5> <6> <7> <8> <9> <10>
      <10.95> <12> <14.4> <17.28> <20.74> <24.88>
      mathx10
      }{}
\DeclareSymbolFont{mathx}{U}{mathx}{m}{n}
\DeclareMathDelimiter{\vvvert}{0}{matha}{"7E}{mathx}{"17}
\def\normiii#1{\left\vvvert #1\right\vvvert}
\def\tep{\tilde{\varepsilon}}
\theoremstyle{plain}
\newtheorem{axiom}{Axiom}
\newtheorem{claim}[axiom]{Claim}
\newtheorem{theorem}{Theorem}[section]
\newtheorem{lemma}[theorem]{Lemma}
\newtheorem{proposition}[theorem]{Proposition}
\theoremstyle{remark}
\newtheorem{definition}[theorem]{Definition}
\newtheorem*{example}{Example}
\newtheorem*{fact}{Fact}
\newcommand{\bXt}{\widetilde{\bX}}
\newcommand{\bZt}{\widetilde{\bZ}}
\newcommand{\bXb}{\overline{\bX}}
\newcommand{\Xb}{\overline{X}}
\newcommand{\bZb}{\overline{\bZ}}
\newcommand{\msfss}{\msfs^{\star}}
\newcommand{\FG}{\cF_{\msfG}}
\title{Fundamental Limits of Distributed Covariance Matrix Estimation via a Conditional Strong Data Processing Inequality}
\author{Mohammad Reza Rahmani}
\author{Mohammad Hossein Yassaee}
\author{Mohammad Reza Rahmani~~~ Mohammad Hossein Yassaee \and Mohammad Reza Aref\thanks{The authors are with the Department of Electrical Engineering, Sharif University of Technology, Tehran, Iran. Emails: mohammadreza\_rahmani@ee.sharif.edu,\{yassaee,aref\}@sharif.edu }}
\date{}
\begin{document}

\maketitle

\begin{abstract}
Estimating high--dimensional covariance matrices is a key task across many fields. This paper explores the theoretical limits of distributed covariance estimation in a feature--split setting, where communication between agents is constrained. Specifically, we study a scenario in which multiple agents each observe different components of i.i.d. samples drawn from a sub--Gaussian random vector. A central server seeks to estimate the complete covariance matrix using a limited number of bits communicated by each agent. We obtain a nearly tight minimax lower bound for covariance matrix estimation under operator norm and \Frob norm. Our main technical tool is a novel generalization of the strong data processing inequality (SDPI), termed the ``{\em Conditional Strong Data Processing Inequality (C-SDPI) coefficient}", introduced in this work. The C-SDPI coefficient shares key properties—such as tensorization—with the conventional SDPI. Crucially, it quantifies the average contraction in a state-dependent channel and can be significantly lower than the worst-case SDPI coefficient over the state input.  
 Utilizing the doubling trick of Geng--Nair and an operator Jensen inequality, we compute this coefficient for Gaussian mixture channels. We then employ it to establish minimax lower bounds on estimation error, capturing the trade-offs among sample size, communication cost, and data dimensionality. Building on this, we present a nearly optimal estimation protocol whose sample and communication requirements match the lower bounds up to logarithmic factors. Unlike much of the existing literature, our framework does not assume infinite samples or Gaussian distributions, making it broadly applicable. Finally, we extend our analysis to interactive protocols, showing interaction can significantly reduce communication requirements compared to non--interactive schemes.
\end{abstract}
\tableofcontents

\section{Introduction}
The estimation of the covariance matrix of a random vector from independent and identically distributed (i.i.d.) samples is a cornerstone problem with pervasive applications across diverse quantitative disciplines, including financial mathematics, classical and high--dimensional statistics, and modern machine learning methodologies \citep{hotelling1933analysis, dahmen2000structured, ledoit2003improved}. A particularly salient extension of this fundamental problem involves its analysis within distributed computational environments. In such settings, typified by emerging paradigms like federated learning \citep{mcmahan2017communication}, data are inherently partitioned and dispersed among multiple autonomous agents, with each agent having access to only a local subset of the overall data. These distributed data architectures can be broadly classified into two principal categories: (i) the \emph{sample--split (or horizontal split)} scenario, where each agent possesses a distinct subset of the data samples; and (ii) the \emph{feature--split (or vertical split)} scenario, where each agent has access to a specific subset of the dimensions (or features) for all available samples.

This paper investigates the challenging problem of covariance matrix estimation specifically within a feature--split distributed setting, under the critical constraint of limited communication. Our system model comprises a central server and multiple agents. Each Agent $k$ observes $d_k$ dimensions of $m$ i.i.d. samples of a $d$--dimensional random vector $\bZ\in\R^d$. The objective is for the central server to accurately estimate the true covariance matrix $\bC=\E[\bZ\bZ^\top]$. A defining characteristic of this problem is the restricted communication budget: each Agent $k$ is permitted to transmit messages of at most $B_k$ bits to the central server. Consequently, the central server must synthesize its estimate of the covariance matrix solely from these bandwidth--constrained messages. This setting prompts two pivotal questions:
\begin{enumerate}
\item What are the fundamental limits on estimation accuracy, considering the interplay between the agents' constrained communication budgets and the finite number of available samples?
\item What estimation schemes can effectively achieve these ultimate accuracy limits?
\end{enumerate}

This work provides comprehensive answers to both questions. We establish the information--theoretic lower bounds on the accuracy of distributed covariance estimation, thereby delineating the inherent performance bottlenecks imposed by communication and sample limitations. Building upon these fundamental insights, we derive explicit lower bounds for both the sample complexity and communication complexity that any estimation scheme must satisfy to attain a desired accuracy. Furthermore, we develop and analyze a novel estimation scheme designed to operate effectively under the stipulated communication constraints. We rigorously demonstrate that the proposed scheme's sample and communication complexities are optimal, aligning with the derived lower bounds within a logarithmic factor.

A cornerstone of our theoretical analysis, particularly for deriving lower bounds in estimation problems subjected to communication constraints, is the Strong Data Processing Inequality (SDPI) \citep{ahlswede1976spreading}. SDPI refines the classical Data Processing Inequality (DPI), which asserts that mutual information cannot increase through a Markov chain. SDPI quantifies this information degradation using contraction coefficients \citep{ahlswede1976spreading}. In this paper, we extend the classical SDPI to encompass state--dependent channels, introducing a novel concept termed the Conditional Strong Data Processing Inequality (C--SDPI) coefficient. This new quantity precisely measures the conditional mutual information loss between the input and output of a channel whose characteristics depend on an external state. We formally introduce C--SDPI, prove several of its key properties, and meticulously compute this coefficient for Gaussian mixture channels. This theoretical tool is then directly applied to derive minimax lower bounds on the estimation error for the distributed covariance matrix estimation problem.

\subsection{Prior works}
This section reviews prior research relevant to our study, broadly categorized into distributed covariance matrix estimation and works pertaining to the Strong Data Processing Inequality.

\subsubsection{Prior Works on Distributed Covariance Matrix Estimation}

For $m$ i.i.d. samples $\{\bZ^{(i)}\}_{i=1}^{m} = \{\bZ^{(1)},\bZ^{(2)},\dots,\bZ^{(m)}\}$ of a random vector $\bZ$, the canonical sample covariance estimator is given by:
\begin{equation}\label{sample_covar}
    \bCh = \frac{1}{m}\sum_{i=1}^{m}\bZ^{(i)}\bZ^{(i)\top}.
\end{equation}
For sub--Gaussian random vectors, bounds on the operator norm of the estimation error for this estimator are well--established \citep[Theorem 4.7.1]{vershynin2018high}. Beyond this standard estimator, a rich body of literature addresses covariance matrix estimation under various structural assumptions, such as sparsity \citep{bickel2008regularized, bickel2008covariance, el2008operator}, low--rankness \citep{furrer2007estimation}, and Toeplitz--structure \citep{huang2006covariance, wu2009banding, chen2012masked}, often deviating from the traditional sample covariance approach. The optimality of certain covariance matrix estimators has also been a focus of investigation \citep{cai2010optimal, cai2013optimal}.

The extension of fundamental machine learning algorithms to distributed settings, particularly with sample--split data, has been widely explored. Examples include distributed principal component analysis for dimension reduction \citep{qu2002principal, bai2005principal, balcan2014improved, kannan2014principal}, distributed gradient descent algorithms \citep{langford2009sparse, zinkevich2010parallelized, niu2011hogwild}, and distributed support vector machines \citep{navia2006distributed, zhu2007parallelizing, lu2008distributed, forero2010consensus}.

Statistical inference in distributed environments, especially under communication constraints, has garnered significant research interest. Zhang et al. \citep{zhang2013information} provided information--theoretic lower bounds for distributed parameter estimation, demonstrating the minimum communication required for specific performance levels. While not directly focused on covariance estimation, their methodologies offer a general framework. In distributed mean estimation, Suresh et al. \citep{suresh2017distributed} proposed a communication--efficient algorithm that achieves a mean squared error of $\Theta(d/n)$ with constant bits per dimension per client. Subsequent work by Cai and Wei \citep{cai2024distributed} characterized minimax convergence rates for Gaussian mean estimation under communication constraints. Beyond estimation, the closely related problem of distributed hypothesis testing has also been explored. Notably, Szabó et al. \cite{szabo2023optimal} investigate minimax testing errors in a sample--split distributed framework, where limited communication budget to a central machine. They consider both high--dimensional and infinite--dimensional signal detection problems within a Gaussian white noise model, deriving minimax lower bounds on error probabilities and proposing distributed testing algorithms that achieve these theoretical limits. Further work in distributed estimation includes Braverman et al. \citep{braverman2016communication}, who derived a minimax lower bound for sparse Gaussian vector mean estimation. Han et al. \citep{han2018geometric} offered a geometric approach, distinct from SDPI, for analyzing communication budgets in estimation problems, complementing the information--theoretic methods used in \citep{zhang2013information}.

In contrast to the sample--split paradigm, the feature--split setting presents unique challenges and arises in applications such as distributed medical databases where different health data dimensions of a patient reside in separate locations \citep{allaart2022vertical}, or in environmental monitoring where sensor stations collect partial weather information for correlation analysis without full data centralization. Prior work extending machine learning tasks to this vertical--split setting includes \citep{yang2019quasi, shen2019secure, hadar2019communication, hadar2019distributed, wu2020privacy}.

Within distributed covariance estimation with communication limits, some studies address the horizontal--split case \citep{zhang2013information, braverman2016communication, han2018geometric}, while others focus on vertical--split scenarios \citep{hadar2019communication, hadar2019distributed}. Specifically, \citep{hadar2019communication} investigates estimating the correlation $\rho=\E[XY]$ between two \emph{scalar} ($d_1=d_2=1$) \emph{Gaussian} or \emph{binary} random variables $X, Y$ in a vertical--split setup, assuming a one--sided communication constraint ($B_2=\infty$) and infinite samples ($m=\infty$). For this specific setting, they characterize the exact order of the optimal communication budget for achieving a specific estimation accuracy. \citep{hadar2019distributed} proposes a solution for estimating the correlation $\E[X_kY]$ between \emph{a vector} $\bX=[X_1,\cdots, X_d]^\top$ and a \emph{scalar} $Y$ ($d_1>d_2=1$), without any claim on its optimality. Their proposed solution outperforms the solutions based on estimating the correlation $\E[X_kY]$, for each $k$, separately. Our research significantly advances this line of inquiry by considering multi--dimensional random vectors, finite sample regimes, and communication constraints on all agents, providing a more general and practical framework.

\subsubsection{Prior Works on Strong Data Processing Inequality}

The Strong Data Processing Inequality (SDPI) is a pivotal tool for our lower bound proofs. It refines the classical Data Processing Inequality (DPI) \citep{cover1999elements}, which states that mutual information cannot increase along a Markov chain. SDPI quantifies this information loss via contraction coefficients \citep{ahlswede1976spreading}. Introduced by Ahlswede and Gács in 1976 \citep{ahlswede1976spreading}, SDPI has revealed connections to other information--theoretic measures, such as maximal correlation \citep{witsenhausen1975sequences}. Further theoretical developments and refinements of SDPI properties have been explored in various works \citep{cohen1993relative, choi1994equivalence, miclo1997remarques, del2003contraction, subramanian2013improved}. It has been established that the SDPI constant for any channel is upper--bounded by the Dobrushin contraction coefficient \citep{cohen1993relative}, another established measure of a channel's noise level \cite{dobrushin1956central1, dobrushin1956central2}, a result rediscovered in the machine learning community \citep{boyen2013tractable}. Relations between SDPI constants for different $f$--divergences have also been investigated \citep{choi1994equivalence}.

SDPI has found diverse applications, including studying the existence and uniqueness of Gibbs measures, establishing log--Sobolev inequalities, and analyzing performance limits of noisy circuits \cite{evans1999signal, polyWu2023}. In recent years, strong data processing inequalities have gained considerable attention in the information theory community \cite{kamath2012non, anantharam2013maximal, courtade2013outer, raginsky2013logarithmic, liu2014key, polyanskiy2017strong, makur2015bounds, calmon2015strong, raginsky2016strong}.
In \cite{anantharam2013maximal}, a new geometric characterization of the Hirschfeld--Gebelein--R{\'e}nyi maximal correlation \cite{witsenhausen1975sequences} is provided and its relation to SDPI is studied. In \cite{courtade2013outer}, a relation between rate distortion function and SDPI is obtained. In  \cite{raginsky2013logarithmic}, it is shown that the problem of finding SDPI constsnt and input distributions that achieve it can be addressed using so--called logarithmic Sobolev inequalities, which relate input relative entropy to certain measures of input--output correlation. In \cite{calmon2015strong}, SDPI for the power--constrained additive Gaussian channel is studied and the amount of decrease of mutual information under convolution with Gaussian noise is bounded.

In many classical techniques for deriving lower bounds in statistical inference—such as the Cramér method, Fano’s method, and Assouad’s lemma—a key step entails bounding the mutual information between two components of the statistical model. In scenarios subject to structural constraints, including communication or privacy limitations, these components are often connected through communication channels that impose a specific Markov structure. Consequently, a natural approach to bounding the mutual information is to investigate how it contracts through these channels, typically by employing strong data processing inequalities (SDPIs).

A substantial body of work has examined the use of SDPIs in distributed and sample--splitting estimation contexts. For instance, \citep{garg2014communication} applied SDPIs to analyze distributed estimation of the mean of a high--dimensional Gaussian distribution. Similarly, \citep{braverman2016communication} introduced a distributed variant of SDPI to establish nearly tight (up to logarithmic factors) trade--offs between estimation error and communication budget in sparse Gaussian mean estimation. More recently, \citep{cai2024distributed} leveraged SDPI to derive sharp lower bounds for mean estimation of multivariate Gaussian distributions under an independent distributed protocol, where each machine communicates with a central server through separate channels without interaction, covering the full range of communication budgets. The work \citep{szabo2023optimal} further explored mean testing by comparing global and local $\chi^2$ divergences, an approach closely related to SDPI. Beyond these, SDPIs have also been instrumental in deriving lower bounds for other distributed estimation problems \citep{xu2015converses} and in differentially private estimation \citep{duchi2013local}.

The work of \citep{hadar2019distributed} was the first to apply SDPI in a vertically partitioned setting, utilizing SDPI and its generalization—symmetric SDPI—to establish lower bounds on the estimation of correlation between two scalar random variables in both one--way and interactive protocols. Subsequently, \citep{sahasranand2021communication} employed SDPI to study testing of correlation between a vector and a scalar in a vertical--split context. Our work can be viewed as a generalization of \citep{hadar2019distributed}; specifically, we use a further generalization of SDPI to derive lower bounds for covariance matrix estimation in vertical--split data settings.

It is important to note that alternative methodologies have also been developed to establish lower bounds under information constraints, in the horizontal--split setting. For example, \citep{CannoneTyagiNEURIPS2023} extended the framework of \citep{acharya2020, acharya2020inference} to develop information contraction bounds, which can be interpreted as variants of SDPI, and demonstrated their effectiveness in problems such as sparse Gaussian mean estimation. In addition, geometric techniques introduced by \citep{han2018geometric}, \citep{barnes2020lower}, and \citep{barnes2020fisher} have been successfully applied to derive lower bounds in distributed parameter estimation and density estimation.

\subsection{Main Contributions}

In this paper, we address the problem of estimating the covariance matrix in a vertical--split setting under communication constraints. Our model considers a distributed system with $K$ agents and a central server. Each Agent $k\in[K]$ possesses $d_k$ dimensions of $m$ i.i.d. samples of a $d$--dimensional sub--Gaussian random vector $\bZ$. The central server's objective is to estimate the covariance matrix, with each agent $k$ limited to sending messages of at most $B_k$ bits. The central server then forms its estimate from these received messages. We seek to answer two fundamental questions: (1) What is the ultimate estimation accuracy given limited communication and samples? (2) How can this optimal accuracy be achieved?

Our principal contributions are as follows:

\begin{itemize}
    \item We introduce and rigorously develop a novel theoretical framework termed the \emph{Conditional Strong Data Processing Inequality (C–-SDPI)}, and establish its fundamental properties. Furthermore, we derive explicit expressions for the C-–SDPI constant in the setting of Gaussian mixture channels. Our analysis leverages a range of advanced technical tools, including techniques for establishing Gaussian optimality in certain classes of optimization problems \citep{GengNair2014,AJC2022}, as well as key properties of specific {\em operator convex functions} \citep{Tropp2015}.
    \item We formally define the Distributed Covariance Matrix Estimation ($\DCME$) problem and derive a near--optimal trade--off that captures the interplay between the number of samples ($m$), communication budgets ($B_1, B_2, \dots, B_K$), the data dimensionality available to each agent ($d_1, d_2, \dots, d_K$), and the resulting estimation error.

A central contribution of our work is the relaxation of several restrictive assumptions commonly found in prior studies:
\begin{itemize}
\item We allow the underlying random vectors $\bZ$ to be general sub--Gaussian, thereby extending beyond the conventional Gaussian setting.
\item Our analysis explicitly accommodates the practically relevant case of a finite number of i.i.d.\ samples ($m$), in contrast to earlier work that often assumes access to an infinite sample regime.
\end{itemize}
    
    \item By leveraging the \emph{C–-SDPI}, a variant of Fano’s method, and a symmetrization argument, we establish lower bounds on the \emph{minimax} estimation error achievable by any algorithm parameterized by $(m, d_1, d_2, B_1, B_2)$, measured with respect to both the operator norm and the Frobenius norm. In particular, to estimate the covariance matrix up to an error of $\varepsilon$ in the operator norm, the sample complexity and communication budgets must satisfy
$m = \Omega\Bigl(\frac{d}{\varepsilon^2}\Bigr)$
and $B_k = \Omega\Bigl(\frac{d d_k}{\varepsilon^2}\Bigr),
\quad k=1,2$, where $d=d_1+d_2$. 
    
    \item We construct an explicit estimation scheme that approximates the covariance matrix to within $\varepsilon$ error, with sample complexity and communication budgets satisfying
\(
m = \widetilde{\cO}\Bigl(\tfrac{d}{\varepsilon^2}\Bigr),
\quad
B_k = \widetilde{\cO}\Bigl(\tfrac{d d_k}{\varepsilon^2}\Bigr),
\quad k=1,2.
\)
This establishes the near--optimality of our approach, as it matches the minimax lower bounds up to logarithmic factors.
\item Using our derived minimax lower bound, we extend it to the general case with $K>2$ agents. We then present a new scheme that meets this bound, proving that it is achievable.
\item We extend our analysis to the interactive setting of the $\DCME$ problem, providing both a minimax lower bound and an achievable scheme. We demonstrate that the total communication budget required to estimate the covariance matrix can be reduced from $B := B_1 + B_2 = \widetilde{\Theta}\bigl(\frac{d^2}{\varepsilon^2}\bigr)$ in the non--interactive setting to $B = \widetilde{\Theta}\bigl(\frac{d_1 d_2}{\varepsilon^2}\bigr)$ in the interactive setting. This reduction can be substantial, particularly when there is significant imbalance between $d_1$ and $d_2$; for example, when $d_1=1$ and $d_2=d-1$. To the best of our knowledge, this improvement is only known in the non--parametric estimation for the vertical--split problems in the literature, cf. \citep{liu2023few}.

Furthermore, our results reveal that the statement in \cite[Theorem 6]{sahasranand2021communication}, which claims that the communication budget must be $\Omega\bigl(\frac{d^2}{\varepsilon^2}\bigr)$ in the case $d_1=1, d_2=d-1$, is incorrect. 
\end{itemize}

\subsection{Paper Structure}
The remainder of this paper is organized as follows: Section \ref{sec:preliminaries} provides necessary preliminaries. Section \ref{sec:CSDPI} introduces the Conditional Strong Data Processing Inequality (C--SDPI) and its properties, serving as the primary theoretical tool. In Section \ref{sec:problem_def}, we formally define the Distributed Covariance Matrix Estimation $(\DCME)$ problem as an application of C--SDPI, presenting theorems on both error lower bounds and achievable schemes. Section \ref{sec:Proof_lower_bound} is dedicated to the complete proof of the lower bound theorems. Section \ref{sec:proof_achievable_scheme} details the proposed estimation algorithm and proves the theorems related to its achievable performance. Section \ref{sec:Interactive_Case} defines the interactive $\DCME$ problem, presents an achievable scheme for this scenario, and proves its optimality. Finally, Section \ref{sec:conclusion} concludes the paper.

\subsection{Notations}
\label{sec:notation}
In this paper, we use specific notation to represent mathematical entities. Matrices are represented by uppercase bold symbols (e.g., $\bA$), while vectors are denoted by lowercase bold symbols (e.g., $\bv$). When referring to sets of vectors or matrices formed by concatenating vectors or matrices, we use a bold sans-serif font (e.g., $\bmsfA$). For any vector $\bv=[v_1,v_2,\cdots,v_d]^\top$, its $\ell_p$--norm is defined as $\norm{\bv}_p = \left(\sum_{i=1}^{d}|v_i|^p\right)^{1/p}$. For a matrix $\bA\in\R^{m\times n}$, the operator norm is denoted by $\norm[0]{\bA}_\op$, and the Frobenius norm is denoted by $\norm[0]{\bA}_\Fr$. We also define a general norm, $\norm{.}_{\dist}$, which encompasses both the operator and Frobenius norms for flexibility. We use $a \vee b$ and $a\wedge b$ to denote $\max\{a,b\}$ and $\min\{a,b\}$, respectively.  Lastly, the notation $[N]$ represents the set of integers $\{1,2,\dots,N\}$.
\section{Preliminaries}
\label{sec:preliminaries}

\subsection{Signed Permutation Matrices}
\begin{definition}[Signed Permutation Matrix]\label{def:signed_permutation}
A signed permutation matrix $\bA$ is a square  matrix with all the entries belong to $\{-1,0,1\}$, possessing exactly one {\em non--zero} entry of $\pm 1$ in each row and each column, with all other entries being 0.
\end{definition}
Let $\{\be_1,\dots,\be_d\}$ be the standard basis for $\R^d$. A signed permutation matrix $\bA$ of size $d$ can be represented by a permutation $\pi$ on $\{1,\cdots,d\}$ and a signed vector $\bsigma=[\sigma_1,\cdots,\sigma_d]^\top$, where $\sigma_i\in\{-1,1\}$, as follows:
\begin{equation}\label{eqn:SPM}
    \bA=\sum_{i=1}^{d} \sigma_i \be_i\be_{\pi(i)}^\top.
\end{equation}
Each signed permutation matrix $\bA$ is an orthogonal matrix, satisfying $\bA\bA^\top=\bI$. 
Let $\cP_d$ be the set of all signed permutation matrices of size $d$. 
The representation \eqref{eqn:SPM} implies that $|\cP_d|=2^d d!$. Furthermore, the signed permutation matrices in $\cP_d$ form a group under {\em matrix multiplication}.

The subsequent lemma is crucial for simplifying certain computations within this work.
\begin{lemma}\label{lem:SPM}
Let $\bA$ be a random matrix drawn uniformly from the set of signed permutation matrices $\cP_d$. Then, for any fixed matrix $\bB$, the following holds:
\begin{equation}
    \E\left[\bA^\top \bB\bA\right]=\frac{\Tr{\bB}}{d}\bI_d.
\end{equation}
\end{lemma}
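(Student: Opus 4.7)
The plan is to exploit the explicit decomposition \eqref{eqn:SPM} of a signed permutation matrix $\bA = \sum_{i=1}^d \sigma_i \be_i \be_{\pi(i)}^\top$, where, under the uniform distribution on $\cP_d$, the permutation $\pi$ is uniform on the symmetric group $S_d$ and the signs $\sigma_1,\dots,\sigma_d$ are i.i.d.\ $\mathrm{Unif}\{-1,+1\}$, with $\pi$ independent of $\bsigma$. Writing $\bA^\top = \sum_i \sigma_i \be_{\pi(i)} \be_i^\top$ and multiplying out gives
\begin{equation*}
    \bA^\top \bB \bA = \sum_{i=1}^{d} \sum_{j=1}^{d} \sigma_i \sigma_j B_{ij}\, \be_{\pi(i)} \be_{\pi(j)}^\top,
\end{equation*}
where $B_{ij} = \be_i^\top \bB \be_j$.

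Next I would take expectations term by term using the independence of $\pi$ and $\bsigma$. Since $\E[\sigma_i \sigma_j] = \delta_{ij}$, all off-diagonal contributions vanish, leaving
\begin{equation*}
    \E[\bA^\top \bB \bA] = \sum_{i=1}^{d} B_{ii}\, \E\!\left[\be_{\pi(i)} \be_{\pi(i)}^\top\right].
\end{equation*}
Because $\pi$ is a uniformly random permutation, $\pi(i)$ is uniformly distributed on $\{1,\dots,d\}$, so $\E[\be_{\pi(i)} \be_{\pi(i)}^\top] = \tfrac{1}{d} \sum_{k=1}^{d} \be_k \be_k^\top = \tfrac{1}{d}\bI_d$. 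Substituting and recognizing $\sum_i B_{ii} = \Tr{\bB}$ yields the claim.

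If a more conceptual proof is preferred, one could instead invoke the group structure: $\cP_d$ is a group, so for every $\bP \in \cP_d$ the matrix $\bA\bP$ is also uniform on $\cP_d$. Setting $\bM := \E[\bA^\top \bB \bA]$, this invariance gives $\bP^\top \bM \bP = \bM$, i.e.\ $\bM$ commutes with every signed permutation matrix; since the natural action of $\cP_d$ on $\R^d$ is irreducible, Schur's lemma forces $\bM = c\,\bI_d$, and taking traces with $\Tr{\bA^\top \bB \bA} = \Tr{\bB}$ identifies $c = \Tr{\bB}/d$. I expect no serious obstacle in either route; the only care needed is in the index bookkeeping for the direct computation, or in justifying the irreducibility step in the group-theoretic version. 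I would present the direct calculation since it is shorter and self-contained.
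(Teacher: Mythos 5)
Your direct computation is correct and is a genuinely different argument from the one in the paper. You parametrize $\bA=\sum_i\sigma_i\be_i\be_{\pi(i)}^\top$ and use that drawing $\bA$ uniformly from $\cP_d$ is the same as drawing $\pi$ uniformly from $S_d$ and $\sigma_1,\dots,\sigma_d$ i.i.d.\ uniform on $\{-1,+1\}$, independently of $\pi$ (this is exactly the bijection behind $|\cP_d|=2^d d!$, so the step is legitimate, though worth stating explicitly). Then $\E[\sigma_i\sigma_j]=\delta_{ij}$ kills the off-diagonal terms and uniformity of $\pi(i)$ gives $\E[\be_{\pi(i)}\be_{\pi(i)}^\top]=\tfrac1d\bI_d$, yielding the claim in one pass. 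The paper instead argues by symmetry: it uses the group structure to get $\tilde{\bA}^\top\bD\tilde{\bA}=\bD$ for all $\tilde{\bA}\in\cP_d$, then tests this against the explicit signed transposition matrices $\bA^{(ij)}_{\pm}$ to force $\bD$ to be a scalar multiple of $\bI_d$, and finally identifies the scalar by taking traces — which is essentially your second, ``conceptual'' route, except the paper replaces the appeal to irreducibility and Schur's lemma by a concrete hands-on verification (so no irreducibility of the hyperoctahedral action needs to be justified). Your direct calculation buys a short, self-contained proof that also exposes exactly which randomness (signs vs.\ permutation) is responsible for which cancellation; the paper's invariance argument buys robustness — it never needs the product structure of the distribution, only invariance under right multiplication by group elements. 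Either is acceptable; if you keep the Schur's-lemma variant as a remark, do note that the commutant argument requires checking that the signed-permutation representation is (absolutely) irreducible, a step the paper's explicit-matrix computation sidesteps.
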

The proof of Lemma \ref{lem:SPM} is stated in Appendix \ref{app:proof_lem_SPM}.

\subsection{Sub--Gaussian Random Variables}
Sub--Gaussian random variables, formally defined in Definition~\ref{def:sub_Gaussian_random_variable}, constitute a family of random variables whose tail behavior exhibits faster decay than that of a Gaussian distribution.
\begin{dfn}[Sub--Gaussian Random Variable {\citep[~Definition 2.2]{wainwright2019high}}]  \label{def:sub_Gaussian_random_variable}
  A random variable $X$ is defined as $\sigma$--sub--Gaussian if it satisfies:
  \[
      \E \Big[e^{\lambda(X - \E[X])}\Big] \leq \exp\Big({\frac{\lambda^2\sigma^2}{2}}\Big),\quad \text{for all } \lambda\in \R.
  \]
\end{dfn}
The definition of sub--Gaussian random variables can be extended to random vectors as follows:
\begin{dfn}[Sub--Gaussian Random Vector {\citep[~Section 6.3]{wainwright2019high}}] \label{def:sub_gaussian_random_vector}
  A random vector $\bX\in\R^d$ is called sub--Gaussian with parameter $\sigma$ if for all $\bv\in \bbS^{d-1}$, $\bv^\top \bX$ is a $\sigma$--sub--Gaussian random variable,
  where $\bbS^{d-1}=\{\bu\in\R^d\,:\, \norm{\bu}=1\}$ represents the $d$--dimensional unit sphere.
\end{dfn}
Several properties of sub--Gaussian random variables are detailed in Appendix \ref{app:prop_sub_gaussian_gamma}.

\subsection{Packing and Covering Numbers}\label{sec:pack_cover_nums}
Packing and covering numbers are two widely used notions for quantifying the "size" or complexity of a subset $\{\bx_1,\bx_2,\dots,\bx_N\}$ within a metric space $\cK$. We present their formal definitions below:

\begin{dfn}[Covering Number {\citep[~Definition 5.1]{wainwright2019high}}]
    A set $\{\bx_1,\bx_2,\dots,\bx_N\}\subseteq \cK$ is defined as an $\epsilon$--covering set with respect to a metric $\msfd$ if for all $x\in\cK$, there exists some $j\in[N]$ such that $\msfd(\bx,\bx_j)\leq\epsilon$. The covering number $\cN(\cK,\msfd,\epsilon)$ is defined as the cardinality of the smallest $\epsilon$--covering set of set $\cK$, with respect to the metric $\msfd$.
\end{dfn}

\begin{dfn}[Packing Number {\citep[~Definition 5.4]{wainwright2019high}}]
  A set $\{\bx_1,\bx_2,\dots,\bx_M\}\subseteq \cK$ is called a $\epsilon$--packing set with respect to a metric $\msfd$ if $\msfd(\bx_i,\bx_j)>\epsilon$ for all distinct $i,j\in[M]$. The packing number $\cM(\cK,\msfd,\epsilon)$ defined as the cardinality of the largest $\epsilon$--packing set of set $\cK$, with respect to the metric $\msfd$.
\end{dfn}
\section{Conditional Strong Data Processing Inequality}
\label{sec:CSDPI}
The Data Processing Inequality is a well-established result. However, for the sake of completeness, we present its definition here:
\begin{thm}[Data Processing Inequality {\citep[~Theorem 2.15]{polyWu2023}}]\label{thm:DPI}
Let $P_Y = T_{Y|X} \circ P_X$ and $Q_Y = T_{Y|X} \circ Q_X$, Then:
\[\Dkl(Q_Y\|P_Y)\leq \Dkl(Q_X\|P_X).\]
\end{thm}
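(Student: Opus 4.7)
The plan is to prove the data processing inequality by a two-step application of the chain rule for KL divergence, exploiting the fact that the two joint laws built from the shared channel $T_{Y|X}$ have identical conditionals. First I would lift $P_X$ and $Q_X$ to joint distributions on the product space by setting $P_{XY}(dx,dy) = P_X(dx)\, T_{Y|X}(dy\mid x)$ and $Q_{XY}(dx,dy) = Q_X(dx)\, T_{Y|X}(dy\mid x)$. By construction the $Y$-marginals of $P_{XY}$ and $Q_{XY}$ are precisely $P_Y = T_{Y|X}\circ P_X$ and $Q_Y = T_{Y|X}\circ Q_X$, so it suffices to compare the KL divergence between these joints in two different ways.

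Next I would invoke the chain rule of KL divergence in both orders. Expanding along the order $X$ first, then $Y$, yields
\[
\Dkl(Q_{XY}\|P_{XY}) \;=\; \Dkl(Q_X\|P_X) \;+\; \E_{X \sim Q_X}\!\bigl[\Dkl(T_{Y|X}(\cdot\mid X)\,\|\,T_{Y|X}(\cdot\mid X))\bigr] \;=\; \Dkl(Q_X\|P_X),
\]
since the conditional KL divergence between a distribution and itself vanishes pointwise. Expanding in the opposite order $Y$ first, then $X$, gives
\[
\Dkl(Q_{XY}\|P_{XY}) \;=\; \Dkl(Q_Y\|P_Y) \;+\; \E_{Y \sim Q_Y}\!\bigl[\Dkl(Q_{X|Y}(\cdot\mid Y)\,\|\,P_{X|Y}(\cdot\mid Y))\bigr] \;\geq\; \Dkl(Q_Y\|P_Y),
\]
using non-negativity of conditional KL divergence. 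Chaining the equality and the inequality delivers $\Dkl(Q_Y\|P_Y) \leq \Dkl(Q_X\|P_X)$.

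The main obstacle is not the algebra but the measure-theoretic justification: one must ensure that the reverse conditionals $Q_{X|Y}$ and $P_{X|Y}$ exist as regular conditional probabilities (true under mild regularity, e.g.\ standard Borel spaces), and that the chain rule for KL divergence applies in full generality. The case $Q_X \not\ll P_X$ is trivial because the right-hand side is then $+\infty$, so one may restrict attention to the absolutely continuous setting; then the chain rule reduces to an algebraic identity among Radon--Nikodym derivatives. As a backup route that avoids constructing the reverse conditional altogether, I would instead invoke the Donsker--Varadhan variational representation $\Dkl(Q\|P) = \sup_{f}\bigl\{\E_Q[f] - \log \E_P[e^f]\bigr\}$ and observe that every admissible test function $g$ on the output alphabet lifts to the test function $x \mapsto \log \E_{T_{Y|X=x}}[e^{g(Y)}]$ on the input alphabet, which transfers the supremum on the $Y$-side into a smaller supremum on the $X$-side and yields the inequality at once.
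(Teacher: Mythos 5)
Your argument is correct: the two-way chain-rule expansion of $\Dkl(Q_{XY}\|P_{XY})$ (with the conditional term vanishing when you condition on $X$, and being nonnegative when you condition on $Y$) is precisely the standard proof of the DPI, and the Donsker--Varadhan lift is a valid alternative. The paper itself does not prove this statement but cites it from Polyanskiy--Wu, whose proof is essentially the chain-rule argument you give, so your proposal matches the intended route.
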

\emph{Strong Data Processing Inequality} constitutes a refinement of the Data Processing Inequality.
\begin{dfn}[Strong Data Processing Inequality (SDPI) or Contraction Coefficient {\citep[Definition 33.10]{polyWu2023}}]\label{def:contraction_coefficient}
    For any input distribution $P_X$ and Markov kernel $T_{Y|X}$, we define the contraction coefficient as:
    \[s(P_X,T_{Y|X}):=\sup_{Q_X:0<\Dkl(Q_X\|P_X)<\infty} \frac{\Dkl(Q_Y\|P_Y)
	}{\Dkl(Q_X\|P_X)},\]
	where $P_Y = T_{Y|X} \circ P_X$ and $Q_Y = T_{Y|X} \circ Q_X$. If either $X$ or $Y$ is a constant, we define $s(P_X,T_{Y|X})$ to be $0$.
	
	Furthermore SDPI coefficient can be equivalently represented using the following mutual information characterization \cite[Theorem V.2]{raginsky2016strong}:
	\[s(P_X,T_{Y|X}):=\sup_{P_{U|X}:U\mkv X\mkv Y} \frac{I(U;Y)
	}{I(U;X)}.\]
\end{dfn}
We now present a generalization of the Strong Data Processing Inequality for state-dependent channels.
\begin{dfn}[Conditional Strong Data Processing (C--SDPI) Coefficient]\label{def:csdpi_coefficient}
    For any input distribution $P_X$, {\em state} distribution $P_V$ and Markov kernel $T_{Y|X,V}$, we define the Conditional Strong Data Processing Inequality (C--SDPI) coefficient as:
    \[
    \msfs(P_X,T_{Y|X,V}\mid P_V):=\sup_{Q_X:0<\Dkl(Q_X\|P_X)<\infty} \frac{\Dkl(Q_{Y|V}\|P_{Y|V}\mid P_V)
	}{\Dkl(Q_X\|P_X)},
	\]
	where $P_{Y|V} = T_{Y|X,V} \circ P_X$ and $Q_{Y|V} = T_{Y|X,V} \circ Q_X$.
	If either $X$ or $Y$ is a constant, we define $\msfs(P_X,T_{Y|X,V}\mid P_V)$ to be $0$.
	\end{dfn}
	It is readily apparent that:
\begin{equation}\label{eq:simp_bound_csdpi}
    \msfs(P_X,T_{Y|X,V}|P_V)\leq\sup_v\left\{s(P_X,T_{Y|X,V=v})\right\}.
\end{equation}
Also a more careful examination of the definition implies:
\begin{equation}\label{eq:simple_bound_csdpi}
    \msfs(P_X,T_{Y|X,V}|P_V)\leq\E_{\bar{V}\sim P_V}\left[s(P_X,T_{Y|X,V=\bar{V}})\right].
\end{equation}
	\begin{remark}\label{re:cond-or-uncond}
	The C--SDPI coefficient $\msfs(P_X,T_{Y|X,V}\mid P_V)$ can be interpreted as SDPI coefficient of the Markov kernel $\widetilde{T}_{Y,V|X}=P_VT_{Y|X,V}$, that is:
	\[
    \msfs(P_X,T_{Y|X,V}\mid P_V)=s(P_X,\widetilde{T}_{Y,V|X}).
	\]
	In particular, this implies the following mutual information characterization holds for the C--SDPI coefficient:
	\begin{equation*}
	 \msfs(P_X,T_{Y|X,V}\mid P_V)=\sup_{P_{U|X}:U\mkv (X,V)\mkv Y\atop (U,X)\indep V} \frac{I(U;Y|V)
	}{I(U;X)}.
		\end{equation*}
\end{remark}
	The reason we distinguish between the SDPI and C--SDPI is the following {\em conditional} tensorization property:

\begin{theorem}[Tensorization of C--SDPI coefficient] \label{thm:tensorization}
    For any independent input distributions $P_{X_1}, P_{X_2}, P_V$ and Markov kernels $P_{Y_1|X_1,V}$ and $P_{Y_2|X_2,V}$ conditioned on the same random variable $V$, the following equality holds:
    \begin{equation}
    \begin{aligned}
        \msfs\left(P_{X_1} P_{X_2},T_{Y_1|X_1 V} T_{Y_2|X_2,V}\mid P_V\right)=\max_{k\in\{1,2\}}\left\{\msfs(P_{X_k},T_{Y_k|X_k,V}|P_V)\right\}.
    \end{aligned}
    \end{equation}
    In particular:
    \begin{equation}
    	\msfs(P_X^{\otimes n},T^{\otimes n}_{Y|X,V}\mid P_V)=\msfs(P_X,T_{Y|X,V}\mid P_V),
    \end{equation}
    where we have employed the abuse notation $T^{\otimes n}_{Y|X,V}(y^n|x^n,v):=\prod_{i=1}^nT_{Y|XV}(y_i|x_i,v)$.
\end{theorem}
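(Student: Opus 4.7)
The plan is to prove both inequalities using the mutual information characterization from Remark~\ref{re:cond-or-uncond}, namely
\[
\msfs(P_X,T_{Y|X,V}\mid P_V)=\sup_{P_{U|X}:\,U\mkv(X,V)\mkv Y,\,(U,X)\indep V}\frac{I(U;Y|V)}{I(U;X)}.
\]
Write $\msfs_k\defeq\msfs(P_{X_k},T_{Y_k|X_k,V}\mid P_V)$ for brevity. The ``$\geq$'' direction is routine: restricting the supremum in the joint C--SDPI to product inputs $Q_{X_1X_2}=Q_{X_k}\prod_{j\neq k}P_{X_j}$ and using that the channel factorizes, the ``inactive'' coordinate contributes the same factor $P_{Y_j|V}$ on both sides, so both $\Dkl(Q_{Y_1Y_2|V}\|P_{Y_1Y_2|V}\mid P_V)$ and $\Dkl(Q_{X_1X_2}\|P_{X_1}P_{X_2})$ collapse to the corresponding quantities for $\msfs_k$. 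Hence $\msfs(P_{X_1}P_{X_2},T_{Y_1|X_1,V}T_{Y_2|X_2,V}\mid P_V)\geq\max\{\msfs_1,\msfs_2\}$.

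For the ``$\leq$'' direction, fix any auxiliary $U$ with $U\mkv(X_1,X_2,V)\mkv(Y_1,Y_2)$ and $(U,X_1,X_2)\indep V$, and apply the chain rule to both mutual informations:
\[
I(U;Y_1Y_2|V)=I(U;Y_1|V)+I(U;Y_2|Y_1,V),\qquad I(U;X_1X_2)=I(U;X_1)+I(U;X_2|X_1).
\]
The first summand is immediate: since $Y_1$ is drawn from $T_{Y_1|X_1,V}$ with noise independent of everything else, $U\mkv(X_1,V)\mkv Y_1$ holds, and $(U,X_1)\indep V$ is inherited from the joint hypothesis, so C--SDPI on channel $1$ yields $I(U;Y_1|V)\leq\msfs_1\,I(U;X_1)$.

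The critical step is bounding the second summand $I(U;Y_2|Y_1,V)$. A direct attempt using $(U,Y_1)$ as the auxiliary for channel $2$ fails because $(U,Y_1)$ is generally \emph{not} independent of $V$, as $Y_1$ is a function of $V$. The resolution is to swap conditioning on $Y_1$ with conditioning on $X_1$. Writing
\[
I(U;Y_2|Y_1,V)\leq I(U,Y_1;Y_2|V)\leq I(U,Y_1,X_1;Y_2|V),
\]
and noting that given $(U,X_1,V)$, $Y_1$ depends only on the $T_1$--noise while $Y_2$ depends on $X_2$ and the (independent) $T_2$--noise, one gets $Y_1\indep Y_2\mid(U,X_1,V)$, hence $I(U,Y_1,X_1;Y_2|V)=I(U,X_1;Y_2|V)$. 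Now apply C--SDPI on channel $2$ with the augmented auxiliary $\tilde U\defeq(U,X_1)$: the Markov chain $\tilde U\mkv(X_2,V)\mkv Y_2$ is immediate, and the required independence $(\tilde U,X_2)\indep V$ is exactly $(U,X_1,X_2)\indep V$. This gives $I(U,X_1;Y_2|V)\leq\msfs_2\,I(U,X_1;X_2)$. Since the marginal of $(X_1,X_2)$ under the joint is the product $P_{X_1}P_{X_2}$, we have $X_1\indep X_2$ and thus $I(U,X_1;X_2)=I(U;X_2|X_1)$. Combining,
\[
I(U;Y_1Y_2|V)\leq \msfs_1\,I(U;X_1)+\msfs_2\,I(U;X_2|X_1)\leq\max\{\msfs_1,\msfs_2\}\,I(U;X_1X_2),
\]
which is the matching upper bound. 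The $n$--fold identity then follows by induction on $n$, grouping the first $n-1$ coordinates into a single block and invoking the two--fold case. The main obstacle is precisely this second chain--rule piece: a fixed--$V$ application of classical SDPI tensorization would only produce the weaker bound $\E_v\max\{s_v^{(1)},s_v^{(2)}\}$, which by~\eqref{eq:simple_bound_csdpi} can strictly exceed $\max\{\msfs_1,\msfs_2\}$. The decisive move is to augment the auxiliary by $X_1$ rather than $Y_1$ so that independence from the state $V$ is preserved, using $Y_1\indep Y_2\mid(U,X_1,V)$ to legally discard $Y_1$ from the conditioning before invoking C--SDPI on the second channel.
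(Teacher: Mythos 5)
Your proof is correct, but it runs in the mutual--information (auxiliary--variable) formulation rather than the divergence formulation the paper uses, so it is a genuinely different, though closely parallel, route. The paper argues directly from the KL definition of the C--SDPI coefficient: it establishes the chain--rule identity of Lemma \ref{le:identity-from-CR}, drops the nonnegative term $I_Q(X_1;Y_2|Y_1,V)$, bounds $\Dkl(Q_{Y_1|V}\|P_{Y_1|V}|P_V)\leq \msfs_1\Dkl(Q_{X_1}\|P_{X_1})$, bounds the remaining term by applying the C--SDPI to channel $2$ for each fixed $X_1=x_1$ and averaging over $Q_{X_1}$, and closes with the chain rule for KL divergence. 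Your argument is the exact dual of this at the mutual--information level: augmenting the auxiliary to $\tilde U=(U,X_1)$ corresponds to the paper's conditioning on $X_1$, and your use of $Y_1\indep Y_2\mid(U,X_1,V)$ to trade conditioning on $Y_1$ for conditioning on $X_1$ plays the role of discarding $I_Q(X_1;Y_2|Y_1,V)$ in Lemma \ref{le:identity-from-CR}; likewise $I(U,X_1;X_2)=I(U;X_2|X_1)$ (from $X_1\indep X_2$) mirrors the KL chain rule. The bookkeeping you do ($U\mkv(X_1,V)\mkv Y_1$, $(\tilde U,X_2)\indep V$ inherited from $(U,X_1,X_2)\indep V$) is all valid, your lower--bound direction matches the paper's one--line restriction argument, and the induction for the $n$--fold statement is unproblematic since all blocks share the single state $V$. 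The main trade--off: the paper's route is self--contained (only the KL definition plus its Lemma \ref{le:identity-from-CR}, which it reuses later in the Gaussian--optimality proof), whereas yours leans on the mutual--information characterization of Remark \ref{re:cond-or-uncond}, which the paper asserts via the identification $\msfs(P_X,T_{Y|X,V}\mid P_V)=s(P_X,\widetilde T_{Y,V|X})$ and \cite{raginsky2016strong} rather than proving; in exchange, your version makes the decisive structural point—augment by $X_1$ rather than $Y_1$ so independence from the state is preserved—very transparent.
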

\begin{remark}
    Using the tensorization property of the SDPI \citep[Proposition 33.11]{polyWu2023}, we obtain:
    \[
    \msfs(P_X^{\otimes n},\widetilde{T}^{\otimes n}_{Y|X,V}|P_V^{\otimes n})=s(P_X^{\otimes n},\widetilde{T}^{\otimes n}_{Y,V|X})=s(P_X,\widetilde{T}_{Y,V|X})=\msfs(P_X,T_{Y|X,V}\mid P_V)
    \]
    Here, the product channel is defined as: $\widetilde{T}^{\otimes n}_{Y|X,V}(y^n|x^n,v^n)=\prod_{i=1}^n \widetilde{T}_{Y|X,V}(y_i|x_i,v_i)$. This result contrasts with the conditional tensorization property (Theorem~\ref{thm:tensorization}): in the unconditional setting, the channel state varies across instances, whereas in the conditional setting, it remains fixed.
\end{remark}
The subsequent identity is crucial for the proof of Theorem \ref{thm:tensorization} and for establishing Gaussian optimality in the subsequent section.
\begin{lemma}\label{le:identity-from-CR}
    For any joint distributions $Q_{X_1,X_2}$, product channel $T_{Y_1|X_1,V}T_{Y_2|X_2,V}$, distribution $P_V$ and arbitrary conditional distributions $P_{Y_k|V}$ for $k=1,2$, the following identity holds:
    \begin{equation}
    \begin{aligned}
       	\Dkl(Q_{Y_1,Y_2|V}\|P_{Y_1|V}P_{Y_2|V}|P_V)
    	=\Dkl(Q_{Y_1|V}\|P_{Y_1|V}|P_V)&+\Dkl(Q_{Y_2|X_1,V}\|P_{Y_2|V}|Q_{X_1,V})\\&-I_Q(X_1;Y_2|Y_1,V),
    \end{aligned} 
    \end{equation}
    where $Q_{Y_1,Y_2|V}$ and the mutual information are computed with respect to the following joint distribution:
    \begin{equation*}
        Q_{V,X_1,X_2,Y_1,Y_2}=P_VQ_{X_1,X_2}T_{Y_1|X_1,V}T_{Y_2|X_2,V}.
    \end{equation*}
\end{lemma}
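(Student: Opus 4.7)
The plan is to establish the identity by invoking the chain rule for relative entropy in two complementary ways and equating the resulting expressions, with the Markov structure induced by the product channel $T_{Y_1|X_1,V}T_{Y_2|X_2,V}$ playing the pivotal role.

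First I would apply the chain rule to the left-hand side, writing
\[
\Dkl(Q_{Y_1,Y_2|V}\|P_{Y_1|V}P_{Y_2|V}|P_V)=\Dkl(Q_{Y_1|V}\|P_{Y_1|V}|P_V)+\Dkl(Q_{Y_2|Y_1,V}\|P_{Y_2|V}|Q_{Y_1,V}).
\]
Comparing with the target identity, it suffices to prove
\[
\Dkl(Q_{Y_2|Y_1,V}\|P_{Y_2|V}|Q_{Y_1,V})=\Dkl(Q_{Y_2|X_1,V}\|P_{Y_2|V}|Q_{X_1,V})-I_Q(X_1;Y_2|Y_1,V),
\]
which is a statement purely about how conditioning on $X_1$ versus conditioning on $Y_1$ decomposes the relative entropy between $Q_{Y_2|\cdot,V}$ and $P_{Y_2|V}$.

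To prove this reduced identity I would introduce the auxiliary quantity $\Dkl(Q_{X_1,Y_1,Y_2|V}\|Q_{X_1,Y_1|V}P_{Y_2|V}|P_V)$ and expand it in two different orders. Conditioning first on $(X_1,Y_1)$ gives
\[
\Dkl(Q_{Y_2|X_1,Y_1,V}\|P_{Y_2|V}|Q_{X_1,Y_1,V}),
\]
while conditioning first on $Y_1$ and then applying the chain rule to the pair $(X_1,Y_2)$ yields
\[
\Dkl(Q_{Y_2|Y_1,V}\|P_{Y_2|V}|Q_{Y_1,V})+\Dkl(Q_{X_1|Y_1,Y_2,V}\|Q_{X_1|Y_1,V}|Q_{Y_1,Y_2,V}),
\]
where the second summand is precisely $I_Q(X_1;Y_2|Y_1,V)$. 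Equating the two expansions is the heart of the proof.

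The key technical step, and the place where the product structure of the channel is used, is to show that $Q_{Y_2|X_1,Y_1,V}=Q_{Y_2|X_1,V}$. Under the joint law $Q_{V,X_1,X_2,Y_1,Y_2}=P_V Q_{X_1,X_2}T_{Y_1|X_1,V}T_{Y_2|X_2,V}$, one verifies the Markov relation $Y_1\indep(X_2,Y_2)\mid(X_1,V)$: integrating out $(X_2,Y_2)$ uses only $T_{Y_1|X_1,V}$, which depends on $(X_1,V)$ alone. Consequently $Q_{Y_2|X_1,Y_1,V}=Q_{Y_2|X_1,V}$, and since this conditional does not depend on $Y_1$, the divergence $\Dkl(Q_{Y_2|X_1,Y_1,V}\|P_{Y_2|V}|Q_{X_1,Y_1,V})$ reduces to $\Dkl(Q_{Y_2|X_1,V}\|P_{Y_2|V}|Q_{X_1,V})$. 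Substituting this into the equality of the two expansions yields the desired identity; combining with the initial chain rule decomposition of the left-hand side completes the proof.

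The main obstacle is essentially bookkeeping: the identity mixes several conditionals (some governed by $Q$ via the arbitrary joint $Q_{X_1,X_2}$, others by the fixed kernels $T_{Y_i|X_i,V}$ and the arbitrary references $P_{Y_i|V}$), and one must be careful that the Markov property $Y_1\indep(X_2,Y_2)\mid(X_1,V)$ genuinely holds under $Q$ despite $Q_{X_1,X_2}$ being unrestricted. This is secured by the assumption that the channel factorizes as $T_{Y_1|X_1,V}T_{Y_2|X_2,V}$, so that conditional on $(X_1,X_2,V)$ the outputs are independent and each $Y_i$ depends only on its own $X_i$.
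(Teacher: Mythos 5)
Your proposal is correct and follows essentially the same route as the paper's proof: chain rule on the left-hand side, a double expansion of the auxiliary divergence $\Dkl(Q_{X_1,Y_2|Y_1,V}\|Q_{X_1|Y_1,V}P_{Y_2|V}|Q_{Y_1,V})$ in two orders, identification of the cross term with $I_Q(X_1;Y_2|Y_1,V)$, and the Markov chain $Y_1 \mkv (X_1,V)\mkv Y_2$ induced by the product channel to replace $Q_{Y_2|X_1,Y_1,V}$ by $Q_{Y_2|X_1,V}$. The only difference is cosmetic bookkeeping (you absorb the $Y_1$-conditioning into a joint divergence rather than working with the already-conditioned term), so nothing further is needed.
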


\begin{proof}
    The proof is stated in Appendix \ref{app:proof_lemma_CSDPI}.
\end{proof}

We now proceed with the proof of Theorem \ref{thm:tensorization}.
\begin{proof}[Proof of Theorem \ref{thm:tensorization}]
     For brevity, Let $\msfs_k=\msfs(P_{X_k},T_{Y_k|X_k,V}|P_V)$ and $\msfs=\max\{\msfs_1,\msfs_2\}$. Observing that, given the product input  $P_{X_1X_2}=P_{X_1}P_{X_2}$, we have $P_{Y_1,Y_2|V}=P_{Y_1|V}P_{Y_2|V}$, we invoke Lemma \ref{le:identity-from-CR} to obtain:
    \begin{equation}
    \begin{aligned}
    	\Dkl(Q_{Y_1,Y_2|V}\|P_{Y_1|V}P_{Y_2|V}|P_V)&\stackrel{\text{(a)}}{\leq} \Dkl(Q_{Y_1|V}\|P_{Y_1|V}|P_V)+\Dkl(Q_{Y_2|X_1,V}\|P_{Y_2|V}|Q_{X_1,V})\\
    	&\stackrel{\text{(b)}}{\leq} \msfs_1 \Dkl(Q_{X_1}\|P_{X_1})+\E_{Q_{X_1}}\left[\Dkl(Q_{Y_2|X_1,V}\|P_{Y_2|V}|P_{V})\right]\\
    	&\stackrel{\text{(c)}}{\leq} \msfs_1 \Dkl(Q_{X_1}\|P_{X_1})+\E_{Q_{X_1}}\left[\msfs_2\Dkl(Q_{X_2|X_1}\|P_{X_2})\right]\\
    	&\stackrel{\text{(d)}}{\leq}\msfs \Dkl(Q_{X_1,X_2}\|P_{X_1}P_{X_2}),
    \end{aligned}
    \end{equation}
    where (a) follows from Lemma \ref{le:identity-from-CR}, (b) follows from the definition of C--SDPI coefficient, (c) also follows from the definition of the C--SDPI coefficient by noting that for a fixed $X_1=x_1$, the output distribution associated with $Q_{X_2|X_1=x_1}$ is $Q_{Y_2|V,X_1=x_1}$ and (d) follows from the definition of $\msfs$ and the chain rule for KL--divergence.
    
    Thus, we have established that $\msfs\left(P_{X_1} P_{X_2},T_{Y_1|X_1 V} T_{Y_2|X_2,V}\mid P_V\right)\leq\max_{k\in\{1,2\}}\left\{\msfs(P_{X_k},T_{Y_k|X_k,V}|P_V)\right\}$. The converse inequality, $\msfs\left(P_{X_1} P_{X_2},T_{Y_1|X_1 V} T_{Y_2|X_2,V}\mid P_V\right)\geq \max_{k\in\{1,2\}}\msfs(P_{X_k},T_{Y_k|X_k,V}|P_V)$ is readily follows from the definition.
\end{proof}

\subsection{SDPI Coefficient for Normal Distribution}

In \citep{kim2017discovering}, the SDPI coefficient is derived for multivariate normal distribution.
\begin{lem}[{\citep[~Section 2.6]{kim2017discovering}}]\label{lem:sdpi_gaussian}
    If $\begin{bmatrix}
        \bX\\
        \bY
    \end{bmatrix}\sim\cN\Big(\bmu,\bC=\begin{bmatrix}
            \bC_{\bX\bX} & \bC_{\bX\bY} \\
            \bC_{\bX\bY}^\top & \bC_{\bY\bY}
        \end{bmatrix}\Big)$, then the following holds:
    \[s(P_{\bX};T_{\bY|\bX}) = \norm{\bC_{\bX\bX}^{-1/2}\bC_{\bX\bY}\bC_{\bY\bY}^{-1/2}}_{\op}^2.\]
\end{lem}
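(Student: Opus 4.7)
The plan is to reduce the problem to the scalar case using linear invariance of the SDPI coefficient followed by the tensorization property, and then invoke the classical scalar Gaussian SDPI computation.

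First, I would observe that the SDPI coefficient $s(P_\bX, T_{\bY|\bX})$ is invariant under invertible linear reparametrizations applied separately to the input and the output. Indeed, if $\bX' = \bM\bX$ for invertible $\bM$, then there is a bijection $Q_\bX \leftrightarrow Q_{\bX'}$ preserving KL divergence from the reference (since KL is invariant under bimeasurable bijections), and the induced output distribution is unchanged; an analogous statement holds for invertible transformations of $\bY$. Applying the whitening maps $\tilde\bX = \bC_{\bX\bX}^{-1/2}\bX$ and $\tilde\bY = \bC_{\bY\bY}^{-1/2}\bY$ (after mean-centering, which is also KL-preserving), the pair $(\tilde\bX,\tilde\bY)$ is jointly Gaussian with identity marginal covariances and cross-covariance
\[
\bK := \bC_{\bX\bX}^{-1/2}\bC_{\bX\bY}\bC_{\bY\bY}^{-1/2}.
\]

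Next, I would take the singular value decomposition $\bK = \bU\bSigma\bV^\top$ with singular values $\sigma_1\ge\sigma_2\ge\cdots\ge 0$, and apply the additional orthogonal (hence KL-preserving and invertible) transformations $\bar\bX = \bU^\top\tilde\bX$ and $\bar\bY = \bV^\top\tilde\bY$. The resulting pair consists of independent scalar jointly Gaussian pairs $(\bar X_i,\bar Y_i)$ with correlations $\sigma_i$ (possibly together with coordinates of $\bar\bX$ and $\bar\bY$ that are independent of the other side and therefore contribute nothing to the SDPI). Consequently, the kernel $T_{\bar\bY|\bar\bX}$ factorizes as a product of independent scalar Gaussian channels acting on the product input $\prod_i P_{\bar X_i}$.

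The third step is to apply the tensorization property for the SDPI coefficient under product channels with independent inputs (the special case of Theorem~\ref{thm:tensorization} with a trivial state $V$, or equivalently Proposition~33.11 of Polyanskiy--Wu), which yields
\[
s(P_\bX, T_{\bY|\bX}) = s(P_{\bar\bX}, T_{\bar\bY|\bar\bX}) = \max_i\, s(P_{\bar X_i}, T_{\bar Y_i|\bar X_i}).
\]
It then suffices to establish that for scalar jointly centered Gaussians with correlation $\sigma_i$, the SDPI coefficient is exactly $\sigma_i^2$. Taking the maximum over $i$ gives $\sigma_1^2 = \|\bK\|_{\op}^2$, as required.

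The main obstacle is the last ingredient: the sharp identity $s(P_X,T_{Y|X}) = \rho^2$ for a scalar Gaussian pair $(X,Y)$ with correlation $\rho$. The lower bound $s \ge \rho^2$ is easy, by plugging in a perturbation $Q_X$ that is a small mean-shift of $P_X$ and computing that the ratio of KL divergences tends to $\rho^2$. The upper bound $s \le \rho^2$ is the delicate part, equivalent to Gaussian hypercontractivity for the Ornstein--Uhlenbeck semigroup; it can be obtained either from Nelson's hypercontractivity estimate, or from the Ahlswede--G\'acs identity $s = \rho_m^2$ combined with the fact that the Hirschfeld--Gebelein--R\'enyi maximal correlation of a Gaussian pair coincides with its Pearson correlation (via the Hermite-polynomial spectral decomposition of the Gaussian conditional-expectation operator). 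Citing this classical scalar result completes the proof and recovers the expression $\|\bC_{\bX\bX}^{-1/2}\bC_{\bX\bY}\bC_{\bY\bY}^{-1/2}\|_{\op}^2$.
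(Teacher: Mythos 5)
Your argument is correct, but it does not mirror anything in the paper: the paper never proves this lemma, it simply imports it from Kim et al.\ (Section 2.6) and uses it as a black box (e.g.\ in the upper bound \eqref{eqn:eta-U}). Your route — mean-centering and whitening by $\bC_{\bX\bX}^{-1/2},\bC_{\bY\bY}^{-1/2}$ (KL and the SDPI coefficient being invariant under invertible affine reparametrizations of input and output separately), an SVD rotation so that the channel splits into independent scalar Gaussian pairs with correlations equal to the singular values of $\bK=\bC_{\bX\bX}^{-1/2}\bC_{\bX\bY}\bC_{\bY\bY}^{-1/2}$, tensorization of the SDPI coefficient for product inputs and product channels, and finally the classical scalar fact $s=\rho^2$ — is a clean and standard derivation, and it is essentially the same reduction the paper itself performs later for the symmetric-SDPI in Corollary \ref{cor:s-SDPI}. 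It is worth noting that the lemma is also a special case of the paper's own Theorem \ref{thm:csdp-mix} with a constant state $V$: there the upper bound comes from the operator Jensen inequality after a Gaussian-optimality (doubling-trick) reduction, and the lower bound from a mean shift, which avoids invoking scalar Gaussian hypercontractivity at all. So your approach buys a short proof resting on a classical one-dimensional result, while the paper's machinery buys a self-contained argument that simultaneously handles the more general mixture (state-dependent) setting.

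Two small cautions. First, your step relying on tensorization needs the observation (which you use implicitly) that after the SVD rotation the conditional covariance $\bI-\bSigma\bSigma^\top$ is diagonal, so the kernel genuinely factorizes coordinatewise; the leftover coordinates of $\bar\bX$ or $\bar\bY$ with no partner contribute coefficient $0$, as you note. Second, in your last paragraph the phrase ``the Ahlswede--G\'acs identity $s=\rho_m^2$'' is not a general identity: in general one only has $s\ge\rho_m^2$, with strict inequality possible; equality does hold in the jointly Gaussian case, and your primary route via Nelson's hypercontractivity (or via the Hermite spectral decomposition) is the correct way to close the scalar upper bound. Also, as in the lemma statement itself, you implicitly assume $\bC_{\bX\bX},\bC_{\bY\bY}\succ 0$; degenerate marginals would need pseudo-inverses, but that is consistent with how the paper uses the result.
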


\subsection{C--SDPI Coefficient for (Mixture) Normal Distribution}
Consider a joint distribution of $(\bX,\bY,V)$ such that the conditional distribution of $(\bX,\bY)$ given $V=v$ is a normal distribution, and $(\bX,V)$ are mutually independent. Consequently, $\bX$ is normally distributed, and  $(\bX,\bY)$ follows a mixture normal distribution. More specifically, consider the mixture normal distribution defined by the following conditional normal distribution:
\begin{equation}\label{eqn:mixture}
	\begin{bmatrix}
        \bX\\
        \bY
    \end{bmatrix}|\{V=v\}\sim \cN\left(\bzero,
	\begin{bmatrix}
    \bI_{d_{\bX}}&\bA_v^\top\\
    \bA_v&\bI_{d_{\bY}}
    \end{bmatrix}
    \right),
\end{equation}
where $\bA_v$ denotes the {\em cross} covariance matrix between $\bX$ and $\bY$, conditioned on $V=v$. It is also assumed that $\bA_v\bA_v^\top\preceq \bI_{d_Y}$. The conditional relationship between $\bX$ and $\bY$ given $V=v$ can be expressed by the following Gaussian channel:
\begin{equation}\label{eq:Markov-N}
	\bY=\bA_v\bX+\bZ_v,
\end{equation} 
where $\bZ_v\sim\cN(0,\bI_{d_Y}-\bA_v\bA_v^\top)$ is independent of $\bX$. Let $P_{\bX}$ denotes the distribution of $\bX$, $P_V$ denotes the distribution of $V$, and $T_{\bY|\bX,V}$ represents the Markov kernel (channel) corresponding to \eqref{eq:Markov-N}.

We are interested in determining the conditional strong data processing coefficient for this joint distribution. A straightforward upper bound, derived using \eqref{eq:simple_bound_csdpi}, is given by:
\begin{equation}\label{eqn:eta-U}
\msfs(P_{\bX},T_{\bY|\bX,V}|P_V)\leq\E_{\bar{V}\sim P_V}\left[s(P_X,T_{Y|X,V=\bar{V}})\right]
=\E_V\left[\|\bA_v^\top \bA_v\|_{\op}\right],
\end{equation}
where the final equality follows from Lemma \ref{lem:sdpi_gaussian}.

To establish a lower bound, let $Q_{\bX} = \cN(\bmu,\bI_{d_X})$. Then, given $V=v$, and using \eqref{eq:Markov-N}, we have $Q_{\bY|V=v}=\cN(\bA_v\bmu,\bI_{d_Y})$. For all $\msfs\geq0$ we have:
\begin{equation}
\begin{aligned}
    \msfs \Dkl(Q_{\bX}\|P_{\bX})-\Dkl(Q_{\bY|V}\|P_{\bY|V}|P_V)&=\frac{1}{2}\left(
	\msfs\|\bmu\|^2-\E_V\left[\|\bA_V\bmu\|^2\right]
	\right)\\
	&=\frac{1}{2}\bmu^\top\left(
	\msfs \bI_{d_X}-\E_V\left[\bA_V^\top \bA_V\right]
	\right)\bmu.
\end{aligned}
\end{equation}
Now for $\msfs<\|\E[\bA_V^\top \bA_V]\|_{\op}$, the infimum of the above expression over all $\bmu$ is $-\infty$, Consequently, the C--SDPI coefficient admits the following lower bound:
\begin{equation}\label{eqn:eta-L}
\msfs(P_{\bX},T_{\bY|\bX,V}|P_V)\geq
\|\E[\bA_V^\top \bA_V]\|_{\op}.
\end{equation}
As will be demonstrated, this lower bound is indeed the C--SDPI coefficient. Before formally stating this result, we compare the upper bound \eqref{eqn:eta-U} and the lower bound \eqref{eqn:eta-L} through a simple example.

\begin{example} 
    Let $d_{\bX}=d$ and $d_Y=1$. Assume that $V$ is drawn uniformly from the set $\{1,\dots,d\}$. Furthermore, let $\bA_v=\be_v^\top$, where $\{\be_1,\dots,\be_d\}$ constitutes the standard basis for $\R^d$. In this instance, it is straightforward to verify that $\E[\bA_V^\top \bA_V]=\frac{1}{d}\bI_d$. Moreover, $\|\bA_v^\top \bA_v\|_{\op}=1$. Consequently, \eqref{eqn:eta-U} and \eqref{eqn:eta-L} yield the bounds $\frac{1}{d}\leq \msfs(P_{\bX},T_{Y|\bX,V}|P_V)\leq 1$. This demonstrates that the upper and lower bounds can exhibit a significant difference.
\end{example}

\begin{theorem}\label{thm:csdp-mix}
    The conditional strong data processing coefficient for the mixture normal distribution defined in \eqref{eqn:mixture} is given by:
    \begin{equation}\label{eqn:eta-mixture}
    \msfs(P_{\bX},T_{\bY|\bX,V}|P_V)=
    \norm[1]{\E[\bA_V^\top \bA_V]}_{\op}.
    \end{equation}
\end{theorem}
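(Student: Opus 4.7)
The plan is to combine the mutual-information reformulation of the C--SDPI coefficient from Remark~\ref{re:cond-or-uncond}, an explicit Gaussian-auxiliary computation leveraging the Hansen--Pedersen operator Jensen inequality, and a Geng--Nair doubling argument to reduce the general case to Gaussian auxiliaries. The lower bound $\msfs(P_{\bX},T_{\bY|\bX,V}|P_V) \geq \|\E[\bA_V^\top \bA_V]\|_{\op}$ is already established in \eqref{eqn:eta-L}, so only the matching upper bound remains.

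By Remark~\ref{re:cond-or-uncond}, it suffices to show
\[
\sup_{P_{\bU|\bX}} \frac{I(\bU; \bY \mid V)}{I(\bU; \bX)} \leq \bigl\|\E[\bA_V^\top \bA_V]\bigr\|_{\op},
\]
where the supremum ranges over all $\bU$ with $\bU \mkv \bX \mkv (\bY, V)$ and $(\bU,\bX) \indep V$. I first restrict to auxiliaries $\bU$ jointly Gaussian with $\bX$, writing $\bU = \bC \bX + \bN$ with $\bN \sim \cN(\bzero, \bI)$ independent of $\bX$, and setting $\bM := \bC^\top (\bI + \bC \bC^\top)^{-1}\bC$, which satisfies $0 \preceq \bM \preceq \bI$. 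Standard conditional-covariance calculations for jointly Gaussian vectors then give
\[
I(\bU; \bX) = -\tfrac{1}{2}\log|\bI - \bM|, \qquad I(\bU; \bY \mid V = v) = -\tfrac{1}{2}\log|\bI - \bA_v \bM \bA_v^\top|.
\]

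To bound the Gaussian-auxiliary ratio, I apply the Hansen--Pedersen operator Jensen inequality to the operator-convex function $x \mapsto -\log x$ (which satisfies $-\log 1 = 0$). With the choice $\bV := \bA_v^\top$, so that $\bV^* \bV = \bA_v \bA_v^\top \preceq \bI$ by hypothesis, and $\bT := \bI - \bM$, the generalized form $f(\bV^* \bT \bV + \bI - \bV^* \bV) \preceq \bV^* f(\bT) \bV$ yields
\[
-\log(\bI - \bA_v \bM \bA_v^\top) \preceq \bA_v \bigl[-\log(\bI - \bM)\bigr] \bA_v^\top.
\]
Taking trace, averaging over $V$, and using that $-\log(\bI - \bM) \succeq 0$ together with the bound $\Tr{\bB \bK} \leq \|\bK\|_{\op}\, \Tr{\bB}$ valid for PSD matrices $\bB$ and $\bK := \E[\bA_V^\top \bA_V]$, I obtain
\[
\E_V\bigl[-\log|\bI - \bA_V \bM \bA_V^\top|\bigr] \leq \|\bK\|_{\op} \cdot \bigl(-\log|\bI - \bM|\bigr),
\]
so the ratio $I(\bU; \bY \mid V) / I(\bU; \bX)$ is at most $\|\bK\|_{\op}$ for every Gaussian $\bU$. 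Choosing $\bM = \epsilon\,\bv \bv^\top$ with $\bv$ the top eigenvector of $\bK$ and $\epsilon \downarrow 0$ shows the bound is saturated in the limit, so the Gaussian-auxiliary supremum equals exactly $\|\bK\|_{\op}$.

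The remaining and most delicate step, which I expect to be the main obstacle, is showing that the overall supremum is attained by Gaussian auxiliaries. The plan is to use the Geng--Nair doubling trick: take two i.i.d.\ copies $(\bU_i, \bX_i, \bY_i)_{i=1,2}$ sharing a single realization of $V$, and pass to the rotated variables $\bX_\pm := (\bX_1 \pm \bX_2)/\sqrt{2}$ and $\bY_\pm := (\bY_1 \pm \bY_2)/\sqrt{2}$. Because $\bX$ is Gaussian and the per-$V$ channel is linear Gaussian, conditional on $V$ the rotation preserves the channel law, and $(\bX_+, \bY_+)$ is distributed as yet another copy through the same conditional channel. Combining this rotational invariance with the conditional tensorization of Theorem~\ref{thm:tensorization} and the chain-rule identity of Lemma~\ref{le:identity-from-CR}, a symmetrization argument produces a new auxiliary whose ratio is at least as large but that is ``more Gaussian''; iterating the doubling and invoking a CLT-type limit forces the optimizer to be Gaussian. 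The delicate point is that all rotations and chain-rule decompositions must be carried out \emph{conditionally on the shared state} $V$, and Lemma~\ref{le:identity-from-CR} is precisely what makes the conditional bookkeeping close. Once Gaussian optimality is secured, the operator Jensen bound above yields $\msfs \leq \|\bK\|_{\op}$, matching the lower bound.
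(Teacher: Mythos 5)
Your proposal follows essentially the same route as the paper: the lower bound is the mean--shift argument of \eqref{eqn:eta-L}, the Gaussian case is settled by an operator Jensen inequality for the logarithm, and the reduction to the Gaussian case is delegated to a Geng--Nair doubling argument. Your Gaussian--auxiliary computation is the paper's Gaussian--input computation in a different parametrization: with $\bM$ the reduction in conditional covariance, your inequality $-\log(\bI-\bA_v\bM\bA_v^\top)\preceq \bA_v\bigl[-\log(\bI-\bM)\bigr]\bA_v^\top$ is the same two--block operator Jensen step (take $\bK_1=\bA_v$, $\bD_2=\bI$, and use $f(1)=0$) that the paper applies to get $\log(\bI+\bA_v\bB\bA_v^\top)\succeq \bA_v\log(\bI+\bB)\bA_v^\top$, and the trace/operator--norm bound is identical; moreover, by the mutual--information characterization in Remark~\ref{re:cond-or-uncond} and the per--realization decomposition of $I(\bU;\bY|V)$ and $I(\bU;\bX)$ into KL divergences, optimizing over auxiliaries $\bU$ is equivalent to the paper's optimization over perturbed inputs $Q_{\bX}$, so the two formulations buy the same thing. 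The one place where your plan understates the work is the deferred step: it is exactly Lemma~\ref{le:normal-opt}, and the paper's proof of it requires (i) smoothing the output with independent Gaussian noise so that an extremizer can be shown to exist via tightness/weak convergence, and so that the absolute--continuity condition needed to upgrade the two Markov chains produced by doubling into genuine independence actually holds; (ii) a single application of the doubling identity, carried out conditionally on the shared $V$ via Lemma~\ref{le:identity-from-CR} and Theorem~\ref{thm:tensorization}, exactly as you anticipate; and (iii) the Ghurye--Olkin/Darmois--Skitovich characterization to conclude Gaussianity of the extremizer --- not an iterated--doubling/CLT limit --- followed by a continuity--in--the--smoothing argument to return to the original functional. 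So the plan is sound and matches the paper's strategy, but the sentence ``the optimizer is Gaussian'' is where essentially all of the paper's technical effort lies, and your sketch both presupposes that an optimizer exists and points to a CLT--style closing step that is not the mechanism actually needed.
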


We highlight the principal steps of the proof, deferring the detailed derivations to the Supplementary Material.
\begin{proof}
    Let $\msfss=\|\E[\bA_V^\top \bA_V]\|_{\op}$. Define the following functional:
    \begin{equation}\label{eqn:def-functional}
    \msfg(Q_{\bX}):=\msfss \Dkl(Q_{\bX}\|P_{\bX})-\Dkl(Q_{\bY|V}\|P_{\bY|V}|P_V),   
    \end{equation}
    where $P_{\bX}=\cN(0,\bI_{d_{\bX}})$ and $P_{\bY|V}=P_{\bY}=\cN(0,\bI_{d_{\bY}})$.
    It was established in \eqref{eqn:eta-L} that $\msfss$ constitutes a lower bound for $\msfs(P_{\bX},T_{\bY|\bX,V}|P_V)$. Thus, to proceed,  it suffices to prove that:
    \begin{equation}\label{eqn:inf-gen}
        \inf_{Q_{\bX}:\Dkl(Q_{\bX}\|P_{\bX})<\infty}\msfg(Q_{\bX})=0.
    \end{equation}
    The key step is to show that the optimization problem \eqref{eqn:inf-gen} can be restricted to the family of centered normal distributions. More precisely, we have:
    \begin{lemma}\label{le:normal-opt}
    Let $\FG$ denote the set of centered normal distributions $Q_{\bX}$ with the property that $\Dkl(Q_{\bX}\|P_{\bX})$ is finite. Then:
    \begin{equation}\label{eqn:inf-normal}
        \inf_{Q_{\bX}:\Dkl(Q_{\bX}\|P_{\bX})<\infty}\msfg(Q_{\bX})=\inf_{Q_{\bX}\in\FG}\msfg(Q_{\bX}).
    \end{equation}
    \end{lemma}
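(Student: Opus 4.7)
The inclusion $\inf_{\FG}\msfg \geq \inf_{\{Q_{\bX}\,:\,\Dkl(Q_{\bX}\|P_{\bX})<\infty\}}\msfg$ is immediate from $\FG \subseteq \{Q_{\bX}\,:\,\Dkl(Q_{\bX}\|P_{\bX})<\infty\}$, so the task is to establish the reverse direction: for every admissible $Q_{\bX}$ there exists a centered normal $Q^{\bullet}\in\FG$ with $\msfg(Q^{\bullet})\leq\msfg(Q_{\bX})$. My plan is to implement the doubling--and--rotation trick of Geng and Nair \citep{GengNair2014,AJC2022}, constructing a sequence of distributions whose $\msfg$ values are non--increasing and which converges in KL to the centered Gaussian with the same covariance as $Q_{\bX}$; passing to the limit will then deliver the required $Q^{\bullet}$.

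For the doubling step, given an admissible $Q_{\bX}$ (which I may assume to be centered, since a direct computation using $\E_V[\bA_V^\top\bA_V]\preceq\msfss\bI$ shows that translating $Q_{\bX}$ to zero mean never increases $\msfg$), I draw $\bX_1,\bX_2\overset{\text{i.i.d.}}{\sim}Q_{\bX}$ and $V\sim P_V$ independently, and form $\bY_i=\bA_V\bX_i+\bZ_{V,i}$, where $\bZ_{V,1},\bZ_{V,2}$ are conditionally i.i.d.\ $\cN(\bzero,\bI_{d_{\bY}}-\bA_V\bA_V^\top)$ given $V$. Applying the orthogonal transformation $\bX_i'=(\bX_1+(-1)^{i+1}\bX_2)/\sqrt{2}$ and $\bY_i'=(\bY_1+(-1)^{i+1}\bY_2)/\sqrt{2}$, rotational invariance of the Gaussian noises yields $\bY_i'=\bA_V\bX_i'+\bZ_{V,i}'$ conditionally on $V$, with fresh i.i.d.\ $\cN(\bzero,\bI-\bA_V\bA_V^\top)$ noise. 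Letting $Q^{(i)}$ denote the marginal law of $\bX_i'$ and combining the chain rule for KL with the rotational invariance of $P_{\bX}^{\otimes 2}$ and of $P_{\bY|V}^{\otimes 2}$, a direct computation gives the key algebraic identity
\[
\msfg(Q^{(1)})+\msfg(Q^{(2)})-2\msfg(Q_{\bX})\;=\;I_Q(\bY_1';\bY_2'\mid V)\;-\;\msfss\cdot I_Q(\bX_1';\bX_2').
\]

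The main obstacle is the one--step contraction $I_Q(\bY_1';\bY_2'\mid V)\leq\msfss\cdot I_Q(\bX_1';\bX_2')$. I would first apply the data--processing inequality along the Markov chain $\bY_2'\mkv(V,\bX_2')\mkv\bY_1'$ (which holds because, conditional on $(V,\bX_2')$, the noise $\bZ_{V,2}'$ is independent of $\bY_1'$) to reduce the problem to showing $I_Q(\bY_1';\bX_2'\mid V)\leq\msfss\cdot I_Q(\bX_1';\bX_2')$. For the remaining bound, I plan to exploit the operator convexity of $\bM\mapsto-\log\det(\bI+\bM)$ on the PSD cone \citep{Tropp2015}, together with the semi--definite bound $\E_V[\bA_V^\top\bA_V]\preceq\msfss\bI_{d_{\bX}}$ that defines $\msfss$, to peel off the average over $V$ with the correct prefactor $\msfss$; I expect this to be the principal difficulty, precisely because the marginal of $\bX_1'$ is non--Gaussian so that Lemma~\ref{lem:sdpi_gaussian} does not apply directly. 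Granted the contraction, a preliminary symmetrization of $Q_{\bX}$ (so that $Q^{(1)}=Q^{(2)}=:TQ_{\bX}$) makes the doubling map $T$ non--increasing in $\msfg$ while preserving the covariance $\bSigma_{Q_{\bX}}$; the iterates $T^nQ_{\bX}$ are laws of normalized sums of $2^n$ i.i.d.\ copies, and by Barron's relative--entropy central limit theorem they converge in KL to $\cN(\bzero,\bSigma_{Q_{\bX}})$, yielding the centered Gaussian $Q^{\bullet}\in\FG$ that completes the reduction.
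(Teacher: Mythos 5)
The reduction you perform is sound up to and including the doubling identity
\[
\msfg(Q^{(1)})+\msfg(Q^{(2)})-2\msfg(Q_{\bX})\;=\;I_Q(\bY_1';\bY_2'\mid V)\;-\;\msfss\, I_Q(\bX_1';\bX_2'),
\]
but the step you yourself flag as the principal difficulty — the universal one--step contraction $I_Q(\bY_1';\bY_2'\mid V)\leq \msfss\, I_Q(\bX_1';\bX_2')$ for \emph{arbitrary} admissible $Q_{\bX}$ — is not just hard, it is false, and with it the monotonicity of the doubling map collapses. Take the scalar case $d_{\bX}=d_{\bY}=1$ with a single state, $\bA_v=\rho$, so $\msfss=\rho^2<1$, and let $Q_{\bX}$ be a mixture of two narrow Gaussians centered at $\pm M$ (this is centered, symmetric, and has finite $\Dkl(Q_{\bX}\|P_{\bX})$). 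Then $I_Q(\bX_1';\bX_2')\approx 1$ bit: the magnitudes of $\bX_1'=(\bX_1+\bX_2)/\sqrt{2}$ and $\bX_2'=(\bX_1-\bX_2)/\sqrt{2}$ are (nearly) complementary, since one is close to $0$ exactly when the other is close to $\pm\sqrt{2}M$. As $M\to\infty$ the unit--variance channel noise becomes negligible relative to the separation, so each output reveals the indicator $\ind{\bX_i'\approx 0}$ essentially perfectly, giving $I_Q(\bY_1';\bY_2')\to I_Q(\bX_1';\bX_2')\approx 1$ bit, which exceeds $\rho^2 I_Q(\bX_1';\bX_2')$. Hence for this $Q_{\bX}$ the right--hand side of your identity is strictly positive, i.e.\ $\msfg(TQ_{\bX})>\msfg(Q_{\bX})$: the doubling map can \emph{increase} the functional, so the "monotone iteration plus entropic CLT" scheme cannot even start. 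The underlying reason is that the bound you want is exactly a C--SDPI statement with a non--Gaussian reference input, which is stronger than the Gaussian--extremality claim being proven; covariance--level tools such as operator convexity of $-\log\det(\bI+\bM)$ cannot see the two--cluster structure that drives the counterexample.

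This is also where your route genuinely diverges from the paper's. The paper never asserts a contraction valid for all $Q_{\bX}$: it first smooths the output (adding $\sqrt{\delta}\bW$) and restricts to a second--moment ball, proves via tightness and semicontinuity that a (convexified) minimizer exists, and only \emph{at that minimizer} applies the doubling identity — there, optimality of the minimizer (not a universal inequality) forces the mutual--information terms to vanish, after which the Ghurye--Olkin/Darmois--Skitovich characterization yields Gaussianity, and a continuity argument in $\delta$ removes the smoothing. If you want to salvage an iteration--plus--CLT argument you would need a substitute for the false contraction (and you would still have to address the limit passage for the output KL term, which is the regularity issue the paper's $\delta$--smoothing is designed to handle); as written, the proposal reduces the lemma to an unproved and in fact incorrect claim.
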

    The proof scheme closely follows the approach proposed by Geng and Nair \cite{GengNair2014} and subsequently employed by Anantharam et al. \cite{AJC2022} for similar functionals. 
    Specifically, we demonstrate that the optimizer must satisfy a certain rotational invariance, thereby implying that it is a normal distribution. The proof of Lemma \ref{le:normal-opt} is provided in Supplementary Material \ref{app:gaussian_optimality}.
    
    Based on Lemma \ref{le:normal-opt}, it suffices to prove:
    \begin{equation}
        \inf_{Q_{\bX}\in\FG}\msfg(Q_{\bX})=0
    \end{equation}
    We observe that $\msfg(P_{\bX})=0$. Thus, it remains to show that $\msfg(Q_{\bX})\geq 0$ for any $Q_{\bX}\neq P_{\bX}$. Our proof relies on the concept of operator convexity \cite{Tropp2015} applied to a specific function. In particular, we utilize the following lemma:
    \begin{lemma}
    
    \begin{enumerate}
        \item \cite[Proposition 8.4.8]{Tropp2015} The logarithm function $f(\bB)=\log \bB$ is an operator concave function on the set of positive definite matrices.
        \item ({\bf Operator Jensen inequality}, \cite[Theorem 8.5.2]{Tropp2015})
        Let $f$ be an operator concave function defined on the set of positive definite matrices. Let $\bK_1$ and $\bK_2$ be two (rectangular) matrices that decompose the identity matrix as follows:
        \begin{equation}\label{eqn:matrix-cons}
        \bK_1  \bK_1^\top+\bK_2 \bK_2^\top=\bI
        \end{equation}
        Then, for any two positive definite matrices $\bD_1$ and $\bD_2$, the following Jensen-type inequality holds:
        \begin{equation}\label{eqn:matrix-jensen}
            f\left(\bK_1 \bD_1\bK_1^\top+\bK_2 \bD_2\bK_2^\top\right)\succeq \bK_1 f(\bD_1)\bK_1^\top+\bK_2 f(\bD_2)\bK_2^\top
    \end{equation}
    \end{enumerate}
    \end{lemma}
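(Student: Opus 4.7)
The plan is to sketch self--contained derivations of both items, each classical. For item~(1), I would exploit an integral representation of the scalar logarithm to reduce operator concavity of $\log$ to operator convexity of matrix inversion, itself established by a Schur--complement argument. For item~(2), I would use a dilation--plus--pinching trick: embed the rectangular row--isometry $[\bK_1,\bK_2]$ into a square unitary and surgically extract block--diagonal pieces via a sign--flip involution, converting the inequality into one that follows directly from the definition of operator concavity.

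For item~(1), I would start from the elementary identity $\log x = \int_0^\infty \bigl(\tfrac{1}{t+1} - \tfrac{1}{t+x}\bigr)\,dt$ for $x>0$, which lifts by the functional calculus to
\begin{equation*}
\log \bB \;=\; \int_0^\infty \bigl(\tfrac{1}{t+1}\bI - (t\bI + \bB)^{-1}\bigr)\,dt .
\end{equation*}
Because the semidefinite ordering is preserved under integration against a positive measure, it suffices to show that $\bB \mapsto \bB^{-1}$ is operator convex on positive definite matrices (translation by $t\bI$ then inherits this). The key observation is that $\begin{pmatrix}\bA & \bI\\ \bI & \bA^{-1}\end{pmatrix} = \begin{pmatrix}\bA^{1/2}\\ \bA^{-1/2}\end{pmatrix}\begin{pmatrix}\bA^{1/2} & \bA^{-1/2}\end{pmatrix} \succeq 0$; averaging this with its $\bB$--analogue and applying the Schur complement to $\begin{pmatrix}(\bA+\bB)/2 & \bI\\ \bI & (\bA^{-1}+\bB^{-1})/2\end{pmatrix} \succeq 0$ yields $\tfrac{1}{2}(\bA^{-1}+\bB^{-1}) \succeq ((\bA+\bB)/2)^{-1}$, and continuity upgrades midpoint convexity to full operator convexity, completing the reduction.

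For item~(2), I would set $\bU := [\bK_1,\bK_2]$ and $\bD := \diag(\bD_1,\bD_2)$; the hypothesis \eqref{eqn:matrix-cons} becomes $\bU\bU^\top = \bI$, while $\bU\bD\bU^\top = \bK_1\bD_1\bK_1^\top + \bK_2\bD_2\bK_2^\top$ and, by block--diagonality of $f(\bD)$, $\bU f(\bD)\bU^\top = \bK_1 f(\bD_1)\bK_1^\top + \bK_2 f(\bD_2)\bK_2^\top$. Since $\bU$ has orthonormal rows, it extends to a square unitary $\bW = \begin{bmatrix}\bU\\ \bV\end{bmatrix}$, and unitary conjugation commutes with the functional calculus, giving $f(\bW\bD\bW^\top) = \bW f(\bD)\bW^\top$. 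Introducing the unitary involution $\bE := \diag(\bI,-\bI)$, the pinching map $\bM \mapsto \tfrac{1}{2}(\bM + \bE\bM\bE)$ annihilates the off--diagonal blocks; applied to $\bW\bD\bW^\top$ and $\bW f(\bD)\bW^\top$, it isolates block--diagonal matrices whose upper--left blocks are exactly $\bU\bD\bU^\top$ and $\bU f(\bD)\bU^\top$. Operator concavity of $f$, combined with $f(\bE\bM\bE) = \bE f(\bM)\bE$, then delivers
\begin{equation*}
f\!\bigl(\tfrac{1}{2}(\bW\bD\bW^\top + \bE\bW\bD\bW^\top\bE)\bigr) \succeq \tfrac{1}{2}\bigl(f(\bW\bD\bW^\top) + \bE f(\bW\bD\bW^\top)\bE\bigr);
\end{equation*}
since $f$ of a block--diagonal matrix is block--diagonal with $f$ applied blockwise, reading off the upper--left block of both sides produces \eqref{eqn:matrix-jensen}.

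The main obstacle is the dilation--plus--pinching step in item~(2): the non--obvious insight is that a \emph{rectangular} row--isometry can be completed to a square unitary, after which the operator--concavity definition (which a priori demands equal--sized arguments) becomes applicable through the symmetrization $\tfrac{1}{2}(\bM + \bE\bM\bE)$, yielding an inequality of $2{\times}2$ block--diagonal matrices from which the desired upper--left inequality can be read off. Item~(1) is comparatively routine once the integral representation is in hand; the only delicate technical point there is justifying the interchange of $\int$ with the semidefinite ordering, which reduces, by testing against fixed vectors, to a scalar monotone convergence argument.
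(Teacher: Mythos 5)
Your proposal is correct. Note, however, that the paper itself does not prove this lemma at all: both items are quoted from Tropp's monograph (\cite[Proposition 8.4.8]{Tropp2015} and \cite[Theorem 8.5.2]{Tropp2015}), so there is no in-paper argument to compare against; what you have done is supply the standard self-contained proofs. Your item (1) is the classical L\"owner-type argument: the integral representation $\log x=\int_0^\infty\bigl(\tfrac{1}{1+t}-\tfrac{1}{x+t}\bigr)\,dt$ reduces operator concavity of $\log$ to operator convexity of $\bB\mapsto(t\bI+\bB)^{-1}$, which your Schur-complement midpoint argument plus continuity handles; the interchange of the integral with the semidefinite order is harmless since the integrand is $O(t^{-2})$ and can be tested against fixed vectors, as you say. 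Your item (2) is essentially the Hansen--Pedersen proof that underlies the cited theorem: complete the co-isometry $\bU=[\bK_1,\bK_2]$ (which satisfies $\bU\bU^\top=\bI$) to a square orthogonal $\bW$, use $f(\bW\bD\bW^\top)=\bW f(\bD)\bW^\top$, pinch with $\bE=\diag(\bI,-\bI)$, apply midpoint operator concavity to $\bW\bD\bW^\top$ and $\bE\bW\bD\bW^\top\bE$, and compress to the upper-left block (a PSD ordering passes to principal submatrices). Three small points are worth making explicit if you write this up: operator concavity must be understood in the dimension-free sense so that it may be invoked at the dilated size $q\times q$ rather than the size of the conclusion (this is the standard convention and is what your item (1) delivers for $\log$); the degenerate case in which $[\bK_1,\bK_2]$ is already square needs no completion and follows directly from unitary covariance of the functional calculus; and the blocks $\bU\bD\bU^\top$, $\bV\bD\bV^\top$ are positive definite (not merely semidefinite) because $\bU$ and $\bV$ have full row rank, so all matrices stay in the domain of $f$.
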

    
    We now proceed to complete the proof.
    
    Let $Q_{\bX}=\cN(0,\bI_{d_{\bX}}+\bB)$, where $\bB\succ -\bI$. Algebraic manipulation reveals that $Q_{\bY|V}=\cN(0,\bI+\bA_v\bB\bA_v^\top)$. Applying the KL divergence formula for Gaussian distributions \cite[Example 2.2]{polyWu2023}, we obtain:
    \begin{align}
        \Dkl(Q_{\bX}\|P_{\bX})&=\frac{1}{2}\Tr{\bB-\log(\bI+\bB)},\\
        \Dkl(Q_{\bY|V}\|P_{\bY}|P_V)&=\frac{1}{2}\E_V\left[\Tr{\bA_V\bB\bA_V^\top-\log(\bI+\bA_V\bB\bA_V^\top)}\right],\label{eqn:out-kl}
    \end{align}
    where $\log(\bA)$ denotes the matrix logarithm of $\bA$. Applying \eqref{eqn:matrix-jensen} to the logarithm function with $\bD_1=\bI+\bB$, $\bD_2=\bI$, $\bK_1=\bA_v$ and an appropriate $\bK_2$ such that \eqref{eqn:matrix-cons} is satisfied (such $\bK_2$ exists, because $\bA_v\bA_v^\top \preceq \bI_{d_Y}$ for any $v$) yields:
    \[
    \log(\bI+\bA_v\bB\bA_v^\top)\succeq \bA_v\log(\bI+\bB)\bA_v^\top.
    \]
    This implies:
    \begin{equation*}
        \bA_v\bB\bA_v^\top-\log(\bI+\bA_v\bB\bA_v^\top)\preceq \bA_v(\bB-\log(\bI+\bB))\bA_v^\top.
    \end{equation*}
    Taking the trace on both sides implies:
    \begin{equation*}
      \Tr{\bA_v\bB\bA_v^\top-\log(\bI+\bA_v\bB\bA_v^\top)}\leq \Tr{\bA_v^\top \bA_v(\bB-\log(\bI+\bB))}.
    \end{equation*}
    Consequently, substituting this into \eqref{eqn:out-kl} results in:
    \begin{align}
        \Dkl(Q_{\bY|V}\|P_{\bY}|P_V)&\leq \frac{1}{2}\Tr{\E\left[\bA_V^\top \bA_V\right](\bB-\log(\bI+\bB))} \\
        &\leq \frac{\msfss}{2}\Tr{\bB-\log(\bI+\bB)}\label{eqn:trace-norm}\\
        &=\msfss \Dkl(Q_{\bX}\|P_{\bX}),
    \end{align}
    where \eqref{eqn:trace-norm} follows from the fact that $\bB-\log (\bI+\bB)$ is always positive semi-definite, and the inequality $\Tr{\bA\bB}\leq \norm{\bA}_{\op}\Tr{\bB}$ for any pair $\bA,\bB\succeq \bzero$ \cite{von1937some}. This concludes the proof of Theorem \ref{thm:csdp-mix}.
\end{proof}

\subsection{SDPI coefficient of a  Gaussian mixture channel}\label{sub:GMC}
Let $V$ be a latent random variable with distribution $P_V$ taking values in some index set $\mathcal V$, and for each $v\in\mathcal V$ let:
\[
T_{\bY|\bX,V=v}\;=\;\cN\bigl(\bA_v\bX,\;\bI-\bA_v\bA_v^\top\bigr),
\]
so that the joint channel is $T_{\bY,V|\bX}(\by,v|\bx)=P_V(v)\,T_{\bY|\bX,V=v}(\by|\bx)$.  By marginalizing out $V$, one obtains the \emph{Gaussian mixture channel}:
\begin{equation}\label{eq:GMC}
\widetilde T_{\bY|\bX=\bx}
\;=\;
\int_{\mathcal V}
\cN\bigl(\bA_v\bx,\;\bI-\bA_v\bA_v^\top\bigr)\,d P_V(v).
\end{equation}

Following Remark \ref{re:cond-or-uncond} and leveraging the monotonicity of the SDPI coefficient \citep{polyWu2023}, we obtain the inequality $s(P_{\bX},\widetilde{T}_{\bY|\bX}) \leq \norm[1]{\E[\bA_V^\top \bA_V]}_{\op}$. We show that this upper bound for the SDPI of the Gaussian mixture channel is {\em indeed} tight. 
\begin{prop}\label{prop:GMC}
The SDPI coefficient of the Gaussian mixture channel $\widetilde{T}_{\bY|\bX}$ defined in \eqref{eq:GMC} is given by:
\begin{equation}
s(P_{\bX}, \widetilde{T}_{\bY|\bX}) = \|\E[\bA_V^\top \bA_V]\|_{\op}
\end{equation}
\end{prop}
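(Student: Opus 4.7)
The plan is to establish the matching lower bound $s(P_{\bX}, \widetilde{T}_{\bY | \bX}) \geq \|\E[\bA_V^\top \bA_V]\|_{\op}$, since the reverse inequality is already noted via Remark \ref{re:cond-or-uncond}, Theorem \ref{thm:csdp-mix}, and the monotonicity of the SDPI coefficient applied to the Markov chain $\bX \to (\bY, V) \to \bY$. For the lower bound I would take the test distribution $Q_{\bX} = \cN(t \bv^\star, \bI_{d_{\bX}})$, where $\bv^\star$ is a unit-norm top eigenvector of $\E[\bA_V^\top \bA_V]$ and $t > 0$ is a scale parameter eventually sent to infinity. Under $P_{\bX}$, one verifies as in the derivation of \eqref{eqn:eta-L} that $\widetilde{P}_{\bY|V=v} = \cN(\bzero, \bI)$ for every $v$, hence $\widetilde{P}_{\bY} = \cN(\bzero, \bI)$; under $Q_{\bX}$, $\widetilde{Q}_{\bY | V=v} = \cN(t \bA_v \bv^\star, \bI)$ and $\widetilde{Q}_{\bY}$ is a Gaussian mixture.

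The key step is to expand the output KL using the Gaussian formula $\Dkl(\widetilde{Q}_{\bY} \| \cN(\bzero, \bI)) = -h(\widetilde{Q}_{\bY}) + \tfrac{1}{2}\E_{\widetilde{Q}}[\|\bY\|^2] + \tfrac{d_{\bY}}{2}\log(2\pi)$, together with the chain rule $h(\widetilde{Q}_{\bY}) = h(\widetilde{Q}_{\bY | V}) + I_Q(\bY; V)$. Since each $\widetilde{Q}_{\bY | V = v}$ is Gaussian with identity covariance, $h(\widetilde{Q}_{\bY | V}) = \tfrac{d_{\bY}}{2}(1 + \log 2\pi)$ independently of $v$, and a direct computation yields $\E_{\widetilde{Q}}[\|\bY\|^2] = d_{\bY} + t^2 \|\E[\bA_V^\top \bA_V]\|_{\op}$. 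Cancelling the $d_{\bY}$-dependent constants leaves the exact identity
\[
\Dkl(\widetilde{Q}_{\bY} \| \widetilde{P}_{\bY}) = \frac{t^2}{2}\|\E[\bA_V^\top \bA_V]\|_{\op} - I_Q(\bY; V).
\]

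To complete the proof I would uniformly upper bound the ``interference'' term $I_Q(\bY; V)$ via the Gaussian maximum entropy principle, $h(\widetilde{Q}_{\bY}) \leq \tfrac{1}{2}\log\det(2\pi e \bSigma_{\bY})$, where $\bSigma_{\bY} = \bI + t^2 \mathrm{Cov}(\bA_V \bv^\star) \preceq \bI + t^2 \E[\bA_V \bv^\star (\bv^\star)^\top \bA_V^\top]$. Combining this with $\det(\bI + \bM) \leq (1 + \|\bM\|_{\op})^{d_{\bY}}$ for PSD $\bM$ and the elementary estimate $\|\E[\bA_V \bv^\star (\bv^\star)^\top \bA_V^\top]\|_{\op} \leq \E[\|\bA_V \bv^\star\|^2] = \|\E[\bA_V^\top \bA_V]\|_{\op}$ gives $I_Q(\bY; V) \leq \tfrac{d_{\bY}}{2}\log\bigl(1 + t^2 \|\E[\bA_V^\top \bA_V]\|_{\op}\bigr) = O(\log t)$. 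Dividing the displayed identity by $\Dkl(Q_{\bX} \| P_{\bX}) = t^2/2$ and letting $t \to \infty$, the ratio tends to $\|\E[\bA_V^\top \bA_V]\|_{\op}$, completing the argument.

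The main obstacle is precisely the uniform control of $I_Q(\bY; V)$: naive bounds such as $I_Q(\bY; V) \leq H(V)$ fail when $V$ has unbounded (differential) entropy, e.g.\ when the state set $\mathcal{V}$ is a continuum. The Gaussian maximum entropy bound bypasses this by depending only on the second moments of $\widetilde{Q}_{\bY}$, which are directly controlled by $\|\E[\bA_V^\top \bA_V]\|_{\op}$ and therefore require no assumption on $P_V$.
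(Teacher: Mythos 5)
Your proof is correct, and it takes a genuinely different route from the paper's. The paper likewise only needs the lower bound (the upper bound is inherited from Theorem \ref{thm:csdp-mix} via monotonicity, exactly as you note), but it realizes it with \emph{centered} Gaussian inputs of inflated covariance, $Q^{(\beta,\bB)}_{\bX}=\cN(\bzero,\bI+\beta\bB)$ with $\bI\preceq\bB$: there the input KL is $\tfrac12\left(\beta\Tr{\bB}-\cO(\log\beta)\right)$, the output KL is $\tfrac12\beta\Tr{\E[\bA_V^\top\bA_V]\bB}+\cO(\log\beta)$ (the output mixture's differential entropy being squeezed between $\cO(1)$, by conditioning on $V$, and $\cO(\log\beta)$, by the Gaussian maximum-entropy bound), and after letting $\beta\to\infty$ one must still optimize the trace ratio $\Tr{\E[\bA_V^\top\bA_V]\bB}/\Tr{\bB}$ over $\bB$ to recover the operator norm. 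You instead use mean-shifted inputs $\cN(t\bv^\star,\bI)$ aligned with the top eigenvector, which makes the input KL exactly $t^2/2$, yields the exact identity $\Dkl(\widetilde{Q}_{\bY}\|\widetilde{P}_{\bY})=\tfrac{t^2}{2}\norm{\E[\bA_V^\top\bA_V]}_{\op}-I_Q(\bY;V)$, and reduces everything to showing $I_Q(\bY;V)=\cO(\log t)$; your second-moment/max-entropy bound does this correctly and, importantly, without any assumption on $P_V$, so continuum state spaces are covered. In effect you extend the mean-shift computation the paper uses for the C--SDPI lower bound \eqref{eqn:eta-L} to the unconditional mixture channel by explicitly quantifying the mixture-entropy penalty; this buys a cleaner calculation, no final matrix optimization, and a direct interpretation of the slack as $I_Q(\bY;V)$, while the paper's covariance-perturbation route parallels the $\log\det$ manipulations of its Gaussian-optimality machinery. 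Both arguments are asymptotic in a large parameter and both hinge on the entropy slack growing only logarithmically.
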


The proof of Proposition \ref{prop:GMC} is stated in Appendix \ref{app:proof_prop_GMC}.

\maketitle

\begin{abstract}
Estimating high--dimensional covariance matrices is a key task across many fields. This paper explores the theoretical limits of distributed covariance estimation in a feature--split setting, where communication between agents is constrained. Specifically, we study a scenario in which multiple agents each observe different components of i.i.d. samples drawn from a sub--Gaussian random vector. A central server seeks to estimate the complete covariance matrix using a limited number of bits communicated by each agent. We obtain a nearly tight minimax lower bound for covariance matrix estimation under operator norm and \Frob norm. Our main technical tool is a novel generalization of the strong data processing inequality (SDPI), termed the ``{\em Conditional Strong Data Processing Inequality (C-SDPI) coefficient}", introduced in this work. The C-SDPI coefficient shares key properties—such as tensorization—with the conventional SDPI. Crucially, it quantifies the average contraction in a state-dependent channel and can be significantly lower than the worst-case SDPI coefficient over the state input.  
 Utilizing the doubling trick of Geng--Nair and an operator Jensen inequality, we compute this coefficient for Gaussian mixture channels. We then employ it to establish minimax lower bounds on estimation error, capturing the trade-offs among sample size, communication cost, and data dimensionality. Building on this, we present a nearly optimal estimation protocol whose sample and communication requirements match the lower bounds up to logarithmic factors. Unlike much of the existing literature, our framework does not assume infinite samples or Gaussian distributions, making it broadly applicable. Finally, we extend our analysis to interactive protocols, showing interaction can significantly reduce communication requirements compared to non--interactive schemes.
\end{abstract}
\tableofcontents

\subsection{Multi--Agent Scenario}
In this section, we analyze the multi agent scenario. Let the number of agents be denoted by $K>2$ and for all $k \in [K]$, Agent $k$ has access to the dimensions $[1 + \sum_{i=1}^{k-1}d_i : \sum_{i=1}^{k}d_i]$ of all $m$ samples $\bZ^{(1)}, \dots, \bZ^{(m)}$. The information available to Agent $k$ can be represented as the set $ \{\bX_k^{(i)}\}_{i=1}^{m} = \{\bZ_{[1 + \sum_{i=1}^{k-1}d_i : \sum_{i=1}^{k}d_i]}^{(i)}\}_{i=1}^{m}$. 

The central server aims to estimate $\bC$ by receiving up to $B_k$ bits of information from Agent $k$. To derive a lower bound in this context, we define a subset of agents, denoted as $\cS \subset [K]$. Then we assume that the agents within $\cS$ collude, and similarly, the agents in the complement set $\cS^{\msfc} = [K] \setminus \cS$ also collude. This leads to the formation of two super--agents, $A$ and $B$. The super--agent $A$ has access to $\sum_{k \in \cS} d_k$ dimensions and a communication budget of $\sum_{k \in \cS} B_k$ bits, while super--agent $B$ has access to $\sum_{k \in \cS^c} d_k$ dimensions and a communication budget of $\sum_{k \in \cS^{\msfc}} B_k$ bits. Theorem \ref{thm:lower_bound} provides a lower bound for this colluded scenario, which also serves as a lower bound for the non--colluded case. By maximizing these lower bounds over the possible choices of subset $\cS$, we conclude that:

\begin{equation}
\label{eqn:LB-MulA}
\begin{aligned}
    \MEop &= \sigma^2\Omega\left(\sqrt{d\cdot\max_{\substack{\cS\subset [K]\\ \cS\neq\emptyset}}\left\{\frac{\sum_{k\in\cS}d_k}{\sum_{k\in\cS}B_k}\right\}}\bigvee \sqrt{\frac{d}{m}}\right)\\
    &=\sigma^2\Omega\left(\sqrt{d\cdot\max_{k\in[K]}\frac{d_k}{B_k}\bigvee \frac{d}{m}}\right).
\end{aligned}
\end{equation}
Specifically, assuming that $K = d$ and that for all $k \in [K]$, $d_k = 1$, we get:
\begin{equation}
\begin{aligned}
    \MEop &= \sigma^2\Omega\left(\sqrt{\frac{d}{\min\limits_{k}B_k\wedge m}}\right).
\end{aligned}
\end{equation}
This result indicates that the performance of the $\DCME$ problem is determined by the agents with the smallest communication budgets. Also \eqref{eqn:LB-MulA} implies that the communication budget of agent $k$ for approximating the covariance matrix within $\varepsilon$ error in operator norm,  should satisfy $B_k=\Omega\left(\frac{dd_k}{\varepsilon^2}\right)$. 
   The following theorem (proof deferred to Appendix \ref{app:proof_thm_achievable_scheme_MulA}) shows that this bound is tight up to a logarithmic factor in $\log d$ and  $\frac{1}{\varepsilon} $.  \begin{thm}\label{thm:achievable_scheme-MulA}
    Consider $\DCME (\sigma,m,d_{1:K},B_{1:K})$ and a permissible distortion $\varepsilon\leq\sigma^2$. Assume that the number of samples $m$ and the communication budgets $B_k$ satisfy the following constraints: 
    \begin{align*}
        m &\geq  \tau\frac{d\sigma^4}{\varepsilon^2},&
        B_k \geq  \tau' \frac{\sigma^4dd_k}{{\varepsilon}^2}\log_2\left(\tau''\frac{\sigma^4}{\varepsilon^2}\log(d\sigma^2/\varepsilon)\right)
    \end{align*}
    for $k\in [1:K]$ and some constants $\tau,\tau',\tau''$. Then, there exists a scheme with expected distortions $\E\left[\cL_{\op}(\bCh,\bC)\right]\leq\varepsilon$.
\end{thm}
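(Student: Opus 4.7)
The plan is to extend the two--agent construction underlying Theorem \ref{thm:achievable_scheme} to $K>2$ agents by having each Agent $k$ transmit two independent messages: a quantization of its empirical self--covariance block, and randomized quantizations of all of its samples, from which the central server forms cross--covariance estimates for every pair of agents. The final estimate $\bCh$ is assembled block by block out of $\widehat{\bC}_{kk}$ (for the diagonal) and $\widehat{\bC}_{jk}=\tfrac{1}{m}\sum_i\widehat{\bX}_j^{(i)}\widehat{\bX}_k^{(i)\top}$ (for $j<k$).

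For the self--covariance block, standard sub--Gaussian matrix concentration \citep[Theorem 4.7.1]{vershynin2018high} gives $\|\widetilde{\bC}_{kk}-\bC_{kk}\|_\op\le c\sigma^2\sqrt{d_k/m}$, which falls below the per--block target $\varepsilon/K$ once $m\gtrsim \sigma^4 d/\varepsilon^2$. Transmitting a quantization of $\widetilde{\bC}_{kk}$ at that resolution costs $O(d_k^2\log(d\sigma^2/\varepsilon))$ bits, which is absorbed in the stated budget since $d_k^2\le d\,d_k$.

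For the cross--covariance blocks, I would reuse the randomized quantizer from the two--agent scheme (a random rotation followed by coordinate--wise uniform quantization calibrated to the sub--Gaussian scale $\sigma$), applied \emph{once} per sample so that the same $\widehat{\bX}_k^{(i)}$ serves all $K-1$ pairs involving Agent $k$. Choosing a per--coordinate resolution $\delta\asymp\varepsilon/(\sigma K)$ renders the injected quantization noise sub--Gaussian; combined with the sample fluctuations this yields $\|\widehat{\bC}_{jk}-\bC_{jk}\|_\op=O(\varepsilon/K)$ simultaneously over all pairs via a union bound over $\binom{K}{2}\le d^2$ concentration events. The block--matrix inequality $\|\bCh-\bC\|_\op\le K\max_{j,k}\|\widehat{\bC}_{jk}-\bC_{jk}\|_\op$ then delivers the target distortion $\varepsilon$. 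The sample--quantization cost for Agent $k$ is $O\bigl(m\,d_k\log(\sigma^2 K/\varepsilon)\bigr)=O\bigl(\sigma^4 d\,d_k/\varepsilon^2\cdot\log(d\sigma^2/\varepsilon)\bigr)$ bits, matching the theorem's expression up to constants and the exact form of the logarithm.

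The principal obstacle is ensuring that the per--agent budget scales as $\widetilde{O}(d\,d_k/\varepsilon^2)$ rather than picking up an extra factor of $K$ from the multiplicity of pairs. The resolution is that each agent quantizes its samples only \emph{once}, and the server reuses $\widehat{\bX}_k^{(i)}$ across all $K-1$ cross blocks in which Agent $k$ participates; the price of handling many pairs is thus confined to a single logarithmic slack from the union bound. A secondary subtlety is that the quantization noise must be genuinely sub--Gaussian rather than merely bounded, so that the noise--noise cross terms $\tfrac{1}{m}\sum_i \bN_j^{(i)}\bN_k^{(i)\top}$ concentrate at the $1/\sqrt{m}$ rate uniformly over all $O(K^2)$ pairs; calibrating the quantizer resolution to the sub--Gaussian parameter $\sigma$ is what ultimately yields the precise $\log_2\bigl(\sigma^4\varepsilon^{-2}\log(d\sigma^2/\varepsilon)\bigr)$ factor appearing in the statement.
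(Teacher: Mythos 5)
Your scheme for the off--diagonal blocks is essentially the right one (it is the $(j,k)$ block of $\tfrac1n\widehat{\bmsfX}\widehat{\bmsfX}^\top$ for the stacked quantized data), but the error--aggregation step is a genuine gap. You require each block to satisfy $\norm[1]{\widehat{\bC}_{jk}-\bC_{jk}}_{\op}=O(\varepsilon/K)$ and then use $\norm[1]{\bCh-\bC}_{\op}\le K\max_{j,k}\norm[1]{\widehat{\bC}_{jk}-\bC_{jk}}_{\op}$. No choice of quantizer resolution can meet the per--block target at the stated sample size: the purely statistical fluctuation of $\tfrac1m\sum_i\bX_j^{(i)}\bX_k^{(i)\top}$ is already of order $\sigma^2\sqrt{(d_j+d_k)/m}$, which with $m\asymp\sigma^4 d/\varepsilon^2$ equals $\varepsilon\sqrt{(d_j+d_k)/d}\gg\varepsilon/K$ (e.g.\ in the fully distributed case $K=d$, $d_k=1$, it is $\varepsilon\sqrt{2/d}$ versus the required $\varepsilon/d$). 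Equivalently, your blockwise bound loses roughly a factor $\sqrt{K}$ in the final error, so to certify distortion $\varepsilon$ it would force $m=\Omega(\sigma^4 K d/\varepsilon^2)$ and correspondingly inflated budgets, which does not prove the theorem as stated (and, comparing with Theorem \ref{thm:lower_bound}, would be far from optimal). The fine resolution $\delta\asymp\varepsilon/(\sigma K)$ only costs logarithmically in bits, so communication is not where the loss hides; it is the $K\cdot\max$ aggregation of independent per--block deviations that is too crude.

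The paper avoids this entirely by never going blockwise. It first reduces to the fully distributed case ($K=d$, $d_k=1$) by splitting an agent with $d_k>1$ into $d_k$ virtual agents; each agent truncates its scalar samples at $L=\sigma\sqrt{2\log(dn/\beta)}$ (declaring an error otherwise), applies an unbiased randomized (dithered) coordinate--wise quantizer of resolution $\sigma\tilde\varepsilon$, and the server outputs $\bCh=\tfrac1n\widehat{\bmsfX}\widehat{\bmsfX}^\top$ for the \emph{entire} matrix, diagonal blocks included (no separately quantized $\widehat{\bC}_{kk}$). The error is then controlled globally: conditioned on no truncation error, the entries of $\widehat{\bmsfX}-\bmsfX$ are independent, zero--mean and bounded, hence $\sigma\tilde\varepsilon$--sub--Gaussian, and one writes
\begin{equation*}
\norm[1]{\bCh-\bC}_{\op}\le \tfrac1n\norm[1]{\widehat{\bmsfX}-\bmsfX}_{\op}^2+\tfrac2n\norm[1]{\widehat{\bmsfX}-\bmsfX}_{\op}\norm[1]{\bmsfX}_{\op}+\norm[2]{\tfrac1n\bmsfX\bmsfX^\top-\bC}_{\op},
\end{equation*}
bounding each term by the operator--norm concentration of $d\times n$ sub--Gaussian matrices (Lemma \ref{lem:norm_upperbound_crosscovar} and Proposition \ref{prop:norm_upperbound_selfcovar}). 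This gives error $O(\sigma^2\sqrt{d/n})$ plus $O(\sigma^2\tilde\varepsilon)$ with no factor of $K$, which is exactly what yields $m,n\asymp\sigma^4 d/\varepsilon^2$ and $B_k=n\,d_k\log_2(2N+1)$ matching the theorem. If you replace your per--block union bound by this global operator--norm analysis of the stacked quantized data matrix (and either drop the separate self--covariance messages or keep them, it does not matter), your construction goes through.
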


Comparing Theorem \ref{thm:achievable_scheme-MulA} with Theorem \ref{thm:achievable_scheme} for $K=2$, it is worth mentioning that to get achievability for general $K$, the logarithmic factor depends on $\log d$ in addition to $\frac{1}{\varepsilon}$.
\section{Proof of Lower Bounds via Conditional SDPI
}
\label{sec:Proof_lower_bound}
In this section, we present the proof of Theorems \ref{thm:lower_bound-cross} and \ref{thm:lower_bound}. We first establish the necessary preliminaries and then proceed with the proof, step by step.

\subsection{Averaged Fano Method}\label{sec:averaged_fano}
We employ a variant of Fano's method, which we term the averaged Fano's method, to lower bound $\ME(\sigma,m,d_{1:2},B_{1:2})$.
Generally, Fano's method reduces an estimation problem to a hypothesis testing problem and subsequently derives a lower bound for the latter through Fano's inequality.

More precisely, let $\cP$ be a family of distributions, and let $\theta:\cP\mapsto \Theta$ be a parameter of interest of the distributions in $\cP$ (e.g., mean, covariance, etc.) residing in a metric space $\Theta$ equipped with the metric $\|.\|$. The objective is to approximate $\theta(P)$ for an unknown $P\in\cP$ using a sample $X$ obtained from $P$. Let this approximation be denoted by $\hat{\theta}(X)$. Consider the set $\cV=\{1,2,\dots,|\cV|\}$ and define $\cP_{\cV} = \{P_1,\cdots,P_{|\cV|}\}$ as a subset of $\cP$. The set $\cP_{\cV}$ is termed $2\delta$--separated, if for each $i,j\in\cV$ with $i\neq j$, we have $\|\theta(P_i)-\theta(P_j)\|\geq 2\delta$. For a given $2\delta$--separated $\cP_{\cV}$, let $V$ be a uniform random variable drawn from $[1:|\cV|]$ and given $V=v$, let the random variable $X$ be a sample from $P_{v}$ and $\hat{\theta}(X)$ representing the corresponding approximation of $\theta(P_v)$. Thus, we have the Markov chain $V\mkv X\mkv \hat{\theta}$. Fano's method establishes the following lower bound on the minimax error of estimation $\hat{\theta}$ of $\theta$:
\begin{equation}\label{eq:Fano-original}
    \min_{\hat{\theta}:\cX\mapsto \Theta}\max_{P\in\cP}\E_P\left[\norm{\hat{\theta}(X)-\theta(P)}\right]\geq \delta\left(1-\frac{I(V;X)+\log 2}{\log|\cV|}\right).  
\end{equation}
We employ Fano's method in conjunction with the conditional SDPI to establish the lower bounds (Theorem \ref{thm:lower_bound-cross} and Theorem \ref{thm:lower_bound}). However, to render the computation of the mutual information in \eqref{eq:Fano-original} tractable, we utilize the following variant of Fano's method.
Here, rather than reducing the approximation to a single hypothesis testing problem, we consider multiple reductions and use the average of the Fano lower bounds as a lower bound for the minimax loss; hence, we refer to it as the averaged Fano method.
More precisely, let $W\sim\pi_W$ be a random variable taking values in $\cW$. For each instance $W=w$, assume that $\cP_{\cV}^{(w)}=\{P_1^{(w)},\cdots,P_{|\cV|}^{(w)}\}$ is a $2\delta$--separated subset of $\cP$. Let $V$ be a uniform random variable, as defined previously, independent of $W$. Given $V=v$ and $W=w$, the random variable $X$ is drawn from $P_v^{(w)}$. The standard Fano lower bound \eqref{eq:Fano-original} then yields:
\begin{equation}\label{eq:Fano-averaged}
    \begin{split}
        \min_{\hat{\theta}:\mathcal{X}\mapsto \Theta}\max_{P\in\mathcal{P}}\E_P\left[\norm{\hat{\theta}(X)-\theta(P)}\right]&\geq \delta\sup_{w\in\cW}\left(1-\frac{I(V;X|W=w)+\log 2}{\log|\cV|}\right)\\
        &\geq \delta\left(1-\frac{I(V;X|W)+\log 2}{\log|\cV|}\right)\\
        &\geq \delta\left(1-\frac{I(V,W;X)+\log 2}{\log|\cV|}\right).
    \end{split}
\end{equation}
The averaged Fano method has been previously employed in \cite[Example 15.19]{wainwright2019high} to derive a concise minimax bound for the PCA problem.

\subsection{Averaged Fano's Method for Covariance Estimation}
Let $W\sim \pi_W$ be a random variable taking value in $\cW$. For each $w\in\cW$, we consider a  family of distributions $\cP_{\cV}^{(w)}=\{P_v^{(w)}\}_{v\in\cV}\subset\subG^{(d)}(\sigma)$ indexed by a finite set $\cV=[1:|\cV|]$. For each $(w,v)$, let $\bC_v^{(w)}:=\E_{\bZ\sim P_v^{(w)}}[(\bZ-\E[\bZ])(\bZ-\E[\bZ])^\top]$ denotes the corresponding covariance matrix. Further, let $\bX_1=\bZ_{[1:d_1]}$ and $\bX_2=\bZ_{[d_1+1:d]}$, represent the partitioning of the $d$--dimensional vector $\bZ$ into a $d_1$--dimensional vector $\bX_1$ and a $d_2$--dimensional vector $\bX_2$. For each $(w,v)$, let $\bD_v^{(w)}:=\bC_{v,21}^{(w)}=\E_{\bZ\sim P_v^{(w)}}[(\bX_2-\E[\bX_2])(\bX_1-\E[\bX_1])^\top]$ denotes the corresponding {\emph cross}--covariance matrix. For this set, we define the separations $\rho$ and $\rho^{(\cross)}$ with respect to the $\dist$ norm metric on the space of covariance matrices as:
\begin{equation}
\begin{split}
    \rho_{\dist} &:= \inf_{\substack{w\in\cW\\(v,v')\in\cV^2,v\neq v'}}\left\{\norm[1]{\bC_{v}^{(w)}-\bC_{v'}^{(w)}}_{\dist}\right\},\\
    \rho_{\dist}^{(\cross)} &:= \inf_{\substack{w\in\cW\\(v,v')\in\cV^2, v\neq v'}}\left\{\norm[1]{\bD_{v}^{(w)}-\bD_{v'}^{(w)}}_{\dist}\right\}.
\end{split}
\end{equation}
We now state Lemma \ref{lem:main_fano}, a direct consequence of the averaged Fano's method \eqref{eq:Fano-averaged}:

\begin{lem}[]\label{lem:main_fano}
    Consider a collection of $\rho_\dist$--separated families of distributions $\cP_{\cV}^{(w)}$ under the $\dist$--norm on covariance matrices, each family consisting of $|\cV|$ distributions $P_v^{(w)}$.  Assume a random variable $V\in\cV$ is chosen uniformly and independently of $W$, and  given $(W,V)=(w,v)$, samples $\{\bZ^{(i)}\}_{i=1}^{m}$ are drawn i.i.d. from $P_v^{(w)}$. Additionally, assume that agents 1 and 2 have access to $\{\bX_1^{(i)}=\bZ_{[1:d_1]}^{(i)}\}_{i=1}^{m}$ and $\{\bX_2^{(i)}=\bZ_{[d_1+1:d]}^{(i)}\}_{i=1}^{m}$, respectively. For any $\DCME$ (respectively, $\DCCME$) scheme with parameters $(\sigma,m,d_{1:2},B_{1:2})$, we have:
    \begin{equation*} 
    \begin{split}
        \inf\limits_{\cE_1,\cE_2,\cD}\sup\limits_{P\in\cP}\E\left[\cL_{\dist}(\bCh,\bC)\right]\geq \dfrac{\rho_{\dist}}{2}\left(1-\dfrac{I(W,V;M_1,M_2)+\log 2}{\log|\cV|}\right),\\
        \inf\limits_{\cE_1,\cE_2,\cD}\sup\limits_{P\in\cP}\E\left[\cL_{\dist}(\bCh_{21},\bC_{21})\right]\geq \dfrac{\rho_{\dist}^{(\cross)}}{2}\left(1-\dfrac{I(W,V;M_1,M_2)+\log 2}{\log|\cV|}\right).
    \end{split}
    \end{equation*}
\end{lem}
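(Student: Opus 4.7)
The plan is to apply the averaged Fano inequality \eqref{eq:Fano-averaged} directly, with the observation being the pair of messages $(M_1, M_2)$ and the parameter being $\theta(P_v^{(w)}) = \bC_v^{(w)}$ (respectively $\bD_v^{(w)} = \bC_{v,21}^{(w)}$ in the cross--covariance case). Most of the work has already been done in Section \ref{sec:averaged_fano}; the lemma is essentially a translation of that bound into the distributed covariance estimation language.

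First I would unpack the reduction. Since, by hypothesis, the family $\cP_{\cV}^{(w)}$ is $\rho_{\dist}$--separated under $\|\cdot\|_{\dist}$ for every $w$, the averaged Fano framework applies with separation $2\delta = \rho_{\dist}$, i.e.\ $\delta = \rho_{\dist}/2$. The estimator $\bCh = \cD(M_1, M_2)$ is, by definition of a $\DCME$ scheme, a (deterministic) function of the pair $(M_1, M_2)$, and the messages are in turn produced by the encoders from $\{\bX_k^{(i)}\}_{i=1}^m$. Thus we have the Markov chain
\begin{equation*}
(W, V) \;\mkv\; \{\bZ^{(i)}\}_{i=1}^{m} \;\mkv\; (M_1, M_2) \;\mkv\; \bCh,
\end{equation*}
and the observation in the Fano framework is exactly $(M_1, M_2)$.

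Next I would apply \eqref{eq:Fano-averaged} with this choice of observation. The supremum of the risk over $P \in \cP$ dominates the average risk over $(W, V)$, since for every realization $w$ of $W$ the set $\cP_{\cV}^{(w)} \subset \cP$. Combined with Fano's inequality conditioned on $W = w$, this yields
\begin{equation*}
\sup_{P \in \cP} \E\bigl[\cL_{\dist}(\bCh, \bC)\bigr]
\;\geq\; \frac{\rho_{\dist}}{2}\left(1 - \frac{I(V; M_1, M_2 \mid W) + \log 2}{\log |\cV|}\right).
\end{equation*}
Because $V$ is drawn independently of $W$, the chain rule gives $I(W, V; M_1, M_2) = I(W; M_1, M_2) + I(V; M_1, M_2 \mid W) \geq I(V; M_1, M_2 \mid W)$, so replacing the conditional mutual information by the joint mutual information only weakens the bound, producing the form stated in the lemma.

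Finally, for the cross--covariance statement I would repeat the argument verbatim with $\theta(P_v^{(w)}) = \bD_v^{(w)}$, $\bCh$ replaced by $\bCh_{21}$, and the separation $\rho_{\dist}$ replaced by $\rho_{\dist}^{(\cross)}$; all the Markov chain and data--processing considerations carry over unchanged because the decoder for the cross--covariance is still a function of $(M_1, M_2)$ alone. There is no real obstacle here: the lemma is a packaging of the averaged Fano inequality, and the only thing to verify is that the quantity $I(W, V; M_1, M_2)$ (rather than $I(W, V; \bCh)$) is the correct information term, which follows from the data processing inequality along the Markov chain above.
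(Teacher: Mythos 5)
Your proposal is correct and follows exactly the route the paper takes: Lemma \ref{lem:main_fano} is stated there as a direct consequence of the averaged Fano bound \eqref{eq:Fano-averaged}, instantiated with observation $(M_1,M_2)$, separation $2\delta=\rho_{\dist}$ (resp. $\rho_{\dist}^{(\cross)}$), and the final weakening $I(V;M_1,M_2\mid W)\le I(W,V;M_1,M_2)$, which is precisely the chain of steps you describe.
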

We now proceed by first establishing two lemmas that utilize Lemma \ref{lem:main_fano} with specific distribution families $\{P_v^{(w)}\}_{v\in\cV,w\in\cW}$. These lemmas will subsequently pave the way for deriving the main theorem concerning the minimax lower bound.
 
\subsection{Construction of $\rho_\dist$--Separated Families for Cross--Covariance}\label{subse:family}
Consider a set $\cV=[1:|\cV|]$ and a corresponding family of distributions $\cP_{\cV}^{(w)}=\{P_v^{(w)}\}_{v\in\cV}$, where $P_v^{(w)} = \cN(\bzero, \bC_{v}^{(w)})$, and:
\begin{equation}\label{eq:matrix-representation}
    \bC_{v}^{(w)} = \frac{\sigma^2}{2}\begin{bmatrix}
         \bI_{d_1} &  \delta(\bD_v^{(w)})^\top\\
        \delta\bD_v^{(w)} &  \bI_{d_2}
    \end{bmatrix},
\end{equation}
where  $\bD_v^{(w)}$ is a matrix in $\R^{d_2\times d_1}$ with $\norm[1]{\bD_{v}^{(w)}}_\op\leq 1$, and $\delta\leq 1$ is a parameter to be determined subsequently. 
Note that $\bC_{v}^{(w)}$ represents the covariance matrix of a $\sigma$--sub--Gaussian random vector $\bZ$. Consequently, we must have $\bC_{v}\succeq \bzero$. Furthermore, from Definition \ref{def:sub_gaussian_random_vector}, for all vectors $\bu$ with $\norm{\bu}_2=1$, the random variable $\bu^\top\bZ$ is $\sigma$--sub--Gaussian; therefore, $\var[\bu^\top\bZ]\leq\sigma^2$. This implies that for all $\bu$ with $\norm{\bu}_2=1$:
\begin{equation}
    \var[\bu^\top\bZ] = \E[\bu^\top\bZ\bZ^\top\bu] = \bu^\top\bC_{v}^{(w)}\bu \leq\sigma^2.
\end{equation}
Therefore, we must have $\norm{\bC_{v}^{(w)}}_{\op}\leq\sigma^2$.

We can express $\bC_{v}^{(w)}$ as $\bC_{v}^{(w)}=\frac{\sigma^2}{2}\bI + \frac{\sigma^2}{2}\left[\begin{matrix}
    \bzero &  \delta(\bD_{v}^{(w)})^\top\\
    \delta\bD_{v}^{(w)} & \bzero
\end{matrix}\right]$. From Lemma \ref{lem:eigenvals_and_eigenvects_of_matrix}, the eigenvalues of $\bC_{v}^{(w)}$ are given by $\frac{\sigma^2}{2}\left(1\pm\delta\sigma_i(\bD_{v}^{(w)})\right)$. Therefore, if we assume that $\norm[1]{\bD_{v}^{(w)}}_\op\leq 1$ and $\delta\leq 1$, the constraints $\bC_{v}^{(w)}\succeq\bzero$ and $\norm{\bC_{v}^{(w)}}_{\op}\leq\sigma^2$ are satisfied, ensuring that $P_v^{(w)}\in \subG^{(d)}(\sigma)$.

This choice of distributions exhibits the following properties:
\begin{itemize}
    \item We have $\rho_\op=\rho^{(\cross)}_\op$ and $\rho_{\Fr}=\sqrt{2}\rho^{(\cross)}_\Fr$. Consequently, it suffices to derive a lower bound on the minimax error for cross--covariance matrix estimation.
    \item  
    More importantly, the vector $\bZ\sim\cN(\bzero,\bC_{v}^{(w)})$ has the same marginal distribution over the first $d_1$ dimensions and the second $d_2$ dimensions, for all $(w,v)$. Therefore, $\bmsfX_1=\left\{\bX_1^{(i)}\right\}_{i=1}^{m}$ is independent from $(W,V)$. Similarly, $\bmsfX_2=\left\{\bX_2^{(i)}\right\}_{i=1}^{m}$ is independent from $(W,V)$. Subsequently, $M_1$ (and similarly $M_2$) is also independent from $(W,V)$. This implies: 
    \begin{equation}\label{eq:mutual_information_bound1_app}
		\begin{aligned}
			I(V,W;M_1,M_2) &= I(V,W;M_1) + I(V,W;M_2\mid M_1)\\
            &{=} I(V,W;M_2\mid M_1)&\text{since $M_1\indep (W,V)$}\\
            &\leq I(V,W,M_1;M_2)\\
            &{=} I(M_1;M_2\mid V,W)&\text{since $M_2\indep (W,V)$}.\\
		\end{aligned}
	\end{equation}
\end{itemize}

\subsection{Applying the Conditional SDPI}
We now derive an upper bound on $I(M_1;M_2|W,V)$ using conditional SDPI.  Observe that conditioned on any $(W,V)=(w,v)$, the structures of encoders and decoder impose the following Markov chain: $M_1 \mkv \bmsfX_1  \mkv \bmsfX_2  \mkv M_2$. Furthermore, $(M_1,\bmsfX_1)$ is independent of $(W,V)$; thus, the constraints in the definition of conditional SDPI are satisfied. By the data processing inequality, we have:
\begin{equation}\label{eq:cmi-two-terms}   
    I(M_1;M_2|V,W)\leq I(M_1;\bmsfX_2|V,W)\wedge I(M_2;\bmsfX_1|V,W).
\end{equation}
Now, conditioned on $(W,V)=(w,v)$, $(\bmsfX_1,\bmsfX_2)$ is sampled from the normal distribution $\cN(\bzero, \bC_{v}^{(w)})^{\otimes m}$. Thus, the conditional SDPI constant for mixture of Gaussian (Theorem \ref{thm:csdp-mix}) and the tensorization property of conditional SDPI constant (Theorem \ref{thm:tensorization}) yield:
\begin{equation}\label{eq:agent1-sdpi}
\begin{aligned}
  I(M_1;\bmsfX_2|V,W)&\leq \delta^2\norm{\E_{(W,V)}\left[(\bD_{V}^{(W)})^\top \bD_{V}^{(W)}\right]}_\op I(M_1;\bmsfX_1)\\
  &\leq \delta^2\norm{\E_{(W,V)}\left[(\bD_{V}^{(W)})^\top \bD_{V}^{(W)}\right]}_\op B_1,
\end{aligned}
\end{equation}
where we have used the fact that $I(M_1;\bmsfX_1)\le H(M_1)\leq B_1$.
Similarly, we have:
\begin{equation}\label{eq:agent2-sdpi}
\begin{aligned}
  I(M_2;\bmsfX_1|V,W)&\leq \delta^2\norm{\E_{(W,V)}\left[ \bD_{V}^{(W)}(\bD_{V}^{(W)})^\top\right]}_\op  I(M_2;\bmsfX_2) \\
  &\leq \delta^2\norm{\E_{(W,V)}\left[ \bD_{V}^{(W)}(\bD_{V}^{(W)})^\top\right]}_\op  B_2.
\end{aligned}
\end{equation}

\subsection{Evaluating the Conditional SDPI Constant Using Random Signed Permutation Matrices}
Up to this point,  $W$ has not played a specific role, and all preceding steps could have been performed without it. However, the primary challenge in what follows is to obtain a concise upper bound on the conditional SDPI constant $\norm[2]{\E_{(W,V)}\left[ \bD_{V}^{(W)}(\bD_{V}^{(W)})^\top\right]}_\op$. Achieving this without the introduction of $W$ would be arduous, if not infeasible.

For the present analysis, we assume $d_1\geq d_2$. The complementary case follows by symmetry. We note that there exist $2^{d_1} d_1!$ distinct signed permutation matrices in $\R^{d_1}$; thus, we can impose an ordering on these matrices, denoting them as $\left\{\bA_j\right\}_{j=1}^{2^{d_1} d_1!}$. Let $W$ be a random variable taking values uniformly at random in the set  $\{1,2,\dots,2^{d_1} d_1!\}$. Further, let $\cP_{\cV}=\{P_v\}_{v\in\cV}$ be a $\rho_\dist$--separated set of normal distributions, such that $P_v = \cN(\bzero, \bC_{v})$ with:
\begin{equation}
    \bC_{v} = \frac{\sigma^2}{2}\begin{bmatrix}
         \bI_{d_1} &  \delta\bD_{v}^\top\\
        \delta\bD_{v} &  \bI_{d_2}
    \end{bmatrix},
\end{equation} 

where $\norm[1]{\bD_v}_\op\leq 1$.

Now, for each $w\in\{1,2,\dots,2^d_1 d_1!\}$, let $\bD_v^{(w)}=\bD_v \bA_{w}$ in the matrix representation \eqref{eq:matrix-representation} and $\cP_{\cV}^{(w)}=\{P_v^{(w)}\}_{v\in\cV}$. Given that any signed permutation matrix is a unitary matrix, the distance (with respect to either the operator norm or the Frobenius norm) between any two corresponding matrices in different families $\cP_{\cV}^{(w)}$ and $\cP_{\cV}^{(w')}$ are identical, that is, $\norm[1]{P_v^{(w)}-P_{v'}^{(w)}}_\dist=\norm[1]{P_v^{(w')}-P_{v'}^{(w')}}_\dist$. Consequently, all sets $\cP_{\cV}^{(w)}$ are $\rho_\dist$--separated.
Now, by Lemma \ref{lem:SPM}, the following identity holds for any matrix $\bD_v$ satisfying $\norm{\bD_v}_{\op}\leq 1$:
\begin{equation}\label{eq:1CSDPIcomp}
\begin{split}
    \E_W \left[ (\bD_v^{(W)})^\top D_v^{(W)}\right] &= \E_W \left[\bA_{W}^\top \bD_v^\top \bD_v \bA_{W}\right]\\
    &= \frac{1}{d_1} \Tr{\bD_v^\top \bD_v}\bI_{d_1}\\
    &=\frac{1}{d_1}\norm{\bD_v}_{\Fr}^2 \bI_{d_1}\\
    &\preceq  \frac{d_1\wedge d_2}{d_1}\bI_{d_1}.
\end{split}
\end{equation}
Next, consider:
\begin{equation}\label{eq:2CSDPIcomp}
    \E_W\left[ \bD_v^{(W)} (\bD_v^{(W)})^\top\right] = \bD_v^\top \bD_v \preceq  \bI_{d_2}=\frac{d_1\wedge d_2}{d_2}\bI_{d_2}.
\end{equation}

In summary, combining \eqref{eq:mutual_information_bound1_app}, \eqref{eq:cmi-two-terms}, \eqref{eq:agent1-sdpi}, \eqref{eq:agent2-sdpi}, \eqref{eq:1CSDPIcomp}, and \eqref{eq:2CSDPIcomp} yields:
\begin{equation}\label{eq:summary-MI}
\begin{split}
    I(V,W;M_1,M_2) &\leq I(M_1;M_2\mid V,W)\\
    &\leq I(M_1;\bmsfX_2|V,W)\wedge I(M_2;\bmsfX_1|V,W)\\
    &\leq \delta^2 \left(\norm{\E_{(W,V)}\left[(\bD_{V}^{(W)})^\top \bD_{V}^{(W)}\right]}_\op B_1\right) \bigwedge \left(\norm{\E_{(W,V)}\left[ \bD_{V}^{(W)}(\bD_{V}^{(W)})^\top\right]}_\op  B_2\right)\\
    &\leq \delta^2(d_1\wedge d_2)\left(\frac{B_1}{d_1}\bigwedge \frac{B_2}{d_2}\right).
\end{split}
\end{equation}

\subsection{Packing Set for the Operator--Norm Unit Ball with Respect to $\dist$--Norm.}\label{se:packing}
The subsequent step is to determine a lower bound on the cardinality of the $\rho_\dist$--separated set $\cP_{\cV}$ defined in the preceding subsection. In Appendix \ref{app:pack_cover_matrix} we introduce the $\normiii{.}_{\dist}$ norm of a vectorized matrix and discuss the packing and covering sets of the unit $\normiii{.}_{\op}$ ball of matrices under the $\dist$ norm. We define the set $\{\bD_{v}\}_{v\in\cV}$ as the $\epsilon$--packing points of $\cB_{\normiii{.}_{\op}}^{(d_1d_2)}(1)$ (see Equation \eqref{eq:norm_op_ball}), under $\normiii{.}_{\dist}$ norm. Thus, $\inf\limits_{v,v':v\neq v'}\norm{\bD_v-\bD_{v'}}_{\dist} \geq \epsilon$, $\max\limits_{v\in\cV}\{\norm{\bD_{v}}_{\op}^2\} \leq 1$, and from  \eqref{eq:pack_op_ball_op_norm} and \eqref{eq:pack_op_ball_Fr_norm}, we have
$\log_2(|\cV|) \geq d_1d_2\log_2\left(\dfrac{\nu_{\dist}^{(d_1,d_2)}}{\epsilon}\right)$, where $\nu_{\dist}^{(d_1,d_2)} = 1$ if $\dist=\op$ and $\nu_{\dist}^{(d_1,d_2)} = \frac{\sqrt{d_1\wedge d_2}}{14}$ if $\dist=\Fr$.
We set $\epsilon=\dfrac{\nu_{\dist}^{(d_1,d_2)}}{4}$ which yields  $\log_2|\cV|\geq 2d_1d_2$. Furthermore, we note that the set $\cP_{\cV}$ corresponding to the packing $\{\bD_v\}_{v\in\cV}$ is $\rho_\dist$--separated with:
\begin{equation}\label{eq:packing-unit-op-ball}
\begin{aligned}
    \rho_{\op}&=\rho_\op^{(\cross)}=\delta\sigma^2 \dfrac{\nu_\op^{(d_1,d_2)}}{4} \\
    \rho_{\Fr}&=\sqrt{2}\rho_\Fr^{(\cross)}=\delta\sigma^2 \dfrac{\nu_\Fr^{(d_1,d_2)}}{2\sqrt{2}}.
\end{aligned}
\end{equation}
Setting $\delta^2=\left(\frac{d_1\vee d_2}{4}\left(\frac{d_1}{B_1}\bigvee\frac{d_2}{B_2}\right)\right)\bigwedge 1$, and incorporating \eqref{eq:summary-MI}, \eqref{eq:packing-unit-op-ball}, and the inequality $\log_2|\cV|\ge 2d_1d_2$ into Lemma \ref{lem:main_fano}, we obtain:
\begin{equation}
\begin{aligned}
    \MEop&\geq\frac{\sigma^2}{32}\left( \alphaopcc \bigwedge 2\right)&&\MEop^{(\cross)}\geq\frac{\sigma^2}{32}\left( \alphaopcc\bigwedge 2\right)\\
    \MEF&\geq\frac{\sigma^2}{32}\left( \alphaFcc\bigwedge\frac{\sqrt{d_1\wedge d_2}}{7}\right)&&\MEF^{(\cross)}\geq\frac{\sigma^2}{32}\left( \alphaFcc\bigwedge\frac{\sqrt{d_1\wedge d_2}}{7}\right)
\end{aligned}
\end{equation} 

In Theorem \ref{thm:lower_bound-cross} and Theorem \ref{thm:lower_bound}, there exist additional lower bounds pertaining to sample complexity and the limited communication budget for self--covariance estimation. The proofs for these particular lower bounds are deferred to Appendix \ref{app:proof_completion_lower_bound}.


\section{Achievable Scheme: Proof Sketch of Theorem \ref{thm:achievable_scheme}}
\label{sec:proof_achievable_scheme}

In this section, we present a near--optimal achievable Distributed Covariance Matrix Estimation ($\DCME$) scheme and establish an upper bound on its expected distortion. The proposed scheme operates in two distinct phases: one dedicated to approximating the self--covariance matrices $\bC_{11}$ and $\bC_{22}$, and another for the cross--covariance matrix $\bC_{12}$ (see \eqref{eqn:CovDecomp}).

\paragraph{Mean Invariance}
Should the observed random vectors $\{\bZ^{(i)}\}_{i=1}^{m}$ exhibit a non--zero mean, they can be transformed by defining $\bZ^{'(i)} = \frac{1}{\sqrt{2}}(\bZ^{(2i-1)} - \bZ^{(2i)})$. This redefinition ensures that the new vectors possess a zero mean while retaining the identical covariance matrix as the original $\bZ^{(i)}$. Consequently, the set of transformed samples $\{\bZ^{'(i)}\}_{i=1}^{m/2}$ can be equivalently employed in place of the initial samples $\{\bZ^{(i)}\}_{i=1}^{m}$. Therefore, for the purpose of the subsequent analysis, it is permissible to assume, without loss of generality, that $\mathbb{E}[\bZ]=0$.

\paragraph{Empirical Estimation of Self--Covariance Matrices}
Each agent is capable of estimating its respective self--covariance matrix directly from its local data by employing an empirical covariance estimator. Specifically, Agent 1 computes its estimate of $\bC_{11}$ as $\bCt_{11} = \frac{1}{m}\sum_{i=1}^{m}\bX_1^{(i)}\bX_1^{(i)\top}$, while Agent 2 similarly estimates $\bC_{22}$ using $\bCt_{22} = \frac{1}{m}\sum_{i=1}^{m}\bX_2^{(i)}\bX_2^{(i)\top}$.

\paragraph{Quantization of Estimated Self--Covariance Matrices} 
The empirical self--covariance matrix $\bCt_{11}$ lies within the ball $\cB_{\normiii{.}_\op}^{d_1^2}(\tau\sigma^2)$ with high probability for some constant $\tau>0$. To quantize it, Agent 1 finds an $\epsilon$--covering of this ball with $2^{{B_1}/{2}}$ points with smallest possible $\epsilon$. If the empirical estimate $\bCt_{11}$ lies within the ball $\cB_{\normiii{.}_\op}^{d_1^2}(\tau\sigma^2)$, Agent 1 quantizes $\bCt_{11}$ to $B_1/2$ bits by selecting the nearest point in the covering to the empirical estimate. If $\bCt_{11}$ lies outside the ball, Agent 1 declares an error. Agent 2 performs a similar quantization of its empirical estimate.

\paragraph{Quantization of Estimated Self--Covariance Matrices}
The empirical self--covariance matrix 
$\bCt_{11}$ is expected to reside within the ball $\mathcal{B}_{\normiii{.}_\op}^{d_1^2}(\tau\sigma^2)$ with high probability, for some positive constant $\tau$. To quantize this matrix, Agent 1 determines an $\epsilon$--covering of this ball, comprising $2^{{B_1}/{2}}$ points, such that $\epsilon$ is minimized. If the empirical estimate $\bCt_{11}$ falls within this ball, Agent 1 quantizes $\bCt_{11}$ to $B_1/2$ bits by selecting the nearest point from this covering to its empirical value. Conversely, if $\bCt_{11}$ lies outside this specified ball, Agent 1 signals an error. Agent 2 executes an analogous quantization procedure for its own empirical estimate.

\paragraph{Quantization of Data for Approximating the Cross--Covariance}
To approximate the cross--covariance, we first select a subset size $n = \min\left\{\min\{\frac{B_1}{d_1},\frac{B_2}{d_2}\}/2\log_2\left(\frac{6912\sigma^2}{\varepsilon}\right),m\right\}$. We then define data matrices $\bmsfX_1 \in \R^{d_1 \times n}$ and $\bmsfX_2 \in \R^{d_2 \times n}$ by concatenating the first $n$ samples from each agent, such that $\bmsfX_1 = [\bX_1^{(1)}, \bX_1^{(2)}, \dots, \bX_1^{(n)}]$ and $\bmsfX_2 = [\bX_2^{(1)}, \bX_2^{(2)}, \dots, \bX_2^{(n)}]$. The \emph{empirical} estimator for $\bC_{12}$ is then computed using these $n$ samples as $\bCt_{12} = \frac{1}{n} \bmsfX_1 \bmsfX_2^\top$.  For communication, Agent 1 quantizes its entire block of data, $\bmsfX_1$, to $B_1/2$ bits. Agent 2 performs a symmetrical quantization on its data block. It is established that $\bmsfX_1$ is highly likely to reside within the ball $\cB_{\normiii{.}_\op}^{nd_1}(\tau\sigma\sqrt{d_1+n})$. Agent 1 quantizes $\bmsfX_1$ by finding an $\epsilon$--covering of this ball with $2^{{B_1}/{2}}$ points, aiming for the smallest possible $\epsilon$. If $\bmsfX_1$ lies within this ball, Agent 1 selects the closest point $\bmsfXh_1$ from the covering to represent its quantized data, using $B_1/2$ bits. Otherwise, Agent 1 signals an error. Similarly, Agent 2 determines a quantized representation $\bmsfXh_2$ for $\bmsfX_2$ utilizing $B_2/2$ bits.

\paragraph{Cross--Covariance Estimation at the Central Server}
Upon receiving the quantized data $\bmsfXh_1$ and $\bmsfXh_2$, the central server initially estimates $\bC_{12}$ as $\bCh_{12} = \frac{1}{n} \bmsfXh_1 \bmsfXh_2^\top$. Should an error signal be received from either agent, the central server outputs a zero matrix, $\bCh = \bzero$. Otherwise, it proceeds to compute the composite matrix:
\begin{equation}
    \bCh^{\ast} = \begin{bmatrix}
        \bCh_{11} & \frac{1}{n} \bmsfXh_1 \bmsfXh_2^\top \\
        \frac{1}{n} \bmsfXh_2 \bmsfXh_1^\top & \bCh_{22}
    \end{bmatrix}.
\end{equation}
If $\bCh^{\ast}$ is not positive semi--definite, the central server adjusts it to ensure this property. This adjustment is performed by spectrally decomposing $\bCh^{\ast}$ as $\bCh^{\ast} = \sum_{i=1}^{r} \lambda_i \bv_i \bv_i^\top$, and then defining $\bCh^{\ast}_{+}$ by retaining only the non--negative eigenvalues: $\bCh^{\ast}_{+}=\sum_{i=1}^{r}\lambda_i\mathbbm{1}_{\{\lambda_i\geq 0\}}\bv_i\bv_i^\top$. The final estimated covariance matrix returned by the central server is then:
\begin{equation}\label{eq:ach-estimator}
  \bCh=\bCh_+^\ast.  
\end{equation}
The analysis of our $\DCME$ scheme relies on concentration inequalities for random matrices, inspired by but not identical to those in \citep{vershynin2018high} (see Appendix \ref{app:material_useful_achievable} for more details). 
The full proof of the scheme appears in Appendix \ref{app:proof_thm_achievable_scheme}.
\section{Interactive Cross-Covariance Estimation}
\label{sec:Interactive_Case}
In this section, we study the $\DCME$ problem in an interactive setting, which generalizes the interactive correlation estimation explored in \cite{hadar2019communication}. We assume the presence of two agents, Alice and Bob. Alice and Bob observe i.i.d. samples $\bmsfX_1$ and $\bmsfX_2$, respectively. For simplicity and consistency with prior work, we assume that the pair $\bZ=[\bX_1^\top,\bX_2^\top]^\top$ is jointly Gaussian with zero mean and covariance matrix:
\[
\bC = \begin{bmatrix}
\bI_{d_1} & \bC_{12} \\
\bC_{12}^{\top} & \bI_{d_2}
\end{bmatrix},
\]
where $\norm[1]{\bC_{12}}_\op\le 1$.
The objective for both Alice and Bob is to estimate the cross covariance matrix $\bC_{12}$ via rate--limited interactive communication. The correlation estimation problem in \cite{hadar2019communication} corresponds to the special case where $d_1 = d_2 = 1$. Additionally, the case $(d_1 = d, d_2 = 1)$ is examined in \citep{sahasranand2021communication}.

More formally, consider a shared board on which Alice and Bob post their messages during communication. The interactive protocol proceeds as follows:
Alice first writes message $M_1 = \cE_1(\bmsfX_1)$, based on her data. Bob then responds with $M_2 = \cE_2(\bmsfX_2, M_1)$. Next, Alice writes $M_3 = \cE_3(\bmsfX_1, M_1, M_2)$, and this exchange continues alternately. The sequence of messages written on the board is denoted by $\Pi = (M_1, M_2, M_3, \dots)$. Ultimately, Alice and Bob compute an estimate $\bCh_{12} = \mathcal{D}(\Pi)$ of the cross covariance matrix $\bC_{12}$. The communication protocol is subject to a rate constraint given by:
\begin{equation}
H(\Pi) = H(M_1, M_2, M_3, \dots) \leq B.
\end{equation}
We are interested in the expected distortion of the protocol, defined as:
\begin{equation}\label{eq:expected_distortion-int}
\E\left[\cL_{\dist}(\bCh_{12}, \bC_{12})\right] = \E_{\{\bZ^{(i)}\}_{i=1}^{m} \sim P^{\otimes m}_\bZ} \left[\norm[1]{\bCh_{12} - \bC_{12}}_{\dist}\right],
\end{equation}
where $\dist$ denotes either the operator norm or the Frobenius norm. Our goal is to characterize the minimax expected distortion, defined by:
\begin{equation}\label{eq:minimax-distortion-int}
\ME^{\mathsf{int}}(m, d_1, d_2, B) := \inf_{(\Pi, \cD): H(\Pi) \le B} ~~\sup_{\|\bC_{12}\|_\op \le 1} \E\left[\cL_{\dist}(\bCh_{12}, \bC_{12})\right].
\end{equation}
\subsection{Upper bound}
We'll now explain the upper bound for the distance in terms of the operator norm. The approach for the \Frob norm is similar, so we'll omit it for brevity.

\subsection*{Scenario 1: Alice has More Dimensions ($d_1 \ge d_2$)}

Imagine Alice has access to more data dimensions than Bob. In this case, the process unfolds as follows:

\begin{enumerate}
    \item \textbf{Alice's First Round:} Alice doesn't write anything on the board.
    \item \textbf{Bob's Action:} Bob writes the same message he would in a distributed setting.
    \item \textbf{Alice as Server:} Alice effectively acts as a central server. She uses her data and Bob's message to estimate the cross--covariance matrix.
\end{enumerate}

This situation is like a distributed cross--covariance estimation where Alice can provide all her samples without any restrictions (like having an infinite communication budget, $B_1 = \infty$). Based on Theorem \ref{thm:achievable_scheme}, Alice can estimate the cross--covariance within an error of $\frac{\varepsilon}{2}$ if:

\begin{itemize}
    \item The number of samples, $m$, is sufficient: $m = \cO\left(\frac{d}{\varepsilon^2}\right)$
    \item Bob's communication budget, $B_2$, is sufficient: $B_2=\widetilde{\cO}\left(\frac{d_1 d_2}{\varepsilon^2}\right)$
\end{itemize}

To get an estimate $\widehat{\bC}_{12}$ of the true cross--covariance $\bC_{12}$, Alice uses a covering argument (explained in Appendix \ref{app:pack_cover_matrix}) to get a quantized version, $\widetilde{\bC}_{12}$. This quantized version satisfies the condition $\norm[1]{ \widehat{\bC}_{12}-\widetilde{\bC}_{12}}_{\text{op}}\leq \frac{\varepsilon}{2}$.

This quantization requires $\mathcal{O}\left(d_1d_2\log \frac{8}{\varepsilon}\right)$, (which is less than $\mathcal{O}\left(\frac{d_1d_2}{\varepsilon^2}\right)$) bits as detailed in Equation \eqref{eq:pack_op_ball_op_norm} of Appendix  \ref{app:pack_cover_matrix}. Alice then writes these bits on the board. This ensures that both Alice and Bob can compute $\widetilde{\bC}_{12}$, which is within an $\varepsilon$ error of the actual cross--covariance matrix $\bC_{12}$.

The total number of bits consumed in this entire process is:
\begin{equation}\label{eqn:interactive-Up}
B=\mathcal{\widetilde{O}}\left(\frac{d_1 d_2}{\varepsilon^2}\right). 
\end{equation}
\subsection*{Scenario 2: Bob has More Dimensions ($d_2 > d_1$)}

If Bob has more dimensions than Alice, their roles are simply reversed, and we arrive at the same total number of bits consumed.

\subsection*{Frobenius Norm Considerations}
A similar line of reasoning applies when considering the Frobenius norm distance. For an interactive approximation of the cross--covariance matrix within an $\varepsilon$ error in the Frobenius norm, the following constraints on the number of samples ($m$) and the total communication budget ($B$) are sufficient:

\begin{equation}\label{eqn:interactive-up-fr}
\begin{aligned}
m=\cO\left(\frac{d^2}{\varepsilon^2}\right), \qquad
B= \mathcal{\widetilde{O}}\left(\frac{d_1d_2 d_{\min}}{\varepsilon^2}\right).
\end{aligned}
\end{equation}
\begin{remark}
In \citep[Theorem 6]{sahasranand2021communication}, the scenario where $d_1=d-1$ and $d_2=1$ is analyzed, asserting a communication budget of $\Theta\left(\frac{d^2}{\varepsilon^2}\right)$ for approximating the cross--covariance matrix. Contrary to this, our preceding protocol demonstrates that this can be accomplished with a substantially reduced communication budget of $\widetilde{\cO}\left(\frac{d}{\varepsilon^2}\right)$. The root of this difference lies in a misapplication of a certain generalization of SDPI, a point we will elaborate on in the subsequent subsection.

\end{remark}
\subsection{Lower bound}
We now proceed to prove the tightness of the upper bound \eqref{eqn:interactive-Up}. The tightness of the upper bound \eqref{eqn:interactive-up-fr}  can be established using a similar argument and is therefore omitted for brevity. The proof relies on the concept of {\em ``symmetric--SDPI"} introduced in \citep{jingbo2017} and further investigated in \citep{hadar2019communication}. 
\subsubsection{Overview of symmetric--SDPI} For a pair of random variables $(X,Y)\sim P_{XY}$, the symmetric--SDPI coefficient is defined as the minimum number $s_\infty$ such that the following inequality holds for any integer number $T$:
\begin{equation}
    I(X;Y)-I(X;Y|U_1,\cdots,U_T)\leq s_\infty I(U_1,\cdots,U_T;X,Y) 
\end{equation}
where $U_1,\cdots,U_T$ satisfies
\begin{equation}\label{eqn:s-SDPI-Markov}
\begin{aligned}
    &U_i \mkv (X,U^{i-1}) \mkv Y, & i\in\N\setminus 2\N,\\
    &U_i \mkv (Y,U^{i-1}) \mkv X, & i\in\N\cap 2\N.
\end{aligned}
\end{equation}

\begin{lem}[Tensorization of symmetric--SDPI {\citep[Lemma 9.3]{hadar2019communication}}]\label{lem:sym-SDPI-ten}Let $(X^n,Y^n)\sim \otimes_{i=1}^n P_{X_iY_i}$. Then
$s_\infty(X^n,Y^n)=\max_{1\le i\le n}s_\infty(X_i,Y_i)
$.    
\end{lem}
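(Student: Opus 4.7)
The plan is to prove both inequalities in $s_\infty(X^n,Y^n)=\max_i s_\infty(X_i,Y_i)$ separately; throughout, let $s^\star:=\max_i s_\infty(X_i,Y_i)$.

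For the easy direction $s_\infty(X^n,Y^n)\ge s^\star$, I would argue by embedding. Fix the coordinate $j$ attaining the maximum and any admissible interactive protocol $(U_1,\ldots,U_T)$ witnessing a ratio arbitrarily close to $s_\infty(X_j,Y_j)$ for the single pair $(X_j,Y_j)$. Reuse exactly the same protocol for $(X^n,Y^n)$ by letting each $U_i$ depend on coordinate $j$ only (together with the past $U^{i-1}$). Because $(X_j,Y_j)$ is independent of the remaining coordinates, the extended auxiliary sequence satisfies the Markov chains \eqref{eqn:s-SDPI-Markov} for $(X^n,Y^n)$, and both $I(X^n;Y^n)-I(X^n;Y^n\mid U^T)$ and $I(U^T;X^n,Y^n)$ collapse to their single--pair counterparts at coordinate $j$; taking suprema yields $s_\infty(X^n,Y^n)\ge s_\infty(X_j,Y_j)=s^\star$.

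For the hard direction $s_\infty(X^n,Y^n)\le s^\star$, I would fix any admissible $U^T$ for $(X^n,Y^n)$ and set $W_j:=(X^{j-1},Y^{j-1})$. Using the product structure $\prod_i P_{X_iY_i}$ together with the chain rule, one obtains
\[
I(X^n;Y^n)=\sum_{j=1}^n I(X_j;Y_j\mid W_j),\qquad I(U^T;X^n,Y^n)=\sum_{j=1}^n I(U^T;X_j,Y_j\mid W_j),
\]
and $I(X^n;Y^n\mid U^T)\ge\sum_j I(X_j;Y_j\mid W_j,U^T)$; the first identity uses $(X_j,Y_j)\indep W_j$, the second is the usual chain rule, and the third follows by expanding $I(X^n;Y_j\mid U^T,Y^{j-1})$ and discarding the nonnegative ``interference'' terms $I(X^{j-1};Y_j\mid U^T,Y^{j-1})$ and $I(X_{j+1}^n;Y_j\mid U^T,X^j,Y^{j-1})$. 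Applying the single--letter sym--SDPI conditional on $W_j$ to each summand, with coefficient $s_\infty(X_j,Y_j)\le s^\star$, would then yield
\[
I(X^n;Y^n)-I(X^n;Y^n\mid U^T)\le s^\star\sum_j I(U^T;X_j,Y_j\mid W_j)=s^\star\,I(U^T;X^n,Y^n),
\]
which is the required single--letter bound.

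The main obstacle will be verifying that, after conditioning on $W_j=(X^{j-1},Y^{j-1})$, the auxiliary sequence $U^T$ still obeys the alternating Markov chains of \eqref{eqn:s-SDPI-Markov} relative to $(X_j,Y_j)$; explicitly, one needs $U_i\mkv(X_j,W_j,U^{i-1})\mkv Y_j$ for odd $i$ and the symmetric statement for even $i$. This does not follow instantly from the global chain $U_i\mkv(X^n,U^{i-1})\mkv Y^n$, since dropping $X_{j+1}^n$ from the conditioning could in principle allow $Y_j$ to couple to $U_i$ through $U^{i-1}$ or through correlations between $(X_{j+1}^n,Y_{j+1}^n)$ and $(X_j,Y_j)$. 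My plan is to handle this by induction on $i$: the base case uses that $W_j$ and the future block $(X_{j+1}^n,Y_{j+1}^n)$ are jointly independent of $(X_j,Y_j)$ by the product structure, and the inductive step uses that the previous $U^{i-1}$ already satisfies its conditional Markov chain, so it introduces no new $Y_j$--dependence when conditioned on $X_j$ (and symmetrically for even $i$). Once this Markov preservation is in place, the remaining steps reduce to routine mutual--information manipulations.
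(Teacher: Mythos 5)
Your easy direction ($s_\infty(X^n,Y^n)\ge \max_i s_\infty(X_i,Y_i)$) is fine, and so is the chain-rule skeleton you set up for the hard direction. The gap is exactly at the step you yourself flagged, and it is not a removable technicality: the claim that, conditionally on $W_j=(X^{j-1},Y^{j-1})$, the transcript $U^T$ is an admissible interactive protocol for the single pair $(X_j,Y_j)$ is false, so no induction on $i$ can establish it. Concretely, take $n=2$, $j=1$, let $(X_1,Y_1)$ be $\rho$--correlated bits with $|\rho|<1$, let $X_2=Y_2=B$ be a uniform bit independent of $(X_1,Y_1)$, and run the admissible protocol $U_1=X_1\oplus X_2$ (a function of $X^2$) followed by $U_2=Y_2$ (a function of $Y^2$ and $U_1$). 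Given $(Y_1,U_1)$ one has $U_2=Y_2=X_2=U_1\oplus X_1$, a nondegenerate deterministic function of $X_1$, so the required chain $U_2\mkv (Y_1,U^{1})\mkv X_1$ fails (here $W_1$ is empty, so the failure already occurs at $i=2$ of your induction). The structural reason is that $(X_{\setminus j},Y_{\setminus j})$, while independent of $(X_j,Y_j)$, is correlated \emph{across the two terminals} and thus acts as correlated private randomness shared by Alice and Bob once you restrict to coordinate $j$; this resource is excluded by the single-letter definition of admissible protocols, so the coefficient $s_\infty(X_j,Y_j)$ cannot simply be invoked. Consequently the key inequality $I(X_j;Y_j\mid W_j)-I(X_j;Y_j\mid W_j,U^T)\le s_\infty(X_j,Y_j)\, I(U^T;X_j,Y_j\mid W_j)$ is unjustified and your proof does not go through as written.

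For comparison with the paper: the paper gives no proof of this lemma at all — it is imported from \cite[Lemma 9.3]{hadar2019communication} (building on \cite{jingbo2017}) — so there is no in-paper argument to measure yours against. The proofs in those references are organized differently and, in particular, do not rely on the coordinate-restricted transcript being an admissible single-letter protocol; coping with the cross-coordinate correlation that breaks your reduction is precisely the substantive content of the tensorization there. If you wish to pursue a coordinate-wise route, the conditioning variable must be chosen so that the conditional Markov structure of the interactive protocol survives (the usual ``past of one party, future of the other'' device, with the accompanying Csisz\'ar-sum-type identities), and verifying that is where the real work lies; with $W_j=(X^{j-1},Y^{j-1})$ it cannot be completed.
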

\begin{lem}[symmetric--SDPI for Gaussian,{\citep[Lemma 9.4]{hadar2019communication}}]
Let $(X,Y)\sim \cN\left(\bzero,\begin{bmatrix}
1&\rho \\
\rho & 1
\end{bmatrix}\right)$. Then $s_\infty(X,Y)=\rho^2$.
    
\end{lem}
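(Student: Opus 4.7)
The plan is to establish the matching bounds $s_\infty(X,Y) \geq \rho^2$ and $s_\infty(X,Y) \leq \rho^2$ separately, with the upper bound carrying essentially all of the technical weight.

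For the lower bound, I would exhibit a single-round protocol whose ratio approaches $\rho^2$. Take $T=1$ and $U_1 = \sqrt{\alpha}\,X + \sqrt{1-\alpha}\,Z$, where $Z\sim\cN(0,1)$ is independent of $(X,Y)$ and $\alpha\in(0,1)$; the Markov condition $U_1\mkv X\mkv Y$ holds. Since $U_1\indep Y\mid X$, the chain rule gives $I(X;Y)-I(X;Y|U_1)=I(U_1;Y)$ and $I(U_1;X,Y)=I(U_1;X)$. As $(X,Y,U_1)$ is jointly Gaussian, an explicit computation yields
\begin{equation*}
I(U_1;X)=-\tfrac12\log(1-\alpha),\qquad I(U_1;Y)=-\tfrac12\log(1-\alpha\rho^2),
\end{equation*}
and letting $\alpha\to 0^+$ drives the ratio $I(U_1;Y)/I(U_1;X)$ to $\rho^2$.

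For the upper bound, I would fix an arbitrary admissible $U^T$ and telescope:
\begin{equation*}
I(X;Y)-I(X;Y|U^T) \;=\; \sum_{i=1}^{T}\bigl[I(X;Y|U^{i-1})-I(X;Y|U^{i})\bigr].
\end{equation*}
For odd $i$, the Markov condition $U_i\mkv (X,U^{i-1})\mkv Y$ together with the chain rule gives $I(X;Y|U^{i-1})-I(X;Y|U^i) = I(U_i;Y|U^{i-1})$ and $I(U_i;X,Y|U^{i-1})=I(U_i;X|U^{i-1})$, with symmetric identities for even $i$. The whole problem therefore reduces to the per-round conditional contraction inequality $I(U_i;Y|U^{i-1}) \leq \rho^2\, I(U_i;X|U^{i-1})$ (and its $X\leftrightarrow Y$ analogue); summing these across $i$ and invoking the chain rule on the right yields the target $\rho^2\, I(U^T;X,Y)$.

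The hard part will be justifying the per-round bound, because conditioning on $U^{i-1}$ can destroy the joint Gaussianity of $(X,Y)$---the transcript intermixes functions of $X$ and of $Y$---so Lemma \ref{lem:sdpi_gaussian} is not immediately available for the conditional law. My plan is first to verify that the Markov chain $U_i\mkv X\mkv Y$ persists conditional on every realization of $U^{i-1}$, which follows directly from the one-sided processing imposed by the protocol. To obtain the $\rho^2$ contraction, I would then pass to the Hirschfeld--Gebelein--R\'enyi maximal correlation, exploiting the Gaussian identity that the chi-squared SDPI, the KL SDPI, and the squared maximal correlation all coincide at $\rho^2$ for jointly Gaussian $(X,Y)$, and arguing inductively that the alternating one-sided updates of the interactive protocol cannot push the conditional maximal correlation above $|\rho|$. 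Adapting the Gaussian--mixture C-SDPI framework of Theorem \ref{thm:csdp-mix} to track the contraction through the transcript is a complementary route I would try in parallel. Making either of these conditional contraction arguments rigorous---so that the constant $\rho^2$ survives the non-Gaussian conditioning induced by $U^{i-1}$---will be the technical heart of the proof; the remaining telescoping and chain-rule bookkeeping is routine.
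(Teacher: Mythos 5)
First, note that the paper does not prove this lemma at all: it is imported verbatim from \cite[Lemma 9.4]{hadar2019communication} (which in turn builds on \cite{jingbo2017}), so there is no internal proof to compare against; any self-contained argument you give must stand on its own. Within your proposal, the lower bound via the Gaussian test channel $U_1=\sqrt{\alpha}X+\sqrt{1-\alpha}Z$ is correct (or, even more simply, $s_\infty\ge s=\rho^2$ by Lemma \ref{lem:sdpi_gaussian}), and the telescoping identities $I(X;Y|U^{i-1})-I(X;Y|U^{i})=I(U_i;Y|U^{i-1})$ and $I(U_i;X,Y|U^{i-1})=I(U_i;X|U^{i-1})$ for odd $i$ are valid consequences of the Markov structure \eqref{eqn:s-SDPI-Markov}. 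So you have correctly reduced the upper bound to the per-round conditional contraction $I(U_i;Y|U^{i-1})\le \rho^2\,I(U_i;X|U^{i-1})$.

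That reduction, however, is exactly where the proposal has a genuine gap, and the specific route you sketch for closing it does not work. Conditioned on a transcript realization $U^{i-1}=u^{i-1}$ with $i\ge 3$, the conditional law of $(X,Y)$ is the original Gaussian tilted by a product $a(x)b(y)$ (the transcript likelihood factorizes across the two sides), and such product-tilted bivariate Gaussians can have maximal correlation --- hence SDPI constant --- arbitrarily close to $1$ rather than bounded by $|\rho|$: for instance, conditioning on $\{|X|>t\}\cap\{|Y|>t\}$ makes $\mathrm{sgn}(X)$ and $\mathrm{sgn}(Y)$ nearly perfectly correlated as $t\to\infty$ for any $\rho>0$. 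So the claim that ``the alternating one-sided updates cannot push the conditional maximal correlation above $|\rho|$'' is false realization-by-realization, and it is precisely this failure that makes the symmetric SDPI a nontrivial result; the known proofs in \cite{jingbo2017,hadar2019communication} do not argue round-by-round through the conditional law but instead use Gaussian-specific global machinery (hypercontractivity-ribbon / tensorization-type arguments) that controls the whole sum at once. Your alternative suggestion of adapting Theorem \ref{thm:csdp-mix} also does not apply as stated, since the C--SDPI there requires the state $V$ to be independent of the input, whereas here the ``state'' $U^{i-1}$ is generated from $(X,Y)$ themselves. As it stands, the upper bound --- which is the entire content of the lemma beyond the trivial direction --- remains unproven in your proposal.
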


\begin{cor}[symmetric--SDPI for vector Gaussian]\label{cor:s-SDPI}
    Let $\begin{bmatrix}
    \bX_1\\
    \bX_2
    \end{bmatrix}$ be a zero mean Gaussian vector with covariance matrix $\bC=\begin{bmatrix}
    \bC_{11} & \bC_{12}\\
    \bC_{12}^{\top} & \bC_{22}
    \end{bmatrix}$, such that $\bC_{11}$ and $\bC_{22}$ are non--singular. 
    Then 
    \begin{equation*}
        s_\infty(\bX_1,\bX_2)= \norm{\bC_{11}^{-1/2}\bC_{12}\bC_{22}^{-1/2}}_{\op}^2.
    \end{equation*}
\end{cor}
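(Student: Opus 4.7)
The plan is to reduce the vector Gaussian case to a product of independent scalar Gaussian pairs via whitening and a singular value decomposition, and then invoke the tensorization lemma. The first ingredient I would establish is that the symmetric--SDPI coefficient is invariant under invertible linear transformations applied separately to the two marginals: for any invertible $\bA$ and $\bB$, $s_\infty(\bX_1,\bX_2)=s_\infty(\bA\bX_1,\bB\bX_2)$. This is because the $\sigma$--algebras generated by $\bA\bX_1$ and $\bX_1$ coincide (and similarly on the other side), so any sequence $U_1,\ldots,U_T$ satisfies the alternating Markov constraints \eqref{eqn:s-SDPI-Markov} with respect to $(\bA\bX_1,\bB\bX_2)$ iff it does with respect to $(\bX_1,\bX_2)$; the mutual informations $I(\bX_1;\bX_2)$, $I(\bX_1;\bX_2\mid U^T)$ and $I(U^T;\bX_1,\bX_2)$ entering the definition of $s_\infty$ are likewise unchanged.

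Using this invariance, I would first whiten by setting $\bY_1:=\bC_{11}^{-1/2}\bX_1$ and $\bY_2:=\bC_{22}^{-1/2}\bX_2$, which yields standard Gaussian marginals together with cross--covariance $\bK:=\bC_{11}^{-1/2}\bC_{12}\bC_{22}^{-1/2}$. Writing an SVD $\bK=\bU\bSigma\bV^\top$ with orthogonal $\bU,\bV$ and singular values $\sigma_1\ge\cdots\ge\sigma_{d_1\wedge d_2}\ge 0$, a second application of the invariance with the orthogonal maps $\bU^\top$ and $\bV^\top$ preserves the standard Gaussian marginals but diagonalizes the cross--covariance. The resulting pair $(\bU^\top\bY_1,\bV^\top\bY_2)$ therefore factorizes into independent scalar Gaussian pairs $(Y_{1,i}',Y_{2,i}')$ with correlation $\sigma_i$ for $i\le d_1\wedge d_2$, supplemented by additional independent standard Gaussian coordinates when $d_1\neq d_2$. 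Tensorization (Lemma~\ref{lem:sym-SDPI-ten}) then gives $s_\infty(\bX_1,\bX_2)=\max_i s_\infty(Y_{1,i}',Y_{2,i}')$, and the scalar Gaussian formula together with the fact that independent coordinates contribute $0$ collapses this to $\max_i \sigma_i^2=\norm{\bSigma}_{\op}^2=\norm{\bK}_{\op}^2$, exactly the claimed expression.

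The main obstacle I anticipate is the careful justification of the invariance step, in particular checking that the alternating Markov chains \eqref{eqn:s-SDPI-Markov} survive the change of variables at every odd and even index; this is morally clear from the deterministic bijectivity of $\bA$ and $\bB$ but deserves a line. A minor bookkeeping point is the handling of the $|d_1-d_2|$ ``extra'' coordinates when the dimensions are unequal, which contribute $s_\infty=0$ and therefore do not affect the maximum. Once these two points are addressed, the whitening plus SVD plus tensorization chain makes the remainder a routine calculation.
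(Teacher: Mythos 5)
Your proposal is correct and follows essentially the same route as the paper: the paper also whitens and rotates (combining $\bC_{kk}^{-1/2}$ with the SVD factors into the single maps $\bV^\top\bC_{11}^{-1/2}$ and $\bU^\top\bC_{22}^{-1/2}$), uses the one--to--one correspondence to preserve $s_\infty$, and then applies the tensorization lemma together with the scalar Gaussian formula to get $\max_i D_i^2=\norm[0]{\bC_{11}^{-1/2}\bC_{12}\bC_{22}^{-1/2}}_{\op}^2$. The only difference is presentational (two-step whitening then rotation, plus a more explicit justification of invariance), not substantive.
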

\begin{remark}
In \citep[Lemma 16]{sahasranand2021communication}, the symmetric–SDPI for the pair $(\bX, Y)$ with covariance matrix  
\(
\bC = \begin{bmatrix} \bI_d & \rho \\ \rho^\top & 1 \end{bmatrix}, \quad \rho \in \mathbb{R}^d,  
\)  
is used as an intermediate step in deriving a lower bound for the $\DCME$. The authors claim (without rigorous proof) that the symmetric–SDPI is given by $\max_i \rho_i^2$, where $\rho_i$ denotes the $i$--th coordinate of $\rho$. However, the correct symmetric–SDPI for this pair is $\|\rho\|^2$, which can be significantly larger than the claimed bound.
\end{remark}
\begin{proof} Let $\bA=\bC_{22}^{-1/2}\bC_{12}^{\top}\bC_{11}^{-1/2}$ and consider its singular value decomposition as $\bA=\bU\bD\bV^\top$, where $\bU$ and $\bV$ are unitaries and $\bD$ is diagonal with diagonal entries $(D_{1},\cdots,D_r)$, where $r\le\min\{d_1,d_2\}$. Define $\bXb_1 = \bV^\top\bC_{11}^{-{1}/{2}} \bX_1$, $\bXb_2 = \bU^\top\bC_{11}^{-{1}/{2}} \bX_2$. Observe that $\bXb_1$ and $\bXb_2$  are in one--to--one correspondence with $\bX_1$ and $\bX_2$, respectively. Thus $s_\infty(\bXb_1,\bXb_2)=s_\infty(\bX_1,\bX_2)$. Also it can  be readily verified that 
    \begin{equation}
        \begin{bmatrix}
        \bXb_1\\
        \bXb_2
        \end{bmatrix} \sim \cN\left(\bzero, \begin{bmatrix}
        \bI_{d_1} & \bD\\
        \bD^\top & \bI_{d_2}
        \end{bmatrix}\right),
    \end{equation}
     Consider the components of $\bXb_1$ and  $\bXb_2$ as  $\bXb_1=(\bar{X}_{11},\cdots,\bar{X}_{1d_1})$ and $\bXb_2=(\bar{X}_{21},\cdots,\bar{X}_{2d_2})$. Then we observe that $(\bar{X}_{11},\bar{X}_{21}),\cdots,(\bar{X}_{1r},\bar{X}_{2r}),\{\bar{X}_{1i}\}_{i=r+1}^{d_1},\{\bar{X}_{2i}\}_{i=r+1}^{d_2}$ are mutually independent. Thus by Lemma \ref{lem:sym-SDPI-ten}, we obtain:
     \begin{equation}
         s_\infty(\bXb_1,\bXb_2)=\max_{1\le i\le r}s_{\infty}(\bar{X}_{1i},\bar{X}_{2i}).
     \end{equation}
    Further $(\bar{X}_{1i},\bar{X}_{2i})\sim \cN\left(\bzero,\begin{bmatrix}
1&D_i \\
D_i & 1
\end{bmatrix}\right)$. Therefore $s_{\infty}(\bar{X}_{1i},\bar{X}_{2i})=D_i^2$.  Subsequently, we have $s_\infty(\bX_1,\bX_2)=\max_{1\le i\le r}D_i^2=\|\bA\|_\op^2.$
\end{proof}

   \subsection{Minimax Lower bound} 
    The argument is again based on Fano's method. Similar to the proof of Theorem \ref{thm:lower_bound}, we consider the family of normal distributions $\cN\left(\bzero,\bC_v\right)$ with the covariance matrices $\bC_v$ structured as:
\begin{equation*}
    \bC_v = \begin{bmatrix}
    \bI_{d_1} & \delta\bD_v^\top\\
    \delta\bD_v & \bI_{d_2}
    \end{bmatrix},
\end{equation*}
where $\bD_v$ is a matrix in $\R^{d_2\times d_1}$ with $\norm[1]{\bD_{v}}_\op\leq 1$, and $\delta\leq 1$ is a parameter which will determined subsequently. Now, we have:
\begin{equation}
\begin{aligned}
    \rho_{\op} &= \inf_{(v,v')\in\cV^2,v\neq v'}\left\{\norm[1]{\bC_{v}-\bC_{v'}}_{\dist}\right\}\\
    &= \frac{\delta}{2}\inf_{(v,v')\in\cV^2,v\neq v'}\left\{\norm[1]{\bD_{v}-\bD_{v'}}_{\dist}\right\},
\end{aligned}
\end{equation}

We consider the family $\{\bD_v\}_{v\in\cV}$ as the $\frac{1}{4}$--packing of $\cB_{\normiii{.}_{\op}}^{(d_1d_2)}(1)$ (see Equation \eqref{eq:norm_op_ball}), under the $\normiii{.}_{\op}$ norm. Thus, $\inf\limits_{v,v':v\neq v'}\norm{\bD_v-\bD_{v'}}_{\dist} \geq \frac{1}{4}$, $\max\limits_{v\in\cV}\{\norm{\bD_{v}}_{\op}^2\} \leq 1$, and from  \eqref{eq:pack_op_ball_op_norm} and \eqref{eq:pack_op_ball_Fr_norm}, we have
$\log_2(|\cV|) \geq 2d_1d_2$. Invoking  Fano's method yields:
\begin{equation}\label{eqn:s-SDPI-Fano}
\begin{aligned}
    \inf\limits_{\Pi,\cD}\sup\limits_{P\in\cP}\E\left[\cL_{\dist}(\bCh,\bC)\right] &\geq \dfrac{\rho_{\dist}}{2}\left(1-\dfrac{I(V;\Pi)+\log 2}{2d_1d_2}\right)
\end{aligned}
\end{equation}
Thus we need to upper bound $I(V;\Pi)$. We do this using symmetric--SDPI. Defining $\widetilde{P}_{\Pi,\bmsfX_1,\bmsfX_1} = P_{\bmsfX_1}P_{\bmsfX_2}P_{\Pi|\bmsfX_1,\bmsfX_2}$, where $P_{\bmsfX_i}=\cN(\bzero,\bI_{d_i})^{\otimes m}$ for $i=1,2$, we have: 
\begin{equation}\label{eqn:s-SDPI-L-MI}
\begin{aligned}
    I(V;\Pi) &\stackrel{\text{(a)}}{\leq} \E_{V}\left[\Dkl\left(P_{\Pi}^{(V)}\|\widetilde{P}_{\Pi}\right)\right]\\
    &\stackrel{\text{(b)}}{\leq} I(\bmsfX_1;\bmsfX_2|V) - I(\bmsfX_1;\bmsfX_2\mid \Pi,V),
\end{aligned}
\end{equation}
where $P_{\Pi}^(v)$ is the marginal distribution of $\Pi$ when $(\bmsfX_1,\bmsfX_2)$ is resulted from the pair with covariance matrix $\bC_v$, (a) follows from \cite[Corollary 4.2]{polyWu2023}, and (b) follows from \cite[Theorem 7.1]{hadar2019communication}. Now, observe that $\Pi=(M_1,M_2,\cdots)$ satisfies the Markov chains in the definition of symmetric--SDPI \eqref{eqn:s-SDPI-Markov}(with $U_i$ is replaced with $M_i$), thus we can further upper bound the r.h.s. of \eqref{eqn:s-SDPI-L-MI} as,
\begin{equation}\label{eqn:s-SDPI-L-MI-2}
\begin{aligned}
    I(V;\bsM) &{\leq} \left(\max_v s_{\infty,v}(\bX_1,\bX_2)\right)I(\bX_1,\bX_2;\Pi|V)\le \delta^2 H(\Pi)\le \delta^2 B,
\end{aligned}
\end{equation}
where we have used Corollary \ref{cor:s-SDPI} to get $s_{\infty,v}(\bX_1,\bX_2)=
    \norm{\bC_{11}^{-1/2}\bC_{12}\bC_{22}^{-1/2}}_{\op}^2 = \delta^2\norm{\bD_v}_{\op}^2\le \delta^2
$.

Substituting \eqref{eqn:s-SDPI-L-MI-2} in \eqref{eqn:s-SDPI-Fano} with the choice $\delta=\sqrt{\frac{d_1d_2}{2B}}$, we conclude:
\begin{equation}
    \inf\limits_{\bsM,\cD}\sup\limits_{P\in\cP}\E\left[\cL_{\dist}(\bCh,\bC)\right] =\Omega\left( \sqrt{\frac{d_1d_2}{B}}\right).
\end{equation}

\subsection{Interaction Reduces the communication budget.} We now compare the total communication budgets required for cross--covariance estimation in the non--interactive and interactive settings. By Corollary \ref{cor:c-s-L} and Theorem \ref{thm:achievable_scheme}, the total communication budget needed to estimate the cross--covariance matrix $\bC_{12}$ up to an error of $\varepsilon$ in the non--interactive setting is $B_1+B_2 = \widetilde{\Theta}\bigl(\frac{d^2}{\varepsilon^2}\bigr)$. In contrast, allowing interaction reduces this budget to $\widetilde{\Theta}\bigl(\frac{d_1 d_2}{\varepsilon^2}\bigr)$, which can be significantly smaller than $\widetilde{\Theta}\bigl(\frac{d^2}{\varepsilon^2}\bigr)$ when the dimensions $d_1$ and $d_2$ are imbalanced—for example, when $d_1=1$ and $d_2=d-1$. Thus, interaction can significantly reduce the total communication budget for the $\DCCME$ task. This phenomenon has also been observed previously in the context of distributed nonparametric estimation \cite{liu2023few}.

\section{Conclusion}
\label{sec:conclusion}

This paper rigorously investigated the fundamental limits and achievable performance for distributed covariance matrix estimation $(\DCME)$ in a feature--split setting under communication constraints. Our core contribution is the development of the Conditional Strong Data Processing Inequality (C--SDPI), a novel theoretical framework that enabled us to derive near--optimal minimax lower bounds for the $\DCME$ problem. These bounds precisely quantify the trade--offs between sample complexity, communication budgets, data dimensionality, and estimation error, highlighting the inherent constraints on accuracy. We also designed and analyzed an explicit estimation scheme that achieves these theoretical limits up to logarithmic factors, confirming the near--optimality of our approach.

Furthermore, our analysis extended to interactive settings, revealing that interaction can significantly reduce the total communication budget for cross--covariance estimation, particularly in scenarios with imbalanced agent dimensions. This work provides a comprehensive information--theoretic foundation for distributed covariance matrix estimation, offering both theoretical insights into its limits and practical, near--optimal solutions for this challenging problem.

\subsubsection*{Acknowledgements}
We sincerely thank Mohammad Ali Maddah-Ali and Amin Gohari for their valuable comments and insightful discussions.



\newcommand{\etalchar}[1]{$^{#1}$}
\addcontentsline{toc}{section}{References}

\addcontentsline{toc}{section}{Appendices}

\appendix
\section{Some Preliminary Lemmas, Corollaries, and Propositions}
\label{app:preliminaries}

\subsection{Proof of Lemma \ref{lem:SPM}}
\label{app:proof_lem_SPM}
\begin{proof}
Let $\bD$ denote $\E\left[\bA^\top \bB\bA\right]$. Consider an arbitrary signed permutation matrix $\tilde{\bA} \in \cP_d$. Since $\cP_d$ forms a group under matrix multiplication, the random matrix $\bA\tilde{\bA}$ is also uniformly distributed over $\cP_d$. Consequently, for any $\tilde{\bA}\in\cP_d$, we have:
\begin{equation}
  \bD=\E\left[\bA^\top \bB\bA\right]=\E\left[\tilde{\bA}^\top\bA^\top \bB\bA\tilde{\bA}\right]= \tilde{\bA}^\top \bD\tilde{\bA} 
\end{equation}
We now demonstrate that $\bD$ must be a scalar multiple of the identity matrix. For any distinct pair of indices $(i,j)$, define the matrices $\bA^{(ij)}_{+}=\be_i\be_j^\top + \be_j\be_i^\top + \sum_{k\neq i,j}\be_k\be_k^\top$ and $\bA^{(ij)}_{-}=\be_i\be_j^\top - \be_j\be_i^\top + \sum_{k\neq i,j}\be_k\be_k^\top$. Both $\bA^{(ij)}_{+}$ and $\bA^{(ij)}_{-}$ are signed permutation matrices, and thus $(\bA^{(ij)}_\pm)^\top \bD \bA^{(ij)}_\pm=\bD$. This implies the following relationships:
\begin{equation}
\begin{aligned}
    D_{ii}=\be_i^\top \bD\be_i=\be_i^\top (\bA_+^{(ij)})^\top \bD \bA_+^{(ij)}\be_i=\be_j^\top \bD\be_j=D_{jj}\\
    D_{ij}=\be_i^\top \bD\be_j=\be_i^\top (\bA_+^{(ij)})^\top \bD \bA_+^{(ij)}\be_j=\be_j^\top \bD\be_i=D_{ji}\\
    D_{ij}=\be_i^\top \bD\be_j=\be_i^\top (\bA_-^{(ij)})^\top \bD \bA_-^{(ij)}\be_j=-\be_j^\top \bD\be_i=-D_{ji}
\end{aligned}
\end{equation}
From these relations, it follows that the off--diagonal entry $D_{ij}$ must be zero, and all diagonal entries are equal. Therefore, $\bD$ is a scalar multiple of the identity matrix; that is, $\bD=\alpha \bI_d$ for some scalar $\alpha$. Recalling that $\alpha \bI_d=\bD=\E[\bA^\top \bB\bA]$, taking the trace of both sides of this identity yields:
\begin{equation}
   \alpha d=\Tr{\E[\bA^\top \bB\bA]}=\E\left[\Tr{\bA\bA^\top \bB}\right]=\Tr{\bB}, 
\end{equation}
where we have utilized the orthogonality property of signed permutation matrices (i.e., $\bA\bA^\top = \bI$). Consequently, $\alpha=\frac{\Tr{\bB}}{d}$, completing the proof.
\end{proof}

\subsection{A Lemma from Linear Algebra}
\label{app:lem_eigenvals_and_eigenvects_of_matrix}

\begin{lem}\label{lem:eigenvals_and_eigenvects_of_matrix}
  Consider the matrix $\bA\in\R^{m\times n}$ and define the matrix $\bB\in\R^{(m+n)\times (m+n)}$ as follows:
  \[\bB = \left[
		\begin{matrix}
			\mathbf{0} & \bA\\
			\bA^\top & \mathbf{0}
		\end{matrix}\right].\]
	If we denote the singular value decomposition (SVD) of $\bA$ as $\bA = \sum_{i=1}^{r}\sigma_i\bu_i\bv_i^\top$, then the eigenvalues and eigenvectors of $\bB$ are:
	\[\left\{\pm\sigma_i\right\}_{i=1}^{r},\qquad
		\left\{\frac{1}{\sqrt{2}}\left[\begin{matrix}
			\pm \bu_i\\ \bv_i
		\end{matrix}\right]\right\}_{i=1}^{r}\]
\end{lem}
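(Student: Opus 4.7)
The plan is to verify the claimed eigenpairs directly by exploiting the defining SVD relations $\bA\bv_i=\sigma_i\bu_i$ and $\bA^\top\bu_i=\sigma_i\bv_i$. First I would carry out the block-matrix product
\[
\bB\begin{bmatrix}\pm\bu_i\\ \bv_i\end{bmatrix}
=\begin{bmatrix}\bA\bv_i\\ \pm\bA^\top\bu_i\end{bmatrix}
=\begin{bmatrix}\sigma_i\bu_i\\ \pm\sigma_i\bv_i\end{bmatrix}
=\pm\sigma_i\begin{bmatrix}\pm\bu_i\\ \bv_i\end{bmatrix},
\]
which simultaneously identifies the eigenvalue $\pm\sigma_i$ and confirms that the sign in the upper block controls the sign of the eigenvalue.

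Next I would verify normalization and orthogonality. Since $\|\bu_i\|=\|\bv_i\|=1$, each candidate vector has squared norm $\tfrac{1}{2}(1+1)=1$. For orthogonality between two candidate eigenvectors indexed by $(i,\epsilon)$ and $(j,\epsilon')$ with $\epsilon,\epsilon'\in\{+,-\}$, the inner product evaluates to $\tfrac{1}{2}(\epsilon\epsilon'\,\bu_i^\top\bu_j+\bv_i^\top\bv_j)$; by orthonormality of the left and right singular vectors, this vanishes when $i\neq j$ and also when $i=j$ with $\epsilon\neq \epsilon'$, so the $2r$ listed vectors form an orthonormal set.

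Finally, to confirm there are no further nonzero eigenvalues, I would note that $\bB$ is real symmetric, hence diagonalizable with real spectrum. The spectral decomposition of $\bB$ can be completed by taking any orthonormal bases of the left and right null spaces of $\bA$ (of dimensions $m-r$ and $n-r$), embedded as $\begin{bmatrix}\bw\\\bzero\end{bmatrix}$ and $\begin{bmatrix}\bzero\\\bx\end{bmatrix}$ respectively, each of which lies in $\ker\bB$; this gives a total of $2r+(m-r)+(n-r)=m+n$ mutually orthonormal eigenvectors, accounting for the whole spectrum.

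I expect no real obstacle: every step is routine block-matrix algebra together with the defining properties of the SVD. The only point requiring a brief comment is that the lemma as stated enumerates only the nonzero eigenpairs, and the zero eigenspace (of dimension $m+n-2r$) is implicit.
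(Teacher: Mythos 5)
Your proposal is correct and follows essentially the same route as the paper: both verify the candidate eigenpairs directly via the block product $\bB\bigl[\begin{smallmatrix}\pm\bu_i\\ \bv_i\end{smallmatrix}\bigr]$ together with the SVD relations $\bA\bv_i=\sigma_i\bu_i$ and $\bA^\top\bu_i=\sigma_i\bv_i$. Your additional remarks on orthonormality and the zero eigenspace are fine extra detail that the paper simply omits.
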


\begin{proof}
    From the singular value decomposition of $\bA$, we have:
    \begin{equation}
        \bA\bv_i = \sigma_i\bu_i,\qquad \bA^\top\bu_i = \sigma_i\bv_i.
    \end{equation}
    We write:
    \begin{equation}
    \begin{split}
        \frac{1}{\sqrt{2}}\bB \left[\begin{matrix}
		\pm\bu_i\\ \bv_i
	    \end{matrix}\right] &= \frac{1}{\sqrt{2}}\left[
	    \begin{matrix}
		\mathbf{0} & \bA\\
		\bA^\top & \mathbf{0}
	    \end{matrix}\right]\left[\begin{matrix}
		\pm\bu_i\\ \bv_i
	    \end{matrix}\right]\\
	    &= \frac{1}{\sqrt{2}} \left[\begin{matrix}
		\bA\bv_i\\ \pm\bA^\top\bu_i
	    \end{matrix}\right]\\
	    &= \frac{\sigma_i}{\sqrt{2}} \left[\begin{matrix}
		\bu_i\\ \pm\bv_i
	    \end{matrix}\right]\\
	    &= \frac{\pm\sigma_i}{\sqrt{2}} \left[\begin{matrix}
		\pm\bu_i\\ \bv_i
	    \end{matrix}\right].
    \end{split}
    \end{equation}
    This completes the proof.
\end{proof}

\subsection{Some Properties of Sub--Gaussian Random Variables}
\label{app:prop_sub_gaussian_gamma}
To study some properties of sub--Gaussian random variables, familiarity with another family of random variables is necessary. This family extends the class of sub--Gaussian random variables and is called sub--Gamma random variables.

\begin{dfn}[{\citep[~Chapter 2.4]{boucheron2013concentration}}]\label{def:Sub_Gamma_random_variable}
  A random variable $X$ is called $(\sigma,\alpha)$--sub--Gamma, if: 
  \[
      \E \Big[e^{\lambda(X - \E[X])}\Big] \leq \exp\Big(\frac{\lambda^2\sigma^2}{2(1-\alpha|\lambda|)}\Big),
  \]
  for all $\lambda$ such that: $|\lambda|<\frac{1}{\alpha}$.
\end{dfn}
We state and prove some properties of sub--Gaussian and sub--Gamma random variables.

\begin{lem}[{\citep{boucheron2013concentration}}]\label{lem:sum_independent_sub_Gaussian&Gamma}
    Consider an independent sequence $\{X_i\}_{i=1}^{m}$ of random variables,
    \begin{itemize}
        \item if $X_i$, $i\in[m]$ is a $\sigma_i$--sub--Gaussian random variable, then $\sum\limits_{i=1}^n X_i$ is $\sqrt{\sum\limits_{i=1}^n \sigma_i^2}$--sub--Gaussian.
        \item if $X_i$, $i\in[m]$ is a $(\sigma_i,\alpha_i)$--sub--Gamma random variable, then $\sum\limits_{i=1}^n X_i$ is $\Big(\sqrt{\sum\limits_{i=1}^n \sigma_i^2},\max\limits_i \{\alpha_i\}\Big)$--sub--Gamma.
    \end{itemize}
\end{lem}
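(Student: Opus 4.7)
The plan is to directly exploit the moment generating function (MGF) characterizations in Definition \ref{def:sub_Gaussian_random_variable} and Definition \ref{def:Sub_Gamma_random_variable}, combined with the independence of the $X_i$'s, which multiplicatively decouples the MGF of the sum. Let $S_n := \sum_{i=1}^{n} X_i$ and, without loss of generality, assume each $X_i$ is centered (otherwise, replace $X_i$ by $X_i - \E[X_i]$, which leaves the sub--Gaussian / sub--Gamma parameters unchanged).

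For the sub--Gaussian part, independence gives
\begin{equation*}
\E\bigl[e^{\lambda S_n}\bigr] \;=\; \prod_{i=1}^{n}\E\bigl[e^{\lambda X_i}\bigr]
\;\le\; \prod_{i=1}^{n}\exp\!\Bigl(\tfrac{\lambda^2 \sigma_i^2}{2}\Bigr)
\;=\; \exp\!\Bigl(\tfrac{\lambda^2}{2}\sum_{i=1}^{n}\sigma_i^2\Bigr),
\end{equation*}
valid for all $\lambda\in\R$, which is precisely the defining inequality for a $\sqrt{\sum_i \sigma_i^2}$--sub--Gaussian variable.

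For the sub--Gamma part, the same factorization yields, for any $\lambda$ with $|\lambda|<1/\alpha_i$ for every $i$,
\begin{equation*}
\E\bigl[e^{\lambda S_n}\bigr] \;\le\; \prod_{i=1}^{n}\exp\!\Bigl(\tfrac{\lambda^2\sigma_i^2}{2(1-\alpha_i|\lambda|)}\Bigr).
\end{equation*}
The key step—also the only mildly subtle one—is to unify the $\alpha_i$'s in the denominators. Setting $\alpha_\star := \max_i \alpha_i$, the constraint $|\lambda|<1/\alpha_\star$ implies $|\lambda|<1/\alpha_i$ for each $i$, and moreover $1-\alpha_i|\lambda|\ge 1-\alpha_\star|\lambda|>0$. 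Hence every factor is upper bounded by $\exp\bigl(\lambda^2\sigma_i^2/(2(1-\alpha_\star|\lambda|))\bigr)$, and multiplying gives
\begin{equation*}
\E\bigl[e^{\lambda S_n}\bigr] \;\le\; \exp\!\Bigl(\tfrac{\lambda^2 \sum_i \sigma_i^2}{2(1-\alpha_\star|\lambda|)}\Bigr),
\qquad |\lambda|<1/\alpha_\star,
\end{equation*}
which is the required $(\sqrt{\sum_i \sigma_i^2},\alpha_\star)$--sub--Gamma bound. The mildest obstacle here—worth spelling out explicitly in the write-up—is simply to verify that replacing each $\alpha_i$ by $\alpha_\star$ in the denominator preserves the inequality (since the exponent is nondecreasing in each $\alpha_i$ on the admissible range of $\lambda$) and that the common admissibility region $|\lambda|<1/\alpha_\star$ is nonempty; both are immediate.
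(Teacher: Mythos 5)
Your proof is correct: the MGF factorization under independence, together with the observation that $1-\alpha_i|\lambda|\ge 1-\alpha_\star|\lambda|>0$ on the common admissible range $|\lambda|<1/\alpha_\star$, is exactly the standard argument for this result. The paper itself gives no proof and simply cites \citep{boucheron2013concentration}, where the same MGF-based reasoning is used, so your write-up faithfully supplies the omitted details.
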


\begin{lem}\label{lem:tail_gamma}
    Any $(\sigma,\alpha)$--sub--Gamma random variable $X$ satisfies the following inequality:
    \begin{equation*}
        \begin{split}
            \Pp\left[X\ge t\right]&\leq
            \exp\Big(\frac{-t^2}{2(\sigma^2+\alpha t)}\Big)\\
            &\leq \exp\Big(\frac{1}{2(\sigma^2+\alpha)}\min\{t,t^2\}\Big)
        \end{split}
    \end{equation*}
\end{lem}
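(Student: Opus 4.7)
The plan is to follow the standard Chernoff recipe for sub-Gamma tail bounds, then do a case split for the second inequality. I would assume (as is implicit in the statement) that either $\E[X]=0$ or the bound is to be read for the centered variable $X-\E[X]$, since Definition~\ref{def:Sub_Gamma_random_variable} controls the MGF of $X-\E[X]$ rather than $X$ itself.

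For the first inequality, I would start from the exponential Markov inequality: for any $\lambda \in (0, 1/\alpha)$,
\[
\Pp[X \ge t] \le e^{-\lambda t}\,\E\!\left[e^{\lambda X}\right] \le \exp\!\left(\frac{\lambda^2 \sigma^2}{2(1-\alpha \lambda)} - \lambda t\right),
\]
using the sub-Gamma MGF bound. The next step is to pick the standard optimizer $\lambda^{\star} = \frac{t}{\sigma^2 + \alpha t}$, which lies in $(0,1/\alpha)$ (since $\alpha t < \sigma^2 + \alpha t$). A brief computation gives $1-\alpha \lambda^{\star} = \sigma^2/(\sigma^2 + \alpha t)$, hence
\[
\frac{(\lambda^{\star})^2 \sigma^2}{2(1-\alpha \lambda^{\star})} - \lambda^{\star} t \;=\; \frac{t^2}{2(\sigma^2+\alpha t)} - \frac{t^2}{\sigma^2+\alpha t} \;=\; -\frac{t^2}{2(\sigma^2+\alpha t)},
\]
which is exactly the claimed first bound.

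For the second inequality (which I read as having a missing minus sign in front of the right-hand exponent), I would simply bound the denominator $\sigma^2+\alpha t$ in two regimes. When $t \le 1$, one has $\min\{t,t^2\} = t^2$ and $\sigma^2+\alpha t \le \sigma^2+\alpha$, so $-t^2/[2(\sigma^2+\alpha t)] \le -t^2/[2(\sigma^2+\alpha)]$. When $t > 1$, one has $\min\{t,t^2\} = t$ and $\sigma^2+\alpha t \le (\sigma^2+\alpha)t$ (since $\sigma^2 \le \sigma^2 t$), which gives $-t^2/[2(\sigma^2+\alpha t)] \le -t/[2(\sigma^2+\alpha)]$. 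Combining the two cases produces the stated form.

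There is no real obstacle: this is a textbook Chernoff optimization. The only mild subtleties are (i) verifying that $\lambda^{\star}$ respects the admissibility constraint $|\lambda|<1/\alpha$ required by Definition~\ref{def:Sub_Gamma_random_variable}, and (ii) the implicit centering of $X$, which is harmless for the intended uses of the lemma elsewhere in the paper.
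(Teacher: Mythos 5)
Your proposal is correct and follows essentially the same route as the paper: the Chernoff bound with the optimizer $\lambda^{\star}=\frac{t}{\sigma^2+\alpha t}$ (which the paper likewise checks lies in $(0,1/\alpha)$), followed by the identical two-case bound on the denominator $\sigma^2+\alpha t$ for $t\le 1$ and $t\ge 1$. Your two side remarks—the implicit centering of $X$ and the missing minus sign in the second displayed exponent—are accurate readings of the statement and do not affect the argument.
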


\begin{proof}
    Some variations of this lemma are presented in different papers. For completeness, we provide a proof here. We write:
    \begin{equation}\notag
    \begin{split}
        \Pp[X\geq t] &\overset{\text{(a)}}{=} \Pp\left[e^{\lambda X} \geq e^{\lambda t}\right]\\
        &\overset{\text{(b)}}{\leq} e^{-\lambda t}\E\left[e^{\lambda X}\right]\\
        &\overset{\text{(c)}}{\leq} \exp\Big(-\lambda t + \frac{\lambda^2\sigma^2}{2(1-\alpha|\lambda|)}\Big).
    \end{split}
    \end{equation}
    Note that (a) holds when $\lambda>0$, (b) is derived from Markov's inequality, and (c) follows from Definition \ref{def:Sub_Gamma_random_variable}, assuming  $|\lambda|<\frac{1}{\alpha}$. Now we set $\lambda=\frac{t}{\sigma^2+t\alpha}$, which satisfies the condition $0<\lambda<\frac{1}{\alpha}$. Thus:
    \begin{equation}\notag
    \begin{split}
        \Pp[X\geq t] &\leq \left.\exp\Big(-\lambda t + \frac{\lambda^2\sigma^2}{2(1-\alpha|\lambda|)}\Big)\right|_{\lambda=\frac{t}{\sigma^2+t\alpha}}\\
        &= \exp\Big(\frac{-t^2}{2(\sigma^2+\alpha t)}\Big).
    \end{split}
    \end{equation}
    Note that if $t\leq 1$, we have: $\sigma^2+\alpha\geq\sigma^2+\alpha t$; therefore:
    \begin{equation}\notag
        \frac{t^2}{2(\sigma^2+\alpha t)}\geq \frac{t^2}{2(\sigma^2+\alpha)}\qquad (0<t\leq1).
    \end{equation}
    On the other hand, if $t\geq 1$, we have: $t(\sigma^2+\alpha)\geq \sigma^2+\alpha$; therefore:
    \begin{equation}\notag
        \frac{t^2}{2(\sigma^2+\alpha t)}\geq \frac{t}{2(\sigma^2+\alpha)}\qquad (t\geq 1).
    \end{equation}
    Thus:
    \begin{equation}\notag
        \frac{t^2}{2(\sigma^2+\alpha t)}\geq \frac{1}{2(\sigma^2+\alpha)}\min\{t,t^2\}.
    \end{equation}
    and the second inequality is proved.
\end{proof}

\begin{lem}[A maximal inequality for sub--Gamma Random Variables {\citep[~Corollary 2.6]{boucheron2013concentration}}]\label{le:max-Gamma}
   Let $\{X_i\}_{i=1}^{n}$ be  a sequence of centered sub--Gamma random variables with the same parameters $(\sigma,\alpha)$. Then:
   \begin{equation}
       \E\Big[\max\limits_{i\in[n]}X_i\Big]\leq \sigma\sqrt{2\ln(n)}+\alpha \ln(n).
   \end{equation}
\end{lem}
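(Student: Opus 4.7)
The plan is to apply the classical Chernoff/MGF technique to the maximum of the $X_i$'s, leveraging the sub-Gamma MGF bound from Definition~\ref{def:Sub_Gamma_random_variable}. Let $M=\max_{i\in[n]}X_i$. First I would invoke Jensen's inequality for the convex function $\exp$, followed by the elementary observation $\exp(\lambda M)=\max_i \exp(\lambda X_i)\leq \sum_{i=1}^n \exp(\lambda X_i)$, valid for any $\lambda>0$. Combined with the sub-Gamma MGF bound (with each $X_i$ centered and $(\sigma,\alpha)$--sub-Gamma), this yields, for every $\lambda\in(0,1/\alpha)$,
\[
\exp(\lambda\,\E[M])\;\leq\;\E[\exp(\lambda M)]\;\leq\;n\exp\!\Big(\tfrac{\lambda^{2}\sigma^{2}}{2(1-\alpha\lambda)}\Big),
\]
which after taking logarithms and dividing by $\lambda$ gives the master inequality
\[
\E[M]\;\leq\;\frac{\ln n}{\lambda}+\frac{\lambda\sigma^{2}}{2(1-\alpha\lambda)}.
\]

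The second step is the optimization over $\lambda$. Rather than doing a full one-variable calculus exercise, I would just announce the value that makes the two terms nicely combine, namely
\[
\lambda^{\star}=\frac{\sqrt{2\ln n}}{\sigma+\alpha\sqrt{2\ln n}}\in(0,1/\alpha),
\]
and verify by direct substitution. One computes $1-\alpha\lambda^{\star}=\sigma/(\sigma+\alpha\sqrt{2\ln n})$, so $\tfrac{\ln n}{\lambda^{\star}}=\sigma\sqrt{\ln n/2}+\alpha\ln n$ and $\tfrac{\lambda^{\star}\sigma^{2}}{2(1-\alpha\lambda^{\star})}=\sigma\sqrt{\ln n/2}$. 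Summing the two terms produces exactly $\sigma\sqrt{2\ln n}+\alpha\ln n$, completing the proof.

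I do not anticipate any serious obstacle: the bound $\max \leq \sum$ for exponentials, Jensen's inequality, and the sub-Gamma MGF bound are all immediate, and the only mildly delicate point is making sure the chosen $\lambda^{\star}$ satisfies the admissibility constraint $\lambda^{\star}<1/\alpha$, which follows since $\alpha\lambda^{\star}=\alpha\sqrt{2\ln n}/(\sigma+\alpha\sqrt{2\ln n})<1$. The choice of $\lambda^{\star}$ can either be motivated by setting the derivative of the RHS to zero (giving $\sigma\lambda/[\sqrt{2}(1-\alpha\lambda)]=\sqrt{\ln n}$) or simply stated and verified; I would go with the latter to keep the proof short.
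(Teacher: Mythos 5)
Your proposal is correct and complete: the Jensen/MGF maximal-inequality bound followed by substitution of $\lambda^{\star}=\sqrt{2\ln n}/(\sigma+\alpha\sqrt{2\ln n})$ checks out (including admissibility $\lambda^{\star}<1/\alpha$, and the degenerate case $n=1$ holds trivially since the variables are centered). This is exactly the argument behind the cited result \citep[Corollary 2.6]{boucheron2013concentration}, which the paper imports without reproving, so there is nothing further to reconcile.
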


\begin{lem}\label{lem:multiple_of_sub_Gaussian_Gamma} 
  Assume that $X$ and  $Y$ are centered sub--Gaussian random variables with parameters $\sigma_1$  and  $\sigma_2$, respectively. Then $XY-\E[XY]$ is a sub--Gamma random variable with parameters $(5\sigma_1\sigma_2,2.5\sigma_1\sigma_2)$.
\end{lem}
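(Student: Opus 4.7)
My plan is to control the moment generating function of $W := XY - \E[XY]$ through a Bernstein--type argument based on polynomial moment bounds, without relying on independence of $X$ and $Y$. I would proceed in three stages.

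First, from the sub--Gaussian tail bound $\Pp(|X|\ge t)\le 2e^{-t^2/(2\sigma_1^2)}$, integration of tails yields the polynomial moment estimates
\[
\E[X^{2k}] \;\le\; (2k-1)!!\,\sigma_1^{2k} \;\le\; k!\,(2\sigma_1^{2})^{k}, \qquad \E[Y^{2k}] \;\le\; k!\,(2\sigma_2^{2})^{k}, \qquad k\ge 1,
\]
together with the Cauchy--Schwarz bound $|\E[XY]|\le \sqrt{\var(X)\var(Y)}\le \sigma_1\sigma_2$.

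A second Cauchy--Schwarz then gives $\E[|XY|^{k}]\le \sqrt{\E[X^{2k}]\E[Y^{2k}]} \le k!\,(2\sigma_1\sigma_2)^{k}$. Combining with the mean bound via $|W|^{k}\le 2^{k-1}(|XY|^{k}+|\E[XY]|^{k})$ produces an estimate of the form $\E[|W|^{k}]\le C_1\,k!\,(C_2\sigma_1\sigma_2)^{k}$ for all $k\ge 2$ and suitable constants $C_1,C_2$. Because $\E[W]=0$, the Taylor expansion of the MGF has a vanishing linear term, so
\[
\E[e^{\lambda W}] \;\le\; 1 + C_1\sum_{k\ge 2}\bigl(C_2|\lambda|\sigma_1\sigma_2\bigr)^{k} \;=\; 1 + \frac{C_1 C_2^{2}\lambda^{2}\sigma_1^{2}\sigma_2^{2}}{1-C_2|\lambda|\sigma_1\sigma_2},
\]
valid for $|\lambda|<1/(C_2\sigma_1\sigma_2)$. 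Using $1+x\le e^{x}$ converts this into the sub--Gamma form of Definition~\ref{def:Sub_Gamma_random_variable}.

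The main obstacle is pinning down the specific constants $(5\sigma_1\sigma_2,\,2.5\sigma_1\sigma_2)$ rather than just some absolute multiples of $\sigma_1\sigma_2$. The crude triangle step $|W|^{k}\le 2^{k-1}(|XY|^{k}+|\E[XY]|^{k})$ loses a factor $2^{k-1}$ and inflates $C_2$ well past $2.5$. I would therefore isolate the $k=2$ contribution, which controls the leading variance proxy $\var(W)\le \E[X^2Y^2]\le 3\sigma_1^{2}\sigma_2^{2}$, and bound the tail $k\ge 3$ separately using the Orlicz--norm inequality $\|XY\|_{\psi_1}\le \|X\|_{\psi_2}\|Y\|_{\psi_2}$ with the sharp estimate $\|X\|_{\psi_2}\le \sqrt{8/3}\,\sigma_1$ (obtained by evaluating $\E[e^{\lambda X^{2}}]\le(1-2\lambda\sigma_1^{2})^{-1/2}$ at $\lambda=3/(8\sigma_1^{2})$). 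No independence hypothesis is needed; the entire argument is uniform in the joint law of $(X,Y)$, which matches the lemma's anticipated use in concentration bounds for empirical covariance-type quantities.
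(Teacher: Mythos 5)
Your overall skeleton (Taylor expansion of the MGF of $W=XY-\E[XY]$, Cauchy--Schwarz moment bounds, geometric summation, $1+x\le e^x$) matches the paper's, but as written the proposal has two genuine gaps. First, the moment estimate $\E[X^{2k}]\le(2k-1)!!\,\sigma_1^{2k}$ does not follow from integrating the tail bound $\Pp(|X|\ge t)\le 2e^{-t^2/(2\sigma_1^2)}$: that integration yields $\E[X^{2k}]\le 2\,k!\,(2\sigma_1^2)^k$, which is larger than the Gaussian moments by a factor of order $\sqrt{k}$, and Gaussian moment dominance is not a consequence of the MGF definition of sub--Gaussianity that you can simply invoke. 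The same unproved dominance is hidden in your later steps: the bound you end up using, $\E[X^{2k}]\le k!(2\sigma_1^2)^k$, is missing the factor $2$ present in the citable bound of Boucheron--Lugosi--Massart (the one the paper uses), and $\E[X^2Y^2]\le 3\sigma_1^2\sigma_2^2$ again needs $\E[X^4]\le 3\sigma_1^4$, which you have not justified.

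Second, and more importantly, the lemma asserts the specific parameters $(5\sigma_1\sigma_2,\,2.5\sigma_1\sigma_2)$, and these are exactly what you never establish. You correctly note that the crude inequality $|W|^k\le 2^{k-1}(|XY|^k+|\E[XY]|^k)$ inflates the scale parameter past $2.5\sigma_1\sigma_2$, but the proposed remedy (isolate $k=2$, handle $k\ge 3$ via $\|XY\|_{\psi_1}\le\|X\|_{\psi_2}\|Y\|_{\psi_2}$) is only sketched, and it is not clear it lands on these constants: already $\sqrt{8/3}\,\sigma_1\cdot\sqrt{8/3}\,\sigma_2=\tfrac{8}{3}\sigma_1\sigma_2>2.5\,\sigma_1\sigma_2$ before recentering, and the sub--Gamma definition requires the MGF bound on the full range $|\lambda|<1/(2.5\sigma_1\sigma_2)$, not merely for $|\lambda|$ below some smaller Orlicz-derived threshold. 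The paper sidesteps the lossy $2^{k-1}$ step by applying Minkowski at the level of $L_k$ norms, $\norm{XY-\E[XY]}_k\le\norm{XY}_k+\abs{\E[XY]}$, combining this with Cauchy--Schwarz and $\E[Z^{2k}]\le 2(2\sigma^2)^k k!$, and then verifying that $h[k]=\bigl((2^{k+1}k!)^{1/k}+1\bigr)^k/\bigl((2.5)^k k!\bigr)\le 2$ for all $k\ge 2$ (it is decreasing with maximum $2$ at $k=2$); this is precisely the step that yields $1+\sum_{k\ge2}2\,(2.5|\lambda|\sigma_1\sigma_2)^k$ and hence the stated parameters. Without an argument of this kind, or a fully executed Orlicz computation with explicit constants valid on the whole range of $\lambda$, your proposal only shows that $W$ is sub--Gamma with some unspecified multiples of $\sigma_1\sigma_2$, which is weaker than the lemma and would not support the explicit constants used downstream in Corollary~\ref{cor:Hoeffding_multiple_sub_Gaussian} and Lemma~\ref{lem:norm_upperbound_selfcovar}.
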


\begin{proof}
    We write:
    \begin{align}
            \E\left[e^{\lambda (XY-\E[XY])}\right] &= 1 + \lambda\E\left[ (XY-\E[XY])\right] + \sum_{k=2}^{+\infty} \frac{\lambda^k\E\left[ (XY-\E[XY])^k\right]}{k!}\nonumber\\
            &= 1 + \sum_{k=2}^{+\infty} \frac{\lambda^k}{k!}\E\left[ (XY-\E[XY])^k\right]\nonumber\\
            &\le 1 + \sum_{k=2}^{+\infty} \frac{|\lambda|^k}{k!}\E\left[|XY-\E[XY]|^k\right]
            \nonumber\\
            &= 1 + \sum_{k=2}^{+\infty} \frac{|\lambda|^k}{k!}\|XY-\E[XY]\|_k^k
           \label{eqn:y-kth-norm} \\
            &\leq 1 + \sum_{k=2}^{+\infty} \frac{|\lambda|^k}{k!}\left(\|XY\|_k+\|\E[XY]\|_k\right)^k
            \label{eqn:y-minkovsky}\\
            &= 1 + \sum_{k=2}^{+\infty} \frac{|\lambda|^k}{k!}\left(\Big(\E\left[|XY|^k\right]\Big)^{\frac{1}{k}}+|\E[XY]|\right)^k
            \nonumber\\
            &\leq 1 + \sum_{k=2}^{+\infty} \frac{|\lambda|^k}{k!}\left(\Big(\E\left[X^{2k}\right]\E\left[Y^{2k}\right]\Big)^{\frac{1}{2k}}+\sqrt{\E[X^2]\E[Y^2]}\right)^k
            \label{eqn:y-CSineq}\\
            &\leq 1 + \sum_{k=2}^{+\infty} (|\lambda|\sigma_1\sigma_2)^k\frac{\left(\left(2^{k+1}k!\right)^{\frac{1}{k}}+1 \right)^k}{k!}\label{eqn:y-subg-moment}
            \\
            &\leq 1 + \sum_{k=2}^{+\infty} (|\lambda|\sigma_1\sigma_2)^k. 2. (2.5)^k\label{eqn:y-ineq-diagram}
            \\
            &= 1+\frac{25(\lambda\sigma_1\sigma_2)^2}{2(1-2.5|\lambda|\sigma_1\sigma_2)}\\ 
            &\leq \exp\left(\frac{25(\lambda\sigma_1\sigma_2)^2}{2(1-2.5|\lambda|\sigma_1\sigma_2)}\right),\label{eqn-y-f-SE}
    \end{align}
    where:
    \begin{itemize}
        \item in \eqref{eqn:y-kth-norm}, for a random variable $Z$, $\|Z\|_{k}:=\E^{1/k}[|Z|^k]$ is the $L_k$ norm of the random variable $Z$, 
        \item
        \eqref{eqn:y-minkovsky} follows directly from the application of Minkowski's inequality (also known as the triangle inequality) to the $L_k$ norm.
        \item
        \eqref{eqn:y-CSineq} follows from Cauchy--Schwarz inequality,
        \item
        in \eqref{eqn:y-subg-moment}, we use the following upper bound for the $2k$--th moment of a $\sigma$--sub--Gaussian random variable $Z$ (see \citep[Theorem 2.1]{boucheron2013concentration}):
        \[
        \E[Z^{2k}]\leq 2(2\sigma^2)^k k!,
        \]
        \item
        \eqref{eqn:y-ineq-diagram} follows from the fact that the function $h[k]:=\frac{\Big(\left(2^{k+1}k!\right)^{\frac{1}{k}}+1 \Big)^k}{(2.5)^kk!}$ is a decreasing function on $\{2,3,\cdots\}$ and takes its maximum at $k=2$, which is equal to 2 (see Figure \ref{fig-y-diagram}).
    \end{itemize}
    Finally, \eqref{eqn-y-f-SE} implies that $XY-\E[XY]$ is a $(5\sigma_1\sigma_2,2.5\sigma_1\sigma_2)$--sub--Gamma random variable.
\end{proof}
{\pgfplotsset{
    standard/.style={
        axis x line=middle,
        axis y line=middle,
        enlarge x limits=0.15,
        enlarge y limits=0.15,
        every axis x label/.style={at={(current axis.right of origin)},anchor=north west},
        every axis y label/.style={at={(current axis.above origin)},anchor=north east},
        every axis plot post/.style={mark options={fill=blue,draw=blue}}
        }
    }
    \begin{figure}[t]
    \begin{center}
        \begin{tikzpicture}
            \begin{axis}[%
                standard,
                domain = 2:20,
                samples = 19,
                xlabel={$k$},
                ylabel={$h[k]$},
                ymin=0,
                ymax=2]
                \addplot+[ycomb,blue,thick] {(((2^(x+1)*(x!))^(1/x)+1)^x)/(2.5^x*x!)};
            \end{axis}
        \end{tikzpicture}
        \caption{Diagram of the function $h[k]:=\frac{\Big(\left(2^{k+1}k!\right)^{\frac{1}{k}}+1 \Big)^k}{(2.5)^kk!}$.}\label{fig-y-diagram}
  \end{center}  \end{figure}
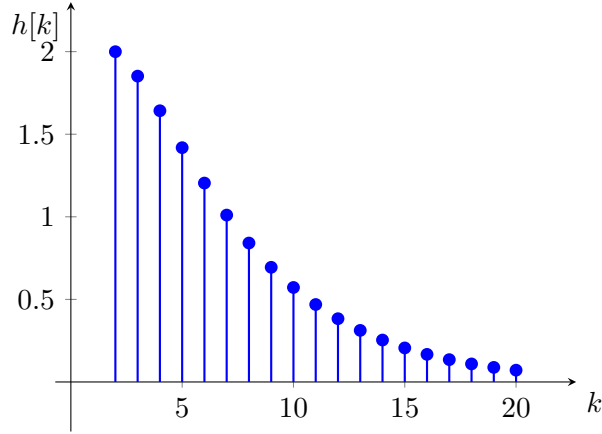
}

\begin{cor}\label{cor:Hoeffding_multiple_sub_Gaussian}
Let $\{(X_i,Y_i)\}_{i=1}^m$  be a sequence of i.i.d. pairs of random variables where $X_i$'s and $Y_i$'s are $\sigma_1$--sub--Gaussian and $\sigma_2$--sub--Gaussian, respectively. If we define $Z_i = X_iY_i-\E[X_iY_i]$, then we have:
\begin{equation*}
    \begin{split}
        \Pp\bigg[\frac{1}{m}\sum_{i=1}^m Z_i\geq 10\sigma_1\sigma_2t\bigg]\leq\exp\left({-m}.\min\{t,t^2\}\right).
    \end{split}
\end{equation*}
\end{cor}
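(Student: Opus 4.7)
The plan is to assemble the corollary from three ingredients already established in the excerpt: Lemma \ref{lem:multiple_of_sub_Gaussian_Gamma}, which turns the centered product of two sub--Gaussians into a sub--Gamma random variable; Lemma \ref{lem:sum_independent_sub_Gaussian&Gamma}, which tensorizes the sub--Gamma property across independent summands; and Lemma \ref{lem:tail_gamma}, which supplies the tail bound. So the proof is essentially bookkeeping on the parameters.

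First I would observe that, by Lemma \ref{lem:multiple_of_sub_Gaussian_Gamma}, each summand $Z_i = X_i Y_i - \E[X_i Y_i]$ is centered and $(5\sigma_1\sigma_2,\,2.5\sigma_1\sigma_2)$--sub--Gamma. Since the pairs $(X_i,Y_i)$ are i.i.d., the $Z_i$'s are i.i.d. as well, so Lemma \ref{lem:sum_independent_sub_Gaussian&Gamma} yields that the sum $S_m := \sum_{i=1}^m Z_i$ is $\bigl(5\sigma_1\sigma_2\sqrt{m},\,2.5\sigma_1\sigma_2\bigr)$--sub--Gamma.

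Next I would apply the first (sharper) form of Lemma \ref{lem:tail_gamma} to $S_m$ with threshold $s = 10 m \sigma_1 \sigma_2 t$, giving
\[
\Pp\bigl[S_m \geq 10 m \sigma_1 \sigma_2 t\bigr]
\;\leq\;
\exp\!\Bigl(\tfrac{-s^2}{2(\sigma^2 + \alpha s)}\Bigr)
\;=\;
\exp\!\Bigl(\tfrac{-2 m t^2}{1+t}\Bigr),
\]
after substituting $\sigma^2 = 25 m \sigma_1^2\sigma_2^2$, $\alpha s = 25 m \sigma_1^2\sigma_2^2 t$, and $s^2 = 100 m^2 \sigma_1^2 \sigma_2^2 t^2$, with all the $\sigma_1^2\sigma_2^2$ and $m$ factors canceling cleanly.

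Finally, I would reduce the exponent to the desired $-m \min\{t,t^2\}$ form by a trivial case split: for $0 < t \leq 1$ one has $1+t \leq 2$, so $\tfrac{2mt^2}{1+t} \geq m t^2 = m\min\{t,t^2\}$; for $t \geq 1$, one has $1+t \leq 2t$, so $\tfrac{2mt^2}{1+t} \geq mt = m\min\{t,t^2\}$. Dividing the left--hand--side event by $m$ gives exactly the claimed bound on $\tfrac{1}{m}\sum_i Z_i$. There is no genuine obstacle here; the only mild subtlety is choosing the sharper (non--minimum) form of Lemma \ref{lem:tail_gamma} so that the final constant matches the $10\sigma_1\sigma_2$ factor stated in the corollary, whereas the looser form would produce a worse numerical constant.
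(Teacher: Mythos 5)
Your proposal is correct and follows essentially the same route as the paper's own proof: Lemma \ref{lem:multiple_of_sub_Gaussian_Gamma} to get each $Z_i$ sub--Gamma, Lemma \ref{lem:sum_independent_sub_Gaussian&Gamma} to get $\sum_i Z_i$ as $(5\sigma_1\sigma_2\sqrt{m},\,2.5\sigma_1\sigma_2)$--sub--Gamma, the sharper tail bound of Lemma \ref{lem:tail_gamma} to obtain $\exp\bigl(-2mt^2/(1+t)\bigr)$, and the same elementary case split on $t\lessgtr 1$ to reach $\exp\bigl(-m\min\{t,t^2\}\bigr)$. The parameter bookkeeping matches the paper's exactly, so there is nothing to add.
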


\begin{proof}
    From Lemma \ref{lem:multiple_of_sub_Gaussian_Gamma}, we know that $Z_i = X_iY_i - \E[X_iY_i]$ is a $(5\sigma_1\sigma_2,2.5\sigma_1\sigma_2)$--sub--Gamma random variable. Therefore, using Lemma \ref{lem:sum_independent_sub_Gaussian&Gamma}, we  conclude that $\sum\limits_{i=1}^m Z_i$ is a $(5\sigma_1\sigma_2\sqrt{m}, 2.5\sigma_1\sigma_2)$--sub--Gamma random variable. Thus:
    \begin{equation}
    \begin{split}
        \Pp\bigg[\frac{1}{m}\sum_{i=1}^m (X_iY_i-\E[X_iY_i])\geq 10\sigma_1\sigma_2t\bigg] &= \Pp\bigg[\sum_{i=1}^m Z_i\geq 10m\sigma_1\sigma_2t\bigg]\\
        &\leq \exp\Big(\frac{-100m^2\sigma_1^2\sigma_2^2t^2}{2(25\sigma_1^2\sigma_2^2m+25 \sigma_1^2\sigma_2^2 mt)}\Big)\\
        &= \exp\Big(\frac{-2mt^2}{1+t}\Big)\\
        &\leq \exp\Big({-m}.\min\{t,t^2\}\Big).
    \end{split}
    \end{equation}
\end{proof}

\subsection{An Important Relation Between the Packing and the Covering Numbers of a Set}

The packing and covering numbers are defined in Section \ref{sec:pack_cover_nums}. There is an important relationship between the packing and covering numbers of a set, stated in the following lemma:

\begin{lem}[{\citep[~Lemma 5.5]{wainwright2019high}}]\label{lem:packing_covering_numbers}
  For all $\epsilon>0$, the packing and covering numbers are related as follows:
  \[\cM(\cK,\msfd,2\epsilon)\leq \cN(\cK,\msfd,\epsilon)\leq \cM(\cK,\msfd,\epsilon).\]
\end{lem}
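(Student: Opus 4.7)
The plan is to prove the two inequalities separately, in each case using a maximality/minimality argument combined with the triangle inequality.

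First, for the upper bound $\cN(\cK,\msfd,\epsilon)\leq \cM(\cK,\msfd,\epsilon)$, I would take a maximal $\epsilon$-packing set $\{\bx_1,\ldots,\bx_M\}$ of $\cK$, where $M=\cM(\cK,\msfd,\epsilon)$. I claim this set is automatically an $\epsilon$-covering. Indeed, for any $\bx\in\cK$, either $\bx\in\{\bx_1,\ldots,\bx_M\}$ (in which case $\msfd(\bx,\bx_i)=0\leq\epsilon$ for that index), or $\bx$ is distinct from all $\bx_i$; in the latter case, if $\msfd(\bx,\bx_i)>\epsilon$ for every $i$, then $\{\bx_1,\ldots,\bx_M,\bx\}$ would still satisfy the strict separation property and hence be an $\epsilon$-packing, contradicting maximality. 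Therefore some $\bx_i$ must satisfy $\msfd(\bx,\bx_i)\leq\epsilon$, so the packing doubles as an $\epsilon$-cover. This gives $\cN(\cK,\msfd,\epsilon)\leq M$.

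Second, for the lower bound $\cM(\cK,\msfd,2\epsilon)\leq \cN(\cK,\msfd,\epsilon)$, I would take a $2\epsilon$-packing $\{\bx_1,\ldots,\bx_M\}$ with $M=\cM(\cK,\msfd,2\epsilon)$ and an $\epsilon$-covering $\{\by_1,\ldots,\by_N\}$ with $N=\cN(\cK,\msfd,\epsilon)$. By definition of the covering, for each $j\in[M]$ there is at least one index $f(j)\in[N]$ with $\msfd(\bx_j,\by_{f(j)})\leq\epsilon$; fix such a choice to define a map $f\colon[M]\to[N]$. I would then argue $f$ is injective: if $f(i)=f(j)=k$ for distinct $i,j$, the triangle inequality gives
\[
\msfd(\bx_i,\bx_j)\leq \msfd(\bx_i,\by_k)+\msfd(\by_k,\bx_j)\leq 2\epsilon,
\]
contradicting the defining property $\msfd(\bx_i,\bx_j)>2\epsilon$ of the $2\epsilon$-packing. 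Hence $M\leq N$, completing the proof.

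Neither step presents a real obstacle; the only subtlety is aligning the strict vs. non-strict inequalities in the stated definitions of packing and covering, which is precisely why the factor of $2$ appears on the packing side. A maximal packing exists whenever the packing number is finite (and the statement is vacuous otherwise), so no additional compactness or existence argument is required.
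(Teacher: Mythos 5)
Your proof is correct; the paper itself states this lemma only by citation to \citep[Lemma 5.5]{wainwright2019high} and gives no proof, and your argument is exactly the standard one from that reference: a maximal $\epsilon$-packing must be an $\epsilon$-covering (else it could be extended), giving $\cN(\cK,\msfd,\epsilon)\leq \cM(\cK,\msfd,\epsilon)$, and the triangle-inequality injection from a $2\epsilon$-packing into any $\epsilon$-covering gives $\cM(\cK,\msfd,2\epsilon)\leq \cN(\cK,\msfd,\epsilon)$. Your handling of the strict inequality in the packing definition and of the degenerate infinite case is also fine, so nothing is missing.
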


\subsection{Finding Upper Bound on Operator Norm of Matrices, Using Covering Nets}

The following lemma is useful in finding an upper bound for the operator norm of a random matrix.

\begin{lem}[{\citep[~Exercise 4.4.3]{vershynin2018high}}]\label{lem:norm_upperbound_covering}
    Let $\bA$ be a $m\times n$ matrix. We define the sets $\cS^{m-1}=\{\bu\in\R^{m-1}:\norm{\bu}=1\}$ and $\cS^{n-1}=\{\bv\in\R^{n-1}:\norm{\bv}=1\}$. We fix an arbitrary $\epsilon>0$ and denote an $\epsilon$--covering set of $\cS^{m-1}$ by $\cN_{\epsilon}^{(m)}$ and an $\epsilon$--covering set of $\cS^{n-1}$ by $\cN_{\epsilon}^{(n)}$. We have:
    \[\norm{\bA}_{\op}  \leq \frac{1}{1-2\epsilon} \max\limits_{\bu\in\cN_{\epsilon}^{(m)},\bv\in\cN_{\epsilon}^{(n)}}\left\{\bu^{\top} \bA \bv\right\}.\]
\end{lem}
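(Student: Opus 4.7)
The plan is to exploit the standard variational characterization of the operator norm, namely $\|\bA\|_\op=\sup_{\bu\in\cS^{m-1},\bv\in\cS^{n-1}}\bu^\top\bA\bv$, and then approximate the optimizing unit vectors by points of the covering nets, paying for the approximation in a self-referential way that can be absorbed. Since the unit sphere is compact and the bilinear form $(\bu,\bv)\mapsto \bu^\top\bA\bv$ is continuous, I would first pick $\bu^\star\in\cS^{m-1}$ and $\bv^\star\in\cS^{n-1}$ that actually attain the supremum, so that $\|\bA\|_\op=\bu^{\star\top}\bA\bv^\star$.

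Next I would invoke the covering property: there exist $\bu_0\in\cN_{\epsilon}^{(m)}$ and $\bv_0\in\cN_{\epsilon}^{(n)}$ with $\|\bu^\star-\bu_0\|\le\epsilon$ and $\|\bv^\star-\bv_0\|\le\epsilon$. I then split
\[
\bu^{\star\top}\bA\bv^\star
=\bu_0^\top\bA\bv_0 + (\bu^\star-\bu_0)^\top\bA\bv^\star + \bu_0^\top\bA(\bv^\star-\bv_0),
\]
and bound each of the two error terms by the definition of the operator norm: for any vectors $\bx\in\R^m$, $\by\in\R^n$ one has $|\bx^\top\bA\by|\le\|\bA\|_\op\|\bx\|\|\by\|$. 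This yields $|(\bu^\star-\bu_0)^\top\bA\bv^\star|\le\epsilon\|\bA\|_\op$ and similarly $|\bu_0^\top\bA(\bv^\star-\bv_0)|\le\epsilon\|\bA\|_\op$ (using $\|\bu_0\|=1$ in the second inequality).

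Putting these together, and writing $M:=\max_{\bu\in\cN_{\epsilon}^{(m)},\bv\in\cN_{\epsilon}^{(n)}}\bu^\top\bA\bv$, one obtains
\[
\|\bA\|_\op=\bu^{\star\top}\bA\bv^\star \le M + 2\epsilon\,\|\bA\|_\op,
\]
which, upon rearrangement (valid because $\epsilon<1/2$), gives $\|\bA\|_\op\le \frac{M}{1-2\epsilon}$, as claimed. The only mild subtlety is handling the sign: if the maximum of $\bu^\top\bA\bv$ over the nets is negative one should replace $\bu_0$ by $-\bu_0$ to guarantee nonnegativity, but since $\cN_\epsilon^{(m)}$ covers every direction up to $\epsilon$, this is automatic once $\epsilon<1$. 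No genuine obstacle appears; the argument is essentially a one-step contraction fixed-point manipulation.
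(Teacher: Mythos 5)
Your argument is correct and is exactly the standard covering-net argument that the cited source (\citep[Exercise 4.4.3]{vershynin2018high}) intends; the paper itself offers no proof, simply citing that exercise. The only cosmetic points are that the closing remark about flipping signs is unnecessary (the decomposition plus $\bu_0^\top\bA\bv_0\le M$ already suffices, regardless of the sign of $M$), and it is worth stating explicitly that the conclusion requires $\epsilon<1/2$, which the paper's phrasing ``arbitrary $\epsilon>0$'' glosses over.
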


\subsection{Packing and Covering in Matrix Spaces}
\label{app:pack_cover_matrix}
Consider the family of matrices defined as follows:
\begin{equation}\notag
    \cA = \{\bA\in\R^{m\times n}:\norm{\bA}_{\op}\leq r\}.
\end{equation}
We  vectorize each member of this family as:
\begin{equation}\notag
    \ba = \vect(\bA) = [A_{11},A_{12},\dots,A_{1n},A_{21},\dots,A_{mn}]^\top.
\end{equation}
We  convert the $\dist$ norm on the matrix space $\R^{m\times n}$ to a norm on $\R^{m n}$  
via $\normiii{\ba}_{\dist}=\norm{\bA}_{\dist}$. 
Now we define the ball of radius $r$ under norm $\normiii{.}_{\op}$ as:
\begin{equation}\label{eq:norm_op_ball}
    \cB_{\normiii{.}_{\op}}^{(mn)}(r)=\left\{\bx\in\R^{mn}:\normiii{\bx}_{\op}\leq r\right\} = \left\{\vect(\bA):\bA\in\R^{m\times n}, \norm{\bA}_{\op}\leq r\right\}.
\end{equation}
We consider an $\epsilon$--covering net for $\cB_{\normiii{.}_{\op}}^{(mn)}(r)$ under the norm $\normiii{.}_{\dist}$, where $\dist$ can denote \Frob or operator norm. 

\begin{itemize}
    \item Consider the case $\dist=\op$, in this case, from \citep[~Lemma 5.7]{wainwright2019high}, we have:
  \begin{equation}\notag
      \left(\frac{r}{\epsilon}\right)^{mn} \leq \cN(\cB_{\normiii{.}_{\op}}^{(mn)}(r),\normiii{.}_{\op},\epsilon) \leq
      \left(1+\frac{2r}{\epsilon}\right)^{mn}
      \leq\left(\frac{3r}{\epsilon}\right)^{mn}.
  \end{equation}
  From Lemma \ref{lem:packing_covering_numbers} we conclude:
  \begin{equation}\label{eq:pack_op_ball_op_norm}
    \left(\frac{r}{\epsilon}\right)^{mn}\leq
    \cN(\cB_{\normiii{.}_{\op}}^{(mn)}(r),\normiii{.}_{\op},\epsilon)\leq
    \cM(\cB_{\normiii{.}_{\op}}^{(mn)}(r),\normiii{.}_{\op},\epsilon)\leq \cN(\cB_{\normiii{.}_{\op}}^{(mn)}(r),\normiii{.}_{\op},\frac{\epsilon}{2})\leq \left(1+\frac{4r}{\epsilon}\right)^{mn}
  \end{equation}
  \subsubsection{Matrix quantization scheme} We quantize matrix $\bA$, whose operator norm is at most $r$, under the norm $\normiii{.}_{\op}$, with the matrices corresponding to the covering points of $\cB_{\normiii{.}_{\op}}^{(mn)}(r)$ . 
  Note that the number of these points is less than $\left(\frac{3r}{\epsilon}\right)^{mn}$, so we can send the index of the quantized matrix using at most $mn\log_2(\frac{3r}{\epsilon})$ bits. Furthermore, if we denote the output of the quantization with $Q_{\op}(\bA)$, we have:
  \begin{equation}\notag
      \norm{\bA-Q_{\op}(\bA)}_{\op} \leq \epsilon.
  \end{equation}
  \item In the case $\dist=\Fr$, we only find a lower bound on the packing number  $\cM(\cB_{\normiii{.}_{\op}}^{(mn)}(r),\normiii{.}_{\Fr},\epsilon)$.

  \begin{lem}\label{lem:packing_op_ball_Fr_dist}
For $\cM(\cB_{\normiii{.}_{\op}}^{(mn)}(1),\normiii{.}_{\Fr},\epsilon)$, we have:
\[
\cM(\cB_{\normiii{.}_{\op}}^{(mn)}(1),\normiii{.}_{\Fr},\epsilon)\geq \left(\frac{ \sqrt{\min\{m,n\}}}{14\epsilon}
\right)^{mn}
\]
\end{lem}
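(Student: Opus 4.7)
The plan is a standard volumetric packing bound, reduced to a comparison between the volumes of the operator-norm and Frobenius-norm balls.  Without loss of generality assume $m\le n$, so $\min\{m,n\}=m$.  The set $K:=\cB_{\normiii{\cdot}_{\op}}^{(mn)}(1)$ is a centrally symmetric convex body in $\R^{mn}$, and the Frobenius norm coincides with the Euclidean norm on $\R^{mn}$.  Combining Lemma~\ref{lem:packing_covering_numbers} with the classical volumetric lower bound on coverings of convex bodies in an inner-product space yields
\[
\cM\!\left(K,\normiii{\cdot}_{\Fr},\epsilon\right)\;\ge\;\cN\!\left(K,\normiii{\cdot}_{\Fr},\epsilon\right)\;\ge\;\frac{\Vol(K)}{\epsilon^{mn}\,\Vol(\cB_{\normiii{\cdot}_{\Fr}}^{(mn)}(1))},
\]
so it suffices to show the volume comparison
\begin{equation}\label{eqn:vol-plan}
\frac{\Vol(K)}{\Vol(\cB_{\normiii{\cdot}_{\Fr}}^{(mn)}(1))}\;\ge\;\Bigl(\tfrac{\sqrt{m}}{14}\Bigr)^{mn}.
\end{equation}

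To prove \eqref{eqn:vol-plan}, I would pass to the singular-value decomposition $A=U\Sigma V^\top$, with $U\in O(m)$, $V$ on the Stiefel manifold $V_m(\R^n)$, and $\Sigma=\diag(\sigma_1,\ldots,\sigma_m)$ padded with zeros.  The Jacobian of this change of variables contributes the classical factor $\prod_{i<j}(\sigma_i^2-\sigma_j^2)\prod_i\sigma_i^{n-m}$, which---together with the Haar volumes of $O(m)$ and $V_m(\R^n)$---cancels from the ratio on the left-hand side of \eqref{eqn:vol-plan}.  After the substitution $t_i=\sigma_i^2$, the ratio reduces to
\[
\frac{\int_{[0,1]^m}\prod_{i<j}|t_i-t_j|\prod_i t_i^{(n-m-1)/2}\,dt}{\int_{\{t_i\ge 0,\,\sum_i t_i\le 1\}}\prod_{i<j}|t_i-t_j|\prod_i t_i^{(n-m-1)/2}\,dt},
\]
where the numerator corresponds to the constraint $\|A\|_\op\le 1$ (each $\sigma_i\le 1$) and the denominator to $\|A\|_\Fr\le 1$ (i.e.\ $\sum_i\sigma_i^2\le 1$).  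Both integrals are of Selberg--Aomoto type and admit closed-form evaluations in terms of products of Gamma functions, so Stirling's formula applied to the ratio produces a bound of the form $(\kappa\sqrt{m})^{mn}$; the extra $\sqrt{m}$ arises because the cube admits configurations with $\|\Sigma\|_\Fr$ up to $\sqrt{m}$ while the simplex confines $\|\Sigma\|_\Fr\le 1$.

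I expect the main obstacle to be this final step: carefully tracking the non-asymptotic constants produced by Stirling's approximation of the Selberg--Aomoto Gamma products, in order to verify that $\kappa\ge 1/14$ uniformly in $m,n$.  An alternative route that sidesteps these special-function identities is a direct product-packing construction along the SVD: pack the singular-value cube $[0,1]^m$ in $\ell_2$, together with operator-norm packings of $O(m)$ and $V_m(\R^n)$ (both of which have well-studied Haar packing rates), and translate these into a Frobenius packing of $K$ through the Lipschitz inequality
\[
\|U_1\Sigma_1 V_1^\top-U_2\Sigma_2 V_2^\top\|_\Fr\;\le\;\|\Sigma_1-\Sigma_2\|_\Fr+\|\Sigma_1\|_\Fr\|U_1-U_2\|_\op+\|\Sigma_2\|_\Fr\|V_1-V_2\|_\op.
\]
Because $\|\Sigma\|_\Fr$ can reach $\sqrt{m}$ while $\|U\|_\op=\|V\|_\op=1$, a perturbation of $U$ or $V$ of operator-norm size $\eta$ produces a Frobenius displacement of order $\sqrt{m}\eta$, which is precisely the origin of the $\sqrt{m}$ factor in the target bound.
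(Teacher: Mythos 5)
Your reduction to the volume comparison \eqref{eqn:vol-plan} is legitimate (the chain $\cM\ge\cN\ge \Vol(K)/(\epsilon^{mn}\Vol(\cB_{\normiii{.}_{\Fr}}^{(mn)}(1)))$ is standard, and the inequality \eqref{eqn:vol-plan} is in fact true), but as written the proposal stops exactly where the lemma's content lies: the non-asymptotic evaluation of the Selberg--Aomoto ratio with a uniform constant $\ge 1/14$ is flagged as ``the main obstacle'' and never carried out, so no bound with an explicit constant is actually established. Your fallback route is worse than incomplete --- it is structurally flawed: the displayed SVD inequality bounds $\norm{U_1\Sigma_1V_1^\top-U_2\Sigma_2V_2^\top}_{\Fr}$ from \emph{above}, which is what one needs for a covering, whereas a packing requires a \emph{lower} bound on pairwise distances of the constructed points. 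The SVD parametrization gives no such lower bound: precisely in the regime $\Sigma\approx \bI$ where $\norm{\Sigma}_{\Fr}\approx\sqrt{m}$ (the source of your $\sqrt{m}$ heuristic), the map $(U,\Sigma,V)\mapsto U\Sigma V^\top$ collapses, since $U\bI V^\top$ depends only on $UV^\top$ and distinct pairs $(UW,VW)$, $W\in O(m)$, give the same matrix; so products of packings of $O(m)$, the Stiefel manifold and the singular-value cube need not be separated in Frobenius norm, and the exponent $mn$ cannot be extracted this way without substantial extra work.

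For comparison, the paper avoids any computation of $\Vol(K)$ or Selberg-type integrals by replacing Lebesgue volume with a Gaussian measure: take $\bsG$ with i.i.d.\ $\cN\bigl(0,\tfrac{1}{4(\sqrt m+\sqrt n)^2}\bigr)$ entries, so that $\E\norm{\bsG}_{\op}\le\tfrac12$ and hence $\Pp[\norm{\bsG}_{\op}\le 1]\ge\tfrac12$; a maximal $\epsilon$-packing is an $\epsilon$-covering, so this mass is union-bounded by $M$ times the Gaussian mass of a Frobenius $\epsilon$-ball, which is at most the peak density times the Euclidean ball volume $\pi^{mn/2}/\Gamma(1+\tfrac{mn}{2})$. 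A single application of $\Gamma(1+x)\ge (x/e)^x$ and $(\sqrt m+\sqrt n)^2\le 4\max\{m,n\}$ then yields $M\ge\tfrac12\bigl(\tfrac{\min\{m,n\}}{16e\epsilon^2}\bigr)^{mn/2}\ge\bigl(\tfrac{\sqrt{\min\{m,n\}}}{14\epsilon}\bigr)^{mn}$, with the constant $14$ falling out explicitly. If you want to salvage your route 1 you must either carry out the Gamma-product bookkeeping uniformly in $m,n$, or note that the same Gaussian trick proves \eqref{eqn:vol-plan} directly (bound $\Vol(K)$ from below by $\Pp[\norm{\bsG}_{\op}\le1]$ divided by the peak density), at which point you have essentially reproduced the paper's argument.
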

\begin{proof}
    Let $\cA=\{\bA_1,\cdots,\bA_M\}$ be a \emph{maximal $\epsilon$--packing} of the ball $\cB_{\normiii{.}_{\op}}^{(mn)}(1)$. Then $\cA$ is also an $\epsilon$--covering. In particular:
    \[
    \cB_{\normiii{.}_{\op}}^{(mn)}(1)\subseteq\bigcup_{i=1}^M\cB_{\normiii{.}_{\Fr}}^{(mn)}(\bA_i;\epsilon),
    \]
    where $\cB_{\normiii{.}_{\Fr}}^{(mn)}(\bA;\epsilon)=\{\vect(\bB):\norm[1]{\bB-\bA}_\Fr\leq \epsilon\}$ is the \Frob ball with center $\bA$ and radius $\epsilon$. The proof follows a probabilistic argument which is similar to the volume argument usually used to prove packing numbers. 

    Let $\bsG=[g_{ij}]_{m\times n}$ be a random matrix with independent $g_{ij}\sim\cN\left(0,\frac{1}{4(\sqrt{m}+\sqrt{n})^2}\right)$ elements.
    It follows from \citep[Theorem 7.3.1]{vershynin2018high} that $\E\left[\norm[0]{\bsG}_\op\right]\le \frac{1}{2}$. Thus  Markov inequality yields:
    \begin{equation}\label{eq:y-half} 
     \Pp\left[\vect(\bsG)\in\cB_{\normiii{.}_{\op}}^{(mn)}(1)\right] = \Pp\left[\norm{\bsG}_{\op}\leq 1\right] = 1-\Pp\left[\norm{\bsG}_{\op}> 1\right] \geq 1-\frac{1}{2} = \frac{1}{2}.  
    \end{equation}

    On the other side, union bound gives:
    \begin{equation}\label{eq:y-union-fr}
        \Pp\left[\vect(\bsG) \in\cB_{\normiii{.}_{\op}}^{(mn)}(1) \right]\leq \sum_{i=1}^M \Pp\left[\vect(\bsG)\in\cB_{\normiii{.}_{\Fr}}^{(mn)}(\bA_i;\epsilon)\right].
    \end{equation}
  We now proceed to find an upper bound on the inner term in the summation. Observe:

  \begin{equation}\label{eq:y-vol}
    \begin{split}
    \Pp\left[\vect(\bsG) \in\cB_{\normiii{.}_{\Fr}}^{(mn)}(\bA_i;\epsilon)\right]&=\int_{\cB_{\normiii{.}_{\Fr}}^{(mn)}(\bA_i;\epsilon)} \left(\frac{4(\sqrt{m}+\sqrt{n})^2}{2\pi}\right)^{\frac{mn}{2}}\exp\left(-{2(\sqrt{m}+\sqrt{n})^2\norm[0]{\bG-\bA_i}_\Fr^2}\right)d\bG\\
     &\leq \int_{\cB_{\normiii{.}_{\Fr}}^{(mn)}} \left(\frac{4(\sqrt{m}+\sqrt{n})^2}{2\pi}\right)^{\frac{mn}{2}}d\bG\\
     &= \left(\frac{4(\sqrt{m}+\sqrt{n})^2}{2\pi}\right)^{\frac{mn}{2}}\Vol\left(\cB_{\normiii{.}_{\Fr}}^{(mn)}(\bA_i;\epsilon)\right)\\
     &= \left(\frac{2\epsilon^2(\sqrt{m}+\sqrt{n})^2}{\pi}\right)^{\frac{mn}{2}}\mathsf{Vol}\left(\cB_{\normiii{.}_{\Fr}}^{(mn)}(\bA_i;1)\right),
    \end{split}
  \end{equation}
  where we have used the density formula for normal distribution. Now we view $\cB_{\normiii{.}_{\Fr}}^{(mn)}(\bA_i;1)$ as a $mn$--dimensional euclidean ball. It is well known that the volume of this ball is given by
  \begin{equation*}
      \Vol\left(\cB_{\normiii{.}_{\Fr}}^{(mn)}(\bA_i;1)\right)=\frac{\pi^{\frac{mn}{2}}}{\Gamma(1+\frac{mn}{2})}.
  \end{equation*}
  Using the bound $\Gamma(1+x)>>(\frac{x}{e})^x$, in \eqref{eq:y-vol}, we obtain:
  \begin{equation}\label{eq:y-pp-upper}
    \begin{split}
        \Pp\left[\vect(\bsG) \in\cB_{\normiii{.}_{\Fr}}^{(mn)}(\bA_i;\epsilon)\right]
     &\leq \left(\frac{4e\epsilon^2(\sqrt{m}+\sqrt{n})^2}{mn}\right)^{\frac{mn}{2}}\\
     &\leq 
     \left(\frac{16e\epsilon^2\max\{m,n\}}{mn}\right)^{\frac{mn}{2}}\\
     &=\left(\frac{16e\epsilon^2}{\min\{m,n\}}\right)^{\frac{mn}{2}}.
    \end{split}
  \end{equation}
  Putting \eqref{eq:y-half}, \eqref{eq:y-union-fr} and \eqref{eq:y-pp-upper} together yields:
  \begin{equation}
      M\geq \frac{1}{2} \left(\frac{\min\{m,n\}}{16e\epsilon^2}\right)^{\frac{mn}{2}}\geq \left(\frac{\sqrt{\min\{m,n\}}}{8\sqrt{e}\epsilon}\right)^{mn}\geq \left(\frac{\sqrt{\min\{m,n\}}}{14\epsilon}\right)^{mn}.
  \end{equation}
  This concludes the proof.
\end{proof}
    From Lemma \ref{lem:packing_op_ball_Fr_dist}, we conclude that:
    \begin{equation}\label{eq:pack_op_ball_Fr_norm}
        \cM(\cB_{\normiii{.}_{\op}}^{(mn)}(r),\normiii{.}_{\Fr},\epsilon)\geq \left(\frac{r\cdot \sqrt{\min\{m,n\}}}{14\epsilon}
        \right)^{mn}.
    \end{equation}
\end{itemize}
\section{Proof of Lemma \ref{le:identity-from-CR}}
\label{app:proof_lemma_CSDPI}
Here, we state the proof of Lemma \ref{le:identity-from-CR}.

\begin{proof}
    Applying the chain rule for KL--divergence yields the following sequence of equalities:
    \begin{equation}
    \begin{aligned}
        \Dkl(Q_{Y_1,Y_2|V}\|P_{Y_1|V}P_{Y_2|V}|P_V)&\stackrel{\text{(a)}}{=}\Dkl(Q_{Y_1|V}\|P_{Y_1|V}|P_V)+\Dkl(Q_{Y_2|Y_1,V}\|P_{Y_2|V}|Q_{Y_1,V})\\
    	&\stackrel{\text{(b)}}{=}\Dkl(Q_{Y_1|V}\|P_{Y_1|V}|P_V)+\Dkl(Q_{Y_2,X_1|Y_1,V}\|Q_{X_1|Y_1,V}P_{Y_2|V}|Q_{Y_1,V})\\
    	&\quad-\Dkl(Q_{X_1|Y_1,Y_2,V}\|Q_{X_1|Y_1,V}|Q_{Y_1,Y_2,V})\\
    	&\stackrel{\text{(c)}}{=}\Dkl(Q_{Y_1|V}\|P_{Y_1|V}|P_V)+\Dkl(Q_{Y_2,X_1|Y_1,V}\|Q_{X_1|Y_1,V}P_{Y_2|V}|Q_{Y_1,V})\\
    	&\quad-I_Q(X_1;Y_2|Y_1,V)\\
    	&\stackrel{\text{(d)}}{=}\Dkl(Q_{Y_1|V}\|P_{Y_1|V}|P_V)+\Dkl(Q_{Y_2|X_1,Y_1,V}\|P_{Y_2|V}|Q_{X_1,Y_1,V})\\
    	&\quad-I_Q(X_1;Y_2|Y_1,V),
    \end{aligned}
    \end{equation}
    where (a) follows from the chain rule for KL--divergence applied to $\Dkl(Q_{Y_1,Y_2|V}\|P_{Y_1|V}P_{Y_2|V}|P_V)$, (b) follows from the chain rule for KL--divergence applied to $\Dkl(Q_{Y_2,X_1|Y_1,V}\|Q_{X_1|Y_1,V}P_{Y_2|V}|Q_{Y_1,V})$, (c) follows from the definition of conditional mutual information, and (d) follows from the chain rule for KL--divergence applied to $\Dkl(Q_{Y_2,X_1|Y_1,V}\|Q_{X_1|Y_1,V}P_{Y_2|V}|Q_{Y_1,V})$ and the fact that \\$\Dkl(Q_{X_1|Y_1,V}\|Q_{X_1|Y_1,V}|Q_{Y_1,V})=0$.
    
    The final identity follows from the fact that the product channel $T_{Y_1|X_1,V}T_{Y_2|X_2,V}$ induces the Markov chain $Y_1\mkv(X_1,V)\mkv Y_2$. This implies:
     \[
     \Dkl(Q_{Y_2|X_1,Y_1,V}\|P_{Y_2|V}|Q_{X_1,Y_1,V})=\Dkl(Q_{Y_2|X_1,V}\|P_{Y_2|V}|Q_{X_1,V}).
     \]
\end{proof}
     
\section{Gaussian Optimality}
\label{app:gaussian_optimality}
The goal of this section is to prove Lemma \ref{le:normal-opt}, which asserts the Gaussian optimality for the optimization problem \eqref{eqn:inf-normal}. For technical reason, we first consider a smoothed version of the optimization problem \eqref{eqn:inf-normal} and show the Gaussian optimality for the smoothed version. Then we argue that the desired result can be deduced from the result for the smoothed version.

For a positive value $\delta$, define
 \begin{align}
 	\msfg_\delta(Q_{\bX})&:=\msfss \Dkl(Q_{\bX}\|P_{\bX})-\Dkl(Q_{\bY|V}\ast\cN(0,{\delta}\bI)\|P_{\bY|V}|P_V).
 \end{align}
where $P_{\bX}=\cN(0,\bI_{d_{\bX}}),P_{\bY}=P_{\bY|V}=\cN(0,\bI_{d_{\bY}})$ and $\ast$ represent the {\em convolution} operator. That is the output $\bY$ is smoothed by adding a noise $\sqrt{\delta}\bW$ (where $\bW\sim\cN(0, \bI_{d_{\bY}})$) which is independent from all other random variables.

\begin{lemma}\label{le:normal-opt-s}
Let $\FG$ be the set of centered normal distribution $Q_{\bX}$ with the property $\Dkl(Q_{\bX}\|P_{\bX})$ is finite. Then:
\begin{equation}\label{eqn:inf-normal-s}
    \inf_{\substack{Q_{\bX}:\E_Q[\|\bX\|^2]\leq t\\ \Dkl(Q_{\bX}\|P_{\bX})<\infty}}\msfg_\delta(Q_{\bX})=\inf_{\substack{Q_{\bX}:\E_Q[\|\bX\|^2]\leq t\\Q_{\bX}\in\FG}}\msfg_\delta(Q_{\bX}).
\end{equation}
\end{lemma}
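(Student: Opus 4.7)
The plan follows the Geng--Nair doubling strategy, as the authors indicate, adapted here to the state--dependent channel. It splits into three steps: (i) establish existence of a minimizer $Q^\ast$ of $\msfg_\delta$ over the moment ball; (ii) apply a $45^\circ$ doubling--rotation to produce a candidate $\bar Q$ with $\msfg_\delta(\bar Q)\le \msfg_\delta(Q^\ast)$, with strict improvement unless a certain mutual information vanishes; and (iii) invoke Bernstein's theorem, together with a mean--centering step, to conclude that $Q^\ast \in \FG$.

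For existence, the smoothing by $\sqrt{\delta}\bW$ regularizes the output--side term. Conditional on $V=v$ the smoothed output has a uniformly bounded continuous density, so by the Polyanskiy--Wu style continuity of the Gaussian--smoothed relative entropy on sets of bounded second moment, $Q_\bX\mapsto \Dkl(Q_{\tilde\bY|V=v}\|P_{\tilde\bY|V=v})$ is weakly continuous on the ball $\{Q:\E_Q[\|\bX\|^2]\le t\}$; averaging over $V\sim P_V$ preserves continuity. The input term $\Dkl(\cdot\|P_\bX)$ is weakly lower semicontinuous and the moment ball is weakly compact by Prokhorov, so $\msfg_\delta$ attains its infimum at some $Q^\ast$.

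For the symmetrization step I would take iid copies $(\bX_i,V_i)_{i=1,2}$ with $\bX_i\sim Q^\ast$, $V_i\sim P_V$, and apply the orthogonal rotation $\bX_+=(\bX_1+\bX_2)/\sqrt{2}$, $\bX_-=(\bX_1-\bX_2)/\sqrt{2}$. On the input side, the isotropy of $P_\bX^{\otimes 2}$ and the chain rule yield
\begin{equation*}
\Dkl(\bar Q\|P_\bX)=\Dkl(Q^\ast\|P_\bX)-\tfrac{1}{2}I_{Q^\ast}(\bX_+;\bX_-),
\end{equation*}
where $\bar Q$ is the common marginal of $\bX_\pm$. On the output side, conditional on $(V_1,V_2)=(v_1,v_2)$ the doubled smoothed channel is linear Gaussian with block--diagonal matrix $\diag(\bA_{v_1},\bA_{v_2})$, so a doubled application of the operator--Jensen calculation used in the proof of Theorem~\ref{thm:csdp-mix} should produce the matching inequality
\begin{equation*}
\Dkl\bigl(Q^\ast_{\tilde\bY_1,\tilde\bY_2\mid V_1,V_2}\,\big\|\,P_{\tilde\bY|V}^{\otimes 2}\,\big|\,P_V^{\otimes 2}\bigr)\;\le\;2\,\Dkl\bigl(\bar Q_{\tilde\bY|V}\,\big\|\,P_{\tilde\bY|V}\,\big|\,P_V\bigr)+\tfrac{\msfss}{2}\,I_{Q^\ast}(\bX_+;\bX_-),
\end{equation*}
the constant $\msfss$ coming from $\|\E[\bA_V^\top\bA_V]\|_{\op}$. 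Combining the two displays gives $\msfg_\delta(\bar Q)\le \msfg_\delta(Q^\ast)$, with strict inequality unless $\bX_+\indep\bX_-$.

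Bernstein's theorem then forces any minimizer $Q^\ast$ to be Gaussian, and a short direct computation closes the argument: if $Q^\ast=\cN(\bmu,\bSigma)$ with $\bmu\neq \bzero$, centering drops the input KL by $\tfrac{1}{2}\|\bmu\|^2$ (multiplied by $\msfss$ in $\msfg_\delta$) and drops the output KL by at most $\tfrac{1}{2(1+\delta)}\bmu^\top\E[\bA_V^\top\bA_V]\bmu\le \tfrac{\msfss}{2(1+\delta)}\|\bmu\|^2$, so the net change in $\msfg_\delta$ is $\le -\tfrac{\msfss\delta}{2(1+\delta)}\|\bmu\|^2<0$. Hence the minimizer is centered Gaussian, i.e.\ $Q^\ast\in\FG$. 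The main obstacle is the output--side inequality: because the states $V_1,V_2$ are \emph{not} rotated with the inputs, the standard Gaussian decoupling is unavailable, and one has to replay the operator--concavity calculation of Theorem~\ref{thm:csdp-mix} on the doubled system and track the cross--term precisely to land on exactly $\msfss/2$ in front of $I_{Q^\ast}(\bX_+;\bX_-)$.
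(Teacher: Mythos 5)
The decisive flaw is in your doubling step. You take two i.i.d.\ pairs $(\bX_i,V_i)_{i=1,2}$ and rotate only the inputs, so conditional on $(V_1,V_2)=(v_1,v_2)$ the doubled channel has matrix $\diag(\bA_{v_1},\bA_{v_2})$, which does not commute with the $45^\circ$ rotation unless $v_1=v_2$; hence $\olbY_\pm$ are driven by mixtures of $\bX_+$ and $\bX_-$ and no product--channel structure survives. You flag this yourself as ``the main obstacle,'' but the proposed repair --- an output--side inequality with cross--term $\tfrac{\msfss}{2}I_{Q^\ast}(\bX_+;\bX_-)$ obtained by ``replaying'' the operator--Jensen computation --- is not a proof and cannot be one at this stage: the operator--concavity calculation in the proof of Theorem \ref{thm:csdp-mix} applies only to Gaussian inputs, which is exactly what the doubling argument is supposed to establish. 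The paper's argument avoids the issue by coupling the two copies through a \emph{single shared} state $V$ (this is the very content of ``conditional'' here, cf.\ Theorem \ref{thm:tensorization}): given $V=v$ the pair channel consists of two copies of the same $\bA_v$--channel, rotational invariance of the Gaussian noises $\bZ_v,\bW$ makes $T_{\olbY_+|V,\bX_+}$ and $T_{\olbY_-|V,\bX_-}$ independent copies of $T_{\olbY|V,\bX}$, and the output side is then handled exactly by the chain--rule identity of Lemma \ref{le:identity-from-CR} together with non--negativity of a conditional mutual information --- no cross--term estimate is needed. Nothing is lost by this coupling, since each copy's functional only involves its own marginal interaction with the state, so the sum of the two functionals is still $2\msfg_\delta(Q^\ast)$.

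Two further gaps. First, your existence step asserts weak continuity of $Q_{\bX}\mapsto \Dkl(Q_{\olbY|V=v}\|P_{\olbY|V=v})$ on moment balls; this is false, because that divergence contains the output second moment, which under weak convergence with a uniform second--moment bound is only lower semicontinuous (mass can escape along a vanishing--probability spike), and it enters $\msfg_\delta$ with a negative sign. The paper instead rewrites $\msfg_\delta$ in the differential--entropy form \eqref{eqn:sdelta-DE}, where the input and output quadratic terms combine into the positive semi--definite form $\bB=\msfss\bI-\E_V[\bA_V^\top\bA_V]$, and lower semicontinuity follows from continuity of the smoothed output entropy, upper semicontinuity of $h_Q(\bX)$, and Fatou. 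Second, minimizing $\msfg_\delta$ over the moment ball alone does not close the doubling chain: after rotation one must lower--bound terms of the form $\E_{\bX_+}\!\left[\msfg_\delta(Q_{\bX_-|\bX_+})\right]$, where only the \emph{averaged} second moment is controlled; this is why the paper works with the convexified value $\msfV_\delta(t)$ and the auxiliary variable $U$ (Proposition \ref{prop:VDelta}). Your Bernstein--theorem endgame and the centering computation are fine in spirit (modulo the spurious $\tfrac{1}{1+\delta}$ factor, since the reference output law is $\cN(\bzero,\bI)$, and the fact that $\bX_+$ and $\bX_-$ need not share a marginal unless $Q^\ast$ is symmetric), but they rest on the unproven steps above.
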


The proof of Lemma \ref{le:normal-opt} is divided to three steps:
\begin{enumerate}
    \item{\bf Existence of an optimizer}: We first consider a constrained variation of the optimization \eqref{eqn:inf-normal-s} and show that there exists a minimizer for which infimum is attained.
    
    \item {\bf Gaussian optimality for the optimizer}: In the second step, we prove that any minimizer from the previous step should be a centered normal distribution. From here, we deduce Lemma \ref{le:normal-opt-s}. The argument closely follows the doubling--trick argument in \cite{GengNair2014}, in which we will show that the minimizer should be rotationally invariant, thus it should be Gaussian distribution.
    \item We argue that Lemma \ref{le:normal-opt-s} yields Lemma \ref{le:normal-opt}.
\end{enumerate}

\subsection{Existence of optimizer}

Consider the following constrained version of the optimization \eqref{eqn:inf-normal-s},
\begin{align}\label{eqn:delta-opt}
 	\msfv_\delta(t)&:=\inf_{\substack{Q_{\bX}:\E_Q[\|\bX\|^2]\leq t,\\ \Dkl(Q_{\bX}\|P_{\bX})<\infty}}\msfg_\delta(Q_{\bX}).
 \end{align}
Further, let $\msfG_\delta(Q_{\bX})$ be the convex envelop of the function $\msfg_\delta(Q_{\bX})$. Then it can be computed using the following expression 
\begin{equation}
\begin{split}
    \msfG_\delta(Q_{\bX})&:=\inf_{n\ge 1}\inf_{\substack{\{p_i\}_{i=1}^n,\{Q^{(i)}_{\bX}\}_{i=1}^{n}: p_i\geq0,\sum_i p_i=1\\ \sum_i p_iQ^{(i)}_{\bX}=Q_{\bX}}}\sum_{i=1}^n p_i\msfg_\delta(Q^{(i)}_{\bX})\\
    &=\inf_{\widetilde{Q}_{U{\bX}}:\widetilde{Q}_{\bX}=Q_{\bX}} \E_{U}\left[\msfg_\delta(\widetilde{Q}_{\bX|U}(.|U))\right],
\end{split}
\end{equation}
where $U$ is an auxiliary random variable.
 
The main statement of this section is the following:
\begin{proposition}\label{prop:VDelta}
Let:
\begin{equation}
\begin{split}
	\msfV_\delta(t)&:=\inf_{\substack{Q_{\bX}:\E_Q[\|\bX\|^2]\leq t,\\ \Dkl(Q_{\bX}\|P_{\bX})<\infty}}\msfG_\delta(Q_{\bX})\\
	&=\inf_{\substack{Q_{U{\bX}}:\E_Q[\|\bX\|^2]\leq t,\\ \Dkl(Q_{\bX}\|P_{\bX})<\infty}}\E_{U}\left[\msfg_\delta\big({Q}_{\bX|U}(.|U)\big)\right].	
\end{split}
\end{equation}
Then there exists a {\em binary} random variable $U\in\{1,2\}$ and a probability measure $Q^{\ast}_{U\bX}$ such that:
\begin{equation}
\begin{split}
  \E_{U\sim Q^{\ast}_U}\left[\msfg_\delta({Q}^{\ast}_{\bX|U}(.|U))\right]&=\msfV_\delta(t),\\
  \E_{Q^{\ast}}[\bX|U]&=0,\\
 \E_{Q^{\ast}}[\|\bX\|^2]&\leq t,\\
 \Dkl(Q^{\ast}_{\bX}\|P_{\bX})&<\infty.
\end{split}
\end{equation} 
In other words, $Q^{\ast}_{U\bX}$ is a minimizer. Further, $\bX$ is a centered random variable under $Q^{\ast}_{\bX}$.  
\end{proposition}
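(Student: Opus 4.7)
My plan is to (i) reduce the infinite-dimensional minimization over $Q_{U\bX}$ to a one-dimensional optimization parametrized by the second moment of $\bX$, (ii) apply Carathéodory in $\R$ to enforce $|U|=2$, (iii) attain the scalar infimum via compactness and lower semi-continuity (with the $\delta$-smoothing playing the key technical role), and (iv) enforce $\E[\bX\mid U]=0$ by centering each conditional without increasing the objective.

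For (i)--(ii): For any feasible $Q_{U\bX}$, set $s_u:=\E_{\bX|U=u}[\|\bX\|^2]$. By the definition of $\msfv_\delta$, $\msfg_\delta(Q_{\bX|U=u})\geq \msfv_\delta(s_u)$, with equality when $Q_{\bX|U=u}$ is itself a minimizer of $\msfv_\delta(s_u)$. Taking the infimum over all $Q_{U\bX}$ therefore reduces the problem to the scalar form
\[
\msfV_\delta(t)=\inf\Bigl\{\textstyle\sum_u p_u\msfv_\delta(s_u):\ p_u\geq 0,\ \sum p_u=1,\ \sum p_u s_u\leq t,\ s_u\geq 0\Bigr\},
\]
which is exactly the constrained lower convex envelope of $\msfv_\delta$ at $t$. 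Since the epigraph lives in $\R^2$ and both the normalization and the second-moment constraint are one-dimensional, an LP/Carathéodory argument gives that at most two support points suffice, so one may take $U\in\{1,2\}$.

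For (iii): To attain the infimum in $\msfv_\delta(s)$, I would use weak compactness. The feasible set $\{Q_{\bX}:\E[\|\bX\|^2]\leq s\}$ is tight by Markov and hence weakly precompact by Prokhorov. The KL term $\Dkl(Q_{\bX}\|P_{\bX})$ is weakly lower semi-continuous by standard arguments (e.g.\ the Donsker--Varadhan variational formula). The role of $\delta>0$ is essential for the second term: the smoothed output density $\by\mapsto(Q_{\bY|V=v}\ast\cN(0,\delta\bI))(\by)$ is uniformly bounded and smooth, and it depends weakly-continuously on $Q_{\bX}$ via convolution with the bounded-continuous Gaussian kernel, so dominated convergence shows that the smoothed output KL (against the fixed reference $P_{\bY|V}=\cN(0,\bI)$) is weakly continuous in $Q_{\bX}$. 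Consequently $\msfg_\delta$ is weakly l.s.c., a minimizer $Q^{(u)}$ of $\msfv_\delta(s^*_u)$ exists, and an analogous compactness argument on the simplex attains the scalar optimum $(p^*_u,s^*_u)$.

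For (iv): let $\bmu_u:=\E_{Q^{(u)}}[\bX]$ and define the centered distribution $\bar Q^{(u)}(\bx):=Q^{(u)}(\bx+\bmu_u)$, which has second moment $s^*_u-\|\bmu_u\|^2\leq s^*_u$. A direct shift computation with $P_{\bX}=\cN(0,\bI)$ gives $\Dkl(\bar Q^{(u)}_{\bX}\|P_{\bX})=\Dkl(Q^{(u)}_{\bX}\|P_{\bX})-\tfrac12\|\bmu_u\|^2$, and the corresponding shift on the output (by $-\bA_v\bmu_u$, noting that $P_{\bY|V}=\cN(0,\bI)$ is Gaussian and that smoothing by $\cN(0,\delta\bI)$ preserves means) gives $\Dkl(\bar Q^{(u)}_{\bY|V}\ast\cN(0,\delta\bI)\|P_{\bY|V}|P_V)=\Dkl(Q^{(u)}_{\bY|V}\ast\cN(0,\delta\bI)\|P_{\bY|V}|P_V)-\tfrac12\bmu_u^\top\E[\bA_V^\top\bA_V]\bmu_u$. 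Combining and using $\E[\bA_V^\top\bA_V]\preceq\msfss\bI$:
\[
\msfg_\delta(\bar Q^{(u)})-\msfg_\delta(Q^{(u)})=-\tfrac{\msfss}{2}\|\bmu_u\|^2+\tfrac12\bmu_u^\top\E[\bA_V^\top\bA_V]\bmu_u\leq 0,
\]
so replacing $Q^{(u)}$ by $\bar Q^{(u)}$ in the two-component mixture yields the desired minimizer $Q^\ast_{U\bX}$ with binary $U$ and $\E_{Q^\ast}[\bX\mid U]=0$; the second-moment and finite-KL conditions pass through since shifting does not inflate either quantity and convexity of KL handles the marginal. The main obstacle I foresee is verifying the weak continuity of the smoothed output KL in step (iii)---without the $\delta$-smoothing the output KL may jump under weak limits (the well-known lack of continuity of relative entropy), whereas with the smoothing the output densities are bounded and pointwise continuous, so dominated convergence saves the argument.
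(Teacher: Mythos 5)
Your overall architecture (Carath\'eodory reduction to a two--point mixture, existence of minimizers of $\msfv_\delta$ by tightness plus lower semicontinuity, then centering each conditional) is the same as the paper's, and your centering computation in step (iv) is correct and matches the paper's. The genuine gap is in step (iii): the claim that the smoothed output divergence $\Dkl(Q_{\bY|V}\ast\cN(0,\delta\bI)\,\|\,P_{\bY|V}\mid P_V)$ is \emph{weakly continuous} in $Q_{\bX}$ by dominated convergence is false under the only uniform control you have, namely $\E_{Q^{(i)}}[\|\bX\|^2]\le s$. Writing this divergence as $-h(\bY+\sqrt{\delta}\bW\mid V)+\tfrac12\E\|\bY+\sqrt{\delta}\bW\|^2+\mathrm{const}$, the smoothing indeed controls the density/entropy part (this is what the results cited in the paper give), but the second--moment part is only lower semicontinuous under weak convergence: putting mass $1/i$ at radius $\sqrt{is}$ (suitably mollified so the input KL stays finite) respects $\E[\|\bX\|^2]\le s$, converges weakly to a limit that has lost a contribution of order $\tfrac{s}{2}\,\E_V\|\bA_V\bu\|^2$ to the output second moment, and hence the output KL \emph{drops} strictly in the limit. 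So the map is not upper semicontinuous, which is exactly the direction you need for lower semicontinuity of $\msfg_\delta$ (this term enters with a minus sign); and no integrable dominating function exists, since the uniform bound $q_\delta\le(2\pi\delta)^{-d_{\bY}/2}$ does not dominate $\|\by\|^2 q_\delta(\by)$. The paper's Lemma \ref{le:liminf-s} avoids this by rewriting $\msfg_\delta$ in differential--entropy form and merging the two quadratic terms into the single term $\tfrac12\E[\bX^\top\bB\bX]$ with $\bB=\msfss\bI-\E_V[\bA_V^\top\bA_V]\succeq 0$, which is jointly lower semicontinuous by Fatou; it is precisely this cancellation (input quadratic compensating the escaping output second moment, thanks to the definition of $\msfss$) that rescues the escaping--mass example. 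You need that combined argument, or an equivalent uniform--integrability argument for the \emph{difference} of the two divergences, not continuity of the output KL alone.

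A second, smaller gap: after reducing to $\inf\{\sum_u p_u\msfv_\delta(s_u):\sum_u p_u s_u\le t\}$, attainment does not follow from ``compactness on the simplex'', because the $s_u$ range over an unbounded set: along a minimizing sequence one weight may tend to $0$ while the corresponding $s_u\to\infty$ (the constraint only forces $p_us_u\le t$). The paper treats exactly this degeneracy (its case $\alpha^{\ast}=0$ in the proof of Proposition \ref{prop:VDelta}) via the bound $\msfg_\delta(Q)\ge D-\tfrac{\msfss}{2}\log\E_Q[\|\bX\|^2]$, which shows the vanishing component contributes nonnegatively in the limit, so the minimizer collapses to a constant $U$ (still covered by the binary statement). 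Without such an argument, the existence of your two--point minimizer $(p^{\ast}_u,s^{\ast}_u)$ is not justified. Fixing these two points essentially reproduces the paper's proof.
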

We defer the proof of this proposition to the end of this section. We now investigate some properties of the functions $\msfg_\delta$ and $\msfv_\delta$ that will be used in the proof of Proposition \ref{prop:VDelta}.

The function $\msfg_\delta(Q_{\bX})$ can be represented using differential entropy as follows:
\begin{equation}\label{eqn:sdelta-DE}
\begin{split}
    \msfg_\delta(Q_{\bX})&=h_Q(\bY+\sqrt{\delta} \bW|V)-\msfss h_Q(\bX)+\frac{1}{2}\left(\E\left[\msfss\|\bX\|^2-\|\bY+\sqrt{\delta}\bW\|^2\right]\right)+C_1\\
    &=h_Q(\bY+\sqrt{\delta} \bW|V)-\msfss h_Q(\bX)+\frac{1}{2}\left(\E\left[\bX^\top(\msfss \bI-\bA_V\bA_V^\top)\bX\right]-\delta d_{\bY}\right)+C_1\\
    &=h_Q(\bY+\sqrt{\delta} \bW|V)-\msfss h_Q(\bX)+\frac{1}{2}\left(\E_{\bX}\left[\bX^\top(\msfss \bI-\E_V[\bA_V\bA_V^\top])\bX)\right]\right)+C\\
    &=h_Q(\bY+\sqrt{\delta} \bW|V)-\msfss h_Q(\bX)+\frac{1}{2}\E_{\bX}\left[\bX^\top\bB\bX\right]+C,
\end{split}
\end{equation}
where $\bX\sim Q_{\bX}$, $\bY=\bA_V \bX+\bZ_V$,  $C_1=\frac{1}{2}\left(\msfss d_{\bX}-d_{\bY}\right)\log(2\pi)$, $C=C_1-\frac{\delta d_{\bY}}{2}$ and $\bB:= \msfss \bI-\E_V[\bA_V\bA_V^\top]$. Observe that $\bB\succeq 0$ by the definition of $\msfss$.
\begin{lemma}\label{le:liminf-s}
	Suppose $Q_X^{(i)}\Rightarrow Q^{\ast}_X$ be a weakly convergent sequence of probability measure satisfying the moment constraint $\E_{Q^{(i)}}[\|X\|^2]<b$ for all $i$ and some positive value $b$, then:
	\begin{equation*}
		\liminf_{n\to\infty}\msfg_\delta(Q^{(i)}_X)\geq \msfg_\delta(Q_X^{\ast}).
	\end{equation*}
\end{lemma}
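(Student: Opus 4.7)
The plan is to establish lower semicontinuity of $\msfg_\delta$ by working with the KL-divergence representation
\begin{equation*}
\msfg_\delta(Q_{\bX}) \;=\; \msfss\,\Dkl(Q_{\bX}\|P_{\bX}) \;-\; \Dkl\bigl(Q_{\bY|V}*\cN(\bzero,\delta\bI)\,\big\|\,P_{\bY|V}\,\big|\,P_V\bigr),
\end{equation*}
thereby reducing the problem to (i) weak lower semicontinuity of $Q_{\bX}\mapsto\Dkl(Q_{\bX}\|P_{\bX})$, and (ii) weak upper semicontinuity of the output KL term in $Q_{\bX}$. Part (i) is classical: by the Donsker--Varadhan variational formula $\Dkl(Q\|R)=\sup_f\{\E_Q f-\log\E_R e^f\}$, KL divergence is a supremum of weakly continuous affine functionals and is therefore weakly lower semicontinuous.

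For part (ii), the Gaussian smoothing $\sqrt{\delta}\,\bW$ is essential. For each $v$, the conditional density of $\bY+\sqrt{\delta}\bW$ given $V=v$ is
\[
g^{(i)}_v(\bu)=\int q^{(i)}(\bx)\,\phi_{\bI-\bA_v\bA_v^\top+\delta\bI}(\bu-\bA_v\bx)\,d\bx,
\]
which is smooth, uniformly bounded above by $M:=(2\pi\delta)^{-d_{\bY}/2}$, and, by weak convergence of $Q_{\bX}^{(i)}$ against the bounded continuous kernel, converges pointwise: $g^{(i)}_v(\bu)\to g^\ast_v(\bu)$ for every $(\bu,v)$. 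Using the identity $\Dkl(g_v\|\phi_{\bI})=-h(g_v)+\tfrac12\E_v[\|\bU\|^2]+C$, I would control the integrand $g^{(i)}_v\log(g^{(i)}_v/\phi_{\bI})$ by an integrable envelope: the negative part is dominated by $g^{(i)}_v\log M$ (which integrates to $\log M$ since each $g^{(i)}_v$ is a density), and the positive part is dominated by $\tfrac12 g^{(i)}_v\|\bu\|^2$, which is uniformly integrable by the second-moment constraint on $\bX^{(i)}$ propagated through the Gaussian channel. A dominated convergence argument then yields $\lim_i\Dkl(g^{(i)}_v\|\phi_{\bI})=\Dkl(g^\ast_v\|\phi_{\bI})$ for each $v$, and averaging over $V$ by bounded convergence closes the argument.

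The main technical obstacle is handling a potential escape of mass to infinity in $Q_{\bX}^{(i)}$, since under weak convergence with only a second-moment bound one has merely $\liminf_i\E[\|\bX^{(i)}\|^2]\ge\E[\|\bX^\ast\|^2]$, with possibly strict inequality. Both KL terms pick up contributions from any such escaping mass, and lower semicontinuity of $\msfg_\delta$ requires that these contributions cancel. The cancellation is exactly what the identity $\msfss=\|\E_V[\bA_V^\top\bA_V]\|_{\op}$ encodes: mass escaping along a direction $\bv$ contributes about $\tfrac{\msfss}{2}\|\bx\|^2$ to $\msfss\Dkl(Q_{\bX}\|P_{\bX})$ and at most $\tfrac12\bv^\top\E_V[\bA_V^\top\bA_V]\bv\cdot\|\bx\|^2\le\tfrac{\msfss}{2}\|\bx\|^2$ to the output KL. This nonnegative difference is transparent in the differential-entropy representation \eqref{eqn:sdelta-DE}, where it is absorbed into the quadratic term $\tfrac12\E[\bX^\top\bB\bX]$ with $\bB\succeq\bzero$, a functional that is lower semicontinuous by Fatou's lemma applied to the nonnegative continuous integrand $\bx\mapsto\bx^\top\bB\bx$. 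As a fallback, one can pass to a subsequence along which $\E[\|\bX^{(i)}\|^2]$ converges, upgrade weak convergence to $W_2$-convergence, and invoke Polyanskiy--Wu's Wasserstein continuity of Gaussian-smoothed entropy to close the gap.
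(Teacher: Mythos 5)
You have correctly identified the real difficulty (escape of mass under mere weak convergence with a second-moment bound), but your resolution leaves a genuine gap. Once you pass to the differential-entropy representation \eqref{eqn:sdelta-DE} so that the quadratic discrepancy is absorbed into $\tfrac12\E[\bX^\top\bB\bX]$ (which is indeed lower semicontinuous by Fatou), the input distribution no longer enters through the full divergence $\Dkl(Q_{\bX}\|P_{\bX})$ but only through $-\msfss\, h_Q(\bX)$: the second-moment half of the input KL has been spent on the cancellation with the output second moment. Lower semicontinuity of $Q\mapsto\Dkl(Q\|P_{\bX})$ (your Donsker--Varadhan step) is equivalent to lower semicontinuity of $-h_Q(\bX)+\tfrac12\E_Q[\|\bX\|^2]$ and does \emph{not} imply upper semicontinuity of $h_Q(\bX)$ by itself. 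What is actually required is $\limsup_i h_{Q^{(i)}}(\bX)\le h_{Q^\ast}(\bX)$ along weakly convergent sequences with uniformly bounded second moments; this is a nontrivial ingredient, which the paper supplies by invoking \cite[Lemma A2]{CourtadeLiu} (noting that its proof goes through under the constraint $\E_Q[\|\bX\|^2]\le t$). Your proposal never establishes or cites such a statement, so the argument as written does not close.

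Two subsidiary steps are also flawed. First, the dominated-convergence claim that $\Dkl(g^{(i)}_v\|\phi_{\bI})\to\Dkl(g^{\ast}_v\|\phi_{\bI})$ cannot hold in general: the envelope $\tfrac12 g^{(i)}_v\|\bu\|^2$ is not uniformly integrable precisely in the escaping-mass regime, and the output KL is only \emph{lower} semicontinuous, which is the wrong direction for your KL decomposition (you acknowledge this, but the fix necessarily runs through the entropy representation and hence through the missing input-entropy upper semicontinuity). Second, the fallback of passing to a subsequence along which $\E[\|\bX^{(i)}\|^2]$ converges and ``upgrading'' to $W_2$ convergence is invalid: $W_2$ convergence holds only if the limiting second moment equals $\E_{Q^\ast}[\|\bX\|^2]$, which is exactly what fails when mass escapes; moreover, the Polyanskiy--Wu continuity result concerns the Gaussian-smoothed (output) entropy, not the unsmoothed input entropy whose upper semicontinuity is the missing piece. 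With a Courtade--Liu-type upper semicontinuity statement for $h_Q(\bX)$ added, your argument essentially collapses onto the paper's proof: smoothed output conditional entropy handled for each $v$ and averaged via Fatou, input entropy via the cited lemma, and the $\bB$-quadratic via Fatou.
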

The proof is  based on the following fact about weaky converegen sequence of probability measures \cite{Billingsley1999},
\begin{fact}\label{fact:1}
	For any weakly convergent sequence $Q^{(i)}_{\bX}$ and any lower semi--continuous and bounded from below function $\varphi(\bX)$, we have:
	$$\liminf\limits_{n\to\infty} \E_{Q^{(i)}_{\bX}}\left[\varphi(\bX)\right]\geq \E_{Q^{\ast}_{\bX}}\left[\varphi(\bX)\right].$$
\end{fact}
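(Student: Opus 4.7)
The plan is to reduce Fact \ref{fact:1} to Fatou's lemma by passing to an almost--sure formulation of the weak convergence via Skorokhod's representation theorem. Because the ambient space $\R^{d_{\bX}}$ is Polish, the hypothesis $Q^{(i)}_{\bX}\Rightarrow Q^{\ast}_{\bX}$ can be realized on a common probability space $(\Omega,\cF,\Pp)$: there exist random variables $X^{(i)}\sim Q^{(i)}_{\bX}$ and $X^{\ast}\sim Q^{\ast}_{\bX}$ with $X^{(i)}\to X^{\ast}$ almost surely (see, e.g., \cite[Theorem~25.6]{Billingsley1999}). I would begin the argument by producing this coupling.

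With the a.s.--convergent coupling in hand, lower semicontinuity of $\varphi$ transfers into a pointwise $\liminf$ inequality: on the $\Pp$--full--measure event $\{\omega:\; X^{(i)}(\omega)\to X^{\ast}(\omega)\}$, the definition of lower semicontinuity yields $\liminf_i \varphi\bigl(X^{(i)}(\omega)\bigr)\geq \varphi\bigl(X^{\ast}(\omega)\bigr)$. Let $M:=\inf_{\bx}\varphi(\bx)$, which is finite by the boundedness--from--below hypothesis on $\varphi$. The shifted sequence $\varphi(X^{(i)})-M$ is then nonnegative, so Fatou's lemma applies and yields
\[
\liminf_{i\to\infty}\E\bigl[\varphi(X^{(i)})-M\bigr]\;\geq\;\E\Bigl[\liminf_{i\to\infty}\bigl(\varphi(X^{(i)})-M\bigr)\Bigr]\;\geq\;\E\bigl[\varphi(X^{\ast})-M\bigr].
\]
Restoring the constant $M$ on both sides and identifying $\E[\varphi(X^{(i)})]=\E_{Q^{(i)}_{\bX}}[\varphi(\bX)]$ and $\E[\varphi(X^{\ast})]=\E_{Q^{\ast}_{\bX}}[\varphi(\bX)]$ produces exactly the inequality claimed in Fact \ref{fact:1}.

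Only three ingredients are needed—Skorokhod's representation, the pointwise definition of lower semicontinuity, and Fatou's lemma—and no serious obstacle arises since $\R^{d_{\bX}}$ is Polish. If one wished to bypass Skorokhod, the same conclusion can be obtained by approximating the LSC function $\varphi$ from below by an increasing sequence of bounded continuous functions: for instance, the Moreau--Yosida regularizations $\varphi_k(\bx):=\min\bigl\{k,\,\inf_{\by}[\varphi(\by)+k\|\bx-\by\|]\bigr\}$ satisfy $\varphi_k\uparrow\varphi$ pointwise and are each bounded and continuous; applying the defining property of weak convergence $\lim_i \E_{Q^{(i)}}[\varphi_k(\bX)]=\E_{Q^{\ast}}[\varphi_k(\bX)]$ for each $k$ and then letting $k\to\infty$ on the right--hand side via monotone convergence recovers the same bound.
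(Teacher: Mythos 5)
Your argument is correct. Note, however, that the paper does not actually prove Fact \ref{fact:1}: it is invoked as a standard consequence of weak convergence (the lower--semicontinuous half of the Portmanteau theorem) with a citation to Billingsley, so any self--contained derivation is ``different from the paper'' by default. Both of your routes are sound. The Skorokhod--plus--Fatou argument is fine: on $\R^{d_{\bX}}$ (a Polish space) the representation theorem gives a coupling with $X^{(i)}\to X^{\ast}$ a.s., lower semicontinuity gives $\liminf_i\varphi(X^{(i)})\geq\varphi(X^{\ast})$ pointwise on that event, and Fatou applied to the nonnegative sequence $\varphi(X^{(i)})-M$ (with $M=\inf\varphi>-\infty$ by the boundedness--from--below hypothesis) yields the claim; the only blemish is the citation, since in the cited book (\emph{Convergence of Probability Measures}) the Skorokhod representation is Theorem~6.7, while Theorem~25.6 is the real--line version from \emph{Probability and Measure}. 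Your second route, approximating $\varphi$ from below by the truncated Moreau--Yosida regularizations $\varphi_k$, which are bounded, Lipschitz, and increase pointwise to $\varphi$, then using the definition of weak convergence for each $\varphi_k$ and monotone convergence in $k$, is the more standard textbook proof of exactly this Portmanteau statement and avoids any appeal to Skorokhod; it buys you a purely measure--theoretic argument that works verbatim in any metric space, whereas the coupling argument is slightly heavier machinery but makes the role of lower semicontinuity most transparent. Either version would serve as a complete proof of the fact the paper takes for granted.
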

\begin{proof}[Proof of Lemma \ref{le:liminf-s}]
 We show that each term in the expression \eqref{eqn:sdelta-DE} satisfies the desired limit behavior, separately. 
\begin{enumerate}
	\item Using \cite[Proposition 18]{GengNair2014} (see also \cite[Corollary 4]{polyanskiy2016wasserstein}), we have $\lim\limits_{n\rightarrow\infty}h_{Q^{(i)}}(\bY+\sqrt{\delta} \bW|V=v)=h_{Q^{\ast}}(\bY+\sqrt{\delta} \bW|V=v)$. Further, $h_Q(\bY+\sqrt{\delta} \bW|V=v)\geq h(\sqrt{\delta} \bW)$. Thus by Fatou's lemma, we have:
	\begin{equation}
	\begin{split}
	    \liminf_{n\to\infty}h_{Q^{(i)}}(\bY+\sqrt{\delta} \bW|V)&=\liminf_{n\to\infty}\E_{P_V}[h_{Q^{(i)}}(\bY+\sqrt{\delta} \bW|V=v)]\\
		&\geq \E_{P_V}[\lim_{n\to\infty}h_{Q^{(i)}}(\bY+\sqrt{\delta} \bW|V=v)]\\
		&= h_{Q^{\ast}}(\bY+\sqrt{\delta} \bW|V).
	\end{split}
	\end{equation}
	\item  It is shown in \cite[Lemma A2]{CourtadeLiu}, that any weakly convergent sequence $Q_{\bX}^{(i)}\Rightarrow Q^{\ast}_{\bX}$ with moment constraint $\E_Q^{(i)}\big[\bX\bX^\top\big]\preceq \bC$ satisfies the following inequality:
\begin{equation}\label{eqn:limsup}
	\limsup_{n\to\infty} h_{Q^{(i)}}(\bX)\leq h_{Q^{\ast}}(\bX).
\end{equation}
However, inspecting the proof in \cite{CourtadeLiu} shows that \eqref{eqn:limsup} also holds under weaker constraint $\E_Q^{(i)}\big[\|\bX\|^2\big]\leq t$.
   	
\item Since $\bB\succeq 0$, the function $f(\bx)=\bx^\top\bB\bx$ is bounded from below. Thus utilizing Fact \ref{fact:1} yields:
\begin{equation*}
	\liminf_{n\to\infty}\E_{Q^{(i)}}\big[\bX^\top\bB\bX\big]\geq \E_{Q^{\ast}}\big[\bX^\top\bB\bX\big].
\end{equation*}
\end{enumerate}

\end{proof}

\begin{proposition}\label{prop:v-delta}
The  optimization problem \eqref{eqn:delta-opt} has a minimizer. More precisely, there exists $Q^{\ast}_{\bX}$ satisfying the constraints  $\E_{Q^{\ast}}\big[\|\bX\|^2\big]\leq t,
$ and $\Dkl(Q_{\bX}^{\ast}\|P_{\bX})<\infty$ such that  $\msfv_\delta(t)=\msfg_\delta(Q_{\bX}^{\ast}).$
\end{proposition}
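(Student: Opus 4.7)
The plan is to apply the direct method of the calculus of variations. First I would choose a minimizing sequence $\{Q^{(n)}_{\bX}\}$ satisfying the constraints $\E_{Q^{(n)}}[\|\bX\|^2] \leq t$, $\Dkl(Q^{(n)}_{\bX}\|P_{\bX}) < \infty$, and $\msfg_\delta(Q^{(n)}_{\bX}) \to \msfv_\delta(t)$. The existence of such a sequence rests on the feasible set being non-empty (e.g., $P_{\bX}$ works for $t \geq d_{\bX}$, otherwise a small rescaling of $P_{\bX}$) and on $\msfv_\delta(t) > -\infty$, which follows from the differential-entropy representation \eqref{eqn:sdelta-DE}: the max-entropy inequality $h_Q(\bX) \leq \tfrac{d_{\bX}}{2}\log(2\pi e t/d_{\bX})$ and the bound $h_Q(\bY+\sqrt{\delta}\bW\mid V) \geq \tfrac{d_{\bY}}{2}\log(2\pi e \delta)$, together with $\bB \succeq 0$, bound $\msfg_\delta$ from below on the feasible set.

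The second step is to extract a weakly convergent subsequence. Markov's inequality gives $Q^{(n)}_{\bX}(\|\bX\| > R) \leq t/R^2$ uniformly in $n$, so $\{Q^{(n)}_{\bX}\}$ is tight, and Prokhorov's theorem yields a subsequence with $Q^{(n_k)}_{\bX} \Rightarrow Q^{\ast}_{\bX}$ weakly. The Portmanteau theorem applied to the lower-semicontinuous function $\|\bx\|^2$ then gives $\E_{Q^{\ast}}[\|\bX\|^2] \leq \liminf_k \E_{Q^{(n_k)}}[\|\bX\|^2] \leq t$, so the moment constraint is inherited. Lower semicontinuity of $\msfg_\delta$ along sequences of this kind is precisely the content of Lemma \ref{le:liminf-s}, which gives $\msfg_\delta(Q^{\ast}_{\bX}) \leq \liminf_k \msfg_\delta(Q^{(n_k)}_{\bX}) = \msfv_\delta(t)$.

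The remaining step is to verify $\Dkl(Q^{\ast}_{\bX}\|P_{\bX}) < \infty$, so that $Q^{\ast}_{\bX}$ is a genuine feasible point and hence an actual minimizer. From the definition of $\msfg_\delta$,
\[
\msfss\,\Dkl(Q^{\ast}_{\bX}\|P_{\bX}) \;=\; \msfg_\delta(Q^{\ast}_{\bX}) \;+\; \Dkl\bigl(Q^{\ast}_{\bY|V}\ast\cN(0,\delta\bI)\,\big\|\,P_{\bY|V}\,\big|\,P_V\bigr),
\]
and the second term on the right is finite because $Q^{\ast}_{\bY|V}\ast\cN(0,\delta\bI)$ has uniformly bounded second moment (using $\|\bA_v\|_{\op}\leq 1$ and $\E_{Q^{\ast}}[\|\bX\|^2]\leq t$) and the max-entropy inequality controls its KL divergence to $\cN(0,\bI)$. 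Assuming $\msfss > 0$ (the degenerate case $\E[\bA_V^\top \bA_V]=\bzero$ is trivial), this forces $\Dkl(Q^{\ast}_{\bX}\|P_{\bX}) < \infty$. Combined with the reverse inequality $\msfg_\delta(Q^{\ast}_{\bX}) \geq \msfv_\delta(t)$ from feasibility, we get the claimed equality.

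The main obstacle lies in the potential interplay between the two KL terms in $\msfg_\delta$: a priori, $\Dkl(Q_{\bX}\|P_{\bX})$ could diverge along the minimizing sequence without $\msfg_\delta$ diverging, which would leave $Q^{\ast}_{\bX}$ infeasible in the limit. The smoothing parameter $\delta > 0$ is precisely what resolves this, since it guarantees a density for the output distribution and uniformly bounds the output-side KL on any second-moment-constrained input. This is why the existence result is proved for the smoothed functional $\msfg_\delta$ rather than $\msfg$ itself, with the unsmoothed Gaussian optimality of Lemma \ref{le:normal-opt} later extracted by letting $\delta \to 0$.
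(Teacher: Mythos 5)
Your proof is correct and follows essentially the same route as the paper: a minimizing sequence, tightness from the uniform second-moment bound, Prokhorov extraction of a weak limit, preservation of the moment constraint via lower semicontinuity of $\|\bx\|^2$, and lower semicontinuity of $\msfg_\delta$ supplied by Lemma \ref{le:liminf-s}. The only real divergence is in how feasibility of the limit is certified: the paper invokes lower semicontinuity of KL divergence to write $\Dkl(Q^{\ast}_{\bX}\|P_{\bX})\leq \liminf_i \Dkl(Q^{(i)}_{\bX}\|P_{\bX})<\infty$ (the finiteness of that liminf being left implicit), whereas you deduce $\Dkl(Q^{\ast}_{\bX}\|P_{\bX})<\infty$ from the identity $\msfss\,\Dkl(Q^{\ast}_{\bX}\|P_{\bX})=\msfg_\delta(Q^{\ast}_{\bX})+\Dkl\bigl(Q^{\ast}_{\bY|V}\ast\cN(0,\delta\bI)\,\|\,P_{\bY|V}\,|\,P_V\bigr)$ together with the uniform bound on the smoothed output KL (this is exactly where $\delta>0$ is used) and finiteness of $\msfg_\delta(Q^{\ast}_{\bX})\leq\msfv_\delta(t)$; this is a legitimate, arguably more explicit, justification of the same step, and your observation that $\msfss>0$ is needed (the case $\msfss=0$ being trivial) is a fair point the paper does not spell out.
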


\begin{proof}
Take a sequence of probability measure $\{Q^{(i)}_{\bX}\}_{i=1}^n$ along which $\msfg_\delta(Q^{(i)}_{\bX})$ approaches $\msfv_\delta(t)$. The moment constraint $\E_{Q^{(i)}}[\|\bX\|^2]\leq t$ ensures that  $\{Q^{(i)}_{\bX}\}_{i=1}^n$ is a {\em tight} sequence \cite{Billingsley1999}. Thus  we can extract a subsequence of it that weakly converges to a probability measure $Q^{\ast}_{\bX}$. Without loss of generality, assume that the sequence $\{Q^{(i)}_{\bX}\}_{i=1}^n$ is itself convergent to $Q^{\ast}_{\bX}$. We show that $Q^{\ast}_{\bX}$ is a minimizer.
\begin{itemize}
\item Utilizing Fact \ref{fact:1} with $\varphi(\bx)=\|\bx\|^2$ implies that   $Q^{\ast}_{\bX}$ satisfies the constraint $\E_{Q^{\ast}}[\|\bX\|^2]\leq t$.  
Also from the semi--lower continuity of KL--divergence \cite{polyWu2023}, we have $\Dkl(Q^{\ast}_{\bX}\|P_{\bX})\leq \liminf\limits_{n\to\infty}$ $\Dkl(Q^{(i)}_{\bX}\|P_{\bX})<\infty.$ Thus $Q^{\ast}_{\bX}$ satisfies the constraint of \eqref{eqn:delta-opt}, so $\msfg_\delta(Q^{\ast}_{\bX})\geq \msfv_\delta(t)$. 
\item Conversely, Lemma \ref{le:liminf-s} says $\msfv_\delta(t)=\liminf\limits_{n\to\infty}\msfg_\delta(Q^{(i)}_{\bX})\geq \msfg_\delta(Q^{\ast}_{\bX})$. Therefore $Q^{\ast}_{\bX}$ must be the minimizer.
\end{itemize}
\end{proof}

\begin{proof}[Proof of Proposition \ref{prop:VDelta}]
Utilizing Proposition \ref{prop:v-delta}, we can express $\msfV_\delta(t)$ in the following way:
\begin{align}\label{eqn:cara}
	\msfV_\delta(t)=\inf_{n\geq 1}\inf_{\substack{ \{p_i\}_{i=1}^n,t_i: p_i\geq0,\sum_i p_i=1,\\ \sum_i p_it_i\leq t}}\sum_{i=1}^n p_i\msfv_\delta(t).
\end{align}
By Bunt--Carath\'eodor\'y theorem \cite{bunt1934bijdrage,caratheodory1911variabilitatsbereich}, for any point on the boundary of convex envelop of the set $\cM:=\left\{\big(t_i,\msfv_\delta(t_i)\big):1\leq i\leq n\right\}$ can be expressed as the convex combination of at most two points of $\cM$. Thus \eqref{eqn:cara} can be rewritten as:
\begin{align}\label{eqn:cara2}
	\msfV_\delta(t)=\inf_{\substack{ \alpha,t_1,t_2: 0\leq\alpha\leq \frac{1}{2},\\ \alpha t_1+\bar{\alpha}t_2\leq t}}\alpha \msfv_\delta(t_1)+\bar{\alpha}\msfv_\delta(t_2),
\end{align} 
where $\bar{\alpha}=1-\alpha$.
Let $\{(\alpha_i,t_{1i},t_{2i} )\}_{i\geq 1}$ be a sequence satisfying $\alpha_i t_{1i}+\bar{\alpha_i}t_{2i}\leq t$ such that $\alpha_i \msfv_\delta(t_{1i})+\bar{\alpha_i}\msfv_\delta(t_{2i})$ converges to $\msfV_\delta(t)$. Also let $Q^{(i)}_1$ and $Q_2^{(i)}$ be probability measures on $\cX$ such that $\msfv_\delta(t_{ki})=\msfg_\delta(Q^{(i)}_k)$ for $k=1,2$. Existence of such $Q^{(i)}_k$ with the property $\E_{Q^{(i)}_k}[\|\bX\|^2]\leq t_{ki}$ is guaranteed by Proposition \ref{prop:v-delta}. Since there exists a convergent subsequence of $\{\alpha_i\}_{i\geq1}$, we can assume that the sequence $\{\alpha_i\}_{i\geq1}$ is itself convergent to some $\alpha^{\ast}\in[0,\frac{1}{2}]$. We investigate the cases $\alpha^{\ast}>0$ and $\alpha^{\ast}=0$, separately.
\begin{itemize}
	\item {\bf Case I: $\alpha^{\ast}>0$.}\\
	After discarding some initial terms in the sequence $\{\alpha_i\}_{i\geq1}$, we can assume that  for all $i>0$, we have: $\alpha_i>\frac{\alpha^{\ast}}{2}$. This implies that for all $i>0$, $t_{1i}\leq \frac{2t}{\alpha^{\ast}}$ and $t_{2i}\leq 2t$. Thus $Q_1^{(i)}$ and $Q_2^{(i)}$ have common finite second moments, thus there exist a  subsequence $\{i_j\}_{j=1}^{\infty}$ for which $Q_1^{(i_j)}\Rightarrow Q^{\ast}_1$ and $Q_2^{(i_j)}\Rightarrow Q^{\ast}_2$. Rename the sequence $\{i_j:\}_{j\geq 1}$ to $\{1,2,\dots\}$.  Let $U\in\{1,2\}$ be a binary random variable and define $Q^{\ast}_U(U=1)=\alpha^{\ast}$ and $Q^{\ast}_{\bX|U=k}=Q^{\ast}_k$ for $k=1,2$. Also let $Q^{(i)}_U(U=1)=\alpha_i$ and $Q^{(i)}_{\bX|U=k}=Q^{(i)}_k$ for $k=1,2$. We have $Q^{(i)}_{\bX}\Rightarrow Q^{\ast}_{\bX}$. 
	Now consider:
	\begin{align}\label{eqn:second-moment-now}
		\E_{Q_{\bX}^{\ast}}\left[\|\bX\|^2\right]&\leq\liminf_{i\to\infty}\E_{Q_{\bX}^{(i)}}\left[\|\bX\|^2\right]=\liminf_{i\to\infty}\alpha_i\E_{Q_1^{(i)}}\left[\|\bX\|^2\right]+\bar{\alpha_i}\E_{Q_2^{(i)}}\left[\|\bX\|^2\right]\leq t. 
	\end{align}
	Also $Q^{\ast}_{\bX}$ satisfies the constraint $\Dkl(Q^{\ast}_{\bX}\|P_{\bX})<\infty$, by the semi--lower continuity of KL--divergence. Thus:
	\begin{align}\label{eqn:second-moment-now-1}
		\msfV_\delta(t)\leq \msfG_\delta(Q^{\ast}_{\bX})\leq \E\left[\msfg_\delta\big(Q^{\ast}_{\bX|U}(.|U)\big)\right]
	\end{align}
	However, Lemma \ref{le:liminf-s} asserts:
	\begin{equation}
	\begin{split}
	    \E\left[\msfg_\delta\big(Q^{\ast}_{\bX|U}(.|U)\big)\right]&=\alpha^*\msfg_\delta(Q^{\ast}_{1})+\bar{\alpha^{\ast}}\msfg_\delta(Q^{\ast}_{2})\\
	    &\leq \liminf_{i\to\infty}\alpha_i\msfg_\delta(Q^{(i)}_{1})+\liminf_{i\to\infty}\bar{\alpha_i}\msfg_\delta(Q^{(i)}_{2})
		\\
		&\leq \msfV_\delta(t).
	\end{split}
	\end{equation}
	Thus $Q^{\ast}_{U\bX}$ is the desired minimizer.
	\item{\bf Case II: $\alpha^{\ast}=0$.}\\
	We have again $t_{2i}\leq 2t$. Thus we can assume that $Q_2^{(i)}\Rightarrow Q_2^{\ast}$. The assumption $\alpha_i\to 0$ implies $\alpha_i Q^{(i)}_1+\bar{\alpha_i} Q_2^{(i)}\Rightarrow Q^{\ast}_2= Q^{\ast}_{\bX}$. In this case, the inequality \eqref{eqn:second-moment-now}  is still holds. Further we have:
	\begin{align}\label{eqn:second-moment-now-2}
		\msfV_\delta(t)\leq \msfG_\delta(Q^{\ast}_{\bX})\leq \msfg_\delta(Q^{\ast}_{\bX}).
	\end{align}
	In the other side, we show that $\msfV_\delta(t) \geq \msfg_\delta(Q^{\ast}_{\bX})$. 
	The function $\msfg_\delta(Q_1^{(i)})$ can be bounded from below as:
	\begin{equation}\label{eqn:boundedness}
	\begin{split}
		\msfg_\delta(Q_1^{(i)})&=h_{Q_1^{(i)}}(\bY+\sqrt{\delta} \bW|V)-\msfss h_{Q_1^{(i)}}(\bX)+\frac{1}{2}\E_{\bX\sim {Q_1^{(i)}}}\left[\bX^\top\bB\bX\right]+C\\
		&\geq h(\sqrt{\delta}\bW)-\msfss h_{\bX\sim\cN(0,\frac{t_{1i}}{d})}(\bX)+C\\
		&=D-\frac{\msfss}{2}\log(t_{1i}),
	\end{split}
	\end{equation}
    where we used the facts that $h(\bY+\bZ)\geq h(\bZ)$ for any independent random variable $\bY$ and $\bZ$, and the normal distribution maximizes the entropy under second moment constraint. We also used the positive--definiteness of $\bB$. In the last line, $D$ is a constant which is not depending on $t$.\\
    Since $\msfG_\delta$ is a convex envelop of $\msfg_\delta$ and the right hand side of \eqref{eqn:boundedness} is a convex function, the  inequality \eqref{eqn:boundedness} continues to hold for it. Now observe $t_{1i}\leq \frac{t}{\alpha_i}$. Inequality \eqref{eqn:boundedness} implies $\liminf\limits_{i\to\infty}\alpha_i\msfg_\delta(Q^{(i)}_{1})\geq 0$. Thus by Lemma \ref{le:liminf-s} we have:
	\begin{equation}
	\begin{split}
	    \msfg_\delta(Q^{\ast}_{\bX})&=\msfg_\delta(Q^{\ast}_{2})\\
	    &\leq \liminf_{i\to\infty}\bar{\alpha_i}\msfg_\delta(Q^{(i)}_{2})\\
		&\leq \liminf_{i\to\infty}\alpha_i\msfg_\delta(Q^{(i)}_{1})+\liminf_{i\to\infty}\bar{\alpha_i}\msfg_\delta(Q^{(i)}_{2})\\
		&\leq \msfV_\delta(t).
	\end{split}
	\end{equation}
    Therefore $\msfV_\delta(t)=\msfg_\delta(Q^{\ast}_{\bX})$. Hence in this case, we can assume that $U$ is a constant random variable and $Q^{\ast}_{\bX}$ is the desired minimizer. 
\end{itemize}
{\em Proof of $\E_{Q^{\ast}}[\bX|U]=0$.}

Let $\bmu_u=\E_{Q^{\ast}}[\bX|U=u]$ for $u\in\{1,2\}$. We show that for an optimal $Q^{\ast}_{U\bX}$, $\bX$ must be a conditionally centered random variable. Take $\tilde{\bX}:=\bX-\bmu_U$ and let $\tilde{\bY}$ be the output of the Markov kernel (channel) $T_{\bY|\bX,V}$ given the input $\tilde{\bX}$, that is, given $V=v$,  $\tilde{\bY}=\bA_v\tilde{\bX}+\bZ_v$. Suppose $(\bmu_1^\top,\bmu_2^\top)\neq (\bzero^\top,\bzero^\top)$. We show that $Q^{\ast}_{U\tilde{\bX}}$ achieves smaller $\E[\msfg(Q_{\bX|U})]$ than $Q^{\ast}_{U\bX}$, contradicting the assumption that $Q^{\ast}_{U\bX}$ is the minimizer. 
The definition of $\tilde{\bX}$ implies:
\begin{align*}
	\E\left[\bX^\top\bB\bX\right]=\E\left[\tilde{\bX}^\top\bB\tilde{\bX}\right]+\E_U\left[\bmu_U^\top\bB\bmu_U\right]\geq \E\left[\tilde{\bX}^\top\bB\tilde{\bX}\right],
\end{align*}
where we used the fact that $\bB$ is a positive semi--definite matrix.
Using this and \eqref{eqn:sdelta-DE}, we obtain:
\begin{align*}
    \E[\msfg_\delta(Q^{\ast}_{\tilde{\bX}|U})]&=\E_{Q^{\ast}_U}\left[h(\tilde{\bY}+\sqrt{\delta} \bW|V,U=u)-\msfss h(\tilde{\bX}|U=u)\right]+\frac{1}{2}\E\left[\tilde{\bX}^\top\bB\tilde{\bX}\right]+C\\
   &\stackrel{\text{(a)}}{=} \E_{Q^{\ast}_U}\left[h(\bY+\sqrt{\delta} \bW|V,U=u)-\msfss h(\bX|U=u)\right]+\frac{1}{2}\E\left[\tilde{\bX}^\top\bB\tilde{\bX}\right]+C\\
   &\leq  \E_{Q^{\ast}_U}\left[h(\bY+\sqrt{\delta} \bW|V,U=u)-\msfss h(\bX|U=u)\right]+\frac{1}{2}\E\left[\bX^\top\bB\bX\right]+C\\
   &=\E\left[\msfg_\delta(Q^{\ast}_{\bX|U})\right],
\end{align*}
where (a) follows from the fact that differential entropy is invariant under the shift.
\end{proof}
\subsection{Gaussian Optimality--Proof of Lemma \ref{le:normal-opt-s}}
 
For the sake of brevity, let $\msfg_\delta(Q_{U\bX}):=\E_U[\msfg_\delta(Q_{\bX|U})]$, $\olbY=\bY+\sqrt{\delta}\bW$ and $T_{\olbY|V,\bX}=T_{{\bY}|V,\bX}\ast \cN(0,\delta \bI_{d_{\bY}})$.

Let $Q^{\ast}_{U\bX}$ be the minimizer for which $U\in\{1,2\}$, 
$\msfg_\delta(Q_{U\bX}^{\ast})=\msfV_\delta(t)$ and $\E_{Q^{\ast}}[\bX|U]=0$.
Take $(U_1,\bX_1),(U_2,\bX_2)$ be two i.i.d. samples drawn from $Q^{\ast}_{U\bX}$. Let $Q_{U_1U_2\bX_1\bX_2}(u_1,u_2,\bx_1,\bx_2)=Q^{\ast}_{U\bX}(u_1,\bx_1)Q^{\ast}_{U\bX}(u_2,\bx_2)$ be the joint distribution of $(U_1,U_2,\bX_1,\bX_2)$. Set $\bU=[U_1,U_2]^\top$ and define:
\begin{equation}
\begin{aligned}
    \bX_+=\frac{\bX_1+\bX_2}{\sqrt{2}}, &&\bX_-=\frac{\bX_1-\bX_2}{\sqrt{2}},\\
    \olbY_+=\frac{\olbY_1+\olbY_2}{\sqrt{2}}, &&\olbY_-=\frac{\olbY_1-\olbY_2}{\sqrt{2}}.
\end{aligned}
\end{equation}
The random variables $(\bU,V,\bX_+,\bX_-,\bY_+,\bY_-)$ has the following properties:
\begin{enumerate}
    \item The joint distribution of $(\bU,V,\bX_+,\bX_-,\bY_+,\bY_-)$ factors as:  
\begin{equation}
 P_VQ_{\bU,\bX_+,\bX_-}T_{\bY_+|V,\bX_+}T_{\bY_-|V,\bX_-}.
\end{equation} 
\item The rotational invariance of the Gaussian noises $\bZ_v$ and $\bW$ implies $T_{\olbY_+|V,\bX_+}$ and $T_{\olbY_-|V,\bX_-}$ are two \emph{independent} copies of the channel $T_{\olbY|V,\bX}$, that is:
\begin{equation}
    T_{\olbY_+|V,\bX_+}(\by|v,\bx)=T_{\olbY_-|V,\bX_-}(\by|v,\bx)=T_{\olbY|V,\bX}(\by|v,\bx).
\end{equation}
\item The following moment constraint holds:
\begin{equation*}
    \E[\|\bX_+\|^2]=\E[\|\bX_-\|^2]=\E[\|\bX_1\|^2]\leq t.
\end{equation*}
\end{enumerate}

We now proceed to show $\bX_+\mkv (\olbY_+,V)\mkv \olbY_-$. To accomplish this, consider:
\begin{align}
    2\msfV_\delta(t)&=\msfg_\delta(Q_{U_1\bX_1}^{\ast})+\msfg_\delta(Q_{U_2\bX_2}^{\ast})\notag\\
    &=\msfss \Dkl(Q_{\bX_1\bX_2|U_1,U_2}\|P_{\bX}^{\otimes 2}|Q_{U_1U_2})-\Dkl(Q_{\olbY_1\olbY_2|U_1,U_2,V}\|P_{\bY}^{\otimes 2}|Q_{U_1U_2V})\label{eqn:plusexpansion1}\\
    &=\msfss \Dkl(Q_{\bX_+\bX_-|U_1,U_2}\|P_{\bX}^{\otimes 2}|Q_{U_1U_2})-\Dkl(Q_{\olbY_+\olbY_-|U_1,U_2,V}\|P_{\bY}^{\otimes 2}|Q_{U_1U_2V})\label{eqn:plusexpansion}\\
    &=\msfss \Dkl(Q_{\bX_+|\bU}\|P_{\bX}|Q_{\bU})-D(Q_{\olbY_+|\bU V}\|P_{\bY}|Q_{\bU V})\nonumber\\
        &\quad+\msfss \Dkl(Q_{\bX_-|\bU\bX_+}\|P_{\bX}|Q_{\bU\bX_+})-\Dkl(Q_{\olbY_-|\bU V\bX_+}\|P_{\bY}|Q_{\bU V\bX_+})\nonumber\\
        &\quad+{I_Q(\bX_+;\olbY_-|\bU V\olbY_+)}\label{eqn:plusexpansion2-IS}\\
    & \geq \E_{\bU}\left[\msfg_\delta(Q_{\bX_+|\bU})\right]+\E_{\bU,\bX_+}\left[\msfg_\delta(Q_{\bX_-|\bU\bX_+})\right] \label{eqn:plusexpansion2-PMI}\\
    & \geq 2\msfV_\delta(t), \label{eqn:plusexpansion2}
\end{align} 

where \eqref{eqn:plusexpansion2-IS} follows from the identity in Lemma \ref{le:identity-from-CR}, \eqref{eqn:plusexpansion2-PMI} follows because mutual information term is non--negative, and \eqref{eqn:plusexpansion2} is due to the definition of  $\msfV_\delta(t)$.

The chain of inequalities \eqref{eqn:plusexpansion1}--\eqref{eqn:plusexpansion2} shows:
\begin{itemize}
    \item $Q_{\bU\bX_+}$ is also a minimizer attaining: 
    \begin{equation}\label{eqn:plusoptimal} 
       \msfg_\delta (Q_{\bU\bX_+})=\msfV_\delta(t).
    \end{equation}
    Similarly $Q_{\bU\bX_-}$ is a minimizer.
    \item More importantly, $I_Q(\bX_+;\bY_-|\bU V\bY_+)=0$ which is equivalent to $\bX_+\mkv (\olbY_+,V,\bU)\mkv \olbY_-$. However, we also have the trivial Markov chain $\olbY_+\mkv (\bX_+,V)\-- (\olbY_-,\bU)$. It is easy to show that if $A\mkv B\mkv C$ and $B\mkv A\mkv C$ simultaneously hold and $P_{AB}\ll\gg P_AP_B$, then $C$ is independent of $(A,B)$ (see e.g. \cite{AJC2022})\footnote{A simple information theoretic argument is as follows: The two Markov chains imply $\Dkl(P_{C|A=a}\|P_{C|B=b})=0$, a.s. $P_{AB}$. The condition $P_{AB}\gg P_{A}P_B$ ensures that this also holds a.s. $P_AP_B$. Thus by Jensen inequality $I(A;C)=\Dkl(P_{C|A}\|P_C)\leq \E_B[\Dkl(P_{C|A}\|P_{C|B})]=0$. This and the Markov chain $B\mkv A\mkv C$ imply $(A,B)\bot C$}. Observe that for each $V=v$, $Q_{\olbY_+,\bX_+|V=v}\ll\gg Q_{\olbY_+|V=v}Q_{\bX_+}$ (due to the smoothing term $\bW$ in the definition of $\olbY$), hence we can deduce:
    \begin{equation}\label{eqn:partial-independence}
        I_Q(\bX_+\olbY_+;\olbY_-|\bU,V)=I_Q(X_-\olbY_-;\olbY_+|\bU,V)=0,
    \end{equation}
    where the second equality comes from swapping the role of $+$ and $-$. 
\end{itemize}

We now show that \eqref{eqn:partial-independence} yields $I_Q(\bX_+;\bX_-|\bU)=0$. To accomplish this, we invoke the following identities to expand \eqref{eqn:plusexpansion} in a different way:
\begin{equation}
\begin{aligned}
    &\Dkl(Q_{\bX_+\bX_-|\bU}\|P_{\bX}^{\otimes 2}|Q_{\bU}) &= \Dkl(Q_{\bX_+|\bU}\|P_{\bX}|Q_{\bU}) +\Dkl(Q_{\bX_-|\bU}\|P_{\bX}|Q_{\bU})\\
    &\quad+I_Q(\bX_+;\bX_-|\bU)
\end{aligned}
\end{equation}
\begin{equation}
\begin{aligned}
    \Dkl(Q_{\olbY_+\olbY_-|\bU V}\|P_{\bY}^{\otimes 2}|Q_{\bU V}) &=\Dkl(Q_{\olbY_+|\bU V}\|P_{\bY}|Q_{\bU V})+\Dkl(Q_{\olbY_-|\bU V}\|P_{\bY}|Q_{\bU V})\\
    &\quad+I_Q(\bY_+;\bY_-|\bU V)\\
    &\stackrel{\eqref{eqn:partial-independence}}{=}\Dkl(Q_{\olbY_+|\bU V}\|P_{\bY}|Q_{\bU V})+\Dkl(Q_{\olbY_-|\bU V}\|P_{\bY}|Q_{\bU V}).
\end{aligned}
\end{equation}
By substituting this in \eqref{eqn:plusexpansion}, we obtain:
\begin{equation}
\begin{aligned}
    2\msfV_\delta(t)
    &=\msfss \Dkl(Q_{\bX_+\bX_-|\bU}\|P_{\bX}^{\otimes 2}|Q_{\bU})-\Dkl(Q_{\olbY_+\olbY_-|\bU,V}\|P_{\bY}^{\otimes 2}|Q_{\bU V})\nonumber\\
    &=\msfg_\delta(Q_{\bU\bX_+})+\msfg_\delta(Q_{\bU\bX_-})+\msfss I_Q(\bX_+;\bX_-|\bU)\geq 2\msfV_\delta(t),
\end{aligned}
\end{equation}
   where we have used \eqref{eqn:plusoptimal} in the last inequality. Hence we should have $I_Q(\bX_+;\bX_-|\bU)=0$.
   
   In summary, $(U_1,U_2,\bX_+,\bX_-,\bX_1,\bX_2)$ has the following properties:
   \begin{itemize}
       \item Given $\bU=[u_1,u_2]^\top$, $\bX_1$ and $\bX_2$ are independent.
       \item Given $\bU=[u_1,u_2]^\top$, $\bX_+$ and $\bX_-$ are independent.
       \item  $U_i\in\{1,2\}, i=1,2$. Moreover, $(U_1,\bX_1)$ and $(U_2,\bX_2)$ are independent and have common distribution $Q^{\ast}_{U\bX}$.
   \end{itemize}
  The first two items and the multivariate extension of Darmois--Skitovich theorem \cite{ghurye1962characterization} imply $\bX_1$ and $\bX_2$ have to be Gaussian with the same covariance matrix. In particular for $u_1=1, u_2=2$, $Q_{\bX_1|U_1=1,U_2=2}=Q_{\bX_2|U_1=1,U_2=2}=\cN(0,\bC)$. This and the third item yield $Q^{\ast}_{\bX|U=1}=Q^{\ast}_{\bX|U=2}=\cN(0,\bC)$, thus $\bX$ and $U$ are independent under $Q^{\ast}$. Hence $\msfV_\delta(t)=\msfg_\delta(Q^{\ast}_{\bX})$, where $Q^{\ast}_{\bX}=\cN(0,\bC)$. Therefore the infimum in \eqref{eqn:inf-normal-s} is attained by a Gaussian distribution. This concludes the proof of Lemma \ref{le:normal-opt-s}.  

\subsection{Proof of Lemma \ref{le:normal-opt}}
Let $B_{\msfG}=\inf\limits_{\substack{Q_{\bX}:Q_{\bX}\in\FG,\\ \Dkl(Q_{\bX}\|P_{\bX})<\infty}} \msfg_0(Q_{\bX})$, where $\msfg_0(Q_X)=\msfss \Dkl(Q_{\bX}\|P_{\bX})-\Dkl(Q_{\bY|V}\|P_{\bY}|P_V)$. It suffices to show that $\msfg_0(Q_{\bX})\geq B_{\msfG}$, for any $Q_{\bX}$ with $\Dkl(Q_{\bX}\|P_{\bX})< \infty$. 
We need the following observation about the finiteness of the second moment of $\bX\sim Q_{\bX}$.
\begin{claim}
Every distribution $Q_{\bX}$ with $\Dkl(Q_{\bX}\|P_{\bX})<\infty$, has finite second moments in the sense that $\E_Q\left[\|\bX\|^2\right]<\infty$.
\end{claim}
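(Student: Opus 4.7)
The claim reduces to showing that under a standard Gaussian reference $P_{\bX}=\cN(0,\bI_{d_{\bX}})$, any $Q_{\bX}$ with finite relative entropy $\Dkl(Q_{\bX}\|P_{\bX})$ must integrate $\|\bx\|^2$. The natural tool is the Donsker--Varadhan variational representation of KL divergence \cite{donsker1983asymptotic} (also known as the Gibbs variational principle), which asserts that for every measurable $f\colon\R^{d_{\bX}}\to\R$,
\[
\Dkl(Q_{\bX}\|P_{\bX})\;\geq\;\E_{Q}[f(\bX)]-\log \E_{P}\bigl[e^{f(\bX)}\bigr],
\]
whenever both sides make sense. My plan is to take $f(\bx)=t\|\bx\|^{2}$ for a small positive constant $t<1/2$.

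Under $P_{\bX}=\cN(0,\bI_{d_{\bX}})$, the quadratic form $\|\bX\|^{2}$ is a chi--square random variable with $d_{\bX}$ degrees of freedom, so its moment generating function is explicit: $\E_{P}[e^{t\|\bX\|^{2}}]=(1-2t)^{-d_{\bX}/2}$ for $t\in(0,1/2)$. The only subtle point is that the unbounded test function $f(\bx)=t\|\bx\|^{2}$ is not automatically admissible in the variational formula if $\E_{Q}[\|\bX\|^{2}]=\infty$. I would circumvent this by applying Donsker--Varadhan to the truncated function $f_{n}(\bx)=t\bigl(\|\bx\|^{2}\wedge n\bigr)$, which is bounded and hence admissible, yielding
\[
\Dkl(Q_{\bX}\|P_{\bX})\;\geq\;t\,\E_{Q}[\|\bX\|^{2}\wedge n]-\log\E_{P}\bigl[e^{t(\|\bX\|^{2}\wedge n)}\bigr]\;\geq\;t\,\E_{Q}[\|\bX\|^{2}\wedge n]+\tfrac{d_{\bX}}{2}\log(1-2t).
\]
Letting $n\to\infty$ and invoking monotone convergence on the left--hand term gives
\[
\Dkl(Q_{\bX}\|P_{\bX})\;\geq\;t\,\E_{Q}[\|\bX\|^{2}]+\tfrac{d_{\bX}}{2}\log(1-2t).
\]
Since $\Dkl(Q_{\bX}\|P_{\bX})<\infty$ by assumption and the additive constant is finite for any fixed $t\in(0,1/2)$, the contrapositive forces $\E_{Q}[\|\bX\|^{2}]<\infty$.

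There is no substantial obstacle: the only thing to be careful about is the monotone convergence step, which is why I introduced the truncation rather than plugging $t\|\bx\|^{2}$ directly into Donsker--Varadhan. An alternative route that avoids variational formulas altogether is to note that, since $\Dkl(Q_{\bX}\|P_{\bX})<\infty$ forces $Q_{\bX}\ll P_{\bX}$, the measure $Q_{\bX}$ admits a Lebesgue density $q$, and then writing out
\[
\Dkl(Q_{\bX}\|P_{\bX})\;=\;-h(Q_{\bX})+\tfrac{d_{\bX}}{2}\log(2\pi)+\tfrac{1}{2}\E_{Q}[\|\bX\|^{2}]
\]
and combining with the Gaussian maximum--entropy inequality $h(Q_{\bX})\leq\frac{d_{\bX}}{2}\log\!\bigl(\frac{2\pi e}{d_{\bX}}\E_{Q}[\|\bX\|^{2}]\bigr)$ yields the same conclusion; but the Donsker--Varadhan argument is cleaner because it sidesteps delicate issues around the definition and finiteness of $h(Q_{\bX})$ when $Q_{\bX}$ has infinite second moment.
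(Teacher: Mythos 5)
Your proof is correct and follows essentially the same route as the paper, which also invokes the Donsker--Varadhan variational representation with a quadratic test function (the paper takes $f(\bx)=\|\bx\|^2/4$, exploiting $\E_P[e^{\|\bX\|^2/4}]<\infty$ under the standard Gaussian reference). Your truncation-plus-monotone-convergence step is a mild technical refinement that makes the use of the unbounded test function fully rigorous, but the underlying argument is the same.
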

\begin{proof}
The claim follows from  the Donsker--Varadhan variational representation of KL divergence \cite{donsker1983asymptotic}:
\begin{equation}
    \Dkl(Q_{\bX}\|P_{\bX})=\sup_{f:\cX\mapsto \R} \E_Q\left[f(\bX)\right]-\log\E_P\left[\exp(f(\bX))\right].
\end{equation}
Thus the finiteness assumption of KL divergence yields that the function inside the suprimum is always finite. In particular, for $f(\bx)=\frac{\|\bx\|^2}{4}$ we have:
\begin{equation}
    \frac{1}{4}\E_Q\left[\|\bX\|^2\right]-\log\E_P\left[\exp(\frac{\|\bX\|^2}{4})\right]<\infty.
\end{equation}
However, for $P=\cN(0,\bI_{d_{\bX}})$ we have $\E_P\Big[\exp(\frac{\|\bX\|^2}{4})\Big]<\infty$, therefore the second moment $\E_Q[\|\bX\|^2]$ should be finite.
\end{proof}
Assume $\E_Q[\|\bX\|^2]=t<\infty$. Lemma \ref{le:normal-opt-s} shows that for any $\delta>0$, there exists a normal distribution $Q^{(\delta)}_{\bX}$ with $\E_{Q^{(\delta)}}\left[\|\bX\|^2\right]\leq t$ such that $\msfg_\delta(Q^{(\delta)}_{\bX})\leq \msfg_\delta(Q_{\bX})$. Invoking \eqref{eqn:sdelta-DE}, we can relate the function $\msfg_\delta(Q_{\bX})$ to $\msfg_0(Q_{\bX})$  as follows:
\begin{equation}
\begin{aligned}
    \msfg_\delta(Q_{\bX})&= h(\bY+\sqrt{\delta} \bW|V)-\msfss h(\bX)+\frac{1}{2}\E_{\bX}\left[\bX^\top\bB\bX\right]+C_1-\frac{\delta d_{\bY}}{2}\\
    &\geq h(\bY|V)-\msfss h(\bX)+\frac{1}{2}\E_{\bX}\left[\bX^\top\bB\bX\right]+C_1-\frac{\delta d_{\bY}}{2}\\
    &=\msfg_0(Q_{\bX})-\frac{\delta d_{\bY}}{2},
\end{aligned}   
\end{equation}
where we used the fact that adding independent noise $\sqrt{\delta}\bW$ increases the differential entropy. This yields:
\begin{align}\label{eqn:smoothLB}
  \msfg_\delta(Q_{\bX})&\geq  \msfg_\delta(Q^{(\delta)}_{\bX})  \geq \msfg_0(Q^{(\delta)}_{\bX})-\frac{\delta d_{\bY}}{2}\geq  B_{\msfG}-\frac{\delta d_{\bY}}{2},
\end{align}
where we used the definition of $B_{\msfG}$ and the assumption that $Q^{(\delta)}_{\bX}$ is a normal distribution. 
\begin{claim}[Continuity of the function $\msfg_\delta$.]\label{cl:continuity}
For any $Q_{\bX}$ with finite second moments, the following continuity property holds:
\begin{equation}
    \lim_{\delta\downarrow 0} \msfg_\delta(Q_{\bX})=\mathsf{s}_0(Q_{\bX}).
\end{equation}

\end{claim}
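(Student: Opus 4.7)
The plan is to reduce the continuity claim to the standard continuity of differential entropy under Gaussian smoothing, via the representation \eqref{eqn:sdelta-DE}:
\[
\msfg_\delta(Q_\bX) = h_Q(\bY + \sqrt{\delta}\bW \mid V) - \msfss\, h_Q(\bX) + \tfrac{1}{2}\E_Q[\bX^\top \bB \bX] + C_1 - \tfrac{\delta d_\bY}{2}.
\]
All terms on the right-hand side other than $h_Q(\bY + \sqrt{\delta}\bW \mid V)$ and the vanishing linear piece $-\delta d_\bY / 2$ are independent of $\delta$, so the claim reduces to verifying the conditional-entropy continuity
\[
\lim_{\delta \downarrow 0} h_Q(\bY + \sqrt{\delta}\bW \mid V) \;=\; h_Q(\bY \mid V).
\]

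The hypothesis $\E_Q[\|\bX\|^2] < \infty$ together with $\bA_v \bA_v^\top \preceq \bI$ guarantees that $\bY \mid V = v$ has finite second moment under $Q$, uniformly in $v$ in the support of $P_V$. For each such $v$ the Gaussian-smoothing continuity of differential entropy --- the same tool \cite[Proposition 18]{GengNair2014} (equivalently \cite[Corollary 4]{polyanskiy2016wasserstein}) that was already invoked in the proof of Lemma \ref{le:liminf-s} --- yields the pointwise convergence
\[
\lim_{\delta \downarrow 0} h_Q(\bY + \sqrt{\delta}\bW \mid V = v) \;=\; h_Q(\bY \mid V = v).
\]

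To interchange this pointwise limit with the expectation over $V \sim P_V$, I would apply the dominated convergence theorem. A uniform-in-$\delta$ upper envelope follows from the Gaussian maximum-entropy bound
\[
h_Q(\bY + \sqrt{\delta}\bW \mid V = v) \;\le\; \tfrac{d_\bY}{2}\log\!\Bigl(\tfrac{2\pi e}{d_\bY}\bigl(\E_Q[\|\bX\|^2] + d_\bY(1+\delta)\bigr)\Bigr),
\]
which is finite and constant in $v$ for every $\delta \in (0,1]$. The principal subtlety --- and what I expect to be the main technical obstacle --- is producing a matching uniform lower envelope in the degenerate regime where $\bA_v \bA_v^\top = \bI_{d_\bY}$, so that the conditional law of $\bY \mid V = v$ under $Q$ may be singular with respect to Lebesgue measure and $h_Q(\bY \mid V = v)$ may equal $-\infty$. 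In that regime the cited smoothing-continuity result still applies in the extended sense (both sides equal $-\infty$), and I would handle it by monotonicity of $\delta \mapsto h_Q(\bY + \sqrt{\delta}\bW \mid V = v)$ together with monotone convergence on the portion of the $V$-support where $\bA_v \bA_v^\top = \bI$, and dominated convergence on the complementary portion where the full-rank noise covariance $\bI - \bA_v \bA_v^\top$ furnishes a $v$-dependent but $P_V$-integrable lower bound.
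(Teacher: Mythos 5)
Your reduction via \eqref{eqn:sdelta-DE}, the pointwise smoothing continuity, the uniform Gaussian maximum--entropy upper bound, and the monotonicity of $\delta\mapsto h_Q(\bY+\sqrt{\delta}\bW\mid V=v)$ are exactly the ingredients of the paper's proof; the genuine problem is your final interchange step. On the portion of the support where $\bA_v\bA_v^\top\neq \bI$ you propose dominated convergence, with the lower envelope supplied by the full--rank noise covariance; concretely that envelope is $h(\bZ_v)=\tfrac12\log\det{2\pi e\,(\bI_{d_{\bY}}-\bA_v\bA_v^\top)}$, and nothing in the hypotheses makes it $P_V$--integrable: the model only assumes $\bA_v\bA_v^\top\preceq\bI$, so $\det{\bI-\bA_v\bA_v^\top}$ may tend to $0$ arbitrarily fast along the support of $P_V$ and $\E_V\big[\log\det{\bI-\bA_V\bA_V^\top}\big]$ can be $-\infty$. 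Hence the DCT branch of your argument is not justified as stated. The repair is immediate and is what the paper does: the monotonicity in $\delta$ that you invoke only on the degenerate part holds for \emph{every} $v$, and together with the $v$--uniform finite upper bound $K'$ (coming from the second--moment bound $\E_Q[\|\bX\|^2]+2d_{\bY}$) it lets you apply the monotone convergence theorem to the nonnegative, increasing (as $\delta\downarrow 0$) family $K'-h_Q(\bY+\sqrt{\delta}\bW\mid V=v)$ on the whole support at once, with the limit allowed to be $-\infty$; no lower envelope and no splitting of the support are needed.

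Two smaller points. First, the pointwise statement $\lim_{\delta\downarrow0}h_Q(\bY+\sqrt{\delta}\bW\mid V=v)=h_Q(\bY\mid V=v)$ is continuity in the noise level at $\delta=0$ for a fixed input law with finite second moment; in this paper the relevant tool is Lemma \ref{le:lemma12AJC} (Lemma 12 of \cite{AJC2022}), not the result of \cite{GengNair2014,polyanskiy2016wasserstein} used in Lemma \ref{le:liminf-s}, which concerns continuity of $h(\cdot+\sqrt{\delta}\bW)$ in the input law under weak convergence at a fixed $\delta>0$. Second, your uniform upper envelope is exactly the bound the paper uses, so once the monotone convergence argument is applied globally your proof coincides with the paper's.
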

We defer the proof of this claim to the end of this section. Putting \eqref{eqn:smoothLB} and the continuity of $\msfg_{\delta}$ at $\delta=0$ together implies:
\begin{equation}
  \msfg_0(Q_{\bX})\geq B_{\msfG}-\lim_{\delta\downarrow 0} \frac{\delta d_{\bY}}{2}=B_{\msfG}. 
\end{equation}
This concludes the proof of Lemma \ref{le:normal-opt}.

\begin{proof}[Proof of Claim \ref{cl:continuity}]
The proof is based on the following continuity lemma.
\begin{lemma}[Lemma 12 in {\cite{AJC2022}}]\label{le:lemma12AJC}
For any $\R^n$--valued random variable $\bU$ with finite second moment (i.e. $\E[\|\bU\|^2]$) and random variable $\bN\sim\cN(0,\bI_n)$, the following continuity property hold:
\begin{equation}
    \lim_{\delta\downarrow 0} h(\bU+\sqrt{\delta}\bN)=h(\bU).
\end{equation}
\end{lemma}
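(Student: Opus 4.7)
The plan is to prove the two inequalities $\liminf_{\delta\downarrow 0}h(\bU+\sqrt{\delta}\bN)\geq h(\bU)$ and $\limsup_{\delta\downarrow 0}h(\bU+\sqrt{\delta}\bN)\leq h(\bU)$ separately, where I write $\bU_\delta:=\bU+\sqrt{\delta}\bN$. The key idea is to express the differential entropy as a shifted KL divergence with respect to a fixed Gaussian reference, so that the question reduces to the lower semicontinuity of KL divergence under weak convergence, which is standard.

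For the easy $\liminf$ direction, I would use the translation invariance of differential entropy: conditionally on $\sqrt{\delta}\bN=\bn$, $\bU_\delta$ is just $\bU$ shifted by $\bn$, so $h(\bU_\delta\mid \sqrt{\delta}\bN=\bn)=h(\bU)$ for every $\bn$. Since conditioning reduces entropy, $h(\bU_\delta)\geq h(\bU_\delta\mid\sqrt{\delta}\bN)=h(\bU)$ for every $\delta>0$, from which the $\liminf$ bound is immediate (the inequality is vacuous in the degenerate case $h(\bU)=-\infty$).

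For the $\limsup$ direction, I would invoke the identity
\begin{equation*}
h(\bZ)=-\Dkl(\mu_\bZ\,\|\,\gamma_\sigma)+\tfrac{n}{2}\log(2\pi\sigma^2)+\tfrac{1}{2\sigma^2}\E[\|\bZ\|^2],
\end{equation*}
valid for any $\bZ$ with finite second moment, where $\gamma_\sigma$ is the density of $\cN(\bzero,\sigma^2\bI_n)$ and $\sigma>0$ is an arbitrary fixed scale. Applying this with $\bZ=\bU_\delta$ and $\bZ=\bU$, three continuity facts combine to give the result: (i) $\bU_\delta\Rightarrow \bU$ weakly as $\delta\downarrow 0$, since $\sqrt{\delta}\bN\to\bzero$ in probability; (ii) $\E[\|\bU_\delta\|^2]=\E[\|\bU\|^2]+n\delta\to \E[\|\bU\|^2]$, where finiteness of the right--hand side is exactly the hypothesis on $\bU$; and (iii) the Donsker--Varadhan variational representation of KL divergence \cite{donsker1983asymptotic} implies that $\mu\mapsto \Dkl(\mu\,\|\,\gamma_\sigma)$ is lower semicontinuous in the weak topology, yielding $\liminf_{\delta\downarrow 0}\Dkl(\mu_{\bU_\delta}\,\|\,\gamma_\sigma)\geq \Dkl(\mu_\bU\,\|\,\gamma_\sigma)$. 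Substituting (ii)--(iii) into the identity for $h(\bU_\delta)$ and taking $\limsup$ gives $\limsup_{\delta\downarrow 0}h(\bU_\delta)\leq h(\bU)$.

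The main subtlety I anticipate is the degenerate case in which $\mu_\bU$ is not absolutely continuous with respect to Lebesgue measure, so that by convention $h(\bU)=-\infty$. In that case, since $\gamma_\sigma$ has a strictly positive density everywhere, $\mu_\bU\not\ll \gamma_\sigma$ forces $\Dkl(\mu_\bU\,\|\,\gamma_\sigma)=+\infty$, so the identity formally reproduces $h(\bU)=-\infty$. For each $\delta>0$, on the other hand, $\mu_{\bU_\delta}$ is absolutely continuous (Gaussian convolution is a smoother), so $\Dkl(\mu_{\bU_\delta}\,\|\,\gamma_\sigma)$ is at worst finite and must blow up to $+\infty$ by the lower semicontinuity in (iii); this translates to $h(\bU_\delta)\to -\infty=h(\bU)$, so the limit still holds. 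The only mild point to justify carefully is the weak convergence $\bU_\delta\Rightarrow \bU$, which follows from Slutsky's theorem applied to the independent decomposition $\bU_\delta=\bU+\sqrt{\delta}\bN$.
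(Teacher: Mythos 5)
Your proposal is correct. Note that the paper does not prove this statement at all: it is imported verbatim as Lemma 12 of \cite{AJC2022}, so there is no internal proof to compare against. Your two--sided argument — the $\liminf$ bound from $h(\bU+\sqrt{\delta}\bN)\ge h(\bU+\sqrt{\delta}\bN\mid \bN)=h(\bU)$ (which implicitly uses the independence of $\bU$ and $\bN$, as intended), and the $\limsup$ bound by writing $h(\bZ)=-\Dkl(\mu_{\bZ}\|\gamma_\sigma)+\tfrac{n}{2}\log(2\pi\sigma^2)+\tfrac{1}{2\sigma^2}\E[\|\bZ\|^2]$ and invoking weak convergence $\bU+\sqrt{\delta}\bN\Rightarrow\bU$, convergence of second moments, and lower semicontinuity of relative entropy — is exactly the standard route, and it is consistent with the machinery the paper already uses elsewhere (the upper semicontinuity bound \eqref{eqn:limsup} quoted from \cite{CourtadeLiu} and the lower semicontinuity of KL divergence cited from \cite{polyWu2023} are proved by the same Gaussian--reference trick). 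Your handling of the degenerate case is also sound: for $\delta>0$ the smoothed law has a density with $h(\bU+\sqrt{\delta}\bN)\ge \tfrac{n}{2}\log(2\pi e\delta)>-\infty$, so $\Dkl(\mu_{\bU+\sqrt{\delta}\bN}\|\gamma_\sigma)$ is finite, and if $\mu_{\bU}$ is not absolutely continuous then $\Dkl(\mu_{\bU}\|\gamma_\sigma)=+\infty$ forces the smoothed entropies to diverge to $-\infty=h(\bU)$, matching the convention. No gaps beyond making the independence assumption explicit.
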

Define $g(v;\delta)=h(\bY+\sqrt{\delta}\bW|V=v)$. For each $v$, the function $g(v;\delta)$ is an increasing function of $\delta$. Further, for $\delta<1$ and conditioned on any $V=v$, $\bY+\sqrt{\delta}\bW$ has a universal upper bound on the second moment. To see this, recall that $\bY=\bA_v \bX+\bZ_v$, where $\|\bA_v\|_{\op}\leq 1$ and $\bZ_v\sim\cN(0,\bI_{d_{\bY}}-\bA_v\bA_v^\top)$. Thus for $\delta<1$:
\begin{equation}
\begin{aligned}
    \E_Q\left[\|\bY+\sqrt{\delta}\bW\|^2|V=v\right]&=\E_Q\left[\|\bA_v\bX\|^2]+\E_Q[\|\bZ_v\|^2\right]+\delta d_{\bY}\\
    &\leq \E_Q\left[\|\bX\|^2\right]+(1+\delta) d_{\bY}<K,
\end{aligned} 
\end{equation}
where $K=\E_Q\left[\|\bX\|^2\right]+2d_{\bY}<\infty$ does not depend on $v$. This implies $g(v;\delta)\leq h(\bA)=\frac{d_{\bY}}{2}\log\frac{2\pi e K}{d_{\bY}}<\infty$, where $\bA\sim\cN(0,\frac{K}{d_{\bY}}\bI_{d_{\bY}})$. 

Also, by Lemma \ref{le:lemma12AJC}, we have  $\lim\limits_{\delta\downarrow 0}g(v;\delta)=g(v,0)$. Now the monotone convergence theorem \cite{rudin1987real}  implies:
\begin{equation}
  \lim_{\delta\downarrow 0} h(\bY+\sqrt{\delta}\bW|V)= \lim_{\delta\downarrow 0} \E_V\left[g(V;\delta)\right]=\E_V\left[g(V;0)\right]=h(\bY|V).
\end{equation}
Now, we are ready to complete the proof of the Claim. Invoking \eqref{eqn:sdelta-DE}, we have:
\begin{equation}
\begin{aligned}
    \lim_{\delta\downarrow 0}\msfg_\delta(Q_{\bX}) &=\lim_{\delta\downarrow 0} h(\bY+\sqrt{\delta} \bW|V)-\msfss h(\bX)+\frac{1}{2}\E_{\bX}\left[\bX^\top\bB\bX\right]+C_1-\frac{\delta d_{\bY}}{2}\\
    &= h(\bY|V)-\msfss h(\bX)+\frac{1}{2}\E_{\bX}\left[\bX^\top\bB\bX\right]+C_1\\
    &=\msfg_0(Q_{\bX}),
\end{aligned}   
\end{equation}
which is the statement of Claim \ref{cl:continuity}.
\end{proof}
\section{SDPI Coefficient of The Gaussian Mixture Channel}\label{app:proof_prop_GMC}
We state the proof of Proposition \ref{prop:GMC} here.

\begin{proof}[Proof of Proposition \ref{prop:GMC}]
The direction $s(P_{\bX}, \widetilde{T}_{\bY|\bX}) \leq \|\E[\bA_V^\top \bA_V]\|_{\op}$ has been established in Subsection \ref{sub:GMC}. Therefore, our primary objective is to prove the converse: $s(P_{\bX}, \widetilde{T}_{\bY|\bX}) \geq \|\E[\bA_V^\top \bA_V]\|_{\op}$.

For a scalar $\beta \ge 1$ and a positive semi--definite matrix $\bB$ such that $\bI \preceq \bB$, let $Q_{\bX}^{(\beta,\bB)}$ be a Gaussian distribution defined as $Q_{\bX}^{(\beta,\bB)} = \cN(\mathbf{0}, \bI+\beta\bB)$. The SDPI coefficient can then be bounded as: 
\[
s(P_{\bX},\widetilde{T}_{\bY|\bX}) \geq \sup_{\beta,\bB}\frac{\Dkl(Q_{\bY}^{(\beta,\bB)}\|P_{\bY})}{\Dkl(Q_{\bX}^{(\beta,\bB)}\|P_{\bX})} \geq \sup_{\bB}\lim_{\beta\to \infty}\frac{\Dkl(Q_{\bY}^{(\beta,\bB)}\|P_{\bY})}{\Dkl(Q_{\bX}^{(\beta,\bB)}\|P_{\bX})},
\]
where $Q_{\bY}^{(\beta,\bB)}$ is the output distribution of the channel $\widetilde{T}_{\bY|\bX}$ when the input distribution is $Q_{\bX}^{(\beta,\bB)}$, and $P_{\bX} = \cN(\mathbf{0}, \bI_{d_{\bX}})$, $P_{\bY} = \cN(\mathbf{0}, \bI_{d_{\bY}})$ are the reference input and output distributions (standard normal distributions in $d_{\bX}$ and $d_{\bY}$ dimensions, respectively).
We first focus on evaluating the limit $\lim\limits_{\beta\to \infty}\frac{\Dkl(Q_{\bY}^{(\beta,\bB)}\|P_{\bY})}{\Dkl(Q_{\bX}^{(\beta,\bB)}\|P_{\bX})}$.

\paragraph*{Analyzing $\Dkl(Q_{\bX}^{(\beta,\bB)}\|P_{\bX})$:}
The KL divergence between $Q_{\bX}^{(\beta,\bB)} = \cN(\mathbf{0}, \bI+\beta\bB)$ and $P_{\bX} = \cN(\mathbf{0}, \bI)$ is given by:
\begin{align}
    \Dkl(Q_{\bX}^{(\beta,\bB)}\|P_{\bX})&=\frac{1}{2}\Tr{\beta\bB-\log(\bI+\beta\bB)}\\
    &=\frac{1}{2}\left(\beta\Tr{\bB}-\sum_{k=1}^{d_{\bX}}\log(1+\beta\lambda_i(\bB))\right)\\
    &=\frac{1}{2}\left(\beta\Tr{\bB}-\cO\left(\log \beta\right)\right)
\end{align}
where $\lambda_i(\bB)$ are the eigenvalues of $\bB$. Since $\bI \preceq \bB$, all eigenvalues $\lambda_i(\bB) \ge 1$. For large $\beta$, $\log(1+\beta\lambda_i(\bB)) \approx \log(\beta\lambda_i(\bB)) = \log\beta + \log\lambda_i(\bB)$. Thus, the sum can be approximated as $\sum_{i=1}^{d_{\bX}}(\log\beta + \log\lambda_i(\bB)) = d_{\bX} \log\beta + \sum_{i=1}^{d_X}\log\lambda_i(\bB)$.
Therefore, for large $\beta$:
$$ \Dkl(Q_{\bX}^{(\beta,\bB)}\|P_{\bX}) = \frac{1}{2}\left(\beta\Tr{\bB} - \cO(\log \beta)\right) $$
The $\cO(\log \beta)$ term encompasses constants depending on $d_{\bX}$ and $\bB$.

\paragraph*{Analyzing $\Dkl(Q_{\bY}^{(\beta,\bB)}\|P_{\bY})$:}
The KL divergence for the output distribution is given by:
\begin{align*}
    \Dkl(Q_{\bY}^{(\beta,\bB)}\|P_{\bY}) &= -h(Q_{\bY}^{(\beta,\bB)}) + \E_{\bY \sim Q_{\bY}^{(\beta,\bB)}}\left[\log\frac{1}{P_{\bY}(\bY)}\right] \\
    &= -h(Q_{\bY}^{(\beta,\bB)}) + \frac{d_{\bY}}{2}\log(2\pi) + \frac{1}{2}\E_{\bY \sim Q_{\bY}^{(\beta,\bB)}}\left[\|\bY\|^2\right] \\
    &= -h(Q_{\bY}^{(\beta,\bB)}) + \frac{d_{\bY}}{2}\log(2\pi) + \frac{1}{2}\Tr{\E_{\bY \sim Q_{\bY}^{(\beta,\bB)}}\left[\bY\bY^\top\right]},
\end{align*}
where $h(Q_{\bY}^{(\beta,\bB)})$ is the differential entropy of $Q_{\bY}^{(\beta,\bB)}$.

Let's simplify each term. First, consider the expected outer product $\E_{\bY \sim Q_{\bY}^{(\beta,\bB)}}\left[\bY\bY^\top\right]$. We can use the law of total expectation:
\begin{align*}
    \E_{\bY \sim Q_{\bY}^{(\beta,\bB)}}\left[\bY\bY^\top\right] &= \E_V\left[\E_{\bY|V,\bX \sim Q_{\bX}^{(\beta,\bB)}}\left[\bY\bY^\top|V\right]\right] \\
    &= \E_V\left[\cov(\bY|V,\bX) + \E[\bY|V,\bX]\E[\bY|V,\bX]^\top\right] \\
    &= \E_V\left[(\bI-\bA_V\bA_V^\top) + \bA_V\E[\bX\bX^\top]\bA_V^\top\right].
\end{align*}
Since $\bX \sim Q_{\bX}^{(\beta,\bB)} = \cN(\mathbf{0}, \bI+\beta\bB)$, we have $\E[\bX\bX^\top] = \bI+\beta\bB$. Substituting this:
\begin{align*}
    \E_{\bY \sim Q_{\bY}^{(\beta,\bB)}}\left[\bY\bY^\top\right] &= \E_V\left[\bI-\bA_V\bA_V^\top + \bA_V(\bI+\beta\bB)\bA_V^\top\right] \\
    &= \bI + \beta\E_V\left[\bA_V\bB\bA_V^\top\right].
\end{align*}
Taking the trace:
$$ \Tr{\E_{\bY \sim Q_{\bY}^{(\beta,\bB)}}\left[\bY\bY^\top\right]} = \Tr{\bI + \beta\E_V\left[\bA_V\bB\bA_V^\top\right]} = d_{\bY} + \beta\Tr{\E_V\left[\bA_V^\top\bA_V\right]\bB}.$$

Next, we need to analyze the differential entropy $h(Q_{\bY}^{(\beta,\bB)})$.
We know that the differential entropy of a distribution is bounded by the entropy of a Gaussian distribution with the same covariance matrix \cite[Theorem 8.6.5]{cover1999elements}. The covariance matrix of $Q_{\bY}^{(\beta,\bB)}$ is $\cov_Q(\bY) = \E_{Q_{\bY}^{(\beta,\bB)}}\left[\bY\bY^\top\right] = \bI + \beta\E_V\left[\bA_V\bB\bA_V^\top\right]$.
The trace of this covariance matrix is $\Tr{\cov_Q(\bY)} = d_{\bY} + \beta\Tr{\E_V\left[\bA_V^\top\bA_V\right]\bB}$.

We can establish bounds for $h(Q_{\bY}^{(\beta,\bB)})$:
\begin{enumerate}
    \item Lower Bound: By the conditioning inequality for entropy, $h(\bY) \ge h(\bY|V)$. Since $\bY|\{V,\bX=\bx\} \sim \cN(\bA_V \bx, \bI-\bA_V\bA_V^\top)$ and $\bX \sim Q_{\bX}^{(\beta,\bB)}$, we have $\bY|\{V=v\} \sim  \cN(\mathbf{0}, \bI + \beta\bA_v\bB\bA_v^\top)$.
    Thus:
    \begin{align*} h(\bY|V=v) &= \frac{1}{2}\log\det{(2\pi e (\bI + \beta\bA_v\bB\bA_v^\top))} \\ &\ge \frac{1}{2}\log\det{(2\pi e \bI) }= \frac{d_{\bY}}{2}\log(2\pi e) = \cO(1),
    \end{align*}
    where we have used the fact that $\bA_v\bB\bA_v^\top\succeq \bzero$, since $\bB\succeq \bzero$.
    This implies $h(Q_{\bY}^{(\beta,\bB)}) \geq \E_V[h(\bY|V)] = \cO(1)$.
    \item Upper Bound: Let $G_{\bY} = \cN(\mathbf{0}, \cov_Q(\bY))$ be a Gaussian distribution with the same covariance as $Q_{\bY}^{(\beta,\bB)}$. Then $h(Q_{\bY}^{(\beta,\bB)}) \le h(G_{\bY})$. The entropy of $G_{\bY}$ is:
    $$ h(G_{\bY}) = \frac{1}{2}\log\det{(2\pi e \cov_Q(\bY)) }.$$
    Since $\cov_Q(\bY) = \bI + \beta\E_V\left[\bA_V\bB\bA_V^\top\right]$, and $\bA_V\bB\bA_V^\top$ is positive semi--definite (as $\bB$ is positive semi--definite), the eigenvalues of $\cov_Q(\bY)$ will grow linearly with $\beta$. Therefore, $\log\det{(\cov_Q(\bY)) }= \cO(\log \beta)$.
    Thus, $h(Q_{\bY}^{(\beta,\bB)}) \le h(G_{\bY}) = \cO(\log \beta)$.
\end{enumerate}
Combining the lower and upper bounds, we conclude that $|h(Q_{\bY}^{(\beta,\bB)})| = \cO(\log \beta)$.

Now, substituting these findings back into the expression for $\Dkl(Q_{\bY}^{(\beta,\bB)}\|P_{\bY})$:
$$ \Dkl(Q_{\bY}^{(\beta,\bB)}\|P_{\bY}) = \frac{1}{2}\beta \Tr{\E_V\left[\bA_V^\top\bA_V\right]\bB} + \cO(\log\beta).$$

Finally, we take the limit of the ratio:
$$ \lim_{\beta\to \infty}\frac{\Dkl(Q_{\bY}^{(\beta,\bB)}\|P_{\bY})}{\Dkl(Q_{\bX}^{(\beta,\bB)}\|P_{\bX})} =\lim_{\beta\to \infty} \frac{\frac{1}{2}\beta \Tr{\E_V\left[\bA_V^\top\bA_V\right]\bB} + \cO(\log\beta)}{\frac{1}{2}\left(\beta\Tr{\bB} - \cO(\log \beta)\right)}.$$
Dividing both numerator and denominator by $\beta$ and taking the limit as $\beta \to \infty$:
$$ \lim_{\beta\to \infty}\frac{\Dkl(Q_{\bY}^{(\beta,\bB)}\|P_{\bY})}{\Dkl(Q_{\bX}^{(\beta,\bB)}\|P_{\bX})} = \frac{\Tr{\E_V\left[\bA_V^\top\bA_V\right]\bB}}{\Tr{\bB}}.$$

To complete the proof, we need to maximize this expression over all valid matrices $\bB$:
$$ \sup_{\bB: \bI \preceq \bB} \frac{\Tr{\E_V\left[\bA_V^\top\bA_V\right]\bB}}{\Tr{\bB}} $$
Let $\bM = \E_V\left[\bA_V^\top\bA_V\right]$. We are maximizing $\frac{\Tr{\bM\bB}}{\Tr{\bB}}$. It is a known result in matrix analysis (e.g., related to Rayleigh quotients and Courant--Fischer theorem for matrices) that for a positive semi--definite matrix $\bM$ and a positive definite matrix $\bB$, this ratio is maximized when $\bB$ is chosen to align with the dominant eigenvector of $\bM$. Specifically, the supremum of this ratio is equal to the largest eigenvalue of $\bM$, which is its operator norm.
$$ \sup_{\bB: \bI \preceq \bB} \frac{\Tr{\bM\bB}}{\Tr{\bB}} = \|\bM\|_{\op} = \norm[2]{\E_V\left[\bA_V^\top\bA_V\right]}_{\op}.$$
Thus, we have shown that $s(P_{\bX}, \widetilde{T}_{\bY|\bX}) \geq \norm[1]{\E_V[\bA_V^\top \bA_V]}_{\op}$. Combining this with the previously established upper bound, the equality holds.

\end{proof}
\section{Proof Completion for Theorems \ref{thm:lower_bound-cross} and \ref{thm:lower_bound}}
\label{app:proof_completion_lower_bound}

In this section, we complete the proof of Theorems \ref{thm:lower_bound-cross} and \ref{thm:lower_bound}. Note that the main part of proof is stated in Section \ref{sec:Proof_lower_bound} and we complete the proof here.

\subsection{Lower Bounds  Related to Sample Complexity}\label{sec:sample_complexity}
Up to this point, we have established bounds related to the constraints on the communication budget. We now turn our attention to the bounds related to the sample size $m$.
Here, we employ the standard Fano method. We analyze the cross--covariance case and the full covariance case separately.
 
\subsubsection{Cross Covariance}
We utilize the same family of distributions employed in Subsections \ref{subse:family} and \ref{se:packing}. The sole difference is the omission of the random variable $W$, which was used in the averaged Fano's method. More precisely, consider a set $\cV=[1:|\cV|]$ and a corresponding family of distributions $\cP_{\cV}=\{P_v\}_{v\in\cV}$, where $P_v = \cN(\bzero, \bC_{v})$, and:
\begin{equation}\label{eq:matrix-representation-n-cr}
    \bC_{v} = \frac{\sigma^2}{2}\begin{bmatrix}
         \bI_{d_1} &  \delta\bD_{v}^\top\\
        \delta\bD_{v} &  \bI_{d_2}
    \end{bmatrix},
\end{equation}
where  $\bD_{v}$ is some matrix in $\R^{d_1\times d_2}$ with $\norm{\bD_{v}}_\op\leq 1$, and $\delta\leq 1$ is a parameter to be determined subsequently. Analogous to \eqref{eq:mutual_information_bound1_app}, we have $I(V;M_1,M_2)\leq I(M_1;M_2|V)$. Furthermore, for a fixed $V=v$, we have the following Markov chain: $M_1\mkv\bmsfX_1\mkv\bmsfX_2\mkv M_2$. The data processing inequality then implies $I(M_1;M_2|V)\leq I(\bmsfX_1;\bmsfX_2|V)$. In summary, we have:
\begin{equation}
    I(V;M_1,M_2)\leq I(\bmsfX_1;\bmsfX_2|V).
\end{equation}
We now proceed to derive an upper bound on $I(\bmsfX_1;\bmsfX_2|V)$:
\begin{equation}\label{eq:mutual_information_bound5_app}
\begin{split}
    I(\left.\bmsfX_1;\bmsfX_2\right|V=v) 
    &= m\; I\left(\bX_1 ; \bX_2| V=v\right)\\
    &= m \left[h(\bX_1|V=v) + h(\bX_2|V=v) - h(\bX_1,\bX_2|V=v)\right]\\
    &\overset{\text{(a)}}{=} \frac{m}{2} \log_2\left(\frac{\det{\frac{\sigma^2}{2} \bI_{d1}}\det{\frac{\sigma^2}{2} \bI_{d2}}}{\det{\bC_{v}}}\right)\\
    &= \frac{m}{2}\left(2r_{v}\log_2(\frac{\sigma^2}{2}) - \sum_{i=1}^{r_{v}}\log_2\left(\frac{\sigma^4}{4}\left(1-\delta^2\sigma_i^2(\bD_{v}\right)\right)\right)\\
   &= -\frac{m}{2}\left( \sum_{i=1}^{r_{v}}\log_2\left(1-\delta^2\sigma_i^2(\bD_{v}\right)\right)\\
   &\overset{\text{(b)}}{\leq} \frac{-m r_{v}}{2} \log_2\left(1-{\delta^2}\right)\\
    &\overset{\text{(c)}}{\leq} {m r_{v}\delta^2}\\
    &\overset{\text{(d)}}{\leq} {m (d_1\wedge d_2)\delta^2},
\end{split}
\end{equation}
where (a) follows from \citep[~Theorem 8.4.1]{cover1999elements} and $r_{v}=\rank(\bD_{v})$. Furthermore, (b) holds because $g(x)=-\log_2(1-x)=\log_2(\frac{1}{1-x})$ is an increasing function, and $\norm{\bD}_\op\le 1$. (c) follows from the fact that for all $x\in[0,1/2]$, we have $\log_2(\frac{1}{1-x})\le {2x}$, and we assume that $\delta\leq \frac{1}{\sqrt{2}}$.    
    
Then,  Lemma \ref{lem:main_fano} yields:
\begin{equation}\label{eqn:y-8kh-1}
\begin{split}
    \ME &\geq \frac{\rho_{\dist}^{(\cross)}}{2}\left[1-\frac{I(V;M_1,M_2)+1}{\log_2(|\cV|)}\right]\\
    &\geq \dfrac{\delta\sigma^2\nu_{\dist}^{(d_1,d_2)}}{8}\left[1-\frac{1+m(d_1\wedge d_2)\delta^2}{2d_1d_2}\right],
\end{split}
\end{equation}
where $\nu_\dist^{(d_1,d_2)}$ was defined earlier, and we have used \eqref{eq:packing-unit-op-ball}. 
Setting $\delta=\sqrt{\frac{d_1\vee d_2}{2m}\wedge \frac{1}{2}} $ implies: 
\begin{equation}\label{eqn:y-8kh-2}
\begin{split}
     \MEopc&\geq\frac{\sigma^2}{32}\left( \alphaopsc\bigwedge 2\right)\\
     \MEFc&\geq\frac{\sigma^2}{32}\left( \alphaFsccross\bigwedge\frac{\sqrt{d_1\wedge d_2}}{7}\right)
\end{split}
\end{equation}
	   
\subsubsection{Full Covariance}

As mentioned in Remark~\ref{rem:samplecomplexity}, any lower bound in the centralized setting—where all data are aggregated on a central server—also applies to the distributed setting. It is folklore that the operator norm distortion in the centralized setting has a lower bound of $\Omega\left(\sqrt{\frac{d}{m}} \wedge 1\right)$, while the Frobenius norm distortion admits a lower bound of $\Omega\left(\sqrt{\frac{d^2}{m}} \wedge \sqrt{d}\right)$~\citep{ashtiani2020near,devroye2020minimax}. The arguments in~\citep{ashtiani2020near} and~\citep{devroye2020minimax} are for $d \geq 9$ and $d \geq 5$, respectively. However, the argument used for the distributed setting also applies to the centralized case for $d \geq 2$.

To establish this, we split the vector $\bmsfZ = \{\bZ^{(i)}\}_{i=1}^{m}$ into two equal--length subvectors:  
$\bmsfX'_1 = \{\bZ_{[1:d/2]}^{(i)}\}_{i=1}^{m}$ and $\bmsfX'_2 = \{\bZ_{[d/2+1:d]}^{(i)}\}_{i=1}^{m}$.  
We then use the same distributed argument with a new family of distributions:  
$P_v = \mathcal{N}(\mathbf{0},  \bC_v)$, where the covariance matrix has the form
\begin{equation} \label{eq:matrix-representation-n-full}
    \bC_{v} = \frac{\sigma^2}{2}\begin{bmatrix}
         \bI_{d/2} & \delta \bD_{v}^\top \\
         \delta \bD_{v} & \bI_{d/2}
    \end{bmatrix}.
\end{equation}

Note that within the normal distributions characterized by \eqref{eq:matrix-representation-n-full}, $\bmsfX'_1$ and $V$ are independent.  Similarly, $\bmsfX'_2$ and $V$ are independent. By the Markov chain $V \mkv \bmsfZ = (\bmsfX'_1, \bmsfX'_2) \mkv (M_1,M_2)$ and the data processing inequality (DPI), we obtain:
\begin{equation} \label{eq:mutual_information_bound4_app-y}
    \begin{split}
        I(V; M_1, M_2) &\leq I(V; \bmsfX'_1, \bmsfX'_2) \\
        &= I(V; \bmsfX'_1) + I(V; \bmsfX'_2)+I( \bmsfX'_1;\bmsfX'_2|V)-I(\bmsfX'_1;\bmsfX'_2) \\
        &\le I(\bmsfX'; \bmsfY' \mid V).
    \end{split}
\end{equation}

The rest of the proof follows similarly to the distributed case. Specifically, one can show
\[
I(\bmsfX'_1; \bmsfX'_2 \mid V) \leq \frac{md\delta^2}{2},
\]
as in~\eqref{eq:mutual_information_bound5_app}. Then, applying the same inequalities used in~\eqref{eqn:y-8kh-1} and~\eqref{eqn:y-8kh-2}, we obtain:
 \begin{equation}\label{eqn:y-8kh-3}
\begin{split}
     \MEopc&\geq\frac{\sigma^2}{32}\left( \alphaopsc\bigwedge 2\right)\\
     \MEFc&\geq\frac{\sigma^2}{32}\left( \alphaFsc\bigwedge\frac{\sqrt{d_1\wedge d_2}}{7}\right)
\end{split}
\end{equation}

\subsection{Lower bounds related to communication budget for self--covariance estimation.}
In this section, we utilize the second family of Gaussian distributions. For the set $\cU$, consider the set of distributions $\{P_u\}_{u\in\cU}$, where $P_u = \cN(\bzero, \bC_{u})$ and:
\[\resizebox{!}{1cm}{$\bC_{u} = \frac{\sigma^2}{2}\left[\begin{matrix}
     \bI_{d_1/2} &  \delta\bD_{u} & \bzero & \bzero\\
    \delta\bD_{u}^{\top} &  \bI_{d_1/2} & \bzero & \bzero\\
    \bzero & \bzero &  \bI_{d_2/2} &  \bzero\\
    \bzero & \bzero & \bzero &  \bI_{d_2/2}
\end{matrix}\right],$}\]
where $\bD_{u}$ is a matrix in $\R^{d_1/2\times d_1/2}$.  

As in the previous case, we must have $\bC_{u}\succeq\bzero$ and $\norm{\bC_{u}}_{\op}\leq\sigma^2$. Note that the eigenvalues of $\bC_{u}$ are $\left\{\frac{\sigma^2}{2}\left(1\pm\delta\sigma_i(\bD_{u})\right)\right\}_{i=1}^{\rank(\bD_{u})}$. Thus, if we assume that $\norm{\bC_{u}}_{\op}\leq1$ and $\delta\leq 1$, the conditions $\bC_{u}\succeq\bzero$ and $\norm{\bC_{u}}_{\op}\leq\sigma^2$ are satisfied.

We then write:
\begin{equation}
\begin{split}
  \rho_{\dist} &= \inf\limits_{u,u':u\neq u'}\norm{\bC_u-\bC_{u'}}_{\dist}\\
    &= \frac{\sigma^2}{2}\inf\limits_{u,u':u\neq u'}\norm{\left[\begin{matrix}
        \bzero &  \delta(\bD_{u}-\bD_{u'}) & \bzero & \bzero\\
        \delta(\bD_{u}-\bD_{u'})^{\top} & \bzero & \bzero & \bzero\\
        \bzero & \bzero & \bzero &  \bzero\\
        \bzero & \bzero & \bzero &  \bzero
    \end{matrix}\right]}_{\dist}\\
    &= \frac{\sigma^2}{2}\inf\limits_{u,u':u\neq u'} \norm{\left[\begin{matrix}
        \bzero &  \delta(\bD_{u}-\bD_{u'}) \\
        \delta(\bD_{u}-\bD_{u'})^{\top} & \bzero
    \end{matrix}\right]}_{\dist}\\
    &\overset{\text{(a)}}{=} \sqrt{1+\ind{\dist=\Fr}}\frac{\sigma^2\delta}{2}\inf\limits_{u,u':u\neq u'}\norm{\bD_u-\bD_{u'}}_{\dist},
\end{split}
\end{equation}
where (a) follows from Lemma \ref{lem:eigenvals_and_eigenvects_of_matrix}.

We also derive an upper bound for $I(U;M_1,M_2)$:
\begin{equation}
\begin{split}
	I(U;M_1,M_2) &= I(U;M_1) + I(U;M_2|M_1)\\
    &\leq I(U;M_1) + I(U;M_2|M_1) + I(M_1;M_2)\\
    &= I(U;M_1) + I(U,M_1;M_2)\\
    &= I(U;M_1) + I(U;M_2) + I(M_1;M_2|U)\\
    &\leq I(U;M_1) + I(U;\bY) + I(\bX;\bY|U)\\
    &= I(U;M_1)\\
    &\leq B_1.
\end{split}
\end{equation}
Next, if we define the set $\{\bD_{u}\}_{u\in\cU}$ as the $\epsilon$--packing points of $\cB_{\normiii{.}_{\op}}^{(d_1^2/4)}(1)$, under the $\normiii{.}_{\dist}$ norm, we have $\inf\limits_{u,u':u\neq u'}\norm{\bD_u-\bD_{u'}}_{\dist} \geq \epsilon$, $\max\limits_{u\in\cU}\left\{\big\|\bD_{u}\big\|_{\op}\right\}\leq 1$, and $\log_2(|\cU|) \geq\frac{d_1^2}{4}\log_2\left(\frac{\nu_{\dist}^{(d_1/2)}}{\epsilon}\right)$, where:
\begin{equation}
    \nu_{\dist}^{(d_1/2)}=\begin{cases}
      1 & \text{if } \dist=\op\\
      \frac{\sqrt{d_1}}{14\sqrt{2}} & \text{if } \dist=\Fr
    \end{cases}.
\end{equation}
Now, if we set $\epsilon = \nu_{\dist}^{(d_1/2)}\cdot2^{\frac{-16B_1}{d_1^2}}$ and $\delta=1$, from Lemma \ref{lem:main_fano} we have the following minimax lower bound:
\begin{equation}
\begin{split}
    \ME(\sigma,B_1,B_2,d_1,d_2,m)&\geq \frac{\rho_{\dist}}{2}\left[1-\frac{I(U;M_1,M_2)+1}{\log_2(|\cU|)}\right]\\
    &\geq \frac{\sqrt{1+\ind{\dist=\Fr}}\sigma^2}{2}\left[1-\frac{B_1+1}{\log_2(|\cU|)}\right]\inf\limits_{u,u':u\neq u'}\norm{\bD_u-\bD_{u'}}_{\op}\\
    &\geq \frac{\sqrt{1+\ind{\dist=\Fr}}\sigma^2}{2}\left[1-\frac{B_1+1}{\frac{d_1^2}{4}\log_2\left(\frac{\nu_{\dist}^{(d_1/2)}}{\epsilon}\right)}\right]\epsilon\\
    &= \frac{\sigma^2}{4}\sqrt{1+\ind{\dist=\Fr}}\nu_{\dist}^{(d_1/2)}\cdot2^{\frac{-16B_1}{d_1^2}}.
\end{split}
\end{equation}
Therefore, we have:
\begin{equation}
\begin{aligned}
    \MEop(\sigma,B_1,B_2,d_1,d_2,m) &\geq \frac{\sigma^2}{4}\cdot\left(2^{\frac{-16B_1}{d_1^2}}\bigvee 2^{\frac{-16B_2}{d_2^2}}\right),\\
    \MEF(\sigma,B_1,B_2,d_1,d_2,m) &\geq \frac{\sigma^2}{56}\left(\sqrt{d_1}\cdot2^{\frac{-16B_1}{d_1^2}}\bigvee \sqrt{d_2}\cdot2^{\frac{-16B_2}{d_2^2}}\right).\\
\end{aligned}
\end{equation}

\section{Some Concentration Inequalities for Random Matrices}
\label{app:material_useful_achievable}

In this section, we obtain two lemmas and one proposition that are useful in proving Theorem \ref{thm:achievable_scheme}.

\begin{lem}\label{lem:norm_upperbound_selfcovar}
    Assume that $\bX\in\R^{d_1}$ is a zero mean, sub--Gaussian vector with parameter $\sigma_1$, and we have $m$ i.i.d. samples from $\bX$ as $\{\bX^{(i)}\}_{i=1}^{m}$. Also assume that $\bY\in\R^{d_2}$ is a zero mean, sub--Gaussian vector with parameter $\sigma_2$, and we have $m$ i.i.d. samples from $\bY$ as $\{\bY^{(i)}\}_{i=1}^{m}$. Consider the cross--covariance matrix $\bC_{\bX\bY}\in\R^{d_1\times d_2}$ as $\bC_{\bX\bY}=\E[\bX\bY^\top]$ and assume that we use the estimator $\bCt_{\bX\bY}=\frac{1}{m}\sum\limits_{i=1}^{m}\bX^{(i)}\bY^{(i)\top}$. Then we have: 
    \begin{equation*}
        \begin{split}
            \Pp\Big[\norm[1]{\bCt_{\bX\bY}-\bC_{\bX\bY}}_{\op}&\geq 10\sigma_1\sigma_2t\Big]\\
            &\leq  (9)^{d_1+d_2} \exp\Big(-m.\min\big\{t,t^2\big\}\Big),
        \end{split}
    \end{equation*}
    and:
    \begin{equation*}
        \begin{split}
            \Pp\Big[\norm[1]{\bCt_{\bX\bY}}_{\op}&\geq 11\sigma_1\sigma_2\Big]\\
            &\leq \min\left\{1, \exp\big(3(d_1+d_2)-m\big)\right\}.
        \end{split}
    \end{equation*}
\end{lem}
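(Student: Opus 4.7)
The plan is a standard $\varepsilon$-net covering argument, applied to the matrix $\bCt_{\bX\bY} - \bC_{\bX\bY}$, together with the scalar concentration inequality already at hand (Corollary~\ref{cor:Hoeffding_multiple_sub_Gaussian}).

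First, I would reduce control of the operator norm to a finite maximum. Fix $\varepsilon=1/4$ and let $\cN^{(d_1)}\subseteq \cS^{d_1-1}$, $\cN^{(d_2)}\subseteq \cS^{d_2-1}$ be $\varepsilon$-covering sets of the unit spheres; by the standard volumetric bound these can be chosen with $|\cN^{(d_k)}|\le (1+2/\varepsilon)^{d_k}= 9^{d_k}$. Lemma~\ref{lem:norm_upperbound_covering} then gives
\[
  \norm[1]{\bCt_{\bX\bY}-\bC_{\bX\bY}}_\op \le 2\max_{\bu\in\cN^{(d_1)},\,\bv\in\cN^{(d_2)}}\bigl|\bu^\top(\bCt_{\bX\bY}-\bC_{\bX\bY})\bv\bigr|.
\]

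Next, I would control each quadratic form on the net. For fixed unit vectors $\bu,\bv$, write
\[
  \bu^\top(\bCt_{\bX\bY}-\bC_{\bX\bY})\bv=\frac{1}{m}\sum_{i=1}^m\Bigl[(\bu^\top\bX^{(i)})(\bv^\top\bY^{(i)})-\E[(\bu^\top\bX)(\bv^\top\bY)]\Bigr].
\]
By Definition~\ref{def:sub_gaussian_random_vector}, the random variable $\bu^\top\bX^{(i)}$ is $\sigma_1$-sub-Gaussian and $\bv^\top\bY^{(i)}$ is $\sigma_2$-sub-Gaussian, so Corollary~\ref{cor:Hoeffding_multiple_sub_Gaussian} (applied twice, to handle both signs) yields a two-sided sub-exponential tail bound of the form $\Pp[\,|\cdot|\ge 10\sigma_1\sigma_2 t\,]\le 2\exp(-m\min\{t,t^2\})$. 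Combining this with the covering inequality via a union bound over $\cN^{(d_1)}\times\cN^{(d_2)}$ gives the first claim, with the factors of $2$ absorbed into the net-size constant $9^{d_1+d_2}$ (this is where a little bookkeeping on the multiplicative constants $10$, $2$, $1/(1-2\varepsilon)$ has to be done carefully).

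For the second inequality, I would first note that $\bC_{\bX\bY}$ itself satisfies $\|\bC_{\bX\bY}\|_\op\le\sigma_1\sigma_2$: for any unit $\bu,\bv$, Cauchy--Schwarz gives $|\bu^\top\bC_{\bX\bY}\bv|=|\E[(\bu^\top\bX)(\bv^\top\bY)]|\le \sqrt{\E[(\bu^\top\bX)^2]\E[(\bv^\top\bY)^2]}\le\sigma_1\sigma_2$, since a zero-mean $\sigma$-sub-Gaussian scalar has variance at most $\sigma^2$. Setting $t=1$ in the first inequality and invoking the triangle inequality yields $\|\bCt_{\bX\bY}\|_\op\le 11\sigma_1\sigma_2$ off an event of probability at most $9^{d_1+d_2}\exp(-m)$; since $\ln 9<3$, this is bounded by $\exp(3(d_1+d_2)-m)$, and the trivial bound by $1$ handles the regime in which this exceeds one.

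The main obstacle is purely bookkeeping: aligning the factor $2$ from the covering inequality with the scale parameter $10\sigma_1\sigma_2$ inside Corollary~\ref{cor:Hoeffding_multiple_sub_Gaussian} while keeping the exponent in the clean form $\min\{t,t^2\}$. Conceptually there is no surprise—everything follows from the scalar sub-exponential tail of the centered product of two sub-Gaussians already established in Lemma~\ref{lem:multiple_of_sub_Gaussian_Gamma} and its corollary.
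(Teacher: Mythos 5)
Your proposal is correct and follows essentially the same route as the paper's proof: a $\tfrac{1}{4}$-net on each unit sphere via Lemma~\ref{lem:norm_upperbound_covering}, a union bound over the $9^{d_1+d_2}$ net pairs, the sub-Gamma tail of the centered product from Lemma~\ref{lem:multiple_of_sub_Gaussian_Gamma} and Corollary~\ref{cor:Hoeffding_multiple_sub_Gaussian} for each quadratic form, and then the triangle inequality with $\norm[1]{\bC_{\bX\bY}}_{\op}\le\sigma_1\sigma_2$ and $t=1$ for the second bound. The only cosmetic difference is that the paper uses the one-sided form of the covering lemma (so no extra factor of $2$ needs to be absorbed into $9^{d_1+d_2}$), which resolves the bookkeeping point you flag; otherwise the arguments coincide.
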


\begin{proof}
    We use Lemma \ref{lem:norm_upperbound_covering} with $\epsilon=\frac{1}{4}$ and write:
    \begin{equation}\label{eq:norm_Ch_upperbound1}
        \begin{split}
            \Pp\left[\norm[1]{\bCt_{\bX\bY}-\bC_{\bX\bY}}_{\op}\geq t\right] &\leq \Pp\bigg[\max\limits_{\bu\in\cN_{1/4}^{(d_1)},\bv\in\cN_{1/4}^{(d_2)}}\bu^{\top} (\bCt_{\bX\bY}-\bC_{\bX\bY}) \bv\geq \frac{t}{2}\bigg]\\
            &\leq \sum\limits_{j=1}^{|\cN_{1/4}^{(d_1)}|} \sum\limits_{k=1}^{|\cN_{1/4}^{(d_2)}|} \Pp\bigg[\bu^{(j)\top} (\bCt_{\bX\bY}-\bC_{\bX\bY}) \bv^{(k)}\geq \frac{t}{2}\bigg],
        \end{split}
    \end{equation}
    where we denote the $1/4$--covering points of $\cS^{d_1-1}$ by $\{\bu^{(j)}\}_{j=1}^{|\cN_{1/4}^{(d_1)}|}$ and the $1/4$--covering points of $\cS^{d_2-1}$ by $\{\bv^{(k)}\}_{k=1}^{|\cN_{1/4}^{(d_2)}|}$. We also know from \citep[~Corollary 4.2.13]{vershynin2018high} that $\cN_{1/4}^{(d)}\leq 9^d$.

    We have: 
    \begin{equation}\label{eq:norm_Ch_upperbound2}
        \begin{split}
            &\Pp\bigg[\bu^{(j)\top} (\bCt_{\bX\bY}-\bC_{\bX\bY}) \bv^{(k)}\geq \frac{t}{2}\bigg] = \Pp\bigg[\bu^{(j)\top} (\frac{1}{m}\sum\limits_{i=1}^{m}\bX^{(i)}\bY^{(i)\top}-\E[\bX\bY^{\top}]) \bv^{(k)}\geq \frac{t}{2}\bigg]\\
            &= \Pp\bigg[\frac{1}{m} \sum\limits_{i=1}^{m}(\bu^{(j)\top}\bX^{(i)})(\bv^{(k)\top}\bY^{(i)})-\E\Big[(\bu^{(j)\top}\bX^{(i)})(\bv^{(k)\top}\bY^{(i)})\Big]\geq \frac{t}{2}\bigg].
        \end{split}
    \end{equation}
    We know that $\bX^{(i)}$ is a $\sigma_1$--sub--Gaussian vector, therefore, from Definition \ref{def:sub_gaussian_random_vector}, we conclude that $U_i=\bu^{(j)\top}\bX^{(i)}$ is a $\sigma_1$--sub--Gaussian random variable. Similarly we conclude that $V_i=\bv^{(k)\top}\bY^{(i)}$ is a $\sigma_2$--sub--Gaussian random variable.  Therefore, from Lemma \ref{lem:multiple_of_sub_Gaussian_Gamma}, $U_iV_i-\E[U_iV_i]$ is a $(\sigma=5\sigma_1\sigma_2,\alpha=2.5\sigma_1\sigma_2)$--sub--Gamma random variable. Corollary \ref{cor:Hoeffding_multiple_sub_Gaussian} yields:
    \begin{equation}\label{eq:norm_Ch_upperbound3}
        \begin{split}
            &\Pp\bigg[\bu^{(j)\top} (\bCt_{\bX\bY}-\bC_{\bX\bY}) \bv^{(k)}\geq \frac{t}{2}\bigg]\\
            &= \Pp\bigg[\frac{1}{m} \sum\limits_{i=1}^{m}(\bu^{(j)\top}\bX^{(i)})(\bv^{(k)\top}\bY^{(i)})-\E\Big[(\bu^{(j)\top}\bX^{(i)})(\bv^{(k)\top}\bY^{(i)})\Big]\geq \frac{t}{2}\bigg]\\
            &= \Pp\bigg[\frac{1}{m} \sum\limits_{i=1}^{m}(U_iV_i-\E[U_iV_i])\geq \frac{t}{2}\bigg]\\
            &\leq \exp\bigg(-m.\min\Big\{\frac{t}{10\sigma_1\sigma_2},\big(\frac{t}{10\sigma_1\sigma_2}\big)^2\Big\}\bigg).
        \end{split}
    \end{equation}
    Therefore we combine \eqref{eq:norm_Ch_upperbound1}, \eqref{eq:norm_Ch_upperbound2}, and \eqref{eq:norm_Ch_upperbound3} and write:
    \begin{equation}
        \begin{split}
            \Pp\left[\norm[1]{\bCt_{\bX\bY}-\bC_{\bX\bY}}_{\op}\geq 10\sigma_1\sigma_2t\right] &\leq \sum\limits_{j=1}^{|\cN_{1/4}^{(d_1)}|} \sum\limits_{k=1}^{|\cN_{1/4}^{(d_2)}|} \Pp\bigg[\bu^{(j)\top} (\bCt_{\bX\bY}-\bC_{\bX\bY}) \bv^{(k)}\geq 5\sigma_1\sigma_2 t\bigg]\\
            &\leq (9)^{d_1+d_2} \exp\bigg(-m.\min\Big\{t,t^2\Big\}\bigg).
        \end{split}
    \end{equation}
    Thus:
    \begin{equation}
        \begin{split}
            \Pp\left[\norm[1]{\bCt_{\bX\bY}}_{\op}\geq 11\sigma_1\sigma_2\right] &\leq \Pp\left[\norm[1]{\bCt_{\bX\bY}-\bC_{\bX\bY}}_{\op} + \norm[1]{\bC_{\bX\bY}}_{\op}\geq 11\sigma_1\sigma_2\right]\\
            &\leq \Pp\left[\norm[1]{\bCt_{\bX\bY}-\bC_{\bX\bY}}_{\op}\geq 10\sigma_1\sigma_2\right]\\
            &\leq \min\left\{1, \exp\big((d_1+d_2)\ln(9)-m\big)\right\}\\
            &\leq \min\left\{1, \exp\big(3(d_1+d_2)-m\big)\right\}.
        \end{split}
    \end{equation}
\end{proof}

We have the following proposition, directly from Lemma \ref{lem:norm_upperbound_selfcovar}.

\begin{prop}\label{prop:norm_upperbound_selfcovar}
    Assume that $\bX\in\R^{d_1}$ is a zero mean, $\sigma_1^2$--sub--Gaussian random vector, and we have $m$ i.i.d. samples from $\bX$ as $\{\bX^{(i)}\}_{i=1}^{m}$. Also assume that $\bY\in\R^{d_2}$ is a zero mean, $\sigma_2^2$--sub--Gaussian random vector, and we have $m$ i.i.d. samples from $\bY$ as $\{\bY^{(i)}\}_{i=1}^{m}$. Consider the cross--covariance matrix $\bC_{\bX\bY}\in\R^{d_1\times d_2}$ as $\bC_{\bX\bY}=\E[\bX\bY^\top]$ and assume that we use the estimator $\bCt_{\bX\bY}=\frac{1}{m}\sum\limits_{i=1}^{m}\bX^{(i)}\bY^{(i)\top}$. Then we have: 
    $$
    \E\Big[\norm[1]{\bCt_{\bX\bY}-\bC_{\bX\bY}}_{\op}\Big] \leq 32\sigma_1\sigma_2 \max\left\{ \sqrt{\frac{d_1+d_2}{m}},
    \frac{d_1+d_2}{m}
    \right\}.
    $$
\end{prop}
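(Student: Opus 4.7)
The plan is to integrate the tail bound from Lemma \ref{lem:norm_upperbound_selfcovar}. Let $D=d_1+d_2$, and set $Y=\norm{\bCt_{\bX\bY}-\bC_{\bX\bY}}_\op/(10\sigma_1\sigma_2)$, so that Lemma \ref{lem:norm_upperbound_selfcovar} yields $\Pp[Y\ge t]\le \min\{1,\,9^D e^{-m\min\{t,t^2\}}\}$. Using $\E[Y]=\int_0^\infty \Pp[Y\ge t]\,dt$, I would split the integration domain at the threshold $t_0$ at which the exponential tail bound first becomes less than $1$, i.e.\ where $9^D e^{-m\min\{t_0,t_0^2\}}=1$, and also at $t=1$ where the exponent changes from quadratic to linear.

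First I would treat the \emph{small-deviation regime} $D\ln 9\le m$, where $t_0=\sqrt{D\ln 9/m}\le 1$. Bounding trivially by $1$ on $[0,t_0]$ and using the quadratic exponent on $[t_0,1]$ gives
\[
\int_{t_0}^{1} 9^D e^{-mt^2}\,dt \;=\; \int_{t_0}^{1} e^{-m(t^2-t_0^2)}\,dt \;\le\; \int_0^\infty e^{-2m t_0 u}\,du \;=\; \frac{1}{2m t_0},
\]
where I used $t^2-t_0^2\ge 2t_0(t-t_0)$ for $t\ge t_0$. On $[1,\infty)$ the linear exponent yields $\int_1^\infty 9^D e^{-mt}\,dt\le 1/m$ (since $9^D e^{-m}\le 1$). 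Collecting terms,
\[
\E[Y]\le t_0+\frac{1}{2mt_0}+\frac{1}{m}\;\le\; C_1\sqrt{D/m}
\]
for an explicit constant $C_1$ that I will verify is at most $3.2$.

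Next, in the \emph{large-deviation regime} $D\ln 9>m$, the threshold $t_0=D\ln 9/m>1$ lies in the linear part, so I bound trivially by $1$ on $[0,t_0]$ and compute $\int_{t_0}^\infty 9^D e^{-mt}\,dt=1/m$, giving $\E[Y]\le t_0+1/m\le C_2\,D/m$ with $C_2$ again explicit and bounded by $3.2$. Multiplying back through by $10\sigma_1\sigma_2$ absorbs both cases into the single bound $32\sigma_1\sigma_2\max\{\sqrt{D/m},D/m\}$.

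The only real work is keeping the constants sharp enough to land at $32$; conceptually the argument is a standard ``integrate the Bernstein-type tail'' calculation. I anticipate the tightest point to be the small-deviation case, where one must carefully exploit the transition at $t=t_0$ (so that the covering factor $9^D$ is canceled by the exponential at the threshold) rather than bounding the exponential uniformly. No new probabilistic input beyond Lemma \ref{lem:norm_upperbound_selfcovar} is needed.
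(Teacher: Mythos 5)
Your argument is correct, but it follows a different route from the paper. The paper does not integrate the tail bound of Lemma \ref{lem:norm_upperbound_selfcovar}; instead it restarts from the $\tfrac14$--covering reduction (Lemma \ref{lem:norm_upperbound_covering}), observes that each variable $Z_{\bu,\bv}$ is $\bigl(\tfrac{5\sigma_1\sigma_2}{\sqrt m},\tfrac{2.5\sigma_1\sigma_2}{m}\bigr)$--sub--Gamma, and then applies the expectation--level maximal inequality for sub--Gamma variables (Lemma \ref{le:max-Gamma}) to the at most $9^{d_1+d_2}$ net points, yielding $10\sigma_1\sigma_2\sqrt{2(d_1+d_2)\ln 9/m}+5\sigma_1\sigma_2(d_1+d_2)\ln 9/m\le 32\sigma_1\sigma_2\max\{\sqrt{(d_1+d_2)/m},(d_1+d_2)/m\}$ with no case analysis. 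Your tail--integration argument is a legitimate alternative: it reuses the already--proved tail bound rather than the maximal inequality, at the cost of the two--regime split (threshold $t_0$ where $9^{D}$ is cancelled, and the quadratic/linear crossover at $t=1$) and of more delicate constant bookkeeping. Your constants do land: in the small--deviation regime your three terms give roughly $\sqrt{\ln 9}+\tfrac{1}{2\sqrt{\ln 9}\,D}+\tfrac{1}{\sqrt{\ln 9}\,D}\approx 2$, and in the large--deviation regime $t_0+\tfrac1m\le(\ln 9+\tfrac1D)\tfrac{D}{m}$, so after multiplying by $10\sigma_1\sigma_2$ you get at most about $27$ (using $D=d_1+d_2\ge 2$; even the crude bound $\tfrac1m\le\tfrac{D}{m}$ gives $10(\ln 9+1)\approx 31.97<32$, though only barely). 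Both proofs ultimately rest on the same covering--plus--sub--Gamma concentration input; the paper's route buys a shorter computation with slack in the constants, yours buys economy of lemmas and a more elementary ``integrate the Bernstein tail'' structure, so no gap.
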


\begin{proof}
Lemma \ref{lem:norm_upperbound_covering} with $\epsilon=\frac{1}{4}$ implies that:
\begin{equation}
\begin{aligned}
 \E\left[\norm[1]{\bCt_{\bX\bY}-\bC_{\bX\bY}}_{\op}\right] &\leq 2 \E\left[\max\limits_{\bu\in\cN_{1/4}^{(d_1)},\bv\in\cN_{1/4}^{(d_2)}}\bu^{\top} (\bCt_{\bX\bY}-\bC_{\bX\bY})\bv\right]\\
 &= 2 \E\left[\max\limits_{\bu\in\cN_{1/4}^{(d_1)},\bv\in\cN_{1/4}^{(d_2)}}
 \frac{1}{m} \sum\limits_{i=1}^{m}\left\{(\bu^{\top}\bX^{(i)})(\bv^{\top}\bY^{(i)})-\E\Big[(\bu^{\top}\bX^{(i)})(\bv^{\top}\bY^{(i)})\Big]\right\}
 \right].
\end{aligned}
\end{equation}
Let: 
\[
Z_{\bu,\bv}=\frac{1}{m} \sum\limits_{i=1}^{m}\left\{(\bu^{\top}\bX^{(i)})(\bv^{\top}\bY^{(i)})-\E\Big[(\bu^{\top}\bX^{(i)})(\bv^{\top}\bY^{(i)})\Big]\right\}.
\]
Using similar reasoning to the one used in establishing \eqref{eq:norm_Ch_upperbound3}, we conclude that $Z_{\bu,\bv}$ is a $(\frac{5\sigma_1\sigma_2}{\sqrt{m}},\frac{2.5\sigma_1\sigma_2}{m})$--sub--Gamma random variable. Now, we invoke Lemma \ref{le:max-Gamma}  to obtain:
\begin{equation}
\begin{aligned}
 \E\left[\norm[1]{\bCt_{\bX\bY}-\bC_{\bX\bY}}_{\op}\right] &\leq 2 \E\left[\max\limits_{\bu\in\cN_{1/4}^{(d_1)},\bv\in\cN_{1/4}^{(d_2)}}
 Z_{\bu,\bv}
 \right] 
 \\
 &\leq
 10\sigma_1\sigma_2\sqrt{\frac{2\ln\big( |\cN_{1/4}^{(d_1)}|.|\cN_{1/4}^{(d_2)}|\big)}{m}}+5\sigma_1\sigma_2 \frac{\ln\big( |\cN_{1/4}^{(d_1)}|.|\cN_{1/4}^{(d_2)}|\big)}{m}
 \\
 &\leq
 10\sigma_1\sigma_2\sqrt{\frac{2(d_1+d_2)\ln(9)}{m}}+5\sigma_1\sigma_2 \frac{(d_1+d_2)\ln(9)}{m}
 \\
 &\leq 32\sigma_1\sigma_2\max\left\{\sqrt{\frac{d_1+d_2}{m}},\frac{d_1+d_2}{m}\right\}
\end{aligned}
\end{equation}
\end{proof}

\begin{lem}\label{lem:norm_upperbound_crosscovar}
    Let $\bA$ be a $d\times n$ random matrix whose columns $\bA_i$ are independent, mean zero, $\sigma$--sub--Gaussian random vectors, then we have:
    \[\Pp\left[\norm{\bA}_{\op}\geq 6\sigma\sqrt{d+n}\right]\leq\exp\big(-2(d+n)\big).\]
    Moreover, for $q\in\{1,2\}$ the following inequality holds:
    \begin{equation}\label{eqn:expected-powe2-op}\E\left[\norm{\bA}_{\op}^q\right]\leq C_q\sigma^q(d+n)^{q/2},\end{equation}
    for some universal constant $C_q$ depending only on $q$.
\end{lem}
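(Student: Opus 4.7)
}

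The plan is to run the standard covering-net argument for operator norms of random matrices with sub--Gaussian columns, then convert the tail bound into the moment bound by integrating the tail. Concretely, I would fix $\epsilon=1/4$ and invoke Lemma \ref{lem:norm_upperbound_covering}, so that
\[
\norm{\bA}_{\op}\;\leq\; 2\,\max_{\bu\in\cN_{1/4}^{(d)},\,\bv\in\cN_{1/4}^{(n)}} \bu^\top\bA\bv,
\]
with $|\cN_{1/4}^{(d)}|\leq 9^d$ and $|\cN_{1/4}^{(n)}|\leq 9^n$ by the standard volumetric bound cited in the previous section.

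Next, for each fixed unit vectors $\bu\in \cS^{d-1}$ and $\bv\in \cS^{n-1}$, I would write
\[
\bu^\top\bA\bv \;=\; \sum_{i=1}^n v_i\,(\bu^\top\bA_i).
\]
Since the columns $\bA_i$ are independent, centered, and $\sigma$--sub--Gaussian, Definition \ref{def:sub_gaussian_random_vector} gives that each $\bu^\top\bA_i$ is a centered $\sigma$--sub--Gaussian scalar, so $v_i(\bu^\top\bA_i)$ is centered and $\sigma|v_i|$--sub--Gaussian. By Lemma \ref{lem:sum_independent_sub_Gaussian&Gamma}, the sum is $\sigma\sqrt{\sum_i v_i^2}=\sigma$--sub--Gaussian, and hence
\[
\Pp\bigl[\,\bu^\top\bA\bv \geq s\,\bigr] \;\leq\; \exp\!\bigl(-s^2/(2\sigma^2)\bigr).
\]
A union bound over the nets and the factor $2$ from Lemma \ref{lem:norm_upperbound_covering} then give, for every $t>0$,
\[
\Pp\bigl[\,\norm{\bA}_{\op}\geq t\,\bigr] \;\leq\; 9^{d+n}\,\exp\!\bigl(-t^2/(8\sigma^2)\bigr).
\]
Choosing $t=6\sigma\sqrt{d+n}$ yields $t^2/(8\sigma^2)=\tfrac{9}{2}(d+n)$, while $9^{d+n}=\exp(\ln 9\cdot(d+n))\leq\exp(2.2(d+n))$, so the exponent is at most $-2.3(d+n)\leq -2(d+n)$, proving the claimed tail bound.

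Finally, for the moment bound I would integrate the tail,
\[
\E\bigl[\norm{\bA}_{\op}^q\bigr] \;=\; \int_0^\infty q\,t^{q-1}\,\Pp\bigl[\norm{\bA}_{\op}\geq t\bigr]\,dt,
\]
splitting at $t_0:=6\sigma\sqrt{d+n}$. The integral over $[0,t_0]$ contributes at most $t_0^q=6^q\sigma^q(d+n)^{q/2}$. For the tail $t\geq t_0$, I would substitute $u=t/\sigma-6\sqrt{d+n}\geq 0$ and use that on this range $t^2/(8\sigma^2)-(d+n)\ln 9\geq \tfrac{1}{16}(t/\sigma)^2+\alpha(d+n)$ for some positive constant $\alpha$, which turns the tail integral into a Gaussian-type integral of size $\cO(\sigma^q(d+n)^{q/2})$ for $q\in\{1,2\}$. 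The honest bookkeeping here is the only mildly tedious step; I do not expect a real obstacle, since the same program has been executed (with different constants) for the self-covariance case in Lemma \ref{lem:norm_upperbound_selfcovar} and Proposition \ref{prop:norm_upperbound_selfcovar}. Combining the two pieces yields a constant $C_q$ depending only on $q$, as asserted.
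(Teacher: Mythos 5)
Your proposal is correct and follows essentially the same route as the paper: a $1/4$-net discretization via Lemma \ref{lem:norm_upperbound_covering}, the observation that $\bu^\top\bA\bv=\sum_i v_i(\bu^\top\bA_i)$ is $\sigma$--sub--Gaussian by Lemma \ref{lem:sum_independent_sub_Gaussian&Gamma}, a union bound giving $9^{d+n}\exp(-t^2/(8\sigma^2))$, and then integration of the tail (the paper splits the integral at $2\sigma\sqrt{2\ln 9\,(d+n)}$ rather than at $6\sigma\sqrt{d+n}$, but this is only a bookkeeping difference). Your tail-absorption estimate $t^2/(8\sigma^2)-(d+n)\ln 9\geq t^2/(16\sigma^2)+\alpha(d+n)$ for $t\geq 6\sigma\sqrt{d+n}$ is valid, so the moment bound goes through as claimed.
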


\begin{proof}
    We  use Lemma \ref{lem:norm_upperbound_covering} with $\epsilon=\frac{1}{4}$ and write:
    \begin{equation}\label{eq:norm_upperbound1}
        \begin{split}
            \Pp\left[\norm{\bA}_{\op}\geq t\right] &\leq \Pp\bigg[\max\limits_{\bu\in\cN_{1/4}^{(d)},\bv\in\cN_{1/4}^{(n)}}\bu^{\top} \bA \bv\geq \frac{t}{2}\bigg]\\
            &\leq \sum\limits_{i=1}^{|\cN_{1/4}^{(d)}|}\sum\limits_{j=1}^{|\cN_{1/4}^{(n)}|} \Pp\bigg[\bu^{(i)\top} \bA \bv^{(j)}\geq \frac{t}{2}\bigg],
        \end{split}
    \end{equation}
    where we denote the $1/4$--covering points of $\cS^{d-1}$ by $\{\bu^{(i)}\}_{i=1}^{|\cN_{1/4}^{(d)}|}$ and $1/4$--covering points of $\cS^{n-1}$ by $\{\bv^{(j)}\}_{j=1}^{|\cN_{1/4}^{(n)}|}$. 
    
    We  rewrite $\bu^{(i)\top} \bA \bv^{(j)}$ as:
    \begin{equation}
        \begin{split}
            \bu^{(i)\top} \bA \bv^{(j)} &= \sum\limits_{k=1}^{n} v^{(j)}_k \bu^{(i)\top} \bA_k,
        \end{split}
    \end{equation}
    where $v^{(j)}_k$ is the $k$--th element of $\bv^{(j)}$. Therefore, from Definition \ref{def:sub_gaussian_random_vector} and Lemma \ref{lem:sum_independent_sub_Gaussian&Gamma}, $\bu^{(i)\top} \bA \bv^{(j)}$ is a sub--Gaussian random variable with parameter $\sigma$. Therefore we  write:
    \begin{equation}
        \Pp\bigg[\bu^{(i)\top} \bA \bv^{(j)}\geq \frac{t}{2}\bigg] \leq \exp\Big(\frac{-t^2}{8\sigma^2}\Big).
    \end{equation}
    We know from \citep[~Lemma 5.7]{wainwright2019high} that:
    \begin{equation}\label{eq:norm_upperbound2}
        \cN_{1/4}^{(d)}\leq 9^d,\qquad \cN_{1/4}^{(n)}\leq 9^n.
    \end{equation}
    Then from \eqref{eq:norm_upperbound1}, we have:
    \begin{equation}\label{eqn:op-norm-pr-upper}
        \begin{split}
            \Pp\left[\norm{\bA}_{\op}\geq t\right] &\leq \sum\limits_{i=1}^{|\cN_{1/4}^{(d)}|}\sum\limits_{j=1}^{|\cN_{1/4}^{(n)}|} \Pp\bigg[\bu^{(i)\top} \bA \bv^{(j)}\geq \frac{t}{2}\bigg]\\
            &\leq  9^{n+d} \exp\Big(\frac{-t^2}{8\sigma^2}\Big).
        \end{split}
    \end{equation}
   Setting $t=6\sigma\sqrt{d+n}$ implies:
    \begin{equation}
        \begin{split}
            \Pp\left[\norm{\bA}_{\op}\geq 6\sigma\sqrt{d+n}\right] &\leq  9^{n+d} \exp\Big(\frac{-9(d+n)}{2}\Big)\\
            &\leq \exp\big(-2(n+d)\big).
        \end{split}
    \end{equation}
    The inequality \eqref{eqn:expected-powe2-op} is a straightforward consequence of \eqref{eqn:op-norm-pr-upper} and the integral representation of expectation. First we consider the case $q=1$:
    \begin{equation}
    \begin{aligned}
        \E\left[\norm{\bA}_{\op}\right] &= \int_{0}^{+\infty} \Pp\left[\norm{\bA}_{\op}\geq t\right] dt\\
        &\leq \int_{0}^{+\infty} \min\left\{1,9^{n+d} \exp\Big(\frac{-t^2}{8\sigma^2}\Big)\right\} dt\\
        &= \int_{0}^{2\sigma\sqrt{2\ln(9)(n+d)}} dt + \int_{2\sigma\sqrt{2\ln(9)(n+d)}}^{+\infty} 9^{n+d} \exp\Big(\frac{-t^2}{8\sigma^2}\Big) dt\\
        &\leq 2\sqrt{2\ln(9)} \sigma (n+d)^{1/2} + 1\\
        &\leq 9 \sigma (n+d)^{1/2}.
    \end{aligned}
    \end{equation}
    Now we consider the case $q=2$:
    \begin{equation}
    \begin{aligned}
        \E\left[\norm{\bA}_{\op}^2\right] &= \int_{0}^{+\infty} \Pp\left[\norm{\bA}_{\op}^2\geq u\right] du\\
        &= 2\int_{0}^{+\infty} t\Pp\left[\norm{\bA}_{\op}\geq t\right] dt\\
        &\leq 2\int_{0}^{+\infty} t\min\left\{1,9^{n+d} \exp\Big(\frac{-t^2}{8\sigma^2}\Big)\right\} dt\\
        &= 2\int_{0}^{2\sigma\sqrt{2\ln(9)(n+d)}} tdt + 2\int_{2\sigma\sqrt{2\ln(9)(n+d)}}^{+\infty} 9^{n+d} t \exp\Big(\frac{-t^2}{8\sigma^2}\Big) dt\\
        &= 8\ln(9)\sigma^2 (n+d) + 8\sigma^2 9^{n+d} \int_{\ln(9)(n+d)}^{+\infty} e^{-t'}dt'\\
        &= 8\ln(9)\sigma^2 (n+d) + 8\sigma^2\\
        &\leq 36\sigma^2 (n+d).
    \end{aligned}
    \end{equation}
\end{proof}
\section{Detailed Proof of Theorems \ref{thm:achievable_scheme} and \ref{thm:achievable_scheme-Fr}}
\label{app:proof_thm_achievable_scheme}
\subsection{Proof of Theorem \ref{thm:achievable_scheme}}
\begin{proof}
    We prove the existence of a scheme having distortion error less than $\varepsilon$ (under operator norm), with the following choices for the sample number and communication budgets:
    \begin{align}    
        m &\geq 2^{19}  \frac{d}{\tilde{\varepsilon}^2},\label{eqn:ach-con-1}\\
        B_k& \geq \frac{2^{18}\beta d_k d }{\tilde{\varepsilon}^2},\label{eqn:ach-con-3}\\
        n&=\frac{\frac{B_1}{d_1}\wedge\frac{B_2}{d_2}}{\beta}\bigwedge m,\label{eqn:ach-con-2}
    \end{align}
    where for brevity we set $\tilde{\varepsilon}=\frac{\varepsilon}{\sigma^2}\le 1$ and $\beta$ will be determined in the sequel. Also observe that \eqref{eqn:ach-con-1}--\eqref{eqn:ach-con-3} imply:
    \begin{equation}
        n\ge \frac{2^{18}d}{\tilde{\varepsilon}^2}.\label{eqn:ach-con-4}
    \end{equation}
    We define events $\Eb_{1,1}$, $\Eb_{2,1}$, $\Eb_{1,2}$, and $\Eb_{2,2}$ as "Receiving error from Agent 1, when $\norm[1]{\bCt_{\bX_1\bX_1}}_{\op}>11\sigma^2$.", "Receiving error from Agent 2, when $\norm[1]{\bCt_{\bX_2\bX_2}}_{\op}>11\sigma^2$.", "Receiving error from Agent 1, when $\norm{\bmsfX_1}_{\op}\geq 6\sigma\sqrt{d_1+n}$.", and "Receiving error from Agent 2, when $\norm{\bmsfX_2}_{\op}\geq 6\sigma\sqrt{d_2+n}$.", respectively.
    We also define event $\Eb$ as $\Eb=\Eb_{1,1}\vee\Eb_{2,1}\vee \Eb_{1,2}\vee \Eb_{2,2}$. We write:
    \begin{equation}\label{eq:achievable1}
    \begin{split}
        \E\left[\norm[1]{\bCh-\bC}_{\op}\right] &= \E\left[\norm[1]{\bCh-\bC}_{\op}\mid\Eb\right]\Pp\left[\Eb\right]+ \E\left[\norm[1]{\bCh-\bC}_{\op}\mid\Eb^{\msfc}\right]\Pp\left[\Eb^{\msfc}\right].
    \end{split}
    \end{equation}
    We find an upper bound for every term of  \eqref{eq:achievable1}. First, notice that when an error is received in central server, the central server returns $\bCh=\bzero$, therefore:
    \begin{equation}
    \begin{split}
        \E\left[\norm[1]{\bCh-\bC}_{\op}\mid\Eb\right] &= \E\left[\norm[1]{\bC}_{\op}\mid\Eb\right]\\
        &\leq \sigma^2.
    \end{split}
    \end{equation}
    From Lemma \ref{lem:norm_upperbound_selfcovar}, we have:
    \begin{equation}\notag
    \begin{split}
        &\Pp\left[\norm[1]{\bCt_{\bX_1\bX_1}}_{\op}\geq 11\sigma^2\right] \leq \min\left\{1,\exp\big(6d_1-m\big)\right\},\\
        &\Pp\left[\norm[1]{\bCt_{\bX_2\bX_2}}_{\op}\geq 11\sigma^2\right] \leq \min\left\{1,\exp\big(6d_2-m\big)\right\}.
    \end{split}
    \end{equation}
    Also Lemma \ref{lem:norm_upperbound_crosscovar} yields:
    \begin{equation}\notag
    \begin{split}
        &\Pp\left[\norm{\bmsfX_1}_{\op}\geq 6\sigma\sqrt{d_1+n}\right]\leq \exp\left(-2(d_1+n)\right),\\
        &\Pp\left[\norm{\bmsfX_2}_{\op}\geq 6\sigma\sqrt{d_2+n}\right]\leq \exp\left(-2(d_2+n)\right)
    \end{split}
    \end{equation}
    Therefore we can upper--bound $\Pp\left[\Eb\right]$:
    \begin{equation}\label{eq:error-event}
    \begin{split}
        \Pp\left[\Eb\right] &\leq \Pp[\Eb_{1,1}] + \Pp[\Eb_{2,1}] + \Pp[\Eb_{1,2}] + \Pp[\Eb_{2,2}]\\
        &= \Pp\left[\norm[1]{\bCt_{\bX_1\bX_1}}_{\op}\geq 11\sigma^2\right] + \Pp\left[\norm[1]{\bCt_{\bX_2\bX_2}}_{\op}\geq 11\sigma^2\right]\\
        &\quad+\Pp\left[\norm[1]{\bmsfX_1}_{\op}\geq 6\sigma\sqrt{d_1+n}\right]\\
        &\quad+\Pp\left[\norm[1]{\bmsfX_2}_{\op}\geq 6\sigma\sqrt{d_2+n}\right]\\
        &\leq \exp\big(6d_1-m\big) + \exp\big(6d_2-m\big)\\
        &\quad+ \exp\big(-2(d_1+n)\big) + \exp\big(-2(d_2+n)\big)\\
        &\leq 2\exp\big(6d-m\big) + 2\exp(-2(n+1))\\
        &<\frac{\tilde{\varepsilon}}{10000} 
    \end{split},
    \end{equation}
    where the last equation follows from the inequalities $\exp(6d-m)\le \exp(d(6-2^{19}\tilde{\varepsilon}^{-2}))\le \exp(-2^{18})\tilde{\varepsilon}$ and $\exp(-2n)\le \exp(-2^{19}d\tilde{\varepsilon}^{-2})\le  \exp(-2^{19})\tilde{\varepsilon}$. 
    Since $\tilde{\varepsilon}\le 1$, we have:
    \begin{equation}
        \Pp\left[\Eb^{\msfc}\right] \geq 0.9999.
    \end{equation}
    Now we find an upper bound for $\E\left[\norm[1]{\bCh-\bC}_{\op}\mid\Eb^{\msfc}\right]$:
    \begin{equation}\label{eq:Weyl-application}
    \begin{split}
        \E\left[\norm[1]{\bCh-\bC}_{\op}\mid\Eb^{\msfc}\right] &= \E\left[\norm[1]{\bCh^{\ast}_{+}-\bC}_{\op}\mid\Eb^{\msfc}\right]\\
        &\leq\E\left[\norm[1]{\bCh^{\ast}_{+} - \bCh^{\ast}}_{\op}\mid\Eb^{\msfc}\right] + \E\left[\norm[1]{\bCh^{\ast} - \bC}_{\op}\mid\Eb^{\msfc}\right]\\
        &=\E\left[\big|\lambda_{\min}(\bCh^{\ast})\big|\mathbbm{1}_{\{\lambda_{\min}(\bCh^{\ast})< 0\}}\mid\Eb^{\msfc}\right] + \E\left[\norm[1]{\bCh^{\ast} - \bC}_{\op}\mid\Eb^{\msfc}\right]\\
        &\leq\E\left[\big|\lambda_{\min}(\bCh^{\ast})-\lambda_{\min}(\bC)\big|\mathbbm{1}_{\{\lambda_{\min}(\bCh^{\ast})< 0\}}\mid\Eb^{\msfc}\right]\\
        &\quad+\E\left[\norm[1]{\bCh^{\ast} - \bC}_{\op}\mid\Eb^{\msfc}\right]\\
        &\overset{\text{(a)}}{\leq} 2\E\left[\norm[1]{\bCh^{\ast} - \bC}_{\op}\mid\Eb^{\msfc}\right],
    \end{split}
    \end{equation}
    where (a) is a consequence of Weyl's inequality \citep[~Section 4.3]{johnson1985matrix}. Also we have:
    \begin{equation}\label{eq:norm_traingle_inequality}
    \begin{split}
        \E\left[\norm[1]{\bCh^{\ast}-\bC}_{\op}\mid\Eb^{\msfc}\right] &= \E\left[\norm{\begin{bmatrix}
                        \bCh_{\bX_1\bX_1} & \frac{1}{n}\widehat{\bmsfX}_1\widehat{\bmsfX}_2^\top \\
                        \frac{1}{n}\widehat{\bmsfX}_2\widehat{\bmsfX}_1^\top & \bCh_{\bX_2\bX_2}
                    \end{bmatrix}
                    -\begin{bmatrix}
                        \bC_{\bX_1\bX_1} & \bC_{\bX_1\bX_2} \\
                        \bC_{\bX_1\bX_2}^\top & \bC_{\bX_2\bX_2}
                    \end{bmatrix}}_{\op}\mid\Eb^{\msfc}\right]\\
        &=\E\left[\norm{\begin{bmatrix}
                        \bCh_{\bX_1\bX_1} - \bC_{\bX_1\bX_1}& \frac{1}{n}\widehat{\bmsfX}_1\widehat{\bmsfX}_2^\top - \bC_{\bX_1\bX_2}\\
                        \frac{1}{n}\widehat{\bmsfX}_2\widehat{\bmsfX}_1^\top - \bC_{\bX_1\bX_2}^\top & \bCh_{\bX_2\bX_2} - \bC_{\bX_2\bX_2}
                    \end{bmatrix}
                    }_{\op}\mid\Eb^{\msfc}\right]\\
        &\leq \E\left[\norm[1]{\bCh_{\bX_1\bX_1}-\bC_{\bX_1\bX_1}}_{\op}\mid\Eb^{\msfc}\right] + \E\left[\norm[1]{\bCh_{\bX_2\bX_2} - \bC_{\bX_2\bX_2}}_{\op}\mid\Eb^{\msfc}\right]\\& \qquad+  \E\left[\norm[2]{\frac{1}{n}\widehat{\bmsfX}_1\widehat{\bmsfX}_2^\top - \bC_{\bX_1\bX_2}}_{\op}\mid\Eb^{\msfc}\right].
    \end{split}
    \end{equation}
    We use matrix quantization scheme defined in Appendix \ref{app:pack_cover_matrix} to quantize matrices $\bCt_{\bX_1\bX_1}$, $\bCt_{\bX_2\bX_2}$, $\bmsfX_1$, and $\bmsfX_2$. Therefore, we can use the relation between communication load and the resolution of this quantization, which is stated in Appendix \ref{app:pack_cover_matrix}.
    \begin{itemize}
        \item Quantization of $\bCt_{\bX_1\bX_1}\in\R^{d_1\times d_1}$:
        $r=11\sigma^2$, therefore:
        \begin{equation}\label{eq:quantization_upperbound1}
            d_1^2\log_2\left(\frac{33\sigma^2}{\epsilon_1'}\right)= B_1' = \frac{B_1}{2}\Rightarrow \epsilon_1' =  33\sigma^2\cdot 2^{\frac{-B_1}{2d_1^2}}.
        \end{equation}
        \item Quantization of $\bCt_{\bX_2\bX_2}\in\R^{d_2\times d_2}$:
        $r=11\sigma^2$, therefore:
        \begin{equation}\label{eq:quantization_upperbound2}
            d_2^2\log_2\left(\frac{33\sigma^2}{\epsilon_2'}\right)= B_2' = \frac{B_2}{2}\Rightarrow \epsilon_2' =  33\sigma^2 \cdot 2^{\frac{-B_2}{2d_2^2}}.
        \end{equation}
        \item Quantization of $\bmsfX_1\in\R^{d_1\times n}$:
        $r=6\sigma\sqrt{d_1+n}$, therefore:
        \begin{equation}\label{eq:quantization_upperbound3}
            n d_1\log_2\left(\frac{18\sigma\sqrt{d_1+n}}{\epsilon_1''}\right)= B_1'' = \frac{B_1}{2}\Rightarrow \epsilon_1'' =  18\sigma\sqrt{d_1+n} \cdot 2^{\frac{-B_1}{2n d_1}}.
        \end{equation}
        \item Quantization of $\bmsfX_2\in\R^{d_2\times n}$:
        $r=6\sigma\sqrt{d_2+n}$, therefore:
        \begin{equation}\label{eq:quantization_upperbound4}
            n d_2\log_2\left(\frac{18\sigma\sqrt{d_2+n}}{\epsilon_2''}\right)= B_2'' = \frac{B_2}{2}\Rightarrow \epsilon_2'' =  18\sigma\sqrt{d_2+n} \cdot 2^{\frac{-B_2}{2n d_2}}.
        \end{equation}
    \end{itemize}
    From Proposition \ref{prop:norm_upperbound_selfcovar} and the choice of $m$ in \eqref{eqn:ach-con-1}, we have (for $k=1,2$):
    \begin{equation}\label{eq:norm_main_upperbound1}
        \E\left[\norm[1]{\bCt_{\bX_k\bX_k}-\bC_{\bX_k\bX_k}}_{\op}\mid\Eb^{\msfc}\right]\leq\frac{\E\left[\norm[1]{\bCt_{\bX_k\bX_k}-\bC_{\bX_k\bX_k}}_{\op}\right]}{\mathbb{P}[\Eb^c]}\leq 32\sigma^2 \sqrt{\frac{2d_k}{m}} 
    \end{equation}
    We also have:
    \begin{equation}\label{eq:norm_main_upperbound2}
    \begin{split}
        \E\left[\norm{\frac{1}{n}\bmsfX_1\bmsfX_2^{\top}-\bC_{\bX_1\bX_2}}_{\op}\mid \Eb^c\right]&\leq \frac{\E\left[\norm{\frac{1}{n}\bmsfX_1\bmsfX_2^{\top}-\bC_{\bX_1\bX_2}}_{\op}\right]}{\mathbb{P}[\Eb^c]} \\&\leq 32\sigma^2 \max\left\{\sqrt{\frac{d_1+d_2}{n}},\frac{d_1+d_2}{n}\right\}\\
        &= 32\sigma^2 \sqrt{\frac{d}{n}}.
    \end{split}
    \end{equation}
    From \eqref{eq:norm_main_upperbound1} and \eqref{eq:quantization_upperbound1} we write:
    \begin{equation}\label{eq:norm_Cxx_upper_bound}
    \begin{split}
        \E\left[\norm[1]{\bCh_{\bX_1\bX_1}-\bC_{\bX_1\bX_1}}_{\op}\mid\Eb^{\msfc}\right] 
        &\leq \E\left[\norm[1]{\bCh_{\bX_1\bX_1}-\bCt_{\bX_1\bX_1}}_{\op}\mid\Eb^{\msfc}\right]\\&\qquad +\E\left[\norm[1]{\bCt_{\bX_1\bX_1}-\bC_{\bX_1\bX_1}}_{\op}\mid\Eb^{\msfc}\right]\\
        &\leq \epsilon_1' + 32\sigma^2 \sqrt{\frac{2d_1}{m}} \\
        &\stackrel{\text{(a)}}{<} 33\sigma^2 \cdot 2^{\frac{-B_1}{2d_1^2}} + 32\sigma^2 \sqrt{\frac{2 d}{m}} \\
        &\stackrel{\text{(b)}}{\le} \sigma^2\left(33 \cdot 2^{\frac{-2^{17}\beta d}{d_1\tilde{\varepsilon}^2}} +\frac{\tilde{\varepsilon}}{16}\right)\\
        &\stackrel{\text{(c)}}{\le} \sigma^2\left( 33 \cdot 2^{-2^{17}\beta } +\frac{\tilde{\varepsilon}}{16}\right) 
    \end{split}
    \end{equation}
    where (a) and (b) follow from \eqref{eqn:ach-con-1} and \eqref{eqn:ach-con-3} and (c) follows because $d_1<d$ and $\tilde{\varepsilon}\le 1$.
    
    Similarly we have:
    \begin{equation}\label{eq:norm_Cyy_upper_bound}
    \begin{split}
        \E\left[\norm[1]{\bCh_{\bX_2\bX_2}-\bC_{\bX_2\bX_2}}_{\op}\mid\Eb^{\msfc}\right] 
        &\le\sigma^2 \left(33\cdot  2^{-2^{17}\beta } +\frac{\tilde{\varepsilon}}{16}\right).
    \end{split}
    \end{equation}
    
    Next, we consider the estimation error of the \emph{cross--covariance} matrix.    From \eqref{eq:norm_main_upperbound2}, \eqref{eq:quantization_upperbound3}, and \eqref{eq:quantization_upperbound4} we write:
    \begin{equation}\label{eq:norm_Cxy_upper_bound}
    \begin{split}
        \E\left[\norm[2]{\frac{1}{n}\widehat{\bmsfX}_1\widehat{\bmsfX}_2^\top - \bC_{\bX_1\bX_2}}_{\op}\mid\Eb^{\msfc}\right] 
        &\leq \E\left[\norm[2]{\frac{1}{n}\widehat{\bmsfX}_1(\widehat{\bmsfX}_2-\bmsfX_2)^\top}_{\op}\mid\Eb^{\msfc}\right] + \E\left[\norm[2]{\frac{1}{n}(\widehat{\bmsfX}_1-\bmsfX_1)\bmsfX_2^\top}_{\op}\mid\Eb^{\msfc}\right]\\
        &\quad+ \E\left[\norm[2]{\frac{1}{n}\bmsfX_1\bmsfX_2^\top - \bC_{\bX_1\bX_2}}_{\op}\mid\Eb^{\msfc}\right]\\
        &\leq \frac{1}{n} \E\left[\norm[1]{\widehat{\bmsfX}_1}_{\op}\norm[1]{\widehat{\bmsfX}_2-\bmsfX_2}_{\op}\mid\Eb^{\msfc}\right] + \frac{1}{n} \E\left[\norm[1]{\bmsfX_2}_{\op}\norm[1]{\widehat{\bmsfX}_1-\bmsfX_1}_{\op}\mid\Eb^{\msfc}\right]\\
        &\quad+ \E\left[\norm[2]{\frac{1}{n}\bmsfX_1\bmsfX_2^\top - \bC_{\bX_1\bX_2}}_{\op}\mid\Eb^{\msfc}\right]\\
        &\leq \frac{6\sigma\sqrt{d_1+n}}{n}\epsilon_2'' + \frac{6\sigma\sqrt{d_2+n}}{n}\epsilon_1'' + 32\sigma^2 \sqrt{\frac{d}{n}} \\
        &= \frac{108\sigma^2\sqrt{(d_1+n)(d_2+n)}}{n}\Big(2^{\frac{-B_1}{2n d_1}} + 2^{\frac{-B_2}{2n d_2}}\Big) + 32\sigma^2 \sqrt{\frac{d}{n}} \\
        &\leq \frac{108\sigma^2\sqrt{(d_1+n)(d_2+n)}}{n}\Big(2^{\frac{-1}{2n}\min\{\frac{B_1}{d_1},\frac{B_2}{d_2}\}}\Big) + 32\sigma^2 \sqrt{\frac{d}{n}}.
    \end{split}
    \end{equation}
    The choice of $n$ in \eqref{eqn:ach-con-2} implies:    
    \begin{equation}
        2^{\frac{-1}{2n}\min\{\frac{B_1}{d_1},\frac{B_2}{d_2}\}} \leq 2^{-\frac{\beta}{2}}.
    \end{equation}
    Also \eqref{eqn:ach-con-4}  implies:
    \begin{equation}
        \frac{d_k+n}{n} < 1+\frac{d}{n}
        < 1+\tilde{\varepsilon}^2\le 2.
    \end{equation}
    Thus:
    \begin{equation}\label{eqn:d1d2n}
    \begin{split}
        \frac{\sqrt{(d_1+n)(d_2+n)}}{n} &= \sqrt{\frac{d_1+n}{n}.\frac{d_2+n}{n}}<2.
    \end{split}
    \end{equation}
    Next consider:
    \begin{equation}
    \begin{split}
        \sqrt{\frac{d}{n}} &= \sqrt{\frac{d\beta}{\frac{B_1}{d_1}\wedge\frac{B_2}{d_2}}\bigvee{\frac{d}{m}}}\\
        &=\sqrt{\frac{\beta dd_1}{B_1}\vee\frac{\beta dd_2}{B_2}\vee\frac{d}{m}}\\
        &<\frac{ \tilde{\varepsilon}}{512}.
    \end{split}
    \end{equation}
    In summary, we have:
    \begin{equation}
        \E\left[\norm[2]{\frac{1}{n}\widehat{\bmsfX}_1\widehat{\bmsfX}_2^\top - \bC_{\bX_1\bX_2}}_{\op}\mid \Eb^c\right] 
            \leq \sigma^2\left(432\cdot 2^{-\frac{\beta}{2}}+\frac{ \tilde{\varepsilon}}{16}\right).
    \end{equation}
    Choosing $\beta=2\log_2\frac{6912}{\tilde{\varepsilon}}$ yields:
    \begin{equation}
        \E\left[\norm[2]{\frac{1}{n}\widehat{\bmsfX}_1\widehat{\bmsfX}_2^\top - \bC_{\bX_1\bX_2}}_{\op}\mid \Eb^c\right] 
            \leq \frac{\sigma^2 \tilde{\varepsilon}}{8}=\frac{ {\varepsilon}}{8}.\label{eq:norm-CXY-final}
    \end{equation}
    Also substituting the value of $\beta$ in \eqref{eq:norm_Cxx_upper_bound} and \eqref{eq:norm_Cyy_upper_bound} implies:
    \begin{equation}\label{eq:norm-CKK}
        \E\left[\norm[1]{\bCh_{\bX_k\bX_k}-\bC_{\bX_k\bX_k}}_{\op}\mid \Eb^c\right] 
        \le\frac{ {\varepsilon}}{8},\qquad\qquad k=1,2.
    \end{equation}  
    Putting \eqref{eq:norm_traingle_inequality}, \eqref{eq:norm-CXY-final} and \eqref{eq:norm-CKK} together, gives:
    \begin{equation}
        \E\left[\norm[1]{\bCh^{\ast}-\bC}_{\op}\right]\le \frac{3\varepsilon}{8}.    
    \end{equation}
    
    Finally from \eqref{eq:Weyl-application} we can write:
    \begin{equation}
        \begin{split}
            \E\left[\norm[1]{\bCh-\bC}_{\op}\right] &= \E\left[\norm[1]{\bCh-\bC}_{\op}\mid\Eb\right]\Pp\left[\Eb\right]+ \E\left[\norm[1]{\bCh-\bC}_{\op}\mid\Eb^{\msfc}\right]\Pp\left[\Eb^{\msfc}\right]\\
            &\leq \sigma^2\Pp\left[\Eb\right] + 2\E\left[\norm[1]{\bCh^{\ast}-\bC}_{\op}\mid \Eb^c\right]\\
            &< \frac{{\varepsilon}}{10}+\frac{{3\varepsilon}}{4}<\varepsilon.
        \end{split}
    \end{equation}

    The proof of Theorem \ref{thm:achievable_scheme}     is completed.
\end{proof}
\subsection{Proof of Theorem \ref{thm:achievable_scheme-Fr}}
\begin{proof}
The proof of Theorem \ref{thm:achievable_scheme-Fr} is approximately the same as the proof of Theorem  \ref{thm:achievable_scheme} with some minor modifications. 

We prove the existence of a scheme having distortion error less than $\varepsilon$ (under Frobenius norm), with the following choices for the sample number and communication budgets:
\begin{align}    
    m &\geq 2^{19}  \frac{d}{\tilde{\varepsilon}^2},\label{eqn:ach-con-fr-1}\\
    B_k& \geq \frac{2^{18}\beta d_k d_{\min} }{\tilde{\varepsilon}^2}\bigvee 2d_k^2\log_2\left(\frac{528}{\tep}\right),\label{eqn:ach-con-fr-3}\\
    n&=\frac{\frac{B_1}{d_1}\wedge\frac{B_2}{d_2}}{\beta}\bigwedge m,\label{eqn:ach-con-fr-2}
\end{align}
where for brevity we set $\tilde{\varepsilon}=\frac{\varepsilon}{\sigma^2\sqrt{d}}\le 1$,
and $\beta$ will be determined in the sequel. Also observe that \eqref{eqn:ach-con-fr-1}--\eqref{eqn:ach-con-fr-3} imply:
\begin{equation}\label{eqn:ach-con-fr-4}
    n\ge \frac{2^{18}d_{\min}}{\tilde{\varepsilon}^2}.
\end{equation}
We use the same error events $(\Eb_{i,j}:i=1,2,j=1,2)$ as in the proof of Theorem \ref{thm:achievable_scheme}. Then similar calculations to \eqref{eq:error-event} shows the inequality $\mathbb{P}[\Eb]<\frac{\tep}{10000}$ still holds for new assignment of $m,n,B_k$. We consider two regimes for the distortion error $\varepsilon$ and specify the covariance matrix estimator $\bCh$ for each regime attaining the distortion error $\varepsilon$, separately.

{\bf Case I: Reasonable distortion error}. In this regime, the distortion error satisfies $\varepsilon<512\sigma^2\sqrt{d_{\min}}$. Here, we take the estimator $\bCh$ as in \eqref{eq:ach-estimator}. Further we have:
\begin{equation}\label{eq:achievable1-fr}
\begin{split}
    \E\left[\norm[1]{\bCh-\bC}_{\Fr}\right] &= \E\left[\norm[1]{\bCh-\bC}_{\Fr}\mid\Eb\right]\Pp\left[\Eb\right]+ \E\left[\norm[1]{\bCh-\bC}_{\Fr}\mid\Eb^{\msfc}\right]\Pp\left[\Eb^{\msfc}\right]\\
    &\leq \sigma^2\sqrt{d}\frac{\tep}{10000}+ \E\left[\norm[1]{\bCh-\bC}_{\Fr}\mid\Eb^{\msfc}\right]\Pp\left[\Eb^{\msfc}\right]\\
    &= \frac{\varepsilon}{10000}+ \E\left[\norm[1]{\bCh-\bC}_{\Fr}\mid\Eb^{\msfc}\right]\Pp\left[\Eb^{\msfc}\right].
\end{split}
\end{equation}
Next consider:
\begin{equation}\label{eq:convex-projection}
\begin{split}
    {\Pp\left[\Eb^{\msfc}\right]}\E\left[\norm[1]{\bCh-\bC}_{\Fr}\mid\Eb^{\msfc}\right] &\leq \E\left[\norm[1]{\bCh^{\ast}_{+}-\bC}_{\Fr}\mid\Eb^{\msfc}\right]\\
    &\overset{\text{(a)}}{\leq} \E\left[\norm[1]{\bCh^{\ast} - \bC}_{\Fr}\mid\Eb^{\msfc}\right],
\end{split}
\end{equation}
where (a) follows from the following inequalities:
\begin{align}
    \norm[1]{\bCh^{\ast} - \bC}_{\Fr}^2&=\Tr{\left(\bCh^{\ast} - \bC\right)^2}\notag\\
    &=\Tr{\left(\bCh^{\ast} - \bCh^{\ast}_+\right)^2}+\Tr{\left(\bCh^{\ast}_+ - \bC\right)^2}+2\Tr{\left(\bCh^{\ast} - \bCh^{\ast}_+\right)\left(\bCh^{\ast}_+ - \bC\right)}\notag\\
    &\geq \norm[1]{\bCh^{\ast}_+ - \bC}_{\Fr}^2+2\Tr{\left(\bCh^{\ast} - \bCh^{\ast}_+\right)\left(\bCh^{\ast}_+ - \bC\right)}\notag\\
    &= \norm[1]{\bCh^{\ast}_+ - \bC}_{\Fr}^2+2\Tr{\left( \bCh^{\ast}_+-  \bCh^{\ast}\right) \bC}\label{eq:matrix-decomposition-orthogonal}\\
    &\geq \norm[1]{\bCh^{\ast}_+ - \bC}_{\Fr}^2,\label{eq:trace-mul-positive}
    \end{align}
in which  \eqref{eq:matrix-decomposition-orthogonal} is true because the positive part $\bCh^{\ast}_+$ of $\bCh^{\ast}$ is orthogonal to the negative part $\bCh^{\ast}_-=\bCh^{\ast}-\bCh^{\ast}_+$ of it, and \eqref{eq:trace-mul-positive} is due to the fact that the trace of the multiplication of two positive semi--definite matrices is non--negative \cite[Exercise 12.14]{abadir2005matrix}.
    
Now, we use the following counterpart of \eqref{eq:norm_traingle_inequality}:
    
\begin{equation}\label{eq:norm_traingle_inequality-fr}
\begin{split}
    \E\left[\norm[1]{\bCh^{\ast}-\bC}_{\Fr}\mid\Eb^{\msfc}\right] &= \E\left[\norm{\begin{bmatrix}
                    \bCh_{\bX_1\bX_1} & \frac{1}{n}\widehat{\bmsfX}_1\widehat{\bmsfX}_2^\top \\
                    \frac{1}{n}\widehat{\bmsfX}_2\widehat{\bmsfX}_1^\top & \bCh_{\bX_2\bX_2}
                \end{bmatrix}
                -\begin{bmatrix}
                    \bC_{\bX_1\bX_1} & \bC_{\bX_1\bX_2} \\
                    \bC_{\bX_1\bX_2}^\top & \bC_{\bX_2\bX_2}
                \end{bmatrix}}_{\Fr}\mid\Eb^{\msfc}\right]\\
    &=\E\left[\norm{\begin{bmatrix}
                    \bCh_{\bX_1\bX_1} - \bC_{\bX_1\bX_1}& \frac{1}{n}\widehat{\bmsfX}_1\widehat{\bmsfX}_2^\top - \bC_{\bX_1\bX_2}\\
                    \frac{1}{n}\widehat{\bmsfX}_2\widehat{\bmsfX}_1^\top - \bC_{\bX_1\bX_2}^\top & \bCh_{\bX_2\bX_2} - \bC_{\bX_2\bX_2}
                \end{bmatrix}
                }_{\Fr}\mid\Eb^{\msfc}\right]\\
    &\leq \E\left[\norm[1]{\bCh_{\bX_1\bX_1}-\bC_{\bX_1\bX_1}}_{\Fr}\mid\Eb^{\msfc}\right] + \E\left[\norm[1]{\bCh_{\bX_2\bX_2} - \bC_{\bX_2\bX_2}}_{\Fr}\mid\Eb^{\msfc}\right]\\& \qquad+ \sqrt{2} \E\left[\norm[2]{\frac{1}{n}\widehat{\bmsfX}_1\widehat{\bmsfX}_2^\top - \bC_{\bX_1\bX_2}}_{\Fr}\mid\Eb^{\msfc}\right]
    \\
     &\leq \sqrt{d_1}\E\left[\norm[1]{\bCh_{\bX_1\bX_1}-\bC_{\bX_1\bX_1}}_{\op}\mid\Eb^{\msfc}\right] + \sqrt{d_2}\E\left[\norm[1]{\bCh_{\bX_2\bX_2} - \bC_{\bX_2\bX_2}}_{\op}\mid\Eb^{\msfc}\right]\\& \qquad+ \sqrt{2d_{\min}} \E\left[\norm[2]{\frac{1}{n}\widehat{\bmsfX}_1\widehat{\bmsfX}_2^\top - \bC_{\bX_1\bX_2}}_{\op}\mid\Eb^{\msfc}\right].
\end{split}
\end{equation}
Using the same inequalities \eqref{eq:quantization_upperbound1}--\eqref{eq:quantization_upperbound4} and \eqref{eq:norm_Cxx_upper_bound} with the new assignments \eqref{eqn:ach-con-fr-1}--\eqref{eqn:ach-con-fr-3} imply:
\begin{equation}\label{eq:CKK-fr}
\begin{aligned}
    \sqrt{d_k}\E\left[\norm[1]{\bCh_{\bX_k\bX_k}-\bC_{\bX_k\bX_k}}_{\op}\mid\Eb^{\msfc}\right]&\leq \sqrt{d_k} \sigma^2 \left(33 \cdot 2^{\frac{-B_k}{2d_k^2}} + 32 \sqrt{\frac{2 d}{m}}\right)\\
    &\leq \sqrt{d_k} \sigma^2 \left(33 \cdot 2^{\log_2\left(\frac{\tep}{528}\right)} + \frac{\tep}{16}\right)\\
    &\leq\frac{\sqrt{d}\sigma^2\tep}{8}<\frac{\varepsilon}{8}.
\end{aligned}
\end{equation}
Further \eqref{eq:norm_main_upperbound2} still holds in the reasonable distortion error regime $\varepsilon<512\sigma^2\sqrt{d_{\min}}$, because in this regime $d<n$, which is a consequence of \eqref{eqn:ach-con-fr-4}. Moreover the inequalities \eqref{eq:norm_Cxy_upper_bound}--\eqref{eqn:d1d2n} are still valid. The assignment \eqref{eqn:ach-con-fr-3} guaranties:
\begin{equation}
\begin{split}
    \sqrt{\frac{d}{n}} &= \sqrt{\frac{d\beta}{\frac{B_1}{d_1}\wedge\frac{B_2}{d_2}}\bigvee{\frac{d}{m}}}\\
    &=\sqrt{\frac{\beta dd_1}{B_1}\vee\frac{\beta dd_2}{B_2}\vee\frac{d}{m}}\\
    &<\frac{ \tilde{\varepsilon}}{512}\sqrt{\frac{d}{d_{\min}}}.
\end{split}
\end{equation}
In summary, we have:
\begin{equation}
    \E\left[\norm[2]{\frac{1}{n}\widehat{\bmsfX}_1\widehat{\bmsfX}_2^\top - \bC_{\bX_1\bX_2}}_{\op}\mid\Eb^{\msfc}\right] 
        \leq \sigma^2\left(432\cdot 2^{-\frac{\beta}{2}}+\frac{ \tilde{\varepsilon}}{16}\sqrt{\frac{d}{d_{\min}}}\right).
\end{equation}
Choosing $\beta=2\log_2\frac{6912\sigma^2\sqrt{d_{\min}}}{{\varepsilon}}$ yields:
\begin{equation}
   \sqrt{d_{\min}} \E\left[\norm[2]{\frac{1}{n}\widehat{\bmsfX}_1\widehat{\bmsfX}_2^\top - \bC_{\bX_1\bX_2}}_{\Fr}\mid\Eb^{\msfc}\right] 
       \leq \frac{ {\varepsilon}}{8}.\label{eq:norm-CXY-final-fr}
\end{equation}
Putting \eqref{eq:norm_traingle_inequality-fr}, \eqref{eq:CKK-fr} and \eqref{eq:norm-CXY-final-fr}  together, gives:
\begin{equation}
    \E\left[\norm[1]{\bCh^{\ast}-\bC}_{\Fr}\mid\Eb^{\msfc}\right]\leq \frac{\varepsilon}{2}.    
\end{equation}
Finally from \eqref{eq:achievable1-fr} we write:
\begin{equation}
\begin{split}
    \E\left[\norm[1]{\bCh-\bC}_{\Fr}\right] &= \E\left[\norm[1]{\bCh-\bC}_{\Fr}\mid\Eb\right]\Pp\left[\Eb\right]+ \E\left[\norm[1]{\bCh-\bC}_{\Fr}\mid\Eb^{\msfc}\right]\Pp\left[\Eb^{\msfc}\right]\\
    &\leq \sigma^2\sqrt{d}\Pp\left[\Eb\right] + \E\left[\norm[1]{\bCh^{\ast}-\bC}_{\Fr}\mid\Eb^{\msfc}\right]\\
    &<\varepsilon.
\end{split}
\end{equation}
This concludes the proof of Theorem \ref{thm:achievable_scheme-Fr} in the reasonable distortion error regime.
  
{\bf Case II: High distortion error}. In this regime, the distortion error satisfies $\varepsilon\geq 512\sigma^2\sqrt{d_{\min}}$. Here, we slightly modify the estimator $\bCh$ given in \eqref{eq:ach-estimator}. In this regime, we do not quantize the matrices $\bmsfX_1$ and $\bmsfX_2$. Instead,  agent $k$  devotes all  the communication budget $B_k$  for transmitting its self--covariance matrix estimator $\bCh_{\bX_k\bX_k}$. The central server simply returns:
\begin{equation}
  \bCh = \left[\begin{matrix}
    \bCh_{\bX_1\bX_1} & \bzero_{d_1\times d_2} \\
    \bzero_{d_2\times d_1} & \bCh_{\bX_2\bX_2}
\end{matrix}\right].
\end{equation}
In this case, error events $\Eb_{1,2}$ and $\Eb_{2,2}$ will never occur, because we don't aim to quantize matrices $\bmsfX_1$ and $\bmsfX_2$. Note that if we define $\Ebt$ as $\Ebt=\Eb_{1,1}\vee \Eb_{2,1}$, then $\Pp[\Ebt]\leq \Pp[\Eb]<\frac{\tep}{10000}$ and we can write:
\begin{equation}\label{eq:achievable1-fr-case2}
\begin{split}
    \E\left[\norm[1]{\bCh-\bC}_{\Fr}\right] &= \E\left[\norm[1]{\bCh-\bC}_{\Fr}\mid\Ebt\right]\Pp\left[\Ebt\right]+ \E\left[\norm[1]{\bCh-\bC}_{\Fr}\mid\Ebt^{\msfc}\right]\Pp\left[\Ebt^{\msfc}\right]\\
    &\leq \sigma^2\sqrt{d}\frac{\tep}{10000}+ \E\left[\norm[1]{\bCh-\bC}_{\Fr}\mid\Ebt^{\msfc}\right]\Pp\left[\Ebt^{\msfc}\right]\\
    &= \frac{\varepsilon}{10000}+ \E\left[\norm[1]{\bCh-\bC}_{\Fr}\mid\Ebt^{\msfc}\right]\Pp\left[\Ebt^{\msfc}\right].
\end{split}
\end{equation}
Now, we use the following counterpart of \eqref{eq:norm_traingle_inequality}:
\begin{equation}\label{eq:norm_traingle_inequality-fr-case2}
\begin{split}
    \E\left[\norm[1]{\bCh-\bC}_{\Fr}\mid\Ebt^{\msfc}\right] &= \E\left[\norm{\begin{bmatrix}
                    \bCh_{\bX_1\bX_1} & \bzero_{d_1\times d_2} \\ \bzero_{d_2\times d_1} & \bCh_{\bX_2\bX_2}
                \end{bmatrix}
                -\begin{bmatrix}
                    \bC_{\bX_1\bX_1} & \bC_{\bX_1\bX_2} \\
                    \bC_{\bX_1\bX_2}^\top & \bC_{\bX_2\bX_2}
                \end{bmatrix}}_{\Fr}\mid\Eb^{\msfc}\right]\\
    &=\E\left[\norm{\begin{bmatrix}
                    \bCh_{\bX_1\bX_1} - \bC_{\bX_1\bX_1}&  - \bC_{\bX_1\bX_2}\\
                    - \bC_{\bX_1\bX_2}^\top & \bCh_{\bX_2\bX_2} - \bC_{\bX_2\bX_2}
                \end{bmatrix}
                }_{\Fr}\mid\Eb^{\msfc}\right]\\
    &\leq \E\left[\norm[1]{\bCh_{\bX_1\bX_1}-\bC_{\bX_1\bX_1}}_{\Fr}\mid\Eb^{\msfc}\right] + \E\left[\norm[1]{\bCh_{\bX_2\bX_2} - \bC_{\bX_2\bX_2}}_{\Fr}\mid\Eb^{\msfc}\right]\\& \qquad+ \sqrt{2} \norm[1]{\bC_{\bX_1\bX_2}}_{\Fr}
    \\
    &\leq \sqrt{d_1}\E\left[\norm[1]{\bCh_{\bX_1\bX_1}-\bC_{\bX_1\bX_1}}_{\op}\mid\Eb^{\msfc}\right] + \sqrt{d_2}\E\left[\norm[1]{\bCh_{\bX_2\bX_2} - \bC_{\bX_2\bX_2}}_{\op}\mid\Eb^{\msfc}\right]\\& \qquad+ \sqrt{2d_{\min}} \norm[1]{\bC_{\bX_1\bX_2}}_{\op}.
\end{split}
\end{equation}
Using \eqref{eq:quantization_upperbound1}-- \eqref{eq:quantization_upperbound2} with $B_1' = B_1$,$B_2'=B_2$, values of $m,n,B_k$ in \eqref{eqn:ach-con-fr-1}--\eqref{eqn:ach-con-fr-2}, and $\beta=2\log_2(\frac{6912}{{\tep}})$, we have for $k=1,2$:
\begin{equation}\label{eq:norm_Cxx_upper_bound-Fr-case2}
\begin{split}
    \E\left[\norm[1]{\bCh_{\bX_k\bX_k}-\bC_{\bX_k\bX_k}}_{\op}\mid\Eb^{\msfc}\right] 
    &\leq \E\left[\norm[1]{\bCh_{\bX_k\bX_k}-\bCt_{\bX_k\bX_k}}_{\op}\mid\Eb^{\msfc}\right]\\&\qquad +\E\left[\norm[1]{\bCt_{\bX_k\bX_k}-\bC_{\bX_k\bX_k}}_{\op}\mid\Eb^{\msfc}\right]\\
    &\leq \epsilon_k' + 32\sigma^2 \sqrt{\frac{2d_k}{m}} \\
    &< 33\sigma^2 \cdot 2^{\frac{-B_k}{d_k^2}} + 32\sigma^2 \sqrt{\frac{2 d}{m}} \\
    &\leq \sigma^2\left(33 \cdot 2^{\log_2\left(\frac{\tep}{528}\right)} +\frac{\tilde{\varepsilon}}{16}\right)\\
    &\leq \frac{\sigma^2\tep}{8}.
\end{split}
\end{equation}
We can use this upper bound for $\norm[1]{\bC_{\bX_1\bX_2}}_{\op}$:
\begin{equation}\label{eq:norm_Cxy_upper_bound-Fr-case2}
\begin{split}
    \norm[1]{\bC_{\bX_1\bX_2}}_{\op} &= \sup_{\substack{\bu\in\R^{d_1},\bv\in\R^{d_2}\\ \norm{\bu}=\norm{\bv}=1}} \left\{\bu^\top \bC_{\bX_1\bX_2} \bv\right\}\\
    &= \sup_{\substack{\bu\in\R^{d_1},\bv\in\R^{d_2}\\ \norm{\bu}=\norm{\bv}=1}} \left\{\E\left[\bu^\top \bX_1\bX_2^\top \bv\right]\right\}\\
    &\leq \sup_{\substack{\bu\in\R^{d_1},\bv\in\R^{d_2}\\ \norm{\bu}=\norm{\bv}=1}} \left\{\sqrt{\E\left[(\bu^\top \bX_1)^2\right]\E\left[(\bv^\top\bX_2)^2\right]}\right\}\\
    &\leq \sigma^2.
\end{split}
\end{equation}
Now from \eqref{eq:achievable1-fr-case2}, \eqref{eq:norm_traingle_inequality-fr-case2}, \eqref{eq:norm_Cxx_upper_bound-Fr-case2}, and \eqref{eq:norm_Cxy_upper_bound-Fr-case2}, we conclude:
\begin{equation}
\begin{split}
    \E\left[\norm[1]{\bCh-\bC}_{\Fr}\right] &\leq \frac{\varepsilon}{10000} + \E\left[\norm[1]{\bCh-\bC}_{\Fr}\mid\Ebt^{\msfc}\right]\Pp\left[\Ebt^{\msfc}\right]\\
    &\leq \frac{\varepsilon}{10000} + \sqrt{d_1}\E\left[\norm[1]{\bCh_{\bX_1\bX_1}-\bC_{\bX_1\bX_1}}_{\op}\mid\Eb^{\msfc}\right] + \sqrt{d_2}\E\left[\norm[1]{\bCh_{\bX_2\bX_2} - \bC_{\bX_2\bX_2}}_{\op}\mid\Eb^{\msfc}\right]\\& \qquad+ \sqrt{2d_{\min}} \norm[1]{\bC_{\bX_1\bX_2}}_{\op}\\
    &\leq \frac{\varepsilon}{10000} + \frac{\sigma^2\sqrt{d_1}\tep}{8} + \frac{\sigma^2\sqrt{d_2}\tep}{8} + \sqrt{2d_{\min}} \sigma^2\\
    &\stackrel{\text{(a)}}{\leq} \frac{\varepsilon}{10000} + \frac{\varepsilon}{4} + \frac{\varepsilon}{256\sqrt{2}}\\
    &\leq \varepsilon,
\end{split}
\end{equation}
where (a) follows from the condition $\varepsilon\geq 512\sigma^2\sqrt{d_{\min}}$.
\end{proof}

\section{Achievable Scheme for Multi--Agent Scenario: Proof of Theorem \ref{thm:achievable_scheme-MulA} }
\label{app:proof_thm_achievable_scheme_MulA}
\begin{proof}
    We prove Theorem \ref{thm:achievable_scheme-MulA} by first focusing on a simplified case, specifically the fully distributed scenario where there is a distinct agent for each dimension $(K=d)$ and each agent controls only one dimension $(d_k=1)$.
    
    This proof holds for the more general case as well. Any agent that possesses $d_k > 1$ dimensions can be treated as $d_k$ "virtual" agents, each responsible for a single dimension. By applying our coding scheme to these virtual agents, we can reduce the general problem to the fully distributed case.

    Let's make the following choices for the number of samples and the parameter $\beta$:
    \begin{equation}
    \begin{aligned}
        m &\geq   n:=\frac{d}{\tilde{\varepsilon}^2},\\
        \beta &= \frac{{\varepsilon}}{2\sigma^2}.
    \end{aligned}
    \end{equation}
    The exact value of $\tilde{\varepsilon}$ will be determined in the sequel. We can proof Theorem \ref{thm:achievable_scheme-MulA} in 3 steps:
    
    \paragraph{Randomized quantization at user $k$}
    Let's represent the first $n$ samples from agent $k$ as the vector $\bmsfX_k = \left[ X_k^{(1)}, X_k^{(2)}, \dots, X_k^{(n)} \right] \in \mathbb{R}^n$. It is well--known that, with high probability, the magnitude of every sample is bounded. Specifically, we have $\left|X_k^{(i)}\right| \le L$ for all agents $k$ and all samples $i$, with probability at least $1-\beta$, where $L := \sigma\sqrt{2 \log ( dn/\beta)}$. This means the entire collection of $nd$ samples is guaranteed to lie within the $nd$--dimensional hypercube $[-L, L]^{nd}$.\\
    If $\|\bmsfX_k\|_\infty > L$, Agent $k$ transmits an error signal. Otherwise, Agent $k$ quantizes its observation vector coordinate--wise as follows. For simplicity, we assume $\frac{L}{\sigma\tilde{\varepsilon}}$ is a positive integer, and define $N = \frac{L}{\sigma\tilde{\varepsilon}}$. The quantization is performed by partitioning the interval $[-L, L]$ into $2N$ contiguous intervals of length $\sigma\tilde{\varepsilon}$, denoted by $\left\{ \left[ j\sigma\tilde{\varepsilon}, (j+1)\sigma\tilde{\varepsilon} \right) \right\}_{j = -N}^{N-2}$ and $\left[ (N-1)\sigma\tilde{\varepsilon}, N\sigma\tilde{\varepsilon} \right]$. Now for each $k\in[d]$ and $i\in[n]$, if $X_k^{(i)} \in \left[ j\sigma\tilde{\varepsilon} , (j+1)\sigma\tilde{\varepsilon}  \right)$ for some integer $j \in \{-N, -N+1, \dots, N-1\}$, it is randomly and independently quantized to $\hat{X}_k^{(i)} \in \left\{ j\sigma\tilde{\varepsilon} , (j+1)\sigma\tilde{\varepsilon}  \right\}$ such that $\E\left[ \hat{X}_k^{(i)} \mid X_k^{(i)} \right] = X_k^{(i)}$.\\
    This quantization requires $B_k^{(i)} := \log_2(2N + 1)$ bits per coordinate. Thus, $\bmsfX_k$ is quantized to $\widehat{\bmsfX}_k := \left[ \hat{X}_k^{(1)}, \dots, \hat{X}_k^{(n)} \right]$ using $B_k := \sum_{i=1}^n B_k^{(i)} = n \log_2(2N + 1)$ bits.

    \paragraph{Covariance Matrix Estimation at the Server}
    If the central server receives an error signal from any agent, it immediately sets $\bCh = \bzero$. Otherwise, after receiving the quantized vectors $\hat{\bmsfX}_k$ from all agents, the server constructs a single aggregated matrix by vertically stacking the received vectors:
    \begin{equation}
        \widehat{\bmsfX} := \begin{bmatrix} \widehat{\bmsfX}_1 \\ \vdots \\ \widehat{\bmsfX}_d \end{bmatrix}.
    \end{equation}
    The server then computes the final covariance matrix estimate using the sample covariance estimator:
    \begin{equation}
        \bCh = \frac{1}{n} \widehat{\bmsfX} \widehat{\bmsfX}^\top.
    \end{equation}
    \paragraph{Analysis:} Let $\Eb$ be the event of receiving the error signal from at least one agent. Thus $\Eb$ occurs if $|X_{k}^{(i)}|> L$, for some $(k,i)$. We have:
    \begin{equation}\label{eq:achievable1-mul}
    \begin{aligned}
        \E\left[\norm[1]{\bCh-\bC}_{\op}\right] &= \E\left[\norm[1]{\bCh-\bC}_{\op}\mid\Eb\right]\Pp\left[\Eb\right]+ \E\left[\norm[1]{\bCh-\bC}_{\op}\mid\Eb^{\msfc}\right]\Pp\left[\Eb^{\msfc}\right].
    \end{aligned}
    \end{equation}
    We find an upper bound for every term of  \eqref{eq:achievable1-mul}. First, notice that when an error is received in central server, the central server returns $\bCh=\bzero$, therefore:
    \begin{equation}
        \E\left[\norm[1]{\bCh-\bC}_{\op}\mid\Eb\right] = \E\left[\norm[1]{\bC}_{\op}\mid\Eb\right]
        \leq \sigma^2.
    \end{equation}
    Now we can find an upper bound for $\Pp[\Eb]$ and $\Pp[\Eb^{\msfc}]$:
    \begin{equation}
        \Pp[\Eb]\leq \beta, \qquad \Pp[\Eb^{\msfc}]\leq 1.
    \end{equation}
    And finally we upper bound $\E\left[\norm[1]{\bCh-\bC}_{\op}\mid\Eb^{\msfc}\right]$:
    \begin{equation}\label{eqn:norm-apx-mul-0}
    \begin{aligned}
        \E\left[\norm[1]{\bCh-\bC}_{\op}\mid\Eb^{\msfc}\right] &= \E\left[\norm[2]{\frac{1}{n} \widehat{\bmsfX} \widehat{\bmsfX}^\top-\bC}_{\op}\mid\Eb^{\msfc}\right]\\
        &= \E\left[\norm[2]{\frac{1}{n}\left(
        (\widehat{\bmsfX}-\bmsfX) (\widehat{\bmsfX}-\bmsfX)^\top  +  (\widehat{\bmsfX}-\bmsfX) \bmsfX^\top +  \bmsfX (\widehat{\bmsfX}-{\bmsfX})^\top\right) + \frac{1}{n} \bmsfX \bmsfX^\top -\bC}_{\op}\mid\Eb^{\msfc}\right]\\
        &\leq \frac{1}{n}\E\left[\norm[2]{ \left(\widehat{\bmsfX} - \bmsfX\right)\left(\widehat{\bmsfX} - \bmsfX\right)^\top}_{\op}\mid\Eb^{\msfc}\right] + \frac{2}{n}\E\left[\norm[2]{ \left(\widehat{\bmsfX} - \bmsfX\right)\bmsfX^\top}_{\op}\mid\Eb^{\msfc}\right]\\
        &\qquad+ \E\left[\norm[2]{\frac{1}{n} \bmsfX \bmsfX^\top -\bC}_{\op}\mid\Eb^{\msfc}\right]\\
        &\leq \frac{1}{n}\E\left[\norm[1]{\widehat{\bmsfX} - \bmsfX}_{\op}^2\mid\Eb^{\msfc}\right] + \frac{2}{n}\E\left[\norm[1]{ \widehat{\bmsfX} - \bmsfX}_{\op}\norm[1]{\bmsfX}_{\op}\mid\Eb^{\msfc}\right] + \E\left[\norm[2]{\frac{1}{n} \bmsfX \bmsfX^\top -\bC}_{\op}\mid\Eb^{\msfc}\right].
    \end{aligned}
    \end{equation}
    We observe that when $\bmsfX$ satisfies the event $\Eb^{\msfc}$, the entries of the random matrix $\widehat{\bmsfX} - \bmsfX$ are conditionally independent. Furthermore, all the entries are bounded, $|\widehat{X}_k^{(i)}-{X}_k^{(i)}|<\sigma\tilde{\varepsilon}$, and are zero mean. Thus each entry is $\sigma\tilde{\varepsilon}$--sub--Gaussian. Hence given $(\bmsfX,\Eb^{\mathsf{c}})$, $\widehat{\bmsfX} - \bmsfX$ satisfies the conditions of Lemma \ref{lem:norm_upperbound_crosscovar}, and we can invoke this lemma to obtain:
    \begin{equation}
    \begin{aligned}
        \E\left[\norm[1]{\widehat{\bmsfX} - \bmsfX}_{\op}^2\mid(\bmsfX,\Eb^{\msfc})\right]  &\le 36 \sigma^2\tilde{\varepsilon}^2(d+n),\\
        \E\left[\norm[1]{\widehat{\bmsfX} - \bmsfX}_{\op}\mid(\bmsfX,\Eb^{\msfc})\right]  &\le 9 \sigma\tilde{\varepsilon}\sqrt{d+n}.
    \end{aligned}
    \end{equation}
    These yield:
    \begin{equation}\label{eqn:norm-apx-mul-1}
    \begin{aligned}
        \E\left[\norm[1]{\widehat{\bmsfX} - \bmsfX}_{\op}^2\mid\Eb^{\msfc}\right]  &\le 36 \sigma^2\tilde{\varepsilon}^2(d+n),\\
        \E\left[\norm[1]{\widehat{\bmsfX} - \bmsfX}_{\op}\norm[1]{ \bmsfX}_{\op}\mid\Eb^{\msfc}\right]  &\le 9 \sigma\tilde{\varepsilon}\sqrt{d+n}\E\left[\norm[1]{ \bmsfX}_{\op}\mid\Eb^{\msfc}\right].
    \end{aligned}
    \end{equation}
    Furthermore $\bmsfX$ satisfies the conditions of Lemma \ref{lem:norm_upperbound_crosscovar}, and we have:
    \begin{equation}\label{eqn:norm-apx-mul-2}
        \E\left[\norm[1]{ \bmsfX}_{\op}\mid\Eb^{\msfc}\right]\le \frac{\E\left[\norm[1]{ \bmsfX}_{\op}\right]}{\Pp\left[\Eb^{\msfc}\right]}  \le \frac{9\sigma\sqrt{d+n}}{1-\beta}.
    \end{equation}
    We also have:
    \begin{equation}\label{eqn:norm-apx-mul-3}
        \E\left[\norm[2]{\frac{1}{n} \bmsfX \bmsfX^\top -\bC}_{\op}\mid\Eb^{\msfc}\right]\le 32\sigma^2\sqrt{\dfrac{d}{n}}.
    \end{equation}
    Putting \eqref{eqn:norm-apx-mul-1}, \eqref{eqn:norm-apx-mul-2}, and \eqref{eqn:norm-apx-mul-3} in \eqref{eqn:norm-apx-mul-0} yields:
    \begin{equation}\label{eqn:full-cov-app-mul}
    \begin{aligned}
        \E\left[\norm[1]{\bCh-\bC}_{\op}\mid\Eb^{\msfc}\right] &\leq  36\sigma^2\tilde{\varepsilon}^2\frac{d+n}{n} + \frac{162}{1-\beta}\sigma^2\tilde{\varepsilon}\frac{d+n}{n} + 32\sigma^2\sqrt{\frac{d}{n}}\\
        &\leq 360\sigma^2\frac{d+n}{n}\tilde{\varepsilon} + 32\sigma^2\tilde{\varepsilon}\\
        &\leq 760 \sigma^2\tilde{\varepsilon}.
    \end{aligned}
    \end{equation}
    By setting $\tilde{\varepsilon}=\frac{\varepsilon}{1520\sigma^2}$ and substituting \eqref{eqn:full-cov-app-mul} in the expression \eqref{eq:achievable1-mul}, we conclude that the achievable scheme has the desired  distortion error less than $\varepsilon$: 
     \begin{equation}
        \E\left[\norm[1]{\bCh-\bC}_{\op}\right]\leq \beta\sigma^2 + \frac{\varepsilon}{2}\leq \varepsilon. 
     \end{equation}
    Finally the communication budget of each agent  in this scheme satisfies:
    \begin{equation}
         B_k=n\log_2(2N+1)\leq \tau' \frac{\sigma^4d}{{\varepsilon}^2}\log_2\left(\tau''\frac{\sigma^4}{\varepsilon^2}\log(d\sigma^2/\varepsilon)\right),
    \end{equation}
    for some constant $\tau',\tau''$. This concludes the proof.
 
\end{proof}




\newcommand{\etalchar}[1]{$^{#1}$}
\addcontentsline{toc}{section}{References}

\addcontentsline{toc}{section}{Appendices}

\appendix

\end{document}